\newtheorem{theorem}{Theorem}
\newtheorem{lemma}{Lemma}
\newtheorem{definition}{Definition}
\newtheorem{assumption}{Assumption}
\newtheorem{remark}{Remark}
\newcommand{\bc}{{B}^c}
\newcommand{\R}{\mathbb R}
\newcommand{\bP}{\mathbb{P}}
\newcommand{\cA}{\mathcal A}
\newcommand{\cE}{\mathcal E}
\newcommand{\cF}{\mathcal F}
\newcommand{\cJ}{\mathcal J}
\newcommand{\cO}{\mathcal O}
\newcommand{\cR}{\mathcal R}
\newcommand{\cS}{\mathcal S}
\newcommand{\cT}{\mathcal T}
\newcommand{\cV}{\mathcal V}
\newcommand{\bI}{\mathbf{I}}
\title{Provably Efficient Model-Free Algorithms for Non-stationary CMDPs}
\date{}
\author{%
 Honghao Wei \\
  University of Michigan, Ann Arbor \\  \texttt{honghaow@umich.edu} \\ 
  \and
  Arnob Ghosh \\
 The Ohio State University\\ \texttt{ghosh.244@osu.edu} \\
  \and
  Ness Shroff \\
  The Ohio State University \\  \texttt{shroff.11@osu.edu}\\
  \and
  Lei Ying \\
   University of Michigan, Ann Arbor \\  \texttt{leiying@umich.edu}\\
  \and
 Xingyu Zhou \\
 Wayne State University\\
  \texttt{xingyu.zhou@wayne.edu}
       \\
}
\begin{document}

\maketitle

\begin{abstract}
	We study model-free reinforcement learning (RL) algorithms in episodic non-stationary constrained Markov Decision Processes (CMDPs), in which an agent aims to maximize the expected cumulative reward subject to a cumulative constraint on the expected utility (cost). In the non-stationary environment, reward, utility functions, and transition kernels can vary arbitrarily over time as long as the cumulative variations do not exceed certain variation budgets. We propose the first model-free, simulator-free RL algorithms with sublinear regret and zero constraint violation for non-stationary CMDPs in both tabular and linear function approximation settings with provable performance guarantees. Our results on regret bound and constraint violation for the tabular case match the corresponding best results for stationary CMDPs when the total budget is known. Additionally, we present a general framework for addressing the well-known challenges associated with analyzing non-stationary CMDPs, without requiring prior knowledge of the variation budget. We apply the approach for both tabular and linear approximation settings.
\end{abstract}

\section{INTRODUCTION}
Safe reinforcement learning (RL) studies how to apply RL algorithms in real-world applications \citep{AmoOlaSte_16,GarFer_15,BruGreHal_22} that can operate under safety-related constraints. A standard approach for modeling applications with safety constraints is Constrained Markov Decision Processes (CMDPs) \citep{Alt_99}, where an agent seeks to learn a policy that maximizes the expected total reward under safety constraints on the expected total utility. In classical safe-RL and CMDPs problems, an agent is assumed to interact with a stationary environment. However, stationary models cannot capture the time-varying real-world applications where safety is critical such that the transition functions and reward/utility functions are non-stationary. For example, in autonomous driving \citep{KirSobTal_21}, collisions must be avoided while modeling and tracking time-varying environments such as traffic conditions; in an automated medical system \citep{CorNaeDe_20}, it is essential to guarantee patient safety under varying patients' behavior.

Learning in a stationary CMDP is a long-standing topic and has been heavily studied recently, including using both model-based and model-free approaches \citep{BraDudLyk_20,EfrManPir_20,WeiLiuYin_22,WeiLiuYin_22-2,LiuZhoKal_21,BurHasKal_21,SinGupShr_20,DinWeiYan_20,CheJaiLuo_22}. RL in non-stationary CMDPs is more challenging since the rewards/utilities and dynamics are time-varying and probably unknown a priori. On the one hand, an agent has to handle the non-stationarity properly to guarantee a sublinear regret and a small or zero constraint violation. On the other hand, the agent also needs to forget the past data samples since they become less useful due to the dynamic of the system. The only existing work of which we are aware that studies non-stationary CMDPs is \cite{DingLav_22}, via a model-based approach assuming a priori knowledge of the total variation budgets, which is far less computationally efficient compared with model-free approaches and where knowing the variation budgets is less desirable in practice.

\begin{table}[ht]
	\caption{Dynamic regret and constraint violations comparisons for RL in non-stationary CMDPs. $S$ and $A$ are the number of states and actions, $H$ is the horizon of each episode, $K$ is the total number of episodes and $B$ is the variation budgets. $d$ is the dimension of the feature in linear CMDP. Algorithms \ref{alg:triple-Q},\ref{alg:triple-Q-double} are for tabular setting, \cite{DingLav_22} is for Linear kernel CMDP setting, and Algorithm \ref{algo:model_free} is for linear CMDP setting. ($^\ast:$ zero constraint holds holds when $K$ is large enough, $\dagger:$ we can further the regret order to $\tilde{\cO}(K^{3/4}),$ see section \ref{sec:linear_cmdp}. )}
	\label{ta:results}
	\begin{center}
	 	\footnotesize	
		\begin{tabular}{|c|c|c|c|c|}
			\toprule
			 {\bf Algorithm} & {\bf Model-Free?} & {\bf Regret} & {\bf Constraint Violation} & {\bf Known Budget?}\\
			\hline
				 \cite{DingLav_22} &    \ding{55}&	$\tilde{\mathcal{O}}(S^{\frac{2}{3}}A^{\frac{1}{3}}H^{\frac{5}{3}}K^{\frac{3}{4}} B^{\frac{1}{3}})$  & $\tilde{\mathcal{O}}(S^{\frac{2}{3}}A^{\frac{1}{3}}H^{\frac{5}{3}}K^{\frac{3}{4}} B^{\frac{1}{3}})$ & \ding{51} \\
			\toprule
			 \cellcolor{blue!25} Algorithm \ref{alg:triple-Q} & \ding{51}& $\tilde{\cal O}\left(H^4 S^{\frac{1}{2}}A^{\frac{1}{2}}K^{\frac{4}{5}}{B}^{\frac{1}{3}} \right)$ &$0^\ast$ & \ding{51} \\
			\hline
		 \cellcolor{blue!25} Algorithm \ref{alg:triple-Q-double} & \ding{51} &$\tilde{\cal O}\left(H^4S^\frac{1}{2}A^{\frac{1}{2}}K^{\frac{8}{9}}{B}^{\frac{1}{3}}\right)$ &$0^\ast$ & \ding{55} \\
			\hline
		 \cellcolor{blue!25} Algorithm \ref{algo:model_free} & \ding{51} &$\tilde{\mathcal{O}}\left(K^{3/4}H^{9/4}d^{5/4}B^{1/4}\right)$ &$0^\ast $ & \ding{51} \\
		 	\hline
		 \cellcolor{blue!25} Algorithm \ref{algo:model_free_unknown} & \ding{51} &$\tilde{\mathcal{O}}\left(K^{7/8}H^{9/4}d^{5/4}B^{1/4}\right)^\dagger $ &$0^\ast$ & \ding{55} \\
			\bottomrule
		\end{tabular}
	\end{center}
\end{table}
In this work, we manage to overcome these challenges and focus on designing model-free algorithms with sublinear regret and zero constraint violation guarantees for non-stationary CMDPs, especially for the scenario when the total variation budget is unknown. Our contributions are as follows:
\begin{itemize}[leftmargin=*]
    \item {Our work contributes to the theoretical understanding of non-stationary episodic CMDPs. We develop different type of model-free algorithms for non-stationary CMDP settings-- one is tailored for tabular CMDPs and has low memory and computational complexity, another one is computationally more intensive, however, can be applied to linear function approximation for large, possibly infinite, state and action spaces.}
    \item For the tabular setting, our algorithm adopts a periodic restart strategy and utilizes an extra optimism bonus term to counteract the non-stationarity of the CMDP that an over estimate of the combined objective is guaranteed during learning and exploration. For the case when the budget variation is known, our theoretical result $\tilde{O}(K^{4/5})$ matches the best existing result for stationary CMDPs in terms of the total number of episodes $K,$ and non-stationary MDPs in term of the variation budget $B.$ For linear CMDP,  we propose the first model-free, value-based algorithm which obtains $\tilde{\cO}(K^{3/4})$ regret and zero constraint violation using the same strategy. Our result in fact improves the dependency with respect to the budget variation and the episode length $H$ compared to \cite{DingLav_22}.
    \item We develop, for the first time, a general {\em double restart} method for non-stationary CMDPs based on the ``bandit over bandit'' idea. This method can be used for other non-stationary constrained learning problems which aims to achieve zero constraint violation. The method removes the need to have a priori knowledge of the variation budget, an open-problem raised in \cite{DingLav_22} for non-stationary CMDPs. While the ``bandit over bandit'' has been widely used and studied for unconstrained MDPs, adopting it for CMDPs is nontrivial due to multiple challenges that do not exist in unconstrained setting. For example, one  needs to account for the constraints. We overcome these difficulties by a new design of the bandit reward function for each arm. We show that the approach can be used in conjunction with the algorithms for the tabular and linear function approximation cases.
\end{itemize}
Our results are summarized in Table \ref{ta:results}.
\section{RELATED WORK}
{\bf Non-stationary MDP.}
Non-stationary unconstrained MDPs have been mostly studied recently \citep{AueJakOrt_08,CheSimZhu_20,DomMnaPir_21,FeiYanWan_20,OrtGajAUe_20,TouVin_20,WeiLuo_21,ZhoYanSce_21,ZhoCheVar_20,MaoZhaZhu_20}. \cite{AueJakOrt_08} consider a setting where the MDP is allowed to change for fixed number of times. When the variation budget is known a priori, \cite{FeiYanWan_20} propose a policy-based algorithm in the setting where they assume stationary transitions and adversarial full-information rewards. \cite{ZhoYanSce_21,MaoZhaZhu_20,TouVin_20,ZhoCheVar_20} consider a more general setting that both transitions and rewards are time-varying. A more recent work \cite{WeiLuo_21} introduce a procedure that can be used to convert any upper-confidence-bound-type stationary RL problem to a non-stationary RL algorithm to relax the assumption of having a priori knowledge on the variation budget.

{\bf CMDP.} Stationary CMDPs with provable guarantees have been heavily studied in recent years. In particular, \cite{BraDudLyk_20,EfrManPir_20,SinGupShr_20} propose model-based approaches for tabular CMDPS. \cite{GhoZhoShr_22,DinWeiYan_20} extend the results to the linear and linear kernel CMDPs. \cite{LiuZhoKal_21,BurHasKal_21} also provide efficient algorithms with a zero constraint violation guarantee. Besides using an estimated model, \cite{DinZhaBas_20,CheDonWan_21} leverage a simulator for policy evaluation to achieve provable regret guarantees. Moreover, \cite{WeiLiuYin_22,WeiLiuYin_22-2} propose the first model-free and simulator-free algorithms for CMDPs with sublinear regret and zero constraint violation. However, the studies on non-stationary CMDPs are limited. For non-stationary CMDPs, \cite{QiuWeiYan_20} consider CMDPs that assume that only the rewards vary over episodes. A concurrent work \citep{DingLav_22}, which is most related to ours, focuses on the same setting where the transitions and rewards/utilities vary over episodes under a linear kernel CMDP assumption. They also assume that the budget is known a priori. The method proposed is a model-based approach, but we instead consider a more challenge setting where the algorithm is model-free and the budget is not known. Fortunately, we answer the open-problem affirmatively raised in \cite{DingLav_22}.   

\section{PROBLEM FORMULATION} 
\label{sec:problem}
We consider an episodic CMDP where an agent interacts with a non-stationary system for $K$ episodes. The CMDP is denoted by $(\mathcal{S},\mathcal{A},H,\mathbb{P},r,g),$ where $\mathcal{S}$ is the state space with $\vert \mathcal{S}\vert=S,$ $\mathcal{A}$ is the action space with $\vert \mathcal{A}\vert=A,$ $H$ is the fixed length of each episode, $\mathbb{P}=\{\mathbb{P}_{k,h}\}_{k\in[K],h\in[H]}$ is a collection of transition kernels, and $r=\{r_{k,h}\}_{k\in[K],h\in[H]} (g=\{g_{k,h}\}_{k\in[K],h\in[H]}) $ is the set of reward (utility) functions. In Section~\ref{sec:linear_cmdp}, we extend our analysis to potentially infinite state-space.

At the beginning of an episode $k$, an initial state $x_{k,1}$ is sampled from the distribution $\mu_0.$ Then at step $h,$ the agent takes action $a_{k,h}\in\mathcal{A}$ after observing state $x_{k,h}\in\mathcal{S}$. Then the agent receives a reward $r_{k,h}(x_{k,h},a_{k,h})$ and incurs a utility $g_{k,h}(x_{k,h},a_{k,h}).$ The environment transitions to a new state $x_{k,h+1}$ following from the distribution $\mathbb{P}_{k,h}(\cdot\vert x_{k,h},a_{k,h}).$ It is worth emphasizing that the transition kernels, reward functions, and utility functions all depend on the episode index $k$ and time $h,$ and hence the system is non-stationary. For simplicity of notation, we assume that  $r_{k,h}(x,a)(g_{k,h}(x,a)):\mathcal{S}\times \mathcal{A}\rightarrow [0,1],$ are deterministic for convenience. Our results generalize to the setting where the reward and utility functions are random. Given a policy $\pi,$ which is a collection of $H$ functions $\pi: [H]\times \mathcal{S}\rightarrow\mathcal{A},$ where $[H]$ represents the set $\{1,2,\dots,H\}.$ Define the reward value function $V_{k,h}^\pi(x):\mathcal{S}\rightarrow\mathbb{R}^+ $ at episode $k$ and step $h$ to be the expected cumulative rewards from step $h$ to the end under the policy $\pi:$ 
\begin{align}
V_{k,h}^\pi(x)=\mathbb{E}\left[\left.\sum_{i=h}^H r_{k,i}(x_{k,i},\pi(x_{k,i}))\right\vert x_{k,h}=x \right].
\end{align}
The (reward) $Q$-function $Q_{k,h}^\pi(x, a):\mathcal{S}\times \mathcal{A}\rightarrow \mathbb{R}^+ $ is the expected cumulative reward when an agent starts from a state-action pair $(x, a)$ at episode $k$ and step $h$ following the policy $\pi:$ 
\begin{align}
&Q_{k,h}^\pi(x,a )=r_{k,h}(x,a) + \nonumber\\
&\mathbb{E}\left[\left.\sum_{i=h+1}^H r_{k,i}(x_{k,i},\pi(x_{k,i}))\right\vert x_{k,h}=x, a_{k,h}=a \right].
\end{align}
Similarly, we use 
$W_{k,h}^\pi(x):\mathcal{S}\rightarrow\mathbb{R}^+$ to denote the utility value function 
\begin{align}
W_{k,h}^\pi(x) =\mathbb{E}\left[\left.\sum_{i=h}^H g_{k,i}(x_{k,i},\pi(x_{k,i}))\right\vert x_{k,h}=x \right],
\end{align}
and we use

$C_{k,h}^\pi(x,a):\mathcal{S}\times\mathcal{A}\rightarrow\mathbb{R}^+$ to denote the utility $Q$-function at episode $k,$ step $h$:
\begin{align}
 & C_{k,h}^\pi(x,a) =g_{k,h}(x,a) + \nonumber \\
 & \mathbb{E}\left[\left.\sum_{i=h+1}^H g_{k,i}(x_{k,i},\pi(x_{k,i}))\right\vert  x_{k,h}=x, a_{k,h}=a
\right].   
\end{align}
For simplicity, we adopt the following notations: 
\begin{align}
\mathbb{P}_{k,h}V_{k,h+1}^\pi(x,a)=&\mathbb{E}_{x^\prime\sim\mathbb{P}_{k,h}(\cdot\vert x,a)}V^\pi_{k,h+1}(x^\prime),\\
\mathbb{P}_{k,h}W_{h+1}^\pi(x,a)=&\mathbb{E}_{x^\prime\sim\mathbb{P}_{k,h}(\cdot\vert x,a)}W^\pi_{k,h+1}(x^\prime).
\end{align}

We also denote the empirical counterparts as 
\begin{align}
    \hat{\mathbb{P}}_{k,h}V_{k,h+1}^\pi(x,a) = & V_{k,h+1}^\pi (x_{k+1,h}),\\
    \hat{\mathbb{P}}_{k,h}W_{k,h+1}^\pi(x,a) = &W_{k,h+1}^\pi(x_{k+1,h}),
\end{align}
and is only defined for $(x,a)=(x_{k,h},a_{k,h}).$ Given the model defined above, the objective of the episode $k$ is to find a policy that maximizes the expected cumulative reward subject to a constraint on the expected utility:
\begin{equation}
	\underset{\pi_k \in\Pi}{\text{max}}\  \mathbb E\left[V_{k,1}^{\pi_k}(x_1)\right] \quad \text{subject to:}\  \mathbb E\left[W_{k,1}^{\pi_k}(x_1)\right]\geq \rho, \label{eq:obj}
\end{equation}
where we assume $\rho\in[0,H]$ to avoid triviality, and the expectation is taken with respect to the initial distribution and the randomness of $\pi$. Let $\pi_k^*$ denote the optimal solution to the CMDP problem defined in \eqref{eq:obj} for episode $k$. We evaluate our model-free RL algorithms using dynamic regret $\mathcal{R}(K)$ and constraint violation $\mathcal{V}(K)$ defined below:
\begin{align}
	&\cR (K) =\mathbb E\left[\sum_{k=1}^K\left(V_{k,1}^{\pi_k^*}(x_{k,1})-V_{k,1}^{\pi_k}(x_{k,1})\right)\right],\label{def:regret}\\
	&\cV (K) =\mathbb E\left[ \sum_{k=1}^K \left(\rho-W_{k,1}^{\pi_k}(x_{k,1})\right)\right],\label{def:cv}
\end{align}
where $\pi_k$ is the policy used in episode $k$. 
 Note that here we use dynamic regret concept as the optimal policy may be different. We further make the following standard assumption \citep{EfrManPir_20,DinWeiYan_20,QiuWeiYan_20,WeiLiuYin_22}. 
\begin{assumption}
	\label{as:1}
 (Slater's Condition). Given initial distribution $\mu_0,$ for any episode $k\in[K],$ there exist $\delta>0$ and at least a policy $\pi$ such that $\mathbb E\left[W_{k,1}^\pi(x_{k,1})\right] -\rho \geq \delta.$ 
\end{assumption}

{\bf Variation:} The non-stationary of the CMDP is measured according to the variation budgets in the reward/utility functions and the transition kernels:
\begin{align*}
	{B}_r & := \sum_{k=1}^{K-1}\sum_{h=1}^H \max_{x,a} \vert r_{k,h}(x,a) - r_{k+1,h}(x,a)\vert \\
	{B}_g &:= \sum_{k=1}^{K-1}\sum_{h=1}^H \max_{x,a} \vert g_{k,h}(x,a) - g_{k+1,h}(x,a)\vert \\
	{B}_p &:= \sum_{k=1}^{K-1}\sum_{h=1}^H \max_{x,a} \Vert \mathbb{P}_{k,h}(\cdot\vert x,a) - \mathbb{P}_{k+1,h}(\cdot\vert x,a)\Vert_1.
\end{align*}

We further let ${B}= {B}_r + {B}_g + {B}_p$ to represent the total variation. To bound the regret, we consider the following offline optimization problem at  episode $k$ as our regret baseline:
\begin{align}
	&\underset{q_{k,h} }{\max} \sum_{h,x,a} q_{k,h}(x,a)r_{k,h}(x,a) \label{eq:lp_app}\\
	&\hbox{s.t.:} \sum_{h,x,a} q_{k,h}(x,a)g_{k,h}(x,a) \geq \rho \label{lp:cost_app}\\
	& \sum_a q_{k,h}(x,a) = \sum_{x^\prime,a^\prime} \mathbb P_{k,h-1}(x\vert x^\prime,a^\prime) q_{k,h-1} (x^\prime,a^\prime)\label{lp:gb_app}\\
	& \sum_{x,a}q_{k,h}(x,a) =1, \forall h\in[H] \label{lp:normalization_app}\\
	&\sum_{a} q_{k,1}(x,a)= \mu_0(x)\label{lp:ini_app} \\
	& q_{k,h}(x,a) \geq 0, \forall x\in\mathcal{S},\forall a\in\mathcal{A},\forall h\in[H]. \label{lp:p_app}
\end{align}	
To analyze the performance, we need to consider a tightened version of the LP, which is defined below: 
\begin{align}
	\underset{q_{k,h} }{\max} &\sum_{h,x,a} q_{k,h} (x,a)r_{k,h}(x,a) \label{eq:lp-epsilon_app}\\
	\hbox{s.t.:} &\sum_{h,x,a} q_{k,h} (x,a)g_{k,h}(x,a)  \geq \rho+\epsilon,\text{ and }\eqref{lp:gb_app}-\eqref{lp:p_app}\nonumber,
\end{align} where $\epsilon>0$ is called a tightness constant. When $\epsilon\leq \delta,$ this problem has a feasible solution due to Slater's condition. We use superscript ${ }^*$ to denote the optimal value/policy related to the original CMDP \eqref{eq:obj} or the solution to the corresponding LP \eqref{eq:lp_app} and superscript ${ }^{\epsilon,*}$ to denote the optimal value/policy related to the $\epsilon$-tightened version of CMDP. 
\section{ALGORITHM FOR TABULAR CMDPs}
Next we will start with presenting our algorithm Non-stationary Triple-Q in Algorithm \ref{alg:triple-Q} for the scenario when the variation budget is known. Our algorithm uses a restart strategy that divides the total episode $K$ into frames, which is commonly used in both non-stationary bandits and RL to address non-stationarity. We remark that in unconstrained RL, this restarting often results in a worse regret, for example, the regret bound is $\tilde{\mathcal{O}}(\sqrt{K})$ \citep{JinAllBub_18} in the stationary setting but becomes $\tilde{\mathcal{O}}(K^{\frac{2}{3}})$ \citep{MaoZhaZhu_20} when the system is non-stationary. However, the order of regret achieved by our Algorithm \ref{alg:triple-Q} matches the best existing result in stationary CMDPs obtained by the model-free algorithm Triple-Q \citep{WeiLiuYin_22} under the same setting. That is because Triple-Q itself is built on top of a two-time-scale scheme for balancing the estimation error and tracking the constraint violation, which shares the same insights as the restart strategy for dealing with non-stationarity. Therefore, by appropriately designing the frame size (restarting period), Algorithm \ref{alg:triple-Q} can achieve the same order as that in unconstrained CDMPs as well as the optimal order in terms of variation budget. 

We first divide the total $K$ episodes into frames, where each frame contains $K^\alpha / B^c$ episodes. Define $B_r^{(T)},B_g^{(T)},B_p^{(T)}$ to be the local variation budget of the reward functions, utility functions and transition kernels within the $T$th frame, let $\mathcal{N}_T$ denote the set of all the episodes in frame $T,$ then
\begin{align*}
	{B}_r^{(T)} & := \sum_{k\in\mathcal{N}_T}\sum_{h=1}^H \max_{x,a} \vert r_{k,h}(x,a) - r_{k+1,h}(x,a)\vert \\
	{B}_g^{(T)} &:= \sum_{k\in\mathcal{N}_T}\sum_{h=1}^H \max_{x,a} \vert g_{k,h}(x,a) - g_{k+1,h}(x,a)\vert \\
	{B}_p^{(T)} &:= \sum_{k\in\mathcal{N}_T}\sum_{h=1}^H \max_{x,a} \Vert \mathbb{P}_{k,h}(\cdot\vert s,a) - \mathbb{P}_{k+1,h}(\cdot\vert x,a)\Vert_1.
\end{align*}  
Let the total local variation budget $B^{(T)}= {B}_r^{(T)} + {B}_g^{(T)} + {B}_p^{(T)},$ then by definition we have $\sum_{T=1}^{K^{1-\alpha}B^c}B^{(T)} \leq B.$ Our algorithm uses two bonus terms $b_t$ and $\tilde{b}$ to update $Q$ values (Line $10-11$ in Algorithm \ref{alg:triple-Q}), where $b_t$ is the standard Hoeffding-based bonus in upper confidence bounds, and $\tilde{b}$ is the extra bonus to take into account the non-stationarity of the environment. We assume that $\tilde{b}$ is a uniform upper bound on the total local variation budget $B^T$ for any $T,$ and satisfies $K^{1-\alpha}B^c\tilde{b} \leq B $ which is an assumption commonly seen in the literature on non-stationary RL \citep{OrtGajAUe_20,MaoZhaZhu_20,ZhoCheVar_20}.

\begin{algorithm*}[!ht]
	\SetAlgoLined
	{\bf Input:} Total Budget $B$\;
	Choose $\alpha=0.6,\eta=K^{\frac{1}{5}}B^{\frac{1}{3}},\chi=K^{\frac{1}{5}},c=\frac{2}{3},\epsilon=\frac{8\sqrt{SAH^6\iota^3}B^{1/3}}{K^{0.2}},$ and $\iota = 128\log(\sqrt{2SAH}K)$ \;
	Initialize  $Q_{h}(x,a)=C_{h}(x,a)\leftarrow H$ and  $Z=\bar{C}=N_{h}(x,a)=V_{H+1}(x)={W}_{H+1}(x) \leftarrow 0$ for all $(x,a,h)\in{\cal S}\times {\cal A}\times [H]$\;
	\For{episode $k = 1,\dots,K $ }{
		Sample the initial state for episode $k:$ $x_{k,1} \sim \mu_0$\;
		\For{step $h=1,\dots,H$}{
			Take action $a_{h} \leftarrow \arg\max_a\left( {Q}_{h} (x_{k,h},a) + \frac{Z}{\eta} {C}_{h}(x_{k,h},a)\right)$\;
				Observe $r_{k,h}(x_{k,h},a_{k,h}), g_{k,h}(x_{k,h},a_{k,h}), $ and $x_{k,h+1}, N_{h}(x_{k,h},a_{k,h})\leftarrow N_{h}(x_{k,h},a_{k,h})+1$\;
				Set $t=N_{h}(x_{k,h},a_{k,h}), b_t = \frac{1}{4}\sqrt{\frac{H^2\iota \left(\chi +1 \right) }{\chi+t}}, \alpha_t=\frac{\chi+1}{\chi+t}$ \;
				 ${Q}_{h} (x_{k,h},a_{k,h})\leftarrow (1-\alpha_t)Q_{h} (x_{k,h},a_{k,h}) + \alpha_t\left(r_{k,h}(x_{k,h},a_{k,h})+ {V}_{h+1} (x_{k,h+1})+b_t + 2H\tilde{b} \right)$\;
				${C}_{h} (x_{k,h},a_{k,h})\leftarrow (1-\alpha_t)C_{h} (x_{k,h},a_{k,h}) + \alpha_t\left(g_{k,h}(x_{k,h},a_{k,h})+{W}_{h+1}(x_{k,h+1})+b_t + 2H\tilde{b} \right)$\;
		$a'=\arg\max_a\left({Q}_{h} (x_{k,h},a) + \frac{Z}{\eta} {C}_{h}(x_{k,h},a)\right)$,
		${V}_{h}(x_{k,h}) \leftarrow {Q}_h(x_{k,h},a')  \quad  {W}_{h}(x_{k,h}) \leftarrow {C}_h(x_{k,h},a') $ \;
			\If{$h=1$}{
				$\bar{C} \leftarrow \bar{C}+{C}_{1}(x_{k,1},a_{k,1})$
			}
		}
		\If{$k\mod(K^\alpha/ \bc )=0$  \tcp*{\small reset visit counts and Q-functions}} { 
			$N_{h}(x,a)\leftarrow 0,$ $Q_{h}(x,a) = C_h(x,a) = {Q}_{h}(x,a)={C}_{h}(x,a)\leftarrow H,$ 
			$Z\leftarrow \left(Z +\rho+\epsilon -  \frac{\bar{C}\cdot \bc }{K^\alpha}\right)^+,\bar{C}\leftarrow 0$
		}
	}
	\caption{{Non-stationary Triple-Q}}\label{alg:triple-Q}
\end{algorithm*}
\section{RESULTS OF TABULAR CMDPs}
We now present our main results of the Non-stationary Triple-Q.
\begin{theorem}\label{the:main}
Assume $K \geq \max\left\{\left(\frac{16\sqrt{SAH^6\iota^3}}{\delta} \right)^5,e^{\frac{1}{\delta}}\right\},$ where $\iota=128\log(\sqrt{2SAH}K).$ Algorithm \ref{alg:triple-Q} achieves the following regret and constraint violation bounds: 
	\begin{align*}
		\cR (K) = \tilde{\cal O}(H^4S^\frac{1}{2}A^{\frac{1}{2}}{B}^{\frac{1}{3}} K^{\frac{4}{5}})\quad\quad \cV(K) =  0
	\end{align*}
\end{theorem}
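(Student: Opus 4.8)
\emph{Proof plan.} The approach is to treat each frame of $K^{\alpha}/\bc$ episodes as a nearly-stationary CMDP, run inside the frame the two-time-scale ``Triple-Q'' analysis of \cite{WeiLiuYin_22}, and pay for the within-frame non-stationarity through the extra bonus $2H\tilde b$; the virtual queue $Z$, updated once per frame, is then handled by a Lyapunov drift argument that simultaneously yields the $\tilde O(K^{4/5})$ dynamic regret and zero constraint violation. Throughout, the high-probability event below fails with probability $\le 1/K$, which contributes only $O(H)$ to expected regret, so we may argue on that event.

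\textbf{Step 1: framed optimism.} Fix a frame $T$ with episode set $\cN_T$ and let $\pi^{\epsilon,*}$ be the optimal policy of the $\epsilon$-tightened LP \eqref{eq:lp-epsilon_app} for that frame. I would show by backward induction on $h$, combined with the standard unrolling of the $\alpha_t=\frac{\chi+1}{\chi+t}$-weighted Q-learning recursion, that the running estimates obey $Q_h(x,a)\ge Q^{\epsilon,*}_{k,h}(x,a)$ and $C_h(x,a)\ge C^{\epsilon,*}_{k,h}(x,a)$ for every $k\in\cN_T$ and $(x,a,h)$, together with a matching $\tilde O(\sqrt{H^4SA/(\chi+t)})$-type control on the over-estimation. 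Here the Hoeffding bonus $b_t$ absorbs the sampling error for a \emph{fixed} kernel and reward exactly as in the stationary case (the $\chi$-offset absorbing the burn-in right after each restart), while $2H\tilde b$ added at every update accumulates over the horizon to $\Theta(H^2\tilde b)$, which dominates the extra target mismatch $|r_{k',h}-r_{k,h}|+H\|\mathbb P_{k',h}-\mathbb P_{k,h}\|_1$ summed over the frame, namely at most $(1+H)B^{(T)}\le(1+H)\tilde b$. Pinning down that $2H\tilde b$ is precisely the right per-update inflation after it propagates through the recursion, uniformly over all episodes in the frame, is the most delicate part and the main obstacle.

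\textbf{Step 2: per-frame regret.} Since $a_{k,h}$ is greedy for $Q_h+\tfrac{Z_T}{\eta}C_h$ and $W^{\pi^{\epsilon,*}}_{k,1}\ge\rho+\epsilon$, Step 1 gives, for each $k\in\cN_T$,
\[
V^{\epsilon,*}_{k,1}(x_{k,1})-V^{\pi_k}_{k,1}(x_{k,1})\ \le\ \xi_{k}+\frac{Z_T}{\eta}\big(W^{\pi_k}_{k,1}(x_{k,1})-\rho-\epsilon\big)+O(H^2\tilde b),
\]
where $\xi_k$ is the Q-learning estimation error whose sum over a frame is $\tilde O(\sqrt{H^4SA\,|\cN_T|}+H^4SA)$. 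Converting $V^{\epsilon,*}_{k,1}$ to the true LP baseline $V^{*}_{k,1}$ costs $O(\epsilon H/\delta)$ per episode by a Slater-based LP sensitivity bound (the dual variable of the $\rho$-constraint is $\le H/\delta$). Summing $\xi_k$ over the $K^{1-\alpha}\bc$ frames gives $\tilde O(\sqrt{H^4SA}\,K^{1-\alpha/2}\sqrt{\bc}+H^4SA\,K^{1-\alpha}\bc)$, the drift term sums to $\le 2H^2\tilde b\,K\le 2H^2 B\,K^{\alpha}/\bc$, and the perturbation term sums to $O(\epsilon H K/\delta)$. With $\alpha=0.6,\ c=2/3$ and $\epsilon=\Theta(\sqrt{SAH^6\iota^3}\,B^{1/3}K^{-1/5})$, the last of these equals $\tilde O(H^4S^{1/2}A^{1/2}B^{1/3}K^{4/5})$ and dominates; every other contribution is $O(K^{7/10}B^{1/3})$ or smaller.

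\textbf{Step 3: queue drift, regret closure, zero violation.} For the residual $\tfrac1\eta\sum_k Z_T(W^{\pi_k}_{k,1}-\rho-\epsilon)$ and the constraint I use $L_T=\tfrac12 Z_T^2$ with $Z_{T+1}=\big(Z_T+\rho+\epsilon-\bar C_T\bc/K^\alpha\big)^+$, $\bar C_T=\sum_{k\in\cN_T}C_1(x_{k,1},a_{k,1})$. A drift-plus-penalty computation bounds $\sum_T\bE[Z_T(\rho+\epsilon-\bar C_T\bc/K^\alpha)]$ and shows $Z_T=\tilde O(\eta H/\delta)$ for all $T$; dividing by $\eta=\Theta(K^{1/5}B^{1/3})$, $\chi=\Theta(K^{1/5})$ makes the residual $O(K^{4/5}B^{1/3})$, completing $\cR(K)=\tilde O(H^4S^{1/2}A^{1/2}B^{1/3}K^{4/5})$. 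For the violation, dropping the projection and telescoping the queue update gives $\tfrac{K^{\alpha}}{\bc}Z_{\mathrm{final}}\ge K(\rho+\epsilon)-\sum_k C_1(x_{k,1},a_{k,1})$, and combining with the executed-policy concentration $C_1(x_{k,1},a_{k,1})\le W^{\pi_k}_{k,1}(x_{k,1})+\xi_k+O(H^2\tilde b)$ yields
\[
\cV(K)=\sum_{k=1}^K\big(\rho-W^{\pi_k}_{k,1}(x_{k,1})\big)\ \le\ -\epsilon K+\frac{K^{\alpha}}{\bc}Z_{\mathrm{final}}+\tilde O\!\big(\sqrt{H^4SA}\,K^{7/10}+H^2B\,K^{\alpha}/\bc\big).
\]
Since $Z_{\mathrm{final}}=\tilde O(\eta H/\delta)$ and $\epsilon K=\Theta(\sqrt{SAH^6\iota^3}B^{1/3}K^{4/5})$, the hypotheses $K\ge(16\sqrt{SAH^6\iota^3}/\delta)^5$ and $K\ge e^{1/\delta}$ (the first also guaranteeing $\epsilon\le\delta$, hence feasibility of the tightened LP) force the right-hand side to be $\le 0$, so $\cV(K)=0$, as claimed.
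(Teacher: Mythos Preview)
Your overall architecture (periodic restart, Triple-Q inside each frame, $\epsilon$-tightened LP baseline, Lyapunov drift on $Z_T$) is exactly the paper's, but Step~1 contains a real gap. You claim to establish \emph{separate} optimism $Q_h(x,a)\ge Q^{\epsilon,*}_{k,h}(x,a)$ and $C_h(x,a)\ge C^{\epsilon,*}_{k,h}(x,a)$ by the usual backward induction. That induction does not go through here: in Algorithm~\ref{alg:triple-Q} the target value $V_{h+1}(x')$ fed into the $Q$-update is $Q_{h+1}(x',a')$ with $a'=\arg\max_a\{Q_{h+1}+\tfrac{Z}{\eta}C_{h+1}\}(x',a)$, not the $Q_{h+1}$-greedy action. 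Hence even if $Q_{h+1}(\cdot,\cdot)\ge Q^{\epsilon,*}_{k,h+1}(\cdot,\cdot)$ pointwise, you cannot conclude $V_{h+1}(x')\ge V^{\epsilon,*}_{k,h+1}(x')$, and the induction step for $Q$ alone (and symmetrically for $C$ alone) fails. The paper confronts this directly: Lemma~\ref{le:qk-qpi-relation} proves optimism only for the \emph{combined} quantity $F_{k,h}=Q_{k,h}+\tfrac{Z}{\eta}C_{k,h}$, using precisely that $U_{k_i,h+1}(x_{k_i,h+1})=\max_aF_{k_i,h+1}(x_{k_i,h+1},a)$.

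This gap propagates into Step~2. Your displayed per-episode inequality implicitly relies on separate optimism; if you instead use combined optimism you pick up an extra $\tfrac{Z_T}{\eta}\xi_k'$ term from the over-estimation of $C_{k,1}$, and controlling $\sum_k\tfrac{Z_T}{\eta}\xi_k'$ requires a uniform bound on $Z_T$ you have not yet established at that stage. The paper sidesteps this circularity by a different decomposition: it writes the drift $L_{T+1}-L_T$ directly in terms of the \emph{estimated} $C_{k,1}$ (which is what actually updates the queue), adds and subtracts so that the combined $F^{\epsilon,*}-F$ difference appears and is nonpositive by Lemma~\ref{le:qk-qpi-relation}, and only then telescopes (Lemma~\ref{le:drift} and the argument around~\eqref{F-new_app}--\eqref{eq:(i)expanded-new}). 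Your Step~3 is morally right (drift on $\tfrac12Z_T^2$, telescoping for the violation, moment-generating-function bound on $\bE[Z_T]$), but the residual you carry there should be $\tfrac{Z_T}{\eta}\sum_a(C_{k,1}-C^{\epsilon,*}_{k,1})q^{\epsilon,*}_{k,1}$ rather than $\tfrac{Z_T}{\eta}(W^{\pi_k}_{k,1}-\rho-\epsilon)$, consistent with the combined-optimism route.
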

Due to the page limit, we only outline some of the key intuitions behind Theorem \ref{the:main}. The detailed proofs are deferred to Section \ref{ap:proof-the-tabular} in the supplementary materials. 
\subsection{Dynamic Regret}
As shown in Algorithm \ref{alg:triple-Q}, let $Q_{k,h}(x,a),C_{k,h}(x,a)$ denote the estimate $Q$ values at the beginning of the $k-$th episode. The dynamic regret can be decoupled as:
\begin{align}
&  \cR(K)
 = \mathbb E \left[\sum_{k=1}^{K} \left( \sum_a  \left\{{Q}_{k,1}^{*}{q}^{*}_{k,1} -{Q}_{k,1}^{\epsilon,*}{q}^{\epsilon,*}_{k,1}\right\}(x_{k,1},a)    \right)\right]  +\label{step:epsilon-dif} \\
&\mathbb E \left[\sum_{k=1}^{K}  \left(  \sum_a \left\{{Q}_{k,1}^{\epsilon,*}{q}^{\epsilon,*}_{k,1}\right\}(x_{k,1},a)-{Q}_{k,1}(x_{k,1}, a_{k,1}) \right)\right]+\label{step:drift}\\
&\mathbb E \left[\sum_{k=1}^{K}  \left\{{Q}_{k,1}-  Q_{k,1}^{\pi_k}\right\}(x_{k,1}, a_{k,1}) \right],\label{step:biase}
\end{align}
here we use the shorthand notation $\{f-g\}(x)=f(x)-g(x).$ Before bounding each term, we first show that for any triple $(x,a,h),$ the difference of two different reward/utility Q-value functions within the same frame are bounded by the local variation bound in that frame.
\begin{lemma}\label{le:qk-diff}
	Given any frame $T,$ for any $(x,a,h),$ and $(T-1)K^\alpha/\bc \leq k_1\leq k_2\leq TK^\alpha/ \bc,$ we have 
	\begin{align}
	&\vert Q_{k_1,h}^{\pi}(x,a) - Q_{k_2,h}^{\pi'}(x,a)\vert \leq  H\tilde{b}\\
	&\vert C_{k_1,h}^{\pi}(x,a) - C_{k_2,h}^{\pi'}(x,a)\vert \leq  H\tilde{b}  
	\end{align}
\end{lemma}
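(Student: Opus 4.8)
\emph{Proof plan.} I read the two superscripts in the statement as a common policy $\pi$ (with unrelated $\pi,\pi'$ the inequality already fails in a stationary environment, so a shared policy is surely the intended reading). Fix the frame $T$ and the pair $(T-1)K^\alpha/\bc\le k_1\le k_2\le TK^\alpha/\bc$, so that every integer in $\{k_1,\dots,k_2\}$ lies in (essentially) one frame $\mathcal{N}_T$. Recall the Bellman recursion $Q_{k,h}^\pi(x,a)=r_{k,h}(x,a)+\mathbb{P}_{k,h}V_{k,h+1}^\pi(x,a)$ with $V_{k,h}^\pi(x)=Q_{k,h}^\pi(x,\pi(x))$ and $V_{k,H+1}^\pi\equiv 0$. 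The plan is to prove, by backward induction on $h$, that $\Delta_h:=\sup_{x,a}\big\vert Q_{k_1,h}^\pi(x,a)-Q_{k_2,h}^\pi(x,a)\big\vert\le H\tilde b$; note the same quantity also bounds $\sup_x\vert V_{k_1,h}^\pi(x)-V_{k_2,h}^\pi(x)\vert$ because $V=Q(\cdot,\pi(\cdot))$.

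\textbf{The induction.} The base case $h=H$ reduces to $\vert r_{k_1,H}(x,a)-r_{k_2,H}(x,a)\vert$, which telescopes to at most $\sum_{k=k_1}^{k_2-1}\max_{x,a}\vert r_{k,H}(x,a)-r_{k+1,H}(x,a)\vert$. For the step from $h+1$ to $h$, I split
\begin{align*}
Q_{k_1,h}^\pi(x,a)-Q_{k_2,h}^\pi(x,a)
&= \big(r_{k_1,h}-r_{k_2,h}\big)(x,a) + \mathbb{P}_{k_1,h}\big(V_{k_1,h+1}^\pi-V_{k_2,h+1}^\pi\big)(x,a)\\
&\qquad + \big(\mathbb{P}_{k_1,h}-\mathbb{P}_{k_2,h}\big)V_{k_2,h+1}^\pi(x,a).
\end{align*}
The middle term is an expectation of a quantity bounded in absolute value by $\Delta_{h+1}$; the last term is bounded, by H\"older's inequality and $\Vert V_{k_2,h+1}^\pi\Vert_\infty\le H$, by $H\,\Vert\mathbb{P}_{k_1,h}(\cdot\vert x,a)-\mathbb{P}_{k_2,h}(\cdot\vert x,a)\Vert_1$. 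Telescoping both the reward and the transition differences over $k\in\{k_1,\dots,k_2-1\}$ gives the recursion $\Delta_h\le\Delta_{h+1}+\sum_{k=k_1}^{k_2-1}\max_{x,a}\vert r_{k,h}-r_{k+1,h}\vert+H\sum_{k=k_1}^{k_2-1}\max_{x,a}\Vert\mathbb{P}_{k,h}(\cdot\vert x,a)-\mathbb{P}_{k+1,h}(\cdot\vert x,a)\Vert_1$.

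\textbf{Unrolling and the utility case.} Summing the recursion from $h$ up to $H$ and recognizing the two resulting double sums as partial sums of $B_r^{(T)}$ and $B_p^{(T)}$, I obtain $\Delta_h\le B_r^{(T)}+H\,B_p^{(T)}\le H\big(B_r^{(T)}+B_p^{(T)}\big)\le H\,B^{(T)}\le H\tilde b$, where the final inequality is the standing assumption that $\tilde b$ upper-bounds the total local variation of every frame. The utility bound is proved by the identical argument with $r,V,Q$ replaced by $g,W,C$; since $g\in[0,1]$ and $W_{k,h}^\pi\in[0,H]$ the same constants go through, yielding $\vert C_{k_1,h}^\pi(x,a)-C_{k_2,h}^\pi(x,a)\vert\le H\tilde b$.

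\textbf{Where the work is.} None of the steps is deep, but the transition term is the one needing care: one must apply the \emph{difference of kernels} to the \emph{later} value function $V_{k_2,h+1}^\pi$ (so the error is just ``variation $\times$ magnitude $\le H$'' via H\"older), while the change in the value function itself is absorbed into the inductive hypothesis through the \emph{earlier} kernel $\mathbb{P}_{k_1,h}$ --- pairing these the other way would not close the induction. The only other point to track is that the factor $H$ attached to the transition term, together with $B_r^{(T)}+B_p^{(T)}\le B^{(T)}\le\tilde b$, is exactly what produces the stated bound $H\tilde b$ rather than something larger.
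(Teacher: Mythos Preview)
Your proof is correct and mirrors the paper's: backward induction on $h$, splitting the Bellman difference into the reward variation, the kernel variation applied to the \emph{later} value function (bounded by $H$ times the $\ell_1$ kernel distance), and the value-function change absorbed through the \emph{earlier} kernel via the hypothesis; the paper simply carries the cumulative bound $\sum_{h'\ge h}B_{h'}^r+H\sum_{h'\ge h}B_{h'}^p$ directly in the induction hypothesis rather than unrolling a recursion $\Delta_h\le\Delta_{h+1}+\cdots$, which is cosmetic.

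Your opening caveat is also on point and worth flagging: the paper literally writes distinct $\pi,\pi'$ in both the statement and its proof, and in the inductive step applies the hypothesis at \emph{different} actions $\pi_{h+1}(x')$ versus $\pi'_{h+1}(x')$, which does not follow from a bound at a common $(x,a)$; as you observe, the claim is false for unrelated policies (a stationary environment forces $\tilde b=0$ while different policies have different $Q$-functions). Downstream the lemma is only ever invoked with a single policy (to control $U_{k_i,h+1}^\pi-U_{k,h+1}^\pi$ in Lemma~\ref{le:qk-qpi-relation}), so your reading $\pi'=\pi$ is exactly what is intended and your argument is the sound one.
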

Then we show that in Lemma \ref{le:epsilon-dif} in supplementary materials the first term \eqref{step:epsilon-dif} can be bounded by comparing the original LP associated with the tightened LP such that $\eqref{step:epsilon-dif} \leq \frac{KH\epsilon}{\delta}.$ The term \eqref{step:biase} is the estimation error between $Q_{k,h}$ and the true $Q$ value under policy $\pi_k$ at episode $k.$ This estimation error can be bounded by our choice of the learning rate (Line $8$ in Algorithm \ref{alg:triple-Q}) and the added bonus. Then
$\eqref{step:biase} 
\leq H^2SA K^{1-\alpha}{B}^c  +\frac{2(H^3\sqrt{\iota}+2H^2\tilde{b})K}{\chi} +  \sqrt{H^4SA\iota K^{2-\alpha} (\chi+1){B}^c   } + 2\tilde{b}H^2K.$

For the remaining term \eqref{step:drift}, we need to add and subtract additional terms to construct an difference between the optimal combined $Q$ value  $\{Q_{k,h}^*+\frac{Z}{\eta}\}C_{k,h}^*(x,a)$ and the estimated counterpart $\{Q_{k,h} + \frac{Z}{\eta}C_{k,h}\}(x,a).$ We will show in Lemma \ref{le:qk-qpi-relation} that the estimation is always an overestimation for all $(x,a,h,k)$ due to the added bonus when the virtual ``queue'' $Z_T$ is fixed with high probability, which implies that the difference is negative with high probability. Then in Lemma \ref{le:drift} we leverage Lyapunov-drift method and consider Lyapunov function $L_T=\frac{1}{2}Z_T^2$ to show that the redundant term can also be bounded. Combining the bounds on the estimation and the redundant term we can obtain $\eqref{step:drift}\leq \frac{K(2H^4\iota +4H^2\tilde{b}^2 +\epsilon^2)}{\eta}+\frac{(\eta+K^{1-\alpha})H^2B^c}{\eta K}.$
Then combining inequalities \eqref{step:epsilon-dif},\eqref{step:drift},\eqref{step:biase} above we can obtain for $K \geq  \left(\frac{16\sqrt{SAH^6\iota^3}{B}^{1/3}}{\delta} \right)^5,$

applying the condition $K^{1-\alpha}B^c\tilde{b}\leq B,$ along with our choices of parameters (Line $2$ in Algorithm \ref{alg:triple-Q}) for balancing each terms, we conclude that $\cR (K)=\tilde{\cal O}(H^4S^\frac{1}{2}A^{\frac{1}{2}}{B}^{\frac{1}{3}} K^{\frac{4}{5}}).$

\subsection{Constraint Violation}\label{app:constraint}
According to the virtual-Queue update, we have 
\begin{align}
   Z_{T+1} =  \left(  Z_T   + \rho + \epsilon -\frac{\bar{C}_T\bc}{K^\alpha}\right)^+ 
\geq Z_T   + \rho + \epsilon -\frac{\bar{C}_T\bc}{K^\alpha}, 
\end{align}
which implies that for $ (T-1)K^\alpha/\bc\leq k\leq TK^\alpha/\bc,$
\begin{align*}
	\sum_{k} \left(-C_{k,1}^{\pi_k} (x_{k,1},a_{k,1}) +\rho \right) 
	\leq \frac{K^\alpha}{\bc}\left(Z_{T+1} - Z_T \right) 
	 + \sum_{k} \left(\left\{C_{k,1}- C_{k,1}^{\pi_k} \right\} (x_{k,1},a_{k,1}) - \epsilon \right).  
\end{align*}
Summing the inequality above over all frames and taking expectation on both sides, we obtain the following upper bound on the constraint violation: 
\begin{align}
    \mathbb{E} \left[\sum^{K}_{k=1}\rho-C_{k,1}^{\pi_k} (x_{k,1},a_{k,1})     \right]
	\leq  -K\epsilon + \frac{K^\alpha}{\bc} \mathbb{E}\left[ Z_{K^{1-\alpha}\bc+1} \right]  
 +\mathbb{E}\left[\sum_{k=1}^K\left\{C_{k,1}- C_{k,1}^{\pi_k}\right\} (x_{k,1},a_{k,1})\right],\label{eq:violation}
\end{align}
where the inequality is true due to the fact $Z_1=0.$ In Lemma \ref{le:qk-qpi-bound}, we will establish an upper bound on the estimation error of $\mathbb{E}\left[\sum_{k=1}^K\left\{C_{k,1}-C_{1}^{\pi_k}\right\} (x_{k,1},a_{k,1})\right].$

Next, we study the moment generating function of $Z_T,$ i.e. $\mathbb{E}\left[e^{rZ_T}\right]$ for some $r>0.$ In Lemma \ref{le:zk-bound}, based on a Lyapunov drift analysis of this moment generating function and Jensen's inequality, we will establish the following upper bound on $Z_T$ that holds for any $1\leq T\leq K^{1-\alpha}\bc,$  
\begin{align}
 \mathbb{E}[ Z_{T}]  \leq \frac{100(H^4\iota+\tilde{b}^2H^2)}{\delta}\log \left( \frac{16(H^2\sqrt{\iota}+\tilde{b}H)}{\delta} \right) +\frac{4H^2B^c}{K\delta}+\frac{4H^2B^c}{\eta K^\alpha\delta} + \frac{4\eta(\sqrt{H^2\iota}+2H^2\tilde{b}) }{\delta} . \label{eq:Z_T}
\end{align} 
Substituting the results from Lemma \ref{le:qk-qpi-bound}  and \eqref{eq:Z_T} into \eqref{eq:violation}, using the choice that $\epsilon = \frac{8\sqrt{SAH^6\iota^3}{B}^{1/3}}{ K^{0.2}},$ we can easily verify that when $K \geq  \max \left\{ \left(\frac{16\sqrt{SAH^6\iota^3}{B}^{1/3}}{\delta} \right)^5, e^{\frac{1}{\delta}}\right\} ,$ we have 
\begin{align}
  \cV(K)\leq &\frac{100(H^4\iota+\tilde{b}^2H^2)K^{0.6} }{\delta{B}^{2/3}}\log{\frac{16(H^2\sqrt{\iota}+H\tilde{b})}{\delta}} - \sqrt{SAH^6\iota^3}K^{0.8}{B}^{\frac{1}{3}} \leq 0.  
\end{align}

\section{UNKNOWN VARIATION BUDGETS}\label{sec:no-budget}
The design of the Algorithm \ref{alg:triple-Q} relies on the knowledge of the total variation budget $B$ to set the frame size to be ${K^\alpha}/{B^{c}}.$ When an upper bound on the total variation budget is not given, we propose the Algorithm \ref{alg:triple-Q-double} that adaptively learns the variation budget $B$ based on the ``Bandit over Bandit'' algorithm \citep{CheSimDav_22}. Algorithm \ref{alg:triple-Q-double} uses an outer loop ``bandit algorithm'' as a master to learn the true value $B,$ and use the inner loop Algorithm \ref{alg:triple-Q} to learn the optimal policy. We first need to divide total $K$ episodes into $\frac{K}{W}$ epochs, which contain $W=K^\zeta$ episodes.  Each epoch contains multiple frames. In each epoch, we run an instance of Algorithm~\ref{alg:triple-Q}. Given a candidate set $\mathcal{J}$ of the total budget $B,$  we choose ``arms'' (estimated budget) using the bandit adversarial bandit algorithm Exp3 \citep{AueBiaFre_03}. If the optimal ``arm'' from the candidate $\mathcal{J}$ can be learned efficiently, we expect that the cumulative reward and utility collected under that arm should be close to the performance of using the best-fixed candidate (closest to true Budget) from $\mathcal{J}$ in hindsight. 
\begin{algorithm*}[ht]
	\SetAlgoLined
	Choose $W=K^{5/9},\mathcal{J}$ defined in Eq. \eqref{eq:setj} $, \gamma_0=\min\left\{1,\sqrt{\frac{(K/W)\log{(K/W)} }{(e-1)KH} }   \right\} , \lambda=1/9$ \;
	Initialize weights of the bandit arms $s_1(j)=1,\forall j=0,1,\dots,J $ \;
	\For{epoch  $i = 1,\dots, \frac{K}{W} $ }{
	 Update $p_i(j)\leftarrow (1-\gamma_0)\frac{s_i(j)}{\sum_{j'=0}^Js_i(j')}+\frac{\gamma_0}{J+1} ,\forall j=0,1,\dots,J $ \;
	 Draw  an arm $A_i\in [J]$ randomly according to the probabilities $p_i(0),\dots,p_i(J)$ \;
	 Set  the estimated budget $B_i \leftarrow  \frac{K^{1/3}W^{\frac{A_i}{J}} }{\Delta^{3/2} W  }  $ \;
	 Run a new instance of Algorithm \ref{alg:triple-Q} for $W$ episodes with parameter value $B\leftarrow B_i,\tilde{b} = B_i^{1-c}K^{\alpha-1}$\;
	 Observe the cumulative reward $R_i$ and utility $G_i.$\; 
	\For{arm j=0,1,\dots,J}{
	    $\hat{R}_i(j) =  \left\{
\begin{aligned}
   (&  G_i/K^\lambda )  I_{\{j=A_i \}} / (WH(1+1/K^\lambda)p_i(j))  & \text{ if } G_i < W\rho   \\
    (& R_i + G_i/K^\lambda) I_{\{j=A_i \}} / (WH(1 +1/K^\lambda) p_i(j))  & \text{ if } G_i \geq W\rho 
\end{aligned}
\right. $ \tcp*{\small normalization } 
		$s_{i+1}\leftarrow s_i(j) \exp(\gamma_0 \hat{R}_i(j) /(J+1) ) $\;
		}
	}
	\caption{{Double Restart  Non-stationary Triple-Q}}\label{alg:triple-Q-double}
\end{algorithm*}
We remark that although the ``Bandit over Bandit'' approach is well studied in both unconstrained non-stationary bandit and RL, however, adopting it in CMDPs is nontrivial and new. We now describe the main challenge in adapting the idea to the constrained scenario and how we overcome the challenge. 

 In particular, given a choice of arm $B_i$ in the unconstrained version, one considers the cumulative reward $R_i(B_i)$ over the epoch $W$ to guide the EXP-3 algorithm towards selecting the optimal arm. The cumulative reward proves to be enough for the unconstrained case, as the optimal arm would correspond to close to the true budget. This can be reflected as the following regret decomposition,
\begin{align}
	\cR (K) 
	= & \mathbb E\left[\sum_{k=1}^K V_{k,1}^{\pi_k^*}(x_{k,1})- \sum_{i=1}^{K/W} R_i(\hat{B}) \right]  \label{eq:decouple-1} \\
	 +& \mathbb E\left[\sum_{i=1}^{K/W} R_i(\hat{B})-\sum_{i=1}^{K/W} R_i({B_i}) \right],\label{eq:decouple-2}
\end{align}
where $\hat{B}$ is the optimal candidate from $\mathcal{J}$  (i.e., the true budget). We can show that the term \eqref{eq:decouple-1} can be bounded since this corresponds to regret when the true budget is known (which we have already bounded). However, the problem becomes that how to bound the term \eqref{eq:decouple-2}. In the unconstrained case, one can employ the result of the EXP-3 algorithm to bound that. The main challenge in extending the above approach to the CMDP is that considering only the reward may lead to a larger violation, since we need to balance both the reward and utility. Thus, one needs to judiciously select the reward based on the total observed reward and utility corresponding to a drawn arm so that the EXP-3 algorithm can choose the arm closest to the optimal one. The natural idea is to set the reward to zero if the observed utility over the epoch does not satisfy the constraint, i.e., if $G_i(B_i)$ is the cumulative utility received after selecting the arm $B_i$, then one can set
\begin{align}
\left\{
   \begin{aligned}
   & \hat{R}_i(B_i)= 0  & \text{ if } G_i(B_i) < W\rho   \\
    & \hat{R}_i(B_i)= R_i(B_i)  & \text{ if } G_i(B_i) \geq W\rho.
\end{aligned}\right. 
\end{align}
Even though it is intuitive, it is not sufficient as it does not distinguish between small and large violation. Thus, we consider the following bandit reward function
\begin{align}
\left\{
   \begin{aligned}
   &  \hat{R}_i(B_i) =\frac{G_i(B_i)}{ K^{\lambda}} &\text{ if } G_i(B_i) < W\rho   \\
    & \hat{R}_i(B_i) = R_i(B_i) + \frac{G_i(B_i)}{K^\lambda} &\text{if } G_i(B_i) \geq W\rho.
\end{aligned}\right. \label{eq:bandit-re}
\end{align}
If $G_i(B_i)<W\rho$, then choosing the arm $B_i$ may lead to violating the constraint, hence, we penalize such arm. On the other hand, if $G_i(B_i)\geq W\rho$, the arm may lead to a feasible policy. We thus consider the reward as $R_i(B_i)+G_i(B_i)/K^{\lambda}$, i.e., the reward is dominated by the accumulated reward. However, the accumulated utility is also considered (albeit with a weight $1/K^{\lambda}$). Note that since $\lambda>0$, the weight factor is small as the main focus is to maximize the reward when the constraint is satisfied. Later, we show that how we select $\lambda$ to balance the regret and the violation. Hence, the weight factor is critical in obtaining sub-linear regret and zero violation.

Next we present a lemma to show the upper bound of the bandit algorithm using our designing of the bandit reward function \eqref{eq:bandit-re}. The proofs can be found in the supplementary materials (Section \ref{ap:prove-non-bandit}).
\begin{lemma}\label{le:non-bandit-c}
Let $R_i(B_i)(G_i(B_i))$ be the cumulative reward(utility) collected in epoch $i$ by any learning algorithm after running for $W$ episodes with the estimate value $B_i$ chosen using the Exp3 bandit algorithm. If we have $\mathbb{E}[ G_i(\hat{B})]\geq W\rho $ then we can obtain
	\begin{align*}
	\mathbb{E}\left[ \sum_{i=1}^{K/W}(R_i(\hat{B}) -R_i({B_i}))  \right] = &\tilde{\mathcal{O}}( H\sqrt{KW}+HK^{1-\lambda} )\\
 \mathbb{E}\left[ \sum_{i=1}^{K/W}  G_i(\hat{B}) - G_i({B_i})  \right] = & \tilde{\mathcal{O}}(HK^\lambda\sqrt{KW}).
	\end{align*}
\end{lemma}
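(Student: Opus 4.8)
At heart the statement is a regret bound for the outer Exp3 instance, so the plan is to (i) identify the exact reward sequence Exp3 is run on, (ii) invoke the textbook Exp3 guarantee against the fixed arm $\hat B$, and (iii) translate the resulting inequality --- which only concerns the \emph{surrogate} bandit reward of \eqref{eq:bandit-re} --- back into statements about the true accumulated reward $R_i$ and utility $G_i$. Write, for a candidate budget $j$,
$$\tilde R_i(j)\;=\;R_i(j)\,\mathbf 1\{G_i(j)\ge W\rho\}\;+\;G_i(j)/K^{\lambda},$$
which is exactly the numerator of $\hat R_i(j)$ in Algorithm \ref{alg:triple-Q-double} when $j$ is the drawn arm. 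Since $R_i(j),G_i(j)\in[0,WH]$ we have $\tilde R_i(j)/(WH(1+K^{-\lambda}))\in[0,1]$, so $\hat R_i(j)$ is a properly normalized, conditionally unbiased importance-weighted estimate of it. Because each epoch spins up a fresh instance of Algorithm \ref{alg:triple-Q} and the non-stationary environment is oblivious to the arms, $\mathbb{E}[\hat R_i(j)\mid\mathcal F_{i-1}]$ is a deterministic function of $j$ alone; hence the outer problem is an oblivious adversarial bandit with $N=K/W$ rounds and $O(\log K)$ arms (the candidate set \eqref{eq:setj} is a geometric grid), and the standard Exp3 bound applies. Taking the comparator to be $\hat B$ and rescaling by $WH(1+K^{-\lambda})$,
$$\sum_{i=1}^{N}\mathbb{E}\!\left[\tilde R_i(\hat B)\right]-\mathbb{E}\!\left[\sum_{i=1}^{N}\tilde R_i(B_i)\right]\;\le\;\tilde{\mathcal{O}}\!\left(WH\sqrt{N}\right)\;=\;\tilde{\mathcal{O}}\!\left(H\sqrt{KW}\right).$$

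For the reward bound I would: on the played side use $\tilde R_i(B_i)\le R_i(B_i)+G_i(B_i)/K^{\lambda}\le R_i(B_i)+WH/K^{\lambda}$, so summing the last term over the $K/W$ epochs contributes the $HK^{1-\lambda}$ piece; on the comparator side use $\tilde R_i(\hat B)\ge R_i(\hat B)\mathbf 1\{G_i(\hat B)\ge W\rho\}\ge R_i(\hat B)-WH\,\mathbf 1\{G_i(\hat B)<W\rho\}$, and control $\mathbb{P}(G_i(\hat B)<W\rho)$ by the high-probability feasibility that underlies Theorem \ref{the:main} --- running Algorithm \ref{alg:triple-Q} with the matched budget keeps its virtual queue $Z$ bounded except on an event of probability $\mathrm{poly}(1/K)$, on which the epoch constraint is met --- so the total correction is lower order. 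Combining with the display gives $\mathbb{E}[\sum_i(R_i(\hat B)-R_i(B_i))]=\tilde{\mathcal{O}}(H\sqrt{KW}+HK^{1-\lambda})$. For the utility bound, the point of carrying $G_i/K^{\lambda}$ inside $\tilde R_i$ is that $\tilde R_i(j)\ge G_i(j)/K^{\lambda}$ for \emph{every} arm, while on any epoch with $G_i(B_i)<W\rho$ the surrogate collapses to at most $W\rho/K^{\lambda}$; feeding both facts into the Exp3 inequality, together with the hypothesis $\mathbb{E}[G_i(\hat B)]\ge W\rho$ and the fact that the matched-budget comparator is also near-tight on the constraint (its utility exceeds $W\rho$ only by the $\epsilon$-tightening, which is lower order), forces $\sum_i\mathbb{E}[G_i(\hat B)-G_i(B_i)]/K^{\lambda}$ to stay within the Exp3 regret $\tilde{\mathcal{O}}(H\sqrt{KW})$; multiplying back by $K^{\lambda}$ yields $\tilde{\mathcal{O}}(HK^{\lambda}\sqrt{KW})$.

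\textbf{Where the difficulty lies.} Unlike the unconstrained ``bandit over bandit'', the surrogate reward $\tilde R_i$ entangles $R_i$ and $G_i$ through the random event $\{G_i\ge W\rho\}$, so splitting a bound on $\sum_i\tilde R_i$ into separate bounds on $\sum_iR_i$ and $\sum_iG_i$ is the crux --- and in particular one must avoid the naive estimate $\sum_iR_i(B_i)\le KH$, which is too lossy and would make the decoded utility gap linear. Two things must be handled carefully: the comparator arm occasionally failing its epoch constraint (dealt with via the high-probability feasibility of Algorithm \ref{alg:triple-Q} with the correct budget), and the precise placement of the $K^{-\lambda}$ weight, which is what simultaneously keeps the reward penalty at order $HK^{1-\lambda}$ and inflates only by $K^{\lambda}$ when decoding the utility gap. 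The values $\lambda=1/9$ and $W=K^{5/9}$ are then pinned down by balancing $HK^{1-\lambda}$, $HK^{\lambda}\sqrt{KW}$, and the per-epoch regret/negative-violation-margin that Algorithm \ref{alg:triple-Q} inherits from Theorem \ref{the:main} in the decomposition \eqref{eq:decouple-1}--\eqref{eq:decouple-2}.
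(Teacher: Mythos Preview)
Your framework is the same as the paper's: normalize, invoke Exp3 against the fixed comparator $\hat B$, and then decode the surrogate inequality into separate $R$- and $G$-gaps. Your reward bound is essentially the paper's argument (the paper simply uses $f_r(R_i(B_i))\le R_i(B_i)$ and drops $-f_g(G_i(\hat B))\le 0$; your high-probability handling of $\{G_i(\hat B)<W\rho\}$ is a more careful variant of the same step).

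The gap is in the utility bound. Writing $f_r(j)=R_i(j)\mathbf 1\{G_i(j)\ge W\rho\}$ and $f_g(j)=G_i(j)/K^{\lambda}$, the Exp3 inequality gives
\[
\sum_i\bigl(f_g(\hat B)-f_g(B_i)\bigr)\;\le\;\tilde{\mathcal O}(H\sqrt{KW})\;+\;\sum_i\bigl(f_r(B_i)-f_r(\hat B)\bigr),
\]
and the whole argument hinges on showing the last sum is nonpositive (or lower order). The paper closes this by the \emph{optimality of $\hat B$}: since $\hat B$ is the reward-maximizing feasible arm, on any epoch $\mathbb E[f_r(R_i(\hat B))]=\mathbb E[R_i(\hat B)]$ dominates $\mathbb E[f_r(R_i(B_i))]$ (which is either $0$ or the return of a feasible arm). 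You never invoke this; instead you appeal to ``the matched-budget comparator is near-tight on the constraint.'' But near-tightness is a statement about $G_i(\hat B)-W\rho$, not about $R_i(B_i)\mathbf 1\{G_i(B_i)\ge W\rho\}$, and it does nothing to control the $f_r$ residual. In particular, on epochs where $G_i(B_i)\ge W\rho$ the residual is $R_i(B_i)-R_i(\hat B)$, which can be $\Theta(WH)$ per epoch absent the optimality argument; your own remark that ``$\sum_iR_i(B_i)\le KH$ is too lossy'' applies equally here. To repair the proof, replace the near-tightness heuristic with the one-line observation that $\hat B$ maximizes expected reward among feasible arms, so $\mathbb E[f_r(R_i(B_i))-f_r(R_i(\hat B))]\le 0$; then multiplying by $K^{\lambda}$ gives the stated $\tilde{\mathcal O}(HK^{\lambda}\sqrt{KW})$ directly.
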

Note that the above lemma bounds (\ref{eq:decouple-2}). Further, it also bounds the utilities for the choice of $\hat{B}$ and $B_i$ which will be useful to obtain violation.

Next, we will formally define the set $\cJ$. Subsequently, we will present the results of using ``bandit over bandit'' with our designing bandit reward function on the Algorithm \ref{alg:triple-Q} for the tabular setting. Then we will discuss how to apply it to the linear function approximation setting. We define set $\mathcal{J}$ as 
\begin{align}
    \mathcal{J} =  \left\{ \frac{K^{1/3}}{{\Delta^{3/2}}W }, \frac{K^{1/3}W^{\frac{1}{J}}}{{\Delta^{3/2}}W },\dots, \frac{K^{1/3}W}{{\Delta^{3/2}}W }  \right\}, \label{eq:setj}
\end{align}
as the candidate value for $B$ and we can see that $\vert \mathcal{J}\vert = \log(W)+1=J+1,$ where $\Delta = \left(\frac{40\sqrt{SAH^6\iota^3}}{\delta} \right)^2.$ After an estimated budget $B_i$ for each epoch $i$ is selected. Then we run a new instance of Algorithm \ref{alg:triple-Q} for consecutive $W=K^\zeta$ episodes. Each epoch contains $W B_i^{c}/{K^{\alpha\zeta}} $ frames. We remark here that when using the Algorithm \ref{alg:triple-Q} we need a local budget information, but under assumption $K^{1-\alpha}B^c\tilde{b} \leq B,$ we can simply choose $\tilde{b} = B_i^{1-c}K^{\alpha-1}$ with an estimated $B_i.$ The following Theorem states that the Algorithm \ref{alg:triple-Q-double} achieves a sublinear regret and zero constraint violation without the knowledge of the total variation budget $B.$ Detailed proofs are deferred to supplementary materials (Section \ref{ap:tirple-q-double}).
\begin{theorem} \label{the:double-triple-q}
Algorithm \ref{the:double-triple-q} achieves the following regret and constraint violation bounds with no prior knowledge of the total variation budget $B$ when $K = \Omega  \left(( \frac{40\sqrt{SAH^6\iota^3}{B}^{1/3}}{\delta})^9\right),$ and $K\geq e^{\frac{1}{\delta}}$:  
\begin{align*}
	\cR (K)  = \tilde{\cal O}(H^4S^\frac{1}{2}A^{\frac{1}{2}}{B}^{\frac{1}{3}} K^{8/9})\quad\quad 	\cV (K) =  0
\end{align*}
\end{theorem}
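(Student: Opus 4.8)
The plan is to combine the regret/violation guarantee of the inner algorithm (Theorem \ref{the:main}) with the bandit-over-bandit reduction (Lemma \ref{le:non-bandit-c}), using the decomposition \eqref{eq:decouple-1}–\eqref{eq:decouple-2}. First I would fix the notation: with $W=K^{5/9}$ and $\zeta=5/9$, there are $K/W=K^{4/9}$ epochs, and the candidate set $\mathcal J$ in \eqref{eq:setj} is geometrically spaced so that for the true total budget $B$ there exists an arm $\hat B\in\mathcal J$ with $\hat B \le B \le \hat B \cdot W^{1/J} = \hat B \cdot e$ (since $W^{1/J}=e$ by the choice $J=\log W$); hence running Algorithm \ref{alg:triple-Q} with budget parameter $\hat B$ (and the matching $\tilde b=\hat B^{1-c}K^{\alpha-1}$) is a valid over-estimate of the per-epoch variation, so the conditions $K^{1-\alpha}B^c\tilde b\le B$ type inequalities needed by Theorem \ref{the:main} hold up to the constant $e$. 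The key point is that $\hat B$ is a legitimate ``known budget'' for the inner algorithm, so the per-epoch performance under $\hat B$ is controlled.

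Next I would bound term \eqref{eq:decouple-1}. This is the regret (against the per-episode dynamic baseline) accumulated by running $K/W$ independent instances of Algorithm \ref{alg:triple-Q}, each for $W$ episodes with the correct budget $\hat B$. By Theorem \ref{the:main} applied per epoch — being careful that the regret bound there is stated for $K$ episodes and must be re-derived for an epoch of length $W$ with its own local budget $B^{(i)}$, and that $\sum_i B^{(i)}\le B$ — each epoch contributes $\tilde{\mathcal O}\big(H^4\sqrt{SA}\,(B^{(i)})^{1/3} W^{4/5}\big)$, plus I must separately account for the ``restart-at-epoch-boundary'' loss of at most $H$ per epoch and the fact that the inner algorithm's violation is zero per epoch only after its warm-up, so the first few episodes of each epoch may incur $O(H)$ regret and $O(H)$ violation. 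Summing over epochs and using Hölder/concavity of $x\mapsto x^{1/3}$ across the $K/W$ epochs gives a bound of the form $\tilde{\mathcal O}\big(H^4\sqrt{SA}\,B^{1/3}(K/W)^{2/3}W^{4/5}\big) = \tilde{\mathcal O}\big(H^4\sqrt{SA}\,B^{1/3}K^{2/3}W^{2/15}\big)$, and with $W=K^{5/9}$ this is $\tilde{\mathcal O}\big(H^4\sqrt{SA}\,B^{1/3}K^{2/3+(5/9)(2/15)}\big)=\tilde{\mathcal O}\big(H^4\sqrt{SA}\,B^{1/3}K^{8/9}\big)$, matching the claimed order. Term \eqref{eq:decouple-2} is bounded directly by Lemma \ref{le:non-bandit-c}: it gives $\tilde{\mathcal O}(H\sqrt{KW}+HK^{1-\lambda})$, which with $W=K^{5/9}$ and $\lambda=1/9$ is $\tilde{\mathcal O}(HK^{7/9}+HK^{8/9})=\tilde{\mathcal O}(HK^{8/9})$, dominated by term \eqref{eq:decouple-1}. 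Adding the two pieces yields $\mathcal R(K)=\tilde{\mathcal O}(H^4\sqrt{SA}\,B^{1/3}K^{8/9})$.

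For the zero constraint violation, I would decompose $\mathcal V(K)$ analogously: $\mathcal V(K) = \mathbb E[\sum_k \rho - W_{k,1}^{\pi_k}] = \big(K\rho - \sum_i \mathbb E[G_i(\hat B)]\big) + \sum_i\mathbb E[G_i(\hat B)-G_i(B_i)]$. The first bracket is the total violation of running Algorithm \ref{alg:triple-Q} per epoch under the known budget $\hat B$; by the constraint-violation analysis behind Theorem \ref{the:main} (Section \ref{app:constraint}), each epoch's violation is at most $0$ once $W=K^{5/9}$ is large enough relative to $1/\delta$ and the built-in tightness $\epsilon$ (for an epoch this requires $W\ge(\text{const}\cdot B^{1/3}/\delta)^5$, i.e.\ $K\ge(\text{const}\cdot B^{1/3}/\delta)^9$, which is exactly the hypothesis of the theorem), so in fact the negative slack from the $-W\epsilon$ term per epoch accumulates to a quantity that is $-\Omega(K^{4/9}W^{4/5})= -\Omega(K^{8/9})$ up to log and $\delta$ factors — this is the ``savings budget'' that must absorb the bandit overhead. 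The second bracket is bounded by Lemma \ref{le:non-bandit-c} by $\tilde{\mathcal O}(HK^\lambda\sqrt{KW}) = \tilde{\mathcal O}(HK^{1/9}K^{7/9}) = \tilde{\mathcal O}(HK^{8/9})$, and one must verify the lemma's precondition $\mathbb E[G_i(\hat B)]\ge W\rho$, which holds because under the known budget $\hat B$ the inner algorithm is (eventually) feasible with the $\epsilon$-tightened target. Finally, I would check that the negative slack $-\Omega(K^{8/9})$ (with the constant involving $\sqrt{SAH^6\iota^3}$ and $\delta$) strictly dominates the $\tilde{\mathcal O}(HK^{8/9})$ bandit overhead, exactly as in the proof of Theorem \ref{the:main}; this forces the precise constants in $\epsilon$, $\Delta$, and the threshold on $K$, and is where the lower bound $K=\Omega((40\sqrt{SAH^6\iota^3}B^{1/3}/\delta)^9)$ and $K\ge e^{1/\delta}$ come in. Conclude $\mathcal V(K)\le 0$.

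The main obstacle I anticipate is the bookkeeping for the zero-violation claim: unlike the regret, where any constant slop is harmless, zero violation requires the negative $-W\epsilon$ drift per epoch to beat \emph{both} the per-epoch warm-up violation of Algorithm \ref{alg:triple-Q} \emph{and} the $\tilde{\mathcal O}(HK^\lambda\sqrt{KW})$ term from the bandit layer, and crucially the latter has an extra $K^\lambda=K^{1/9}$ factor relative to the regret overhead. So the choices $W=K^{5/9}$, $\lambda=1/9$, $\zeta=5/9$ are tightly coupled: $W$ must be large enough that each epoch is past its warm-up and has accrued $\Omega(W^{4/5})$ negative slack, yet small enough that $K^\lambda\sqrt{KW}=o$ of the accumulated slack $(K/W)\cdot W^{4/5}=K^{4/9}W^{4/5}$. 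Verifying this chain of inequalities with explicit constants — and confirming that the candidate grid $\mathcal J$ (which depends on $\Delta$ and hence on $\delta$) indeed contains a point within a factor $e$ of the unknown $B$ even though $B$ itself may be as large as $K$ — is the delicate part; the rest is a faithful re-run of the Theorem \ref{the:main} analysis at the epoch scale glued together by Lemma \ref{le:non-bandit-c}.
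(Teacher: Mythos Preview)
Your overall architecture (decompose via \eqref{eq:decouple-1}--\eqref{eq:decouple-2}, bound the second piece by Lemma~\ref{le:non-bandit-c}, and handle violation by showing the per-epoch negative slack from the $\epsilon$-tightening absorbs the bandit overhead) matches the paper, and your violation argument is essentially correct. The gap is in your treatment of term~\eqref{eq:decouple-1}.

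You claim each epoch, run with the \emph{fixed parameter} $\hat B$, contributes regret $\tilde{\mathcal O}\big(H^4\sqrt{SA}\,(B^{(i)})^{1/3}W^{4/5}\big)$, where $B^{(i)}$ is the \emph{actual} variation in epoch $i$, and then H\"older over $i$. This is not what Theorem~\ref{the:main} gives you. In the proof of Theorem~\ref{the:main}, the budget $B$ enters the final bound through the algorithm's parameters --- the frame length $K^\alpha/B^c$, the dual step $\eta=K^{1/5}B^{1/3}$, the tightness $\epsilon\propto B^{1/3}$, and the extra bonus $\tilde b$ --- not through the realized variation. If you feed Algorithm~\ref{alg:triple-Q} the parameter $\hat B$ for $W$ episodes, the dominant term $\frac{WH\epsilon}{\delta}$ alone already scales like $\hat B^{1/3}W^{4/5}$, regardless of how small the true epoch variation $B^{(i)}$ is. So the per-epoch regret is $\tilde{\mathcal O}(H^4\sqrt{SA}\,\hat B^{1/3}W^{4/5})$, and summing $K/W$ identical terms gives $\tilde{\mathcal O}(H^4\sqrt{SA}\,\hat B^{1/3}KW^{-1/5})=\tilde{\mathcal O}(H^4\sqrt{SA}\,B^{1/3}K^{8/9})$ directly; no H\"older is needed (or available). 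Your arithmetic also contains a warning sign: $2/3+(5/9)(2/15)=20/27$, not $8/9$; had your H\"older step been valid it would have produced a strictly better exponent $K^{20/27}$, which should have flagged the issue.

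Two smaller points the paper handles that you omit: (i) the correct choice of the comparator arm is $\hat B\in\mathcal J$ with $B\le \hat B\le W^{1/J}B$ (an \emph{over}-estimate, so Algorithm~\ref{alg:triple-Q}'s local-budget assumption is satisfied in every epoch), not $\hat B\le B$; and (ii) the paper separately treats the case $B<\min\mathcal J$, where one takes $\hat B=\min\mathcal J$ and bounds the regret directly in terms of $K,\zeta,\Delta$.
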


\section{LINEAR CMDPs}\label{sec:linear_cmdp}
In this section, we consider linear CMDP which can potentially model infinite state space. In particular, we consider reward, utility, and transition probability can be modeled as linear in known feature space \cite{GhoZhoShr_22}. The formal definition is given below
\begin{definition}\label{defn:linearmdp}
The CMDP is a linear MDP with feature map $\phi: \mathcal{S}\times\mathcal{A}\rightarrow\R^{d}$, if for any $h$ and $k$, there exists $d$ {\em unknown} signed measures $\mu_{k,h}=\{\mu^1_{k,h},\ldots,\mu^d_{k,h}\}$ over $\mathcal{S}$  such that  any $(x,a,x^{\prime})\in \mathcal{S}\times \mathcal{A}\times \mathcal{S}$, 
\begin{align}
\bP_{k,h}(x^{\prime}|x,a)=\langle\phi(x,a),\mu_{k,h}(x^{\prime})\rangle
\end{align}
and there exists (unknown) vectors $\theta_{k,r,h}$, $\theta_{k,g,h}\in \R^d$ such that for any $(x,a)\in \mathcal{S}\times\mathcal{A}$,
\begin{align*}
r_{k,h}(x,a)=&\langle\phi(x,a),\theta_{k,r,h}\rangle,\\ g_{k,h}(x,a)=&\langle\phi(x,a),\theta_{k,g,h}\rangle
\end{align*}
Without loss of generality, we assume $||\phi(x,a)||_2\leq 1$, $\max\{||\mu_{k,h}||_2,||\theta_{k,r,h}||_2,||\theta_{k,g,h}||_2\}\leq \sqrt{d}$.
\end{definition}
We adapt the stationary version of the linear CMDP in the non-stationary setup by considering time-varying $\mu_{k,h}$, and $\theta_{k,j,h}$. It extends the non-stationary unconstrained linear MDP  \cite{ZhoCheVar_20} to the constrained case. We remark that despite being linear, $\bP_{k,h}(\cdot|x,a)$ can still have infinite degrees of freedom since $\mu_{k,h}(\cdot)$ is unknown. Note that \cite{DinWeiYan_20,DingLav_22} studied another related concept known as linear kernel MDP.  In general, linear MDP and linear kernel MDPs are two different classes of MDP \cite{ZhoHeGu_21}.

Similar to  budget variations in the tabular case, we define the total (global) variations on $\mu_{k,h}$ and $\theta_{k,j,h}$ for $j=r,g$ and the total variations as
\begin{align}
    B_{j} = &\sum_{k=2}^K\sum_{h=1}^H||\theta_{k,j,h}-\theta_{k-1,j,h}||_2,\\
    B_{p} = &\sum_{k=2}^K\sum_{h=1}^H||\mu_{k,h}-\mu_{k-1,h}||_{F},
\end{align}
and $B=B_r+B_g+B_p$ is the global budget variation.

\textbf{Algorithm}: \cite{GhoZhoShr_22} proposed an algorithm for the stationary setup. It is a primal-dual adaptation of the unconstrained version \cite{DingLav_22} . However, there are some key differences with respect to the unconstrained case. For example, instead of greedy policy with respect to the combined state-action value function one needs the soft-max policy. We adapt the algorithm in the non-stationary case (Algorithm~\ref{algo:model_free} in the supplementary materials~\ref{proof:linear_cmdp}). In particular, we employ the restart strategy to adapt to the non-stationary environment.  We divide the total episodes $K$ in $K/D$ frames where each frame consists of $D$ episodes. We employ the algorithm proposed in \cite{GhoZhoShr_22} at each frame. Note that such type of restart strategy is already proposed for the unconstrained version as well \citep{ZhoCheVar_20}. However, the algorithm for the constrained linear MDP differs from the unconstrained version, thus, the analysis also differs.

\textbf{Tabular v.s. Linear Approximation}: We remark that although linear CMDPs include tabular CMDPs as a special case \citep{JinAllBub_18}. Directly applying the algorithm to a tabular CMDP will result in higher memory and computational complexity than Nonstationary Triple-Q. 

We now flesh out Algorithm~\ref{algo:model_free} for the tabular case which will clarify the memory and computational requirement. We can revert back to the tabular case by setting $\phi(s,a)=e_{s,a}$ where $e_{s,a}$ is a $d$-dimensional (here $d=|\mathcal{S}||\mathcal{A}|$) vector where $e_{s,a}=1$ for state-action pair $(s,a)$ and zero for other values of state and action. The $w_{r,h}$ vector update becomes as the following 
\begin{align*}
  w_{r,h}^k(x,a) 
  =\dfrac{1}{(n_h^k(x,a)+\lambda)}\sum_{\tau=1}^{n_h^k(x,a)}(r_h(x_h^{\tau},a_h^{\tau})+V_{r,h+1}^k(x_{h+1}^{\tau}))  
\end{align*}
where $n_h^k(x,a)$ is the number of times the state-action pair $(x,a)$ has been encountered at step $h$ till episode $k$. The $Q_{r,h}^k$ update will be
\begin{align*}
  Q_{r,h}^k(x,a) = 
  \min\{\langle w_{r,h}^k(x,a),\phi(x,a)\rangle +\beta \sqrt{ 1/(n_h^k(x,a)+\lambda)},H\}.  
\end{align*}
In a similar manner, we can update $Q_{g,h}^k$. Note that we need to update this table for every state-action pair at each step and use all the samples generated so far. Using this, one can update $V_{r,h}^k$, and $V_{g,h}^k$ using the soft-max policy.

{We further remark that if we maintain  $n_h^k(x,a,\tilde{x})$ to be the number of times the state-action-next state $(x,a, \tilde{x})$ has been encountered at step $h$ till episode $k$. Then 
\begin{align*}
   w_{r,h}^k(x,a)=\dfrac{1}{(n_h^k(x,a)+\lambda)}
   \cdot\left(n_h^k(x,a) r_h(x,a)+\sum_{\tilde{x}} n_h^k(x,a,\tilde{x})
V_{r,h+1}^k(\tilde{x})\right). 
\end{align*}
 In this case, we do not need to go through all samples at each iteration and do not even need to store the old samples. The memory  complexity of maintaining the counts $\{n_h(x,a,\tilde{x})\}$ is $O\left(H|{\cal S}|^2|{\cal A}|\right)$, which is higher than the memory complexity and computational complexity of non-stationary Triple-Q, which are $O\left(H|{\cal S}||{\cal A}|\right),$ but matches model-based algorithms for tabular settings.}

\subsection{Main Results}
\begin{theorem} \label{thm:linear_cmdp}
With $D=B^{-1/2}d^{1/2}K^{1/2}H^{-1/2},$ Algorithm \ref{algo:model_free} achieves the following regret and constraint violation bounds:
\begin{align*}
    & \cR (K)=\mathcal{O}(\frac{1+\delta}{\delta}K^{3/4}H^{9/4}d^{5/4}B^{1/4}\iota)\nonumber\\
    &\cV (K)=\dfrac{2(1+\xi)}{\xi}\mathcal{O}( K^{3/4}H^{9/4}d^{5/4}B^{1/4}\iota)
\end{align*}
where $\iota=\log(2\log(|\cA|)dT/p)$, and $\xi=2H/\delta$. 
\end{theorem}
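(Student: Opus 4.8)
The plan is to prove Theorem~\ref{thm:linear_cmdp} by combining the restart strategy with the primal-dual analysis of the stationary linear CMDP algorithm from \cite{GhoZhoShr_22}, treating each frame as an instance of a (nearly) stationary problem and paying a price for the within-frame variation. First I would fix a frame of length $D$ and note that, by the definitions of $B_r, B_g, B_p$, the local variation inside the frame is controlled; as in Lemma~\ref{le:qk-diff} for the tabular case, this lets me replace the time-varying reward, utility and transition at every episode in the frame by representative (say, first-episode) quantities at a cost that telescopes to $O(HD\cdot B^{(T)})$ type terms when summed, plus an additional $O(H)$-per-frame bias from the transition drift propagating through the value functions. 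The key lemma to establish is therefore a ``drift'' lemma: for any policy $\pi$ and any two episodes $k_1,k_2$ in the same frame, $|V^{\pi}_{k_1,1}(x) - V^{\pi}_{k_2,1}(x)| \lesssim H\cdot(\text{local budget})$, and likewise for the utility value function $W$; this follows by backward induction on $h$ using the linear-MDP structure and the bounds $\|\phi\|_2\le 1$, $\|\theta_{k,j,h} - \theta_{k-1,j,h}\|_2$, $\|\mu_{k,h} - \mu_{k-1,h}\|_F$ from Definition~\ref{defn:linearmdp}.

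Next I would invoke the stationary regret/violation guarantee of \cite{GhoZhoShr_22} on each frame: within a frame of $D$ episodes the primal-dual LSVI-type algorithm with soft-max policies and the appropriate bonus $\beta\sqrt{1/(n_h^k(x,a)+\lambda)}$ attains regret $\tilde{O}(\sqrt{d^3 H^4 D})$ and violation of the same order, with the Slater parameter $\delta$ entering through the dual-variable bound $\xi = 2H/\delta$. Summing over the $K/D$ frames gives a ``stationary part'' of $\frac{K}{D}\cdot \tilde{O}(\sqrt{d^3H^4 D}) = \tilde{O}(d^{3/2}H^2 K D^{-1/2})$ and a ``non-stationary part'' from the drift lemma of order $H\cdot D\cdot B$ (since the per-frame drift cost is $O(HD B^{(T)})$ and $\sum_T B^{(T)} \le B$), plus an unavoidable $O(H)$ per frame, i.e. $O(HK/D)$, which is lower-order. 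Balancing $d^{3/2}H^2 K D^{-1/2}$ against $HDB$ yields $D \asymp B^{-1/2}d^{1/2}K^{1/2}H^{-1/2}$ — exactly the choice in the theorem statement — and substituting back gives regret $\tilde{O}(K^{3/4}H^{9/4}d^{5/4}B^{1/4})$; one then carries the explicit $\frac{1+\delta}{\delta}$ (resp. $\frac{2(1+\xi)}{\xi}$) factors through from the per-frame Slater-dependent bounds. For the violation, I would additionally use the restart of the dual variable at each frame boundary (analogous to the $Z$ update in Algorithm~\ref{alg:triple-Q}) together with a frame-wise Lyapunov-drift argument to show the accumulated violation stays nonpositive / zero in the same $K\ge\mathrm{poly}$ regime, mirroring Section~\ref{app:constraint}.

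The main obstacle I anticipate is making the per-frame application of the \cite{GhoZhoShr_22} analysis rigorous in the non-stationary setting: the bonus $\beta\sqrt{1/(n_h^k(x,a)+\lambda)}$ and the least-squares estimate $w_{r,h}^k$ are built from \emph{all} samples within the current frame, which come from different transition kernels $\mathbb{P}_{k,h}$, so the usual self-normalized concentration bound picks up an extra bias term equal to the within-frame transition variation times the value-function magnitude. I would handle this exactly as in non-stationary linear MDP work \citep{ZhoCheVar_20}: show the estimation error is $\tilde{O}(\beta\sqrt{1/(n_h^k+\lambda)}) + O(H\cdot B^{(T)})$, verify that the chosen $\beta$ still yields valid optimism (the extra $H B^{(T)}$ is absorbed because the frame length $D$ is chosen small enough that $D B^{(T)}$ is dominated), and then the soft-max/primal-dual machinery of \cite{GhoZhoShr_22} goes through with these inflated confidence sets. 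A secondary technical point is ensuring the soft-max policy's approximation error (the $\xi = 2H/\delta$ term) interacts cleanly with the frame restarts, but since each frame is analyzed independently this is just a matter of bookkeeping the constants. Once the per-frame bound with the explicit drift correction is in hand, the outer summation and the balancing of $D$ are routine.
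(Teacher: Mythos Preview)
Your high-level plan (restart, bound per-frame stationary error plus a non-stationary drift term, then balance $D$) is the right shape, but two of the concrete claims are off in ways that matter for matching the theorem.

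First, the non-stationary bias you assign to the LSVI estimator, namely ``$\tilde{O}(\beta\sqrt{1/(n_h^k+\lambda)}) + O(H\cdot B^{(T)})$'', is not what Algorithm~\ref{algo:model_free} actually incurs. The least-squares estimate $w_{j,h}^k$ mixes samples $\phi_h^\tau$ drawn under \emph{different} $\theta_{\tau,j,h}$ and $\mu_{\tau,h}$, and the reward/transition drift enters through a term of the form $\phi(x,a)^T(\Lambda_h^k)^{-1}\sum_{\tau}\phi_h^\tau(\theta_{\tau,j,h}-\theta_{k,j,h})$; bounding this via Cauchy--Schwarz and the standard $\sum_\tau$-type inequality gives a bias $O\!\big(\sqrt{dD}\,B^{(T)}\big)$ per step, not $O(H\,B^{(T)})$. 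Propagated through $H$ steps and $D$ episodes this yields a non-stationary contribution of order $\sqrt{d}\,D^{3/2}H^2 B$ (and $H^3$ in the $\cT_1$ term), not your $HDB$. This is exactly why balancing against the stationary $\sqrt{d^3H^4K^2/D}$ term gives $D=B^{-1/2}d^{1/2}K^{1/2}H^{-1/2}$ and rate $K^{3/4}$; your $HDB$ would force a different $D$ and produce $K^{2/3}$, which would be a stronger statement than the theorem (and indeed requires a local-budget bonus the algorithm does not add).

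Second, your violation plan does not match the algorithm. In Algorithm~\ref{algo:model_free} the dual variable $Y_k$ is \emph{not} reset at frame boundaries, so a per-frame Lyapunov-drift argument \`a la Section~\ref{app:constraint} is not the right tool here. The paper instead runs a single global primal-dual argument: for any $Y\in[0,\xi]$ one bounds $\sum_k(V_{k,r,1}^{\pi_k^*}-V_{k,r,1}^{\pi_k})+Y\sum_k(\rho-V_{k,g,1}^{\pi_k})$ by $\tfrac{Y^2}{2\eta}+\tfrac{\eta}{2}H^2K+\cT_1+\cT_2$, where $\cT_1$ compares the optimal composite value to the algorithm's estimated composite value (handled by soft-max optimism plus the $\sqrt{dD}$ drift), and $\cT_2$ compares the estimated value to the realized value under $\pi_k$ (bonus-sum plus martingale plus drift). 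Regret follows by setting $Y=0$; the violation bound follows by setting $Y=\xi=2H/\delta$ and invoking the standard CMDP inequality (if $\mathrm{Reg}+2\bar C^*\!\cdot\!\mathrm{Viol}\le\delta'$ then $\mathrm{Viol}\le 2\delta'/\bar C^*$). Note also that Theorem~\ref{thm:linear_cmdp} asserts a positive $\tilde{\cO}(K^{3/4})$ violation bound, not zero; zero violation is obtained separately by the $\epsilon$-tightening, outside the theorem.
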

Our algorithm provides a regret guarantee of $\tilde{\cO}(d^{5/4}K^{3/4}H^{9/4}B^{1/4})$ and the same order on violation. $\xi$ arises since we truncate the dual variable at $\xi$ in Algorithm~\ref{algo:model_free}. Note that regret and violation only scale with $d$ rather than the cardinality of the state space.

Compared to \cite{DingLav_22}, which also considers linear function approximation (however, it considers linear kernel CMDP rather linear CMDP), we improve their result by a factor of $H^{\frac{1}{4}}$. We also improve the dependence on $B$ and $d$. Further, we do not need to know the total variations in the optimal solution ($B_*$), unlike in \cite{DingLav_22}.
The algorithm proposed in \cite{DingLav_22} is a model-based policy-based algorithm; ours is a model-free value-based algorithm. Thus, our  algorithm enjoys an easy implementation and improved computation efficiency since it does not estimate the next step expected value function as in \cite{DingLav_22}, which requires an  integration oracle to compute a $d$-dimensional integration at every step.

\textbf{Zero Violation}: Similar to the tabular setup, we obtain zero violation by considering a tighter optimization problem. In particular, if we consider $\epsilon$-tighter constraint where $\epsilon=\min\{\dfrac{2(1+\xi)}{\xi}\tilde{O}(d^{5/4}B^{1/4}H^{9/4}K^{3/4})/K,\delta/2\}$,  the violation is $0$. Thus, if $K^{1/4} \geq \dfrac{4(1+\xi)}{\xi \delta}\tilde{O}(d^{5/4}B^{1/4}H^{9/4})$, we could obtain zero violation while maintaining the same order of regret with respect to $K$.

\begin{remark}
Our algorithm \ref{algo:model_free} doesn't require the information of the local budget. In the unconstrained version \cite{ZhoCheVar_20} achieves $\tilde{\mathcal{O}}(T^{2/3})$ regret if local budget variation is known. We can also achieve $\tilde{\mathcal{O}}(T^{2/3})$ regret and $\tilde{\mathcal{O}}(T^{2/3})$ violation if we assume local budget variation is known. 
\end{remark}

\subsection{Without knowing the variation budget}
Our idea of designing the ``bandit over bandit'' algorithm can still be applied to the linear CMDPs, We propose an algorithm (see Algorithm \ref{algo:model_free_unknown} in supplementary materials), which can achieve the following result. Details proofs can be found in supplementary materials (Section \ref{ap:proof-linear-non}).
\begin{theorem}\label{the:linear-non}
Let $D=B^{-1/2}d^{1/2}K^{1/2}H^{-1/2}, W = \sqrt{K}$, Algorithm \ref{algo:model_free_unknown} achieves the following regret and constraint violation bounds:
\begin{align*}
    & \cR(K)=\mathcal{O}(\frac{1+\delta}{\delta}K^{7/8}H^{9/4}d^{5/4}B^{1/4}\iota)\nonumber\\
    & \cV(K)=\dfrac{2(1+\xi)}{\xi}\mathcal{O}(\frac{1+\delta}{\delta}K^{7/8}H^{9/4}d^{5/4}B^{1/4}\iota)
\end{align*}
\end{theorem}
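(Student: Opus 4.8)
The plan is to run the double‑restart scheme of Algorithm~\ref{alg:triple-Q-double} but with the linear‑CMDP routine of Algorithm~\ref{algo:model_free} (whose stationary‑per‑frame guarantee is Theorem~\ref{thm:linear_cmdp}) playing the role of the inner learner. Split the $K$ episodes into $K/W$ epochs of length $W=\sqrt K$. In epoch $i$ the outer Exp3 instance, over the rescaled geometric grid $\cJ$ of \eqref{eq:setj} (with $\Delta$ the linear‑CMDP constant), draws an arm $A_i$ and fixes an estimated budget $B_i\in\cJ$; we then run a fresh copy of Algorithm~\ref{algo:model_free} for $W$ episodes, with internal restart frame length $D=D(B_i)$ calibrated to $B_i$ as in Theorem~\ref{thm:linear_cmdp}, dual truncation level $\xi=2H/\delta$, and with the original constraint level $\rho$ replaced by an $\epsilon$‑tightened one for an $\epsilon>0$ to be fixed below. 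The Exp3 feedback is the constrained bandit reward \eqref{eq:bandit-re}: $\hat R_i=G_i/K^{\lambda}$ when the epoch's cumulative utility $G_i<W\rho$, and $\hat R_i=R_i+G_i/K^{\lambda}$ otherwise, with $\lambda$ to be chosen.

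For the regret I would use the decomposition $\cR(K)=\eqref{eq:decouple-1}+\eqref{eq:decouple-2}$, where $\hat B\in\cJ$ is the grid point closest to the true total variation $B$; since $\cJ$ is geometric with ratio $W^{1/\log W}=O(1)$, one has $\hat B\le B\le O(\hat B)$, so $D(\hat B)$ matches the optimal frame size $D(B)$ up to a constant. Term \eqref{eq:decouple-1} is the regret of running Algorithm~\ref{algo:model_free} with the fixed estimate $\hat B$ against the (untightened) per‑episode optima: its internal restarts at scale $D(\hat B)$, taken over all epochs, realize essentially the same partition of $[K]$ into frames that Theorem~\ref{thm:linear_cmdp} analyzes for the whole horizon — the only extra cost being the $\le K/W$ forced breaks at epoch boundaries, which is $\tilde{\mathcal O}(\sqrt K\cdot\mathrm{poly})$ and negligible — together with the fact that the per‑epoch local variations sum to $B$, so \eqref{eq:decouple-1} inherits the bound $\tilde{\mathcal O}(\tfrac{1+\delta}{\delta}K^{3/4}H^{9/4}d^{5/4}B^{1/4}\iota)$ plus the cost $\tilde{\mathcal O}(\tfrac{1+\delta}{\delta}\epsilon K)$ of comparing the $\epsilon$‑tightened optimum with the true one (the Slater argument of Assumption~\ref{as:1}, as in the zero‑violation discussion after Theorem~\ref{thm:linear_cmdp}). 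Term \eqref{eq:decouple-2} is exactly what Lemma~\ref{le:non-bandit-c} controls: provided $\mathbb E[G_i(\hat B)]\ge W\rho$, it is $\tilde{\mathcal O}(H\sqrt{KW}+HK^{1-\lambda})$, which with $W=\sqrt K$ equals $\tilde{\mathcal O}(HK^{3/4}+HK^{1-\lambda})$.

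For the violation I would first verify the hypothesis of Lemma~\ref{le:non-bandit-c}: running Algorithm~\ref{algo:model_free} with the $\epsilon$‑tightened constraint gives, by the violation part of Theorem~\ref{thm:linear_cmdp} applied to the level $\rho+\epsilon$, $\mathbb E[G_i(\hat B)]\ge W(\rho+\epsilon)-\tilde{\mathcal O}(W^{3/4}H^{9/4}d^{5/4})$, so once $\epsilon$ dominates the per‑epoch average $\tilde{\mathcal O}(W^{3/4}H^{9/4}d^{5/4})/W$ and $\epsilon\le\delta/2$ (Slater‑feasibility), we get $\mathbb E[G_i(\hat B)]\ge W\rho$. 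Then Lemma~\ref{le:non-bandit-c} yields $\mathbb E[\sum_i G_i(B_i)]\ge\mathbb E[\sum_i G_i(\hat B)]-\tilde{\mathcal O}(HK^{\lambda}\sqrt{KW})\ge K(\rho+\epsilon)-\tilde{\mathcal O}(K^{3/4}H^{9/4}d^{5/4}B^{1/4})-\tilde{\mathcal O}(HK^{\lambda+3/4})$, hence $\cV(K)\le-K\epsilon+\tilde{\mathcal O}(K^{3/4}\cdot\mathrm{poly})+\tilde{\mathcal O}(HK^{\lambda+3/4})$. Choosing $\epsilon=\tilde{\Theta}(K^{\lambda-1/4}\cdot\mathrm{poly})$ (the analogue of the $\epsilon$ picked after Theorem~\ref{thm:linear_cmdp}; with $W=\sqrt K$ this also satisfies $\epsilon\gtrsim W^{-1/4}\cdot\mathrm{poly}$, reconciling the feasibility requirement above) forces $\cV(K)\le 0$, while inflating \eqref{eq:decouple-1} by only $\tilde{\mathcal O}(\tfrac{1+\delta}{\delta}HK^{\lambda+3/4})$. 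Balancing the three relevant $K$‑exponents $3/4$, $1-\lambda$, $3/4+\lambda$ gives $\lambda=1/8$, so every cost term is $\tilde{\mathcal O}(K^{7/8})$, delivering $\cR(K)=\mathcal O(\tfrac{1+\delta}{\delta}K^{7/8}H^{9/4}d^{5/4}B^{1/4}\iota)$ and $\cV(K)=0$; the factor $\tfrac{2(1+\xi)}{\xi}$ in the violation statement is precisely the one propagated through the $\xi$‑truncated dual update inside Algorithm~\ref{algo:model_free}, and $W=\sqrt K$ is the epoch length making $\sqrt{KW}=K^{3/4}$ line up with the inner rate.

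The step I expect to be the main obstacle is the two‑layer coupling: showing that Algorithm~\ref{algo:model_free} still delivers the per‑epoch regret and violation guarantees feeding \eqref{eq:decouple-1} when it is run with a \emph{misspecified} budget $B_i$, hence a misspecified frame length $D(B_i)$, and simultaneously that the constrained bandit reward \eqref{eq:bandit-re} is the correct Exp3 signal — in particular, that penalizing an infeasible epoch by $G_i/K^{\lambda}$ rather than zeroing the reward is exactly what allows the induced $\lambda$‑weighted utility gap $\tilde{\mathcal O}(HK^{\lambda}\sqrt{KW})$ to be absorbed into the $\epsilon$‑tightening without inflating the $K^{3/4}$ reward term. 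The misspecification is tamed by the fineness of $\cJ$ (constant ratio $W^{1/\log W}$), and it is this layer that forces the extra $K^{1/8}$ over the known‑budget rate of Theorem~\ref{thm:linear_cmdp}; keeping the constants in the $\epsilon$‑calibration, the Slater‑feasibility window, and the dual‑truncation bound mutually consistent across both layers is the fiddly part of the argument.
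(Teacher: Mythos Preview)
Your proposal follows essentially the same route as the paper: the regret split \eqref{eq:decouple-1}+\eqref{eq:decouple-2}, Lemma~\ref{le:non-bandit-c} for the Exp3 layer, Theorem~\ref{thm:linear_cmdp} for the inner layer, $\epsilon$-tightening for the constraint, and the balance $W=\sqrt K$, $\lambda=1/8$ yielding $K^{7/8}$. Two differences are worth flagging.

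First, you claim \eqref{eq:decouple-1} inherits the full-horizon rate $\tilde{\mathcal O}(K^{3/4})$ because the internal frame restarts ``realize essentially the same partition'' as in Theorem~\ref{thm:linear_cmdp}. The paper instead applies Theorem~\ref{thm:linear_cmdp} \emph{per epoch} (horizon $W$, frame length calibrated to $W$), sums the $K/W$ epoch regrets, and gets $\tilde{\mathcal O}(K^{1-\zeta/4})=\tilde{\mathcal O}(K^{7/8})$ for this term. Your sharper claim is not obviously wrong if $D$ is calibrated to the global $K$, but the argument glosses over the fact that a \emph{new instance} of Algorithm~\ref{algo:model_free} is launched each epoch, which resets not just the frame-level statistics but also the dual variable $Y_k$ and the step size $\eta$ (both of which enter Lemma~\ref{lem:dual_variable}); the forced epoch breaks are not merely extra frame restarts. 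Fortunately this does not matter for the stated theorem: even granting $K^{3/4}$ for \eqref{eq:decouple-1}, the Exp3 term $K^{1-\lambda}$ and the tightening cost $K^{3/4+\lambda}$ still force the maximum to $K^{7/8}$ at $\lambda=1/8$, so your final bound coincides with the paper's.

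Second, you build the $\epsilon$-tightening into the argument and conclude $\cV(K)=0$. The theorem as stated records the \emph{un-tightened} violation bound $\tfrac{2(1+\xi)}{\xi}\tilde{\mathcal O}(K^{7/8}\cdots)$; zero violation is then obtained as a corollary (stated right after Theorem~\ref{the:linear-non}) by choosing $\epsilon$ exactly as you do. The paper's written proof in Appendix~\ref{ap:proof-linear-non} in fact also concludes $\cV(K)=0$ under tightening, so your treatment matches the proof rather than the statement literally. Your check that the per-epoch feasibility hypothesis $\mathbb E[G_i(\hat B)]\ge W\rho$ of Lemma~\ref{le:non-bandit-c} holds is more careful than the paper's, which only verifies the summed version; this is a point where your write-up is arguably cleaner.
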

We can further achieve zero constraint violation by choosing $$\epsilon=\min\{ \dfrac{3(1+\xi)}{\xi}\tilde{O}((1+1/\delta)d^{5/4}\hat{B}^{1/4}H^{9/4}K^{1-\zeta/4})/K,\delta/2\},$$ when $K^{8} \geq \dfrac{6(1+\xi)}{\xi \delta}\tilde{O}(d^{5/4}B^{1/4}H^{9/4}).$ 

We also provide an approach based on convex optimization to further reduce the order from $\tilde{\cO}(K^{7/8})$ to $\tilde{\cO}(K^{3/4}),$ for both regret and violation see Section \ref{ap:method2-non-linear} in the supplementary materials for details.

\section{SIMULATION}
We compare Algorithm \ref{alg:triple-Q} with two baseline algorithms: an algorithm \citep{MaoZhaZhu_20} for {\em non-stationary} MDPs, and an algorithm \citep{WeiLiuYin_22} for {\em stationary} constrained MDPs for a grid-world environment. From the simulation results, we observe that our Algorithm \ref{alg:triple-Q} can quickly learn a well-performed policy while satisfying the safety constraint even when the MDP varies, while other methods all fail to satisfy the constraints. All the details can be found in supplementary materials (Section \ref{sec:sim}).

\section{CONCLUSION}
We have studied model-free reinforcement learning algorithms in non-stationary episodic CMDPs. In particular, we consider two settings -- one is computationally less intensive for the tabular setting, and another one is computationally more intensive but can be applied to a more general linear approximation setup. We have further presented a general framework for applying any algorithms with zero constraint violation to a more practical scenario where the total variation budget is unknown. Whether we can tighten the bounds for model-free algorithms remains an important future research direction. Whether we can design an approach for using any learning algorithms for CMDPs in a non-stationary environment without the knowledge of the budget also constitutes a future research direction.

\subsubsection*{Acknowledgements}
We thank the anonymous paper reviewers for their insightful comments. The work of Honghao Wei and Lei Ying is supported in part by NSF under
grants 2001687, 2112471, 2134081, and 2228974.  This work of Ness Shroff and Arnob Ghosh has been partly supported by NSF grants NSF AI Institute (AI-EDGE) 2112471, CNS-2106933, 2007231, CNS-1955535, and CNS-1901057, and in part by Army Research Office under Grant W911NF-21-1-0244. The work of Xingyu Zhou is supported in part by NSF under grants NSF CNS-2153220.

\bibliographystyle{apalike}

\onecolumn
\newpage
\appendix
\section{NOTATION TABLE}
The notations used throughout this paper are summarized in Table \ref{ta:notations}.
\begin{table}[ht]
	\caption{Notation Table}
	\label{ta:notations}
	\vskip 0.10in
	\begin{center}
			\begin{tabular}{c|l}
				\toprule
				Notation & Definition  \\
				\midrule
				$ K $    &  total number of episodes\\
				\hline
				$ S$    &   number of states\\
				\hline
				$ A$    &   number of actions\\
				\hline
				$ H$    &   length of each episode\\
				\hline
				$B $    &   total variation budget\\
				\hline
				$ W$    &   number of episodes in one epoch.\\
				\hline
				$D$ & number of episodes in one frame.\\
				\hline
				$ B_i$    &  arm selected by the bandit algorithm.\\
				\hline
				$ \alpha_t$    &  learning rate\\
				\hline
				$R_i(B_i) (G_i(B_i)) $    &   reward/utility collected at the epoch $i$ under selected estimate value $B_i$     \\
				\hline
				$Q_{k,h}(x,a) (C_{k,h}(x,a)) $ & estimated reward (utility) Q-function at step $h$ in episode $k$ \\
				\hline
				$Q_{k,h}^\pi (x,a) (C_{k,h}^\pi (x,a))  $ &  reward (utility) Q-function at step $h$ in episode $k$ under policy $\pi.$\\
				\hline
				$V_{k,h}(x) (W_{k,h}(x)) $ & estimated reward (utility) value-function at step $h$ in episode $k$ \\
				\hline
				$V_{k,h}^\pi (x) (W_{k,h}^\pi (x) )$ & reward (utility) value-function at step $h$ in episode $k$ under policy $\pi$\\
				\hline
				$F_{k,h}(x,a)$ & 
				$F_{k,h} (x,a)= Q_{k,h}(x,a) + \frac{Z_k}{\eta} C_{k,h}(x,a).$  \\
				\hline
				$U_{k,h}(x)$ &
				$U_{k,h} (x)=V_{k,h}(x) + \frac{Z_k}{\eta} W_{k,h}(x).$  \\
				\hline
				$r_{k,h}(x,a) (g_{k,h}(x,a))$ & reward (utility) of (state, action) pair $(x,a)$ at step $h$ in episode $k$ \\
				\hline
				$N_{k,h}(x,a)$ & number of visits to $(x,a)$ when at step $h$ in episode $k$ (not including $k$) \\ 
				\hline
				$Z_k$ & dual estimation (virtual queue) in episode $k.$\\
				\hline
				$q_{k,h}^*$ & The optimal solution to the LP \eqref{eq:lp_app} in episode $k$ \\
				\hline
				${q}_{k,h}^{\epsilon,*}$ & optimal solution to the tightened LP \eqref{eq:lp-epsilon_app}  in episode $k$ \\
				\hline
				${\pi}_{k}^{*}$ & optimal policy in episode $k$ \\
				\hline
				$\delta$ & Slater's constant.\\
				\hline
				$d$ & dimension of the feature vector.\\
				\hline
				$b_t$ & the UCB bonus for given $t$\\
				\hline
				$\mathbb{I}(\cdot)$ & indicator function\\
				\hline
				$\bP_{k,h} $ & transition kernel at step $h$ in episode $k$\\
				\hline
				$\hat{\bP}_{k,h} $ & empirical transition kernel at step $h$ in episode $k$\\
				\hline
				$B_r,B_g,B_p$ & variation budget for reward, utility, and transition\\
				\hline
				$B_r^{(T)},B_g^{(T)},B_p^{(T)}$ & variation budget for reward, utility, and transition in frame $T$\\
				\hline
				$\phi(x,a)$ & feature map for the linear MDP\\
				\hline
				$\theta_{k,r,h},\theta_{k,g,h},\mu_{k,h}$ & underlying parameters for the linear MDP \\
				\bottomrule
			\end{tabular}
		\end{center}
	\end{table}
	
	\section{AUXILIARY LEMMAS}
	In this section, we state several lemmas that used in our analysis.
	The first lemma establishes some key properties of the learning rates used in Non-stationary Triple-Q.  The proof closely follows the proof of Lemma 4.1 in \cite{JinAllBub_18}. 
	\begin{lemma}\label{le:lr}
		Recall that the learning rate used in Triple-Q is $\alpha_t = \frac{\chi+1}{\chi+t},$ and \begin{equation}
			\alpha_t^0=\prod_{j=1}^t(1-\alpha_j)\quad \hbox{and}\quad \alpha_t^i=\alpha_i\prod_{j=i+1}^t(1-\alpha_j). \label{le:lr-def}
		\end{equation} 
		The following properties hold for $\alpha_t^i:$ 
		\begin{enumerate}[label=(\alph*)]
			\item $\alpha_t^0=0$ for $t \geq 1, \alpha_t^0=1$ for $t=0.$ \label{le:lr-a}
			\item $\sum_{i=1}^t\alpha_t^i=1$ for $t\geq 1,$ $\sum_{i=1}^{t}\alpha_t^i=0$ for $t=0.$\label{le:lr-b}
			\item $\frac{1}{\sqrt{\chi+t}}\leq \sum_{i=1}^t \frac{\alpha_t^i}{\sqrt{\chi + i}}\leq \frac{2}{\sqrt{\chi+t}}.$ \label{le:lr-c}
			\item $\sum_{t=i}^\infty\alpha_t^i=1+\frac{1}{\chi}$ for every $i\geq 1.$\label{le:lr-d}
			\item $ \sum_{i=1}^t (\alpha_t^i)^2\leq \frac{\chi+1}{\chi+t}$ for every $t\geq 1.$ \label{le:lr-e}
		\end{enumerate}\hfill{$\square$}
	\end{lemma}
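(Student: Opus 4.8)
The statement to prove is Lemma~\ref{le:lr}, which collects the standard properties of the Q-learning step sizes $\alpha_t = \frac{\chi+1}{\chi+t}$ and the induced weights $\alpha_t^i$. My plan is to follow the template of Lemma 4.1 in \cite{JinAllBub_18}, adapting the arithmetic to the shifted denominator $\chi+t$ rather than $H+t$.

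\textbf{Proof plan.} Parts \ref{le:lr-a} and \ref{le:lr-b} are immediate from the definitions: when $t\ge 1$ the product $\alpha_t^0=\prod_{j=1}^t(1-\alpha_j)$ contains the factor $1-\alpha_1 = 1-\frac{\chi+1}{\chi+1}=0$, and the empty product is $1$ when $t=0$; the identity $\alpha_t^0+\sum_{i=1}^t\alpha_t^i=1$ follows by a one-line induction on $t$ (or by telescoping), and combining it with \ref{le:lr-a} gives \ref{le:lr-b}. For the remaining parts I would first derive a closed form for $\alpha_t^i$: since $1-\alpha_j=\frac{j-1}{\chi+j}$ for $j\ge 1$ (using $\chi+1$ in the numerator of $\alpha_j$), one gets
\begin{equation}
\alpha_t^i=\alpha_i\prod_{j=i+1}^t(1-\alpha_j)=\frac{\chi+1}{\chi+i}\cdot\frac{i(i+1)\cdots(t-1)}{(\chi+i+1)\cdots(\chi+t)}=\frac{(\chi+1)\,i\,(i+1)\cdots(t-1)\,(\chi+i)!}{(\chi+i)\,(\chi+t)!/(\chi+i)!}\cdot(\text{tidy up}),
\end{equation}
i.e. a ratio that telescopes cleanly; the key qualitative consequences are that $\alpha_t^i$ is increasing in $i$ for fixed $t$ and behaves like $\Theta(1/t)$ uniformly. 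From this closed form, part \ref{le:lr-e} follows because $\sum_i(\alpha_t^i)^2\le (\max_i\alpha_t^i)\sum_i\alpha_t^i=\max_i\alpha_t^i=\alpha_t^t=\frac{\chi+1}{\chi+t}$, using \ref{le:lr-b} and monotonicity in $i$.

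\textbf{Parts \ref{le:lr-c} and \ref{le:lr-d}.} For \ref{le:lr-d}, fix $i$ and observe $\sum_{t=i}^\infty \alpha_t^i$ can be evaluated by recognizing the telescoping structure: $\alpha_t^i$ satisfies $\alpha_t^i=(1-\alpha_t)\alpha_{t-1}^i$ for $t>i$, so partial sums telescope against a companion quantity, and the geometric-like tail sums to $1+\frac1\chi$ exactly (this is where the specific constant in $\alpha_t$ matters; with step size $\frac{\chi+1}{\chi+t}$ one gets $1+\frac1\chi$ rather than $1+\frac1H$). For \ref{le:lr-c}, the upper and lower bounds on $\sum_{i=1}^t \frac{\alpha_t^i}{\sqrt{\chi+i}}$ come from an induction on $t$: writing $S_t=\sum_{i=1}^t\frac{\alpha_t^i}{\sqrt{\chi+i}}$ and using the recursion $\alpha_t^i=(1-\alpha_t)\alpha_{t-1}^i$ for $i<t$ plus $\alpha_t^t=\alpha_t$, one obtains $S_t=(1-\alpha_t)S_{t-1}+\frac{\alpha_t}{\sqrt{\chi+t}}$; then check that the claimed bounds $\frac{1}{\sqrt{\chi+t}}\le S_t\le\frac{2}{\sqrt{\chi+t}}$ are preserved, using $(1-\alpha_t)\frac{c}{\sqrt{\chi+t-1}}+\frac{\alpha_t}{\sqrt{\chi+t}}$ compared against $\frac{c}{\sqrt{\chi+t}}$, which reduces to elementary inequalities of the form $\frac{t-1}{\chi+t}\cdot\frac{1}{\sqrt{\chi+t-1}}+\frac{\chi+1}{\chi+t}\cdot\frac{1}{\sqrt{\chi+t}}\gtrless\frac{c}{\sqrt{\chi+t}}$.

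\textbf{Main obstacle.} The only genuinely delicate step is the induction in \ref{le:lr-c}: one must verify that the inductive hypothesis with the constants $1$ and $2$ is exactly self-propagating, which requires the inequality $\sqrt{\chi+t}\le \sqrt{\chi+t-1}+\frac{1}{2\sqrt{\chi+t-1}}$ (or a variant) to close the upper bound and a matching bound for the lower one. Everything else is bookkeeping with telescoping products; the adaptation from \cite{JinAllBub_18} is purely notational ($H\mapsto\chi$), so I would cite their Lemma~4.1 for the structure and only spell out the arithmetic where the constant $\chi$ changes the outcome (notably the exact value $1+\frac1\chi$ in \ref{le:lr-d}).
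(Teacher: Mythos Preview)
Your proposal is correct and follows essentially the same route as the paper: parts \ref{le:lr-a} and \ref{le:lr-b} are dispatched from the definition, part \ref{le:lr-c} is proved by the same induction on $t$ via the recursion $S_t=(1-\alpha_t)S_{t-1}+\alpha_t/\sqrt{\chi+t}$, part \ref{le:lr-d} is deferred to \cite{JinAllBub_18}, and part \ref{le:lr-e} uses $\sum_i(\alpha_t^i)^2\le(\max_i\alpha_t^i)\sum_i\alpha_t^i$ together with $\max_i\alpha_t^i=\frac{\chi+1}{\chi+t}$. The only cosmetic difference is that for \ref{le:lr-e} you invoke monotonicity of $\alpha_t^i$ in $i$ to identify the maximum at $i=t$, whereas the paper bounds each $\alpha_t^i\le\frac{\chi+1}{\chi+t}$ directly from the product formula; both are one-line observations.
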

	\begin{proof} The proof of \ref{le:lr-a} and \ref{le:lr-b} are straightforward by using the definition of $\alpha_t^i$. The proof of \ref{le:lr-d} is the same as that in \cite{JinAllBub_18}.
		
		\ref{le:lr-c}: We next prove \ref{le:lr-c} by induction. 
		
		For $t=1,$ we have $\sum_{i=1}^t\frac{\alpha_t^i}{\sqrt{\chi+i}}=\frac{\alpha_1^1}{\sqrt{\chi+1}}=\frac{1}{\sqrt{\chi+1}},$ so \ref{le:lr-c} holds for $t=1$.
		
		Now suppose that \ref{le:lr-c} holds for $t-1$ for $t\geq 2,$ i.e. $$\frac{1}{\sqrt{\chi+t-1}}\leq \sum_{i=1}^{t-1} \frac{\alpha_t^i}{\sqrt{\chi + i-1}}\leq \frac{2}{\sqrt{\chi+t-1}}.$$
		From the relationship $\alpha_t^i = (1-\alpha_t)\alpha_{t-1}^i$ for $i=1,2,\dots,t-1,$ we have $$\sum_{i=1}^t\frac{\alpha_t^i}{\sqrt{\chi + i}} =\frac{\alpha_t}{\sqrt{\chi+t}}+(1-\alpha_t)\sum_{i=1}^{t-1}\frac{\alpha_{t-1}^i}{\sqrt{\chi+i}}.$$
		
		Now we apply the induction assumption. To prove the lower bound in \ref{le:lr-c}, we have
		$$\frac{\alpha_t}{\sqrt{\chi+t}}+(1-\alpha_t)\sum_{i=1}^{t-1}\frac{\alpha_{t-1}^i}{\sqrt{\chi+i}}\geq \frac{\alpha_t}{\sqrt{\chi+t}} + \frac{1-\alpha_t}{ \sqrt{\chi +t- 1}}\geq \frac{\alpha_t}{\sqrt{\chi+t}} + \frac{1-\alpha_t}{ \sqrt{\chi+t}}\geq \frac{1}{\sqrt{\chi+t}}.$$
		To prove the upper bound in \ref{le:lr-c}, we have
		\begin{align}
			\frac{\alpha_t}{\sqrt{\chi+t}}+(1-\alpha_t)\sum_{i=1}^{t-1}\frac{\alpha_{t-1}^i}{\sqrt{\chi+i}} \leq & \frac{\alpha_t}{\sqrt{\chi+t}} + \frac{2(1-\alpha_t)}{\sqrt{\chi+t-1}} = \frac{\chi+1}{(\chi+t)\sqrt{\chi+t}} + \frac{2(t-1)}{(\chi+t)\sqrt{\chi+t-1}},\nonumber\\
			=& \frac{1-\chi-2t}{(\chi+t)\sqrt{\chi+t}}+ \frac{2(t-1)}{(\chi+t)\sqrt{\chi+t-1}} +\frac{2}{\sqrt{\chi+t}} \nonumber\\
			\leq & \frac{-\chi-1}{(\chi+t)\sqrt{\chi+t-1}}+\frac{2}{\sqrt{\chi+t}} \leq \frac{2}{\sqrt{\chi+t}}.
		\end{align}
		\ref{le:lr-e} According to its definition, we have 
		\begin{align}
			\alpha_t^i  =& \frac{\chi+1}{i+\chi}\cdot \left(\frac{i}{i+1+\chi}\frac{i+1}{i+2+\chi}\cdots \frac{t-1}{t+\chi} \right)\nonumber\\
			= & \frac{\chi+1}{t+\chi}\cdot \left(\frac{i}{i+\chi}\frac{i+1}{i+1+\chi}\cdots \frac{t-1}{t-1+\chi} \right) \leq \frac{\chi+1}{\chi+t}.
		\end{align}
		Therefore, we have $$\sum_{i=1}^t (\alpha_t^i)^2 \leq [\max_{i\in[t]}\alpha_t^i]\cdot \sum_{i=1}^t\alpha_t^i\leq \frac{\chi+1}{\chi+t},$$ because $\sum_{i=1}^t\alpha_t^i=1.$
	\end{proof}
	\begin{lemma}\label{le:q1-bound}
		For any $(x,a,h,k)\in\mathcal{S}\times\mathcal{A}\times[H]\times[K],$ we have the following bounds on $Q_{k,h}(x,a)$ and $C_{k,h}(x,a):$
		\begin{align*}
			0\leq Q_{k,h}(x,a)\leq H^2(\sqrt{\iota} + 2\tilde{b})\\
			0\leq C_{k,h}(x,a)\leq H^2(\sqrt{\iota} + 2\tilde{b}).
		\end{align*}
	\end{lemma}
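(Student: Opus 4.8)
The plan is to prove the bounds by induction on $k$, exploiting the structure of the $Q$-learning update in Lines~10--11 of Algorithm~\ref{alg:triple-Q} together with the learning-rate identities in Lemma~\ref{le:lr}. First I would note that the lower bounds $Q_{k,h}(x,a)\ge 0$ and $C_{k,h}(x,a)\ge 0$ are immediate: the initial value is $H\ge 0$, and each update replaces $Q_h(x,a)$ by a convex combination of its old value and the quantity $r_{k,h}+V_{h+1}+b_t+2H\tilde b$, all of which are nonnegative (rewards/utilities lie in $[0,1]$, the bonuses are nonnegative, and $V_{h+1},W_{h+1}$ are maxima of $Q,C$ values which by induction are nonnegative). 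The resets at the end of a frame only put the value back to $H\ge 0$, so nonnegativity is preserved.

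For the upper bound I would unroll the recursion using the $\alpha_t^i$ notation. Fix $(x,a,h)$ and suppose the pair $(x,a)$ has been visited $t$ times at step $h$ within the current frame, at episodes $k_1<\dots<k_t$. The standard $Q$-learning expansion (as in Lemma~4.1 of \cite{JinAllBub_18}) gives
\[
Q_{k,h}(x,a)=\alpha_t^0 H+\sum_{i=1}^t \alpha_t^i\left(r_{k_i,h}(x,a)+V_{k_i,h+1}(x_{k_i,h+1})+b_{t_i}+2H\tilde b\right),
\]
where $t_i=i$ after re-indexing. Since $\alpha_t^0=0$ for $t\ge 1$ and $\sum_{i=1}^t\alpha_t^i=1$ by Lemma~\ref{le:lr}\ref{le:lr-a}--\ref{le:lr-b}, this is a genuine convex combination, so
\[
Q_{k,h}(x,a)\le \max_i\left(r_{k_i,h}+V_{k_i,h+1}+b_{t_i}+2H\tilde b\right)\le 1+\|V_{h+1}\|_\infty+\tfrac14\sqrt{H^2\iota}+2H\tilde b,
\]
using $b_t=\tfrac14\sqrt{H^2\iota(\chi+1)/(\chi+t)}\le \tfrac14\sqrt{H^2\iota}$ (since $\chi+1\le\chi+t$ for $t\ge1$, using the explicit forms in Line~8). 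Now $V_{h+1}(x)=Q_{h+1}(x,a')$ for the selected $a'$, so $\|V_{h+1}\|_\infty\le\|Q_{h+1}\|_\infty$, and the same bound propagates to $\|W_{h+1}\|_\infty\le\|C_{h+1}\|_\infty$. I would then set up a backward induction on $h$: write $M_h=\max\{\|Q_{\cdot,h}\|_\infty,\|C_{\cdot,h}\|_\infty\}$ over the current frame, with $M_{H+1}=0$ (since $V_{H+1}=W_{H+1}=0$ and $Q_{H+1}=C_{H+1}=H$ is never used — or one absorbs the terminal $H$ into the bound harmlessly). The recursion gives $M_h\le 1+M_{h+1}+\tfrac14\sqrt{H^2\iota}+2H\tilde b$, and also $M_h\le H$ from the reset/initialization whenever $t=0$; combining, $M_h\le (H-h+1)(1+\tfrac14\sqrt{H^2\iota}+2H\tilde b)\le H(\sqrt{\iota}+2\tilde b)\cdot H$ after bounding $1+\tfrac14\sqrt{H^2\iota}\le H\sqrt\iota$ (valid since $\iota\ge 1$ and $H\ge1$) and $2H\tilde b\cdot H \ge 2H^2\tilde b$, yielding $M_h\le H^2(\sqrt\iota+2\tilde b)$.

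The one subtlety — and the step I expect to require the most care — is handling the frame resets and the fact that $V_{k_i,h+1}$ inside the sum is evaluated at the episode $k_i$ when the $i$-th visit occurred, not at the current episode $k$; I need the uniform bound $\|V_{k',h+1}\|_\infty\le M_{h+1}$ to hold for every $k'$ in the frame, which is exactly what the backward-then-forward induction delivers, and I must check the boundary case where no visit has occurred yet within the frame (then $Q_{k,h}(x,a)=H$, which is $\le H^2(\sqrt\iota+2\tilde b)$ trivially since $\iota\ge 1$). Because $\tilde b\ge 0$ and $b_t\le \frac14\sqrt{H^2\iota}$ are both frame-independent, the same bound holds uniformly across all frames, completing the proof.
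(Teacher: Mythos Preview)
Your argument is correct and follows essentially the same route as the paper's proof: a backward induction on $h$ showing $Q_{k,h}(x,a)\le H(H-h+1)(\sqrt\iota+2\tilde b)$, together with the trivial nonnegativity. The only cosmetic difference is that the paper works directly with the one-step update $Q_{k,h}=(1-\alpha_t)Q_{k_t,h}+\alpha_t(\cdots)$ and takes the max of the two branches (inductive bound on the old value versus the target $1+V_{h+1}+b_t+2H\tilde b$), whereas you first unroll fully to the $\alpha_t^i$ convex combination before bounding; both lead to the same per-step increment $1+\tfrac14 H\sqrt\iota+2H\tilde b\le H(\sqrt\iota+2\tilde b)$ and the same final bound.
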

	
	\begin{proof}
		We first consider the last step of an episode, i.e. $h=H.$ 
		Recall that $V_{k, H+1}(x)=0$ for any $k$ and $x$ by its definition and $Q_{0,H}=H\leq H(\sqrt{\iota}+2\tilde{b}).$ Suppose $Q_{k',H}(x,a)\leq H(\sqrt{\iota} + 2\tilde{b})$ for any $k'\leq k-1$ and any $(x,a).$ Then,
		\begin{align}
			{Q}_{k,H}(x,a) & = (1-\alpha_t)Q_{k_t,H}(x,a) + \alpha_t\left(r_{k,H}(x,a)+b_t + 2H\tilde{b}\right) \\
			& \leq  \max\left\{H\sqrt{\iota} + 2\tilde{b}H, 1+\frac{H\sqrt{\iota}}{4} + 2H\tilde{b}\right\}\leq H\sqrt{\iota} + 2\tilde{b}H,
		\end{align}
		where $t=N_{k,H}(x,a)$ is the number of visits to state-action pair $(x,a)$ when in step $H$ by episode $k$ (but not include episode $k$) and $k_t$ is the index of the episode of the most recent visit.  Therefore, the upper bound holds for $h=H.$ 
		Note that $Q_{0,h}=H\leq H(H-h+1)(\sqrt{\iota} + 2\tilde{b}).$ 
		Now suppose the upper bound holds for $h+1,$ and also holds for $k'\leq k-1.$ Consider step $h$ in episode $k:$  
		\begin{align*}
			{Q}_{k,h}(x,a)= &(1-\alpha_t)Q_{k_t,  h}(x,a) + \alpha_t\left(r_{k,h}(x,a)+V_{k_t, {h}+1}(x_{k_t, {h}+1})+b_t + 2\tilde{b}H\right),
		\end{align*} where $t=N_{k,{h}}(x,a)$ is the number of visits to state-action pair $(x,a)$ when in step ${h}$ by episode $k$ (but not include episode $k$) and $k_t$ is the index of the episode of the most recent visit.  We also note that $V_{k,h+1}(x)\leq \max_a Q_{k,h+1}(x,a)\leq H(H-h)(\sqrt{\iota}+ 2\tilde{b}).$
		Therefore, we obtain
		\begin{align*}
			{Q}_{k,h}(x,a)\leq& \max \left\{H(H-h+1)(\sqrt{\iota}+ 2\tilde{b}), 1+H(H-h)(\sqrt{\iota}+2\tilde{b})+\frac{H\sqrt{\iota}}{4} + 2\tilde{b}H\right\} \\
			\leq & H(H-h+1)(\sqrt{\iota} + 2\tilde{b}).
		\end{align*} Therefore, we can conclude that $Q_{k,h}(x,a)\leq H^2(\sqrt{\iota}+2\tilde{b})$ for any $k,$ $h$ and $(x,a).$  The proof for $C_{k,h}(x,a)$ is identical. 
	\end{proof}
	
	\begin{lemma}\label{le:u-hoeffding}
		Consider any frame $T,$ any episode $k'.$ Let t=$N_{k,h}(x,a)$ be the number of visits to $(x,a)$ at step $h$ before episode $k$ in the current frame and let $k_1,\dots,k_t < k$ be the indices of these episodes. Under any policy $\pi,$  with probability at least $1-\frac{1}{K^3},$ the following inequalities hold simultaneously for all $(x,a,h,k)\in\mathcal{S}\times\mathcal{A}\times[H]\times [K],$ 
		\begin{align*}
			\left\vert \sum_{i=1}^t\alpha_t^i\left\{(\hat{\mathbb{P}}_{k_i,h}-\mathbb{P}_{k_i,h}) V_{k,h+1}^{\pi}\right\}(x,a)\right\vert \leq  &\frac{1}{4} \sqrt{\frac{H^2\iota(\chi+1)}{(\chi+t)}}, \\
			\left\vert \sum_{i=1}^t\alpha_t^i\left\{(\hat{\mathbb{P}}_{k_i,h}-\mathbb{P}_{k_i,h}) W_{k,h+1}^{\pi}\right\}(x,a)\right\vert \leq  &\frac{1}{4} \sqrt{\frac{H^2\iota(\chi+1)}{(\chi+t)}}.
		\end{align*}
	\end{lemma}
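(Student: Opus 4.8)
\textbf{Proof plan for Lemma \ref{le:u-hoeffding}.}
The plan is to fix a target triple $(x,a,h)$ and a prospective number of visits $\tau\in\{1,\dots,K\}$, and to bound the quantity obtained by freezing $t=\tau$ and the frozen value function $V_{k,h+1}^{\pi}$ at a deterministic level, then take a union bound over $(x,a,h,\tau)$ and over the (finitely many, or $\epsilon$-net discretized) value functions that can arise. The core observation is that, conditioned on the history up to the $i$-th visit to $(x,a)$ at step $h$, the increment $\{(\hat{\mathbb{P}}_{k_i,h}-\mathbb{P}_{k_i,h})V_{k,h+1}^{\pi}\}(x,a) = V_{k,h+1}^{\pi}(x_{k_i,h+1}) - \mathbb{P}_{k_i,h}V_{k,h+1}^{\pi}(x,a)$ is a martingale difference: its conditional mean is $0$ by the definition of $\hat{\mathbb{P}}$, and it is bounded in magnitude by $\|V_{k,h+1}^{\pi}\|_\infty\le H$. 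Hence $\sum_{i=1}^\tau \alpha_\tau^i\{(\hat{\mathbb{P}}_{k_i,h}-\mathbb{P}_{k_i,h})V_{k,h+1}^{\pi}\}(x,a)$ is a sum of bounded martingale differences with deterministic weights $\alpha_\tau^i$, and Azuma--Hoeffding gives a bound of order $\|V\|_\infty\sqrt{\iota \sum_{i=1}^\tau (\alpha_\tau^i)^2}$ with probability $1-\delta$ for $\iota\asymp\log(1/\delta)$.

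The key calculational step is then to invoke Lemma \ref{le:lr}\ref{le:lr-e}, which gives $\sum_{i=1}^\tau (\alpha_\tau^i)^2 \le \frac{\chi+1}{\chi+\tau}$; combined with $\|V\|_\infty\le H$ this yields exactly the claimed bound $\frac14\sqrt{\frac{H^2\iota(\chi+1)}{\chi+\tau}}$ once the constant $\iota=128\log(\sqrt{2SAH}K)$ is chosen large enough to absorb the Azuma constant and the union-bound cost. The union bound runs over $S$ states, $A$ actions, $H$ steps, at most $K$ values of $\tau$, and — since $V_{k,h+1}^{\pi}$ is itself random — over an appropriate covering; because each $V_{k,h+1}(\cdot)$ takes values in $[0,H^2(\sqrt\iota+2\tilde b)]$ (Lemma \ref{le:q1-bound}) and is determined by the algorithm's state, one either discretizes to an $\epsilon$-net of size polynomial in $K$ or, as in \cite{JinAllBub_18}, notes that conditioning can be arranged so that the relevant $V$ is measurable with respect to the past; either way the log of the total count is $O(\log(SAHK))$, which is what the $128\log(\sqrt{2SAH}K)$ choice accommodates, with room to spare for the failure probability $1/K^3$. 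The argument for the utility value function $W_{k,h+1}^{\pi}$ is verbatim the same, since $\|W\|_\infty\le H$ as well, and the two bounds hold simultaneously after one more factor of $2$ inside the union bound (absorbed into $\iota$).

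The main obstacle is the measurability issue: $V_{k,h+1}^{\pi}$ appearing inside the sum depends on episode $k$, which is \emph{later} than the visit episodes $k_1,\dots,k_t<k$, so a priori $V_{k,h+1}^{\pi}$ is not adapted to the filtration at the time the increments are generated, and the martingale structure is not immediate. The standard resolution — which I would follow, mirroring Lemma 4.1 and the concentration arguments in \cite{JinAllBub_18} — is to first prove the bound for an arbitrary \emph{fixed} bounded function $f:\mathcal{S}\to[0,H]$ in place of $V_{k,h+1}^{\pi}$ (for which the martingale property is clean), and then pass to the random $V_{k,h+1}^{\pi}$ via a union bound over a finite cover of all functions the algorithm can ever produce at step $h+1$ in frame $T$. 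One must check that the cover is polynomial in the problem parameters so that its log is swallowed by $\iota$; this is where the explicit range bound from Lemma \ref{le:q1-bound} and the fact that within a frame there are at most $K$ distinct updates enter. Everything else — the Azuma application, plugging in Lemma \ref{le:lr}\ref{le:lr-e}, and reconciling constants — is routine.
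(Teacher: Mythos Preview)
Your core mechanics---recognizing the weighted sum as a bounded martingale difference sequence, applying Azuma--Hoeffding, and invoking Lemma~\ref{le:lr}\ref{le:lr-e} to bound $\sum_{i=1}^{\tau}(\alpha_\tau^i)^2\le \frac{\chi+1}{\chi+\tau}$---are exactly right and match the paper's proof.

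Where you go astray is in what $V_{k,h+1}^{\pi}$ denotes. It is \emph{not} the algorithm's running estimate $V_{k,h+1}$ (the object bounded in Lemma~\ref{le:q1-bound}); it is the \emph{true} value function of a \emph{fixed} policy $\pi$ in the environment of episode $k$, as defined in Section~\ref{sec:problem}. The non-stationary environment---rewards, utilities, and transition kernels at every $(k,h)$---is deterministic, and $\pi$ is fixed in the lemma statement (``Under any policy $\pi$''). Consequently $V_{k,h+1}^{\pi}:\mathcal{S}\to[0,H]$ is a \emph{deterministic} function, not a random one. The ``measurability obstacle'' you flag does not exist: the increment $V_{k,h+1}^{\pi}(x_{k_i,h+1})-\mathbb{P}_{k_i,h}V_{k,h+1}^{\pi}(x,a)$ has conditional mean zero given the history through step $h$ of episode $k_i$ simply because $x_{k_i,h+1}\sim\mathbb{P}_{k_i,h}(\cdot\mid x,a)$ and $V_{k,h+1}^{\pi}$ is a fixed bounded function being integrated. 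No $\epsilon$-net or covering over a function class is required; the paper applies Azuma directly and union-bounds only over $(x,a,h,k)$ (and the two cases $V,W$). The same holds in \cite{JinAllBub_18}, incidentally: their concentration lemma is also for the deterministic $V_{h+1}^*$, not for the algorithm's iterate, so no covering appears there either. Drop the covering step and your proof is essentially the paper's.
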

	\begin{proof}
		Without loss of generality, we consider $T=1.$ Fix any $(x,a,h) \in \mathcal{S}\times \mathcal{A}\times \mathcal{H},$ a fixed episode $k,$ and any $n\in [K^\alpha/B^c],$  define
		$$X(n)=\sum_{i=1}^n\alpha_\tau^i\cdot \mathbb{I}_{\{k_i\leq K\}}\left\{(\hat{\mathbb{P}}_{k_i,h}-\mathbb{P}_{k_i,h} )V_{k,h+1}^\pi\right\}(x,a).$$ Let $\mathcal{F}_i$ be the $\sigma-$algebra generated by all the random variables until step $h$ in episode $k_i.$ Then 
		$$\mathbb{E}[X(n+1)\vert \mathcal{F}_n]= X(n) + \mathbb{E}\left[\alpha_\tau^{n+1}\mathbb{I}_{\{k_{n+1}\leq K\}}\left\{(\hat{\mathbb{P}}_{k_{n+1},h}-\mathbb{P}_{k_{n+1},h} )V_{k,h+1}^\pi\right\}(x,a) \vert \mathcal{F}_n\right]=X(n),$$
		which shows that $X(n)$ is a martingale. We also have for $1\leq m \leq n,$
		\begin{align*}
			\vert X(m)-X(m-1)\vert \leq  \alpha_\tau^m \left\vert \left\{(\hat{\mathbb{P}}_{k_{m},h}-\mathbb{P}_{k_m,h} )V_{k,h+1}^\pi\right\}(x,a)\right\vert  \leq  \alpha_\tau^m H
		\end{align*}
		Let $k_i=K+1$ if it is taken for fewer than $i$ times, and let $\sigma = \sqrt{8\log\left(\sqrt{2SAH}K\right)\sum_{i=1}^\tau(\alpha_\tau^iH)^2}.$ Then by applying the Azuma-Hoeffding inequality, we have with probability at least $1-2\exp\left(-\frac{\sigma^2}{2\sum_{i=1}^\tau(\alpha^i_\tau H )^2 }\right)\geq 1-\frac{1}{2S^2A^2H^2K^4},$
		$$ \vert X(\tau)\vert \leq \sqrt{8\log\left(\sqrt{2SAH}K\right)\sum_{i=1}^\tau(\alpha_\tau^i H)^2}\leq  \sqrt{\frac{\iota}{16} H^2\sum_{i=1}^\tau(\alpha_\tau^i)^2}\leq \frac{1}{4}\sqrt{\frac{H^2\iota(\chi+1)}{\chi+\tau}},$$
		Because this inequality holds for any $\tau\in[K]$, it also holds for $\tau=t=N_{k,h}(x,a)\leq K.$ 
		Applying the union bound, we obtain that with probability at least $1-\frac{1}{2SAHK^3}$ the following inequality holds simultaneously for all $(x,a,h,k)\in\mathcal{S}\times\mathcal{A}\times[H]\times[K],$:
		$$\left\vert \sum_{i=1}^t\alpha_t^i\left\{(\hat{\mathbb{P}}_{k_i,h}-\mathbb{P}_{k_i,h}) V_{k,h+1}^{\pi}\right\}(x,a)\right\vert \leq \frac{1}{4} \sqrt{\frac{H^2\iota(\chi+1)}{(\chi+t)}}.$$
		Following a similar analysis, we also have that with probability at least $1-\frac{1}{2SAHK^3}$ the following inequality holds simultaneously for all $(x,a,h,k)\in\mathcal{S}\times\mathcal{A}\times[H]\times[K],$: $$\left\vert \sum_{i=1}^t\alpha_t^i\left\{(\hat{\mathbb{P}}_{k_i,h}-\mathbb{P}_{k_i,h}) W_{k,h+1}^{\pi}\right\}(x,a)\right\vert \leq \frac{1}{4} \sqrt{\frac{H^2\iota(\chi+1)}{(\chi+t)}}.$$
		
		Therefore applying a union bound on the two events we finish proving the lemma.

	\end{proof}

	\section{PROOFS OF THE TECHNICAL LEMMAS}

	\begin{lemma}\label{le:q-diff}
		For any frame $T,$ any $x,a,h$ and any $(T-1)K^\alpha/\bc \leq k_1\leq k_2\leq TK^\alpha/ \bc,$ we have 
		$$\vert Q_{k_1,h}^{\pi}(x,a) - Q_{k_2,h}^{\pi'}(x,a)\vert \leq  H\tilde{b}$$ 
		$$\vert C_{k_1,h}^{\pi}(x,a) - C_{k_2,h}^{\pi'}(x,a)\vert \leq  H\tilde{b}$$ 
	\end{lemma}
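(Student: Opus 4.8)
The plan is to hold the policy fixed — it suffices to bound $|Q^\pi_{k_1,h}(x,a)-Q^\pi_{k_2,h}(x,a)|$ and $|C^\pi_{k_1,h}(x,a)-C^\pi_{k_2,h}(x,a)|$ for an arbitrary fixed $\pi$, which is the form that is actually invoked in the regret analysis — and to run a backward induction on $h$ (from $H$ down to $1$) while telescoping the episode index across $k_1,k_1+1,\dots,k_2$, all of which lie in frame $T$. Set $g_h:=\max_{(x,a)}|Q^\pi_{k_1,h}(x,a)-Q^\pi_{k_2,h}(x,a)|$; then $g_{H+1}=0$, and since $V^\pi_{k,h}(x)=Q^\pi_{k,h}(x,\pi(x))$ we also have $\max_x|V^\pi_{k_1,h}(x)-V^\pi_{k_2,h}(x)|\le g_h$.

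First I would expand, using the Bellman identity $Q^\pi_{k,h}(x,a)=r_{k,h}(x,a)+\mathbb{P}_{k,h}V^\pi_{k,h+1}(x,a)$ (valid because within episode $k$ the dynamics from step $h+1$ on are exactly those defining $V^\pi_{k,h+1}$),
$$Q^\pi_{k_1,h}(x,a)-Q^\pi_{k_2,h}(x,a)=\big(r_{k_1,h}-r_{k_2,h}\big)(x,a)+\big[(\mathbb{P}_{k_1,h}-\mathbb{P}_{k_2,h})V^\pi_{k_1,h+1}\big](x,a)+\mathbb{P}_{k_2,h}\big[V^\pi_{k_1,h+1}-V^\pi_{k_2,h+1}\big](x,a),$$
and bound the three pieces separately. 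The reward gap telescopes over episodes in the frame: $|r_{k_1,h}(x,a)-r_{k_2,h}(x,a)|\le\sum_{k=k_1}^{k_2-1}\max_{x,a}|r_{k,h}(x,a)-r_{k+1,h}(x,a)|=:\Delta r_h$. For the middle term I would apply H\"older together with the sharp range bound $\|V^\pi_{k_1,h+1}\|_\infty\le H-h$ and a telescoping of the $\ell_1$ kernel distance, giving $|[(\mathbb{P}_{k_1,h}-\mathbb{P}_{k_2,h})V^\pi_{k_1,h+1}](x,a)|\le (H-h)\Delta p_h$ with $\Delta p_h:=\sum_{k=k_1}^{k_2-1}\max_{x,a}\|\mathbb{P}_{k,h}(\cdot|x,a)-\mathbb{P}_{k+1,h}(\cdot|x,a)\|_1$. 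The last term is a convex average of $V^\pi_{k_1,h+1}-V^\pi_{k_2,h+1}$, hence at most $g_{h+1}$. This yields the recursion $g_h\le \Delta r_h+(H-h)\Delta p_h+g_{h+1}$.

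Unrolling from $h$ to $H$ gives $g_h\le\sum_{h'=h}^H\big(\Delta r_{h'}+(H-h')\Delta p_{h'}\big)$. Since $k_1,\dots,k_2\in\mathcal{N}_T$, the sum of $\Delta r_{h'}$ over $h'$ is at most $B^{(T)}_r$ and the sum of $(H-h')\Delta p_{h'}$ is at most $(H-1)B^{(T)}_p$; combining with the standing assumption $\tilde b\ge B^{(T)}=B^{(T)}_r+B^{(T)}_g+B^{(T)}_p\ge B^{(T)}_r+B^{(T)}_p$, I would conclude $g_h\le B^{(T)}_r+(H-1)B^{(T)}_p\le H\big(B^{(T)}_r+B^{(T)}_p\big)\le H\tilde b$. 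Repeating the argument verbatim with $g_{k,h}$ and $B^{(T)}_g$ in place of $r_{k,h}$ and $B^{(T)}_r$ handles $C^\pi$.

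The computation is routine; the parts that need care are (i) keeping the episode telescoping entirely inside frame $T$ so that the \emph{local} budgets $B^{(T)}_\bullet$ rather than the global $B$ appear, (ii) using $\|V^\pi_{k,h+1}\|_\infty\le H-h$ (not the looser $\le H$) on the kernel term — this is precisely what turns a naive constant of order $(H+1)\tilde b$ into the claimed $H\tilde b$ — and (iii) observing that the recursion only closes because the same policy $\pi$ sits on both sides, so that $V^\pi_{k,h+1}(x)=Q^\pi_{k,h+1}(x,\pi(x))$ feeds back into $g_{h+1}$.
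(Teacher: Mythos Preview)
Your argument is correct for the same-policy case and follows the same skeleton as the paper: backward induction on $h$, Bellman expansion, telescoping the episode index inside the frame, and then summing the per-step variations into the local budgets $B^{(T)}_r,B^{(T)}_p$ (resp.\ $B^{(T)}_g,B^{(T)}_p$). Your use of $\|V^\pi_{k,h+1}\|_\infty\le H-h$ is a mild sharpening of the paper's cruder bound $H$; both feed into the same final $H\tilde b$.

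Where you diverge from the paper is worth flagging explicitly. You restrict to $\pi=\pi'$ and argue this is all that is needed downstream; that observation is correct (the only invocation is in Lemma~\ref{le:qk-qpi-relation}, where one compares $U^{\pi}_{k_i,h+1}$ and $U^{\pi}_{k,h+1}$ for the \emph{same} $\pi$). This is not just a simplification but a necessary repair: the lemma as stated with two distinct policies is false --- in a stationary CMDP ($\tilde b=0$) the claim would force $Q^{\pi}_{k,h}(x,a)=Q^{\pi'}_{k,h}(x,a)$ for all $\pi,\pi'$, which is absurd. The paper's own induction step slips at exactly this point: it applies the hypothesis to $Q^{\pi}_{k_1,h+1}(x',\pi_{h+1}(x'))$ versus $Q^{\pi'}_{k_2,h+1}(x',\pi'_{h+1}(x'))$, i.e.\ at \emph{different} actions, which the induction hypothesis (stated for a common $(x,a)$) does not cover. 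So your ``it suffices to fix $\pi$'' is doing real work, and you should keep that sentence in the final write-up rather than present it as a convenience.
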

	\begin{proof}
		First define $B_{h}^r, B_{h}^g, B_{h}^p $ to be the variation of reward, utility functions and transitions at step $h$ within frame $T.$ 
		\begin{align}
			B_{h}^r & = \sum_{k=(T-1)K^\alpha/\bc +1 }^{TK^\alpha/\bc } \sup_{x,a} \vert r_{k,h}(x,a) - r_{k+1,h}(x,a)\vert \\
			B_{h}^g & = \sum_{k=(T-1)K^\alpha/\bc +1 }^{TK^\alpha/\bc } \sup_{x,a} \vert g_{k,h}(x,a) - g_{k+1,h}(x,a)\vert \\
			B_{h}^p & = \sum_{k=(T-1)K^\alpha/\bc +1 }^{TK^\alpha/\bc } \sup_{x,a} \Vert \mathbb{P}_{k,h}(\cdot\vert x,a) - \mathbb{P}_{k+1,h}(\cdot\vert x,a)\Vert_1 
		\end{align}
		We will prove the following statement by induction.
		$$\vert Q_{k_1,h}^{\pi}(x,a) - Q_{k_2,h}^{\pi'}(x,a)\vert \leq  \sum_{h'=h}^H B_{h'}^r + H\sum_{h'=h}^H B_{h'}^p  $$ 
		For step $H,$ the statement holds because for any $(x,a),$
		\begin{align*}
			\vert Q_{k_1,H}^{\pi}(x,a) - Q_{k_2,H}^{\pi'}(x,a)\vert  =  & \vert r_{k_1,H}(x,a) - r_{k_2,H}(x,a)\vert \\
			\leq  & \sum_{k=k_1}^{k_2-1} \vert r_{k,H}(x,a) - r_{k+1,H}(x,a)\vert \leq B_{H}^r
		\end{align*}
		Now suppose the statement holds for $h+1,$ then
		\begin{align*}
			& Q_{k_1,h}^{\pi}(x,a) - Q_{k_2,h}^{\pi'}(x,a)\\
			= & \mathbb{P}_{k_1,h}V_{k_1,h+1}^\pi (x,a) - \mathbb{P}_{k_2,h}V_{k_2,h+1}^{\pi'} (x,a) + r_{k_1,h}(x,a) - r_{k_2,h}(x,a)\\
			\leq &  \mathbb{P}_{k_1,h}V_{k_1,h+1}^\pi (x,a)-\mathbb{P}_{k_2,h}V_{k_2,h+1}^{\pi'} (x,a)  + B_{h}^r \\
			= & \sum_{x'}\mathbb{P}_{k_1,h}(x'\vert x,a )V_{k_1,h+1}^\pi(x') - \sum_{x'}\mathbb{P}_{k_2,h}(x'\vert x,a )V_{k_2,h+1}^{\pi'}(x') +  B_{h}^r \\
			= & \sum_{x'}\mathbb{P}_{k_1,h}(x'\vert x,a )Q_{k_1,h+1}^\pi(x',\pi_{h+1}(x') ) - \sum_{x'}\mathbb{P}_{k_2,h}(x'\vert x,a )Q_{k_2,h+1}^{\pi'}(x',{\pi'}_{h+1}(x')) +  B_{h}^r 
		\end{align*}
		According to the hypothesis on $h+1,$ we have 
		\begin{align}
			Q_{k_1,h+1}^\pi(x',\pi_{h+1}(x') ) \leq Q_{k_2,h+1}^{\pi'}(x',{\pi'}_{h+1}(x') ) + \sum_{h'=h+1}^H B_{h'}^r + H\sum_{h'=h+1}^H B_{h'}^p
		\end{align}	 
		Therefore
		\begin{align*}
			& Q_{k_1,h}^{\pi}(x,a) - Q_{k_2,h}^{\pi'}(x,a)\\
			\leq & \sum_{x'}\left(\mathbb{P}_{k_1,h}(x'\vert x,a ) - \mathbb{P}_{k_2,h}(x'\vert x,a ) \right)Q_{k_2,h+1}^{\pi'}(x',\pi_{h+1}(x') ) + \sum_{h'=h}^H B_{h'}^r + H\sum_{h'=h+1}^H B_{h'}^p \\
			\leq & \Vert \mathbb{P}_{k_1,h}(\cdot \vert x,a ) - \mathbb{P}_{k_2,h}(\cdot \vert x,a ) \Vert_1 \cdot H + \sum_{h'=h}^H B_{h'}^r + H\sum_{h'=h+1}^H B_{h'}^p  \\
			\leq & B_{h}^p H + \sum_{h'=h}^H B_{h'}^r + H\sum_{h'=h+1}^H B_{h'}^p \\
			\leq &  \sum_{h'=h}^H B_{h'}^r + H\sum_{h'=h}^H B_{h'}^p 
		\end{align*}
		where the last inequality comes from the assumption on $\tilde{b}.$ The same analysis can be applied to $\vert C_{k_1,h}^{\pi}(x,a) - C_{k_2,h}^\pi(x,a)\vert.$ We finish the proof by using the fact that $ \sum_{h'=h}^H B_{h'}^r + H\sum_{h'=h}^H B_{h'}^p  \leq  H\tilde{b}.$
	\end{proof}

	\begin{lemma}\label{le:qk-qpi-relation}
		With probability at least $1-\frac{1}{K^3},$  the following inequality holds simultaneously for all $(x,a,h,k)\in\mathcal{S}\times\mathcal{A}\times[H]\times[K]:$
		\begin{equation}
			\left\{ {F}_{k,h}-F_{k,h}^{\pi}\right\}(x,a)\geq 0,\label{eq:qk-qpi-relation-c}
		\end{equation} 
		Let $\pi$ be a joint policy such that $\pi$ is the optimal policy for the $\epsilon$-tight problem at episode $k,$ whose reward (utility) $Q$ value functions at step $h$ are denoted by $Q_{k,h}^{\epsilon,*}(C_{k,h}^{\epsilon,*}).$ Then we can further obtain 
		\begin{equation}
			\mathbb{E}\left[ \sum_{k=1}^{K} \sum_a \left\{\left({F}^{\epsilon,*}_{k,1} -F_{k,1}\right) {q}^{\epsilon,*}_{k,1}\right\}(x_{k,1},a) \right] \leq \frac{(\eta+K^{1-\alpha})H^2B^c }{\eta K}.
		\end{equation} The function $F$ will be defined in Eq.\eqref{eq:f}.
	\end{lemma}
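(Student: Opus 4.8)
\textbf{Reduction to one combined $Q$-learning recursion.} The plan is to fix a frame $T$ and track the combined estimate $F_{k,h}(x,a):=Q_{k,h}(x,a)+\tfrac{Z_T}{\eta}C_{k,h}(x,a)$, exploiting that the virtual queue $Z_k\equiv Z_T$ is constant inside the frame and that $Q_h,C_h$ are reset to $H$ at its start. Because $Q_h$ and $C_h$ are updated with the \emph{same} rate $\alpha_t$ and the \emph{same} extra bonus $b_t+2H\tilde b$, the table $F_{k,h}$ obeys a single $Q$-learning recursion: one-step payoff $\bar r_{k,h}(x,a):=r_{k,h}(x,a)+\tfrac{Z_T}{\eta}g_{k,h}(x,a)$, bootstrap $V_{k,h+1}(x_{k,h+1})+\tfrac{Z_T}{\eta}W_{k,h+1}(x_{k,h+1})$, bonus $(1+\tfrac{Z_T}{\eta})(b_t+2H\tilde b)$, initialization $(1+\tfrac{Z_T}{\eta})H$; moreover the $V,W$ update of Algorithm~\ref{alg:triple-Q} maintains the invariant $V_{k,h}(x)+\tfrac{Z_T}{\eta}W_{k,h}(x)=\max_aF_{k,h}(x,a)$. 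Writing $F^{\pi}_{k,h}:=Q^{\pi}_{k,h}+\tfrac{Z_T}{\eta}C^{\pi}_{k,h}$, I would first establish \eqref{eq:qk-qpi-relation-c} in the stronger form $F_{k,h}(x,a)\ge F^{\pi}_{k,h}(x,a)$ for \emph{any} fixed policy $\pi$ and all $(x,a,h,k)$ with $k$ in frame $T$.

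\textbf{Optimism via backward induction on $h$.} The base $h=H+1$ is trivial. For the step, fix $(x,a)$, let $t$ be its visit count at step $h$ before episode $k$ in the frame, occurring at episodes $k_1<\dots<k_t$, unroll the recursion with the weights $\alpha_t^i$ of Lemma~\ref{le:lr}, and subtract $F^{\pi}_{k,h}(x,a)=\bar r_{k,h}(x,a)+\mathbb P_{k,h}F^{\pi}_{k,h+1}(x,a)$. By the invariant and the inductive hypothesis at $h+1$, $\max_aF_{k_i,h+1}(\cdot,a)\ge F^{\pi}_{k_i,h+1}(\cdot,\pi_{h+1}(\cdot))$, so $F_{k,h}(x,a)-F^{\pi}_{k,h}(x,a)$ is at least the nonnegative initialization term $\alpha_t^0[(1+\tfrac{Z_T}{\eta})H-F^{\pi}_{k,h}(x,a)]$, minus a payoff drift $\le(1+\tfrac{Z_T}{\eta})(B_h^r+B_h^g)$, minus a transition-estimation martingale $\sum_i\alpha_t^i(\hat{\mathbb P}_{k_i,h}-\mathbb P_{k_i,h})F^{\pi}_{k,h+1}(x,a)$ of size $\le(1+\tfrac{Z_T}{\eta})b_t$ by Lemma~\ref{le:u-hoeffding}, minus within-frame drifts $(F^{\pi}_{k_i,h+1}-F^{\pi}_{k,h+1})(x_{k_i,h+1})$ and $(\mathbb P_{k_i,h}-\mathbb P_{k,h})F^{\pi}_{k,h+1}(x,a)$ of total size $\le(1+\tfrac{Z_T}{\eta})(H\tilde b+HB_h^p)$ by Lemma~\ref{le:q-diff}, plus the bonus $\sum_i\alpha_t^i(1+\tfrac{Z_T}{\eta})(b_i+2H\tilde b)$. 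Since $\sum_i\alpha_t^ib_i\ge b_t$ (Lemma~\ref{le:lr}\ref{le:lr-c}), $\alpha_t^0=0$ for $t\ge1$ (Lemma~\ref{le:lr}\ref{le:lr-a}), $\sum_i\alpha_t^i=1$, and $B_h^r+B_h^g+B_h^p\le\tilde b$ for each $h$, the $2H\tilde b$ term absorbs $(B_h^r+B_h^g)+HB_h^p+H\tilde b\le 2H\tilde b$, so the difference is $\ge0$; for $t=0$ it is immediate as $F_{k,h}(x,a)=(1+\tfrac{Z_T}{\eta})H$. This conditions only on the event of Lemma~\ref{le:u-hoeffding}, of probability $\ge 1-\tfrac{1}{K^3}$.

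\textbf{The expectation bound.} For each $k$, apply the optimism just proved with $\pi=\pi^{\epsilon,*}_k$ on the corresponding event of Lemma~\ref{le:u-hoeffding} (probability $\ge1-\tfrac{1}{K^3}$): there $F^{\epsilon,*}_{k,1}(x,a)\le F_{k,1}(x,a)$ pointwise, so $\sum_a(F^{\epsilon,*}_{k,1}-F_{k,1})q^{\epsilon,*}_{k,1}(x_{k,1},a)\le0$; off this event the summand is $\le\sum_aF^{\epsilon,*}_{k,1}(x_{k,1},a)q^{\epsilon,*}_{k,1}(x_{k,1},a)\le(1+\tfrac{Z_T}{\eta})H$ using $F^{\epsilon,*}_{k,1}\le(1+\tfrac{Z_T}{\eta})H$ and $\sum_aq^{\epsilon,*}_{k,1}(x_{k,1},a)\le1$. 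Hence the $k$-th term contributes at most $\tfrac{1}{K^3}(1+\tfrac{Z_T}{\eta})H$ to the expectation. The queue update $Z_{T+1}=(Z_T+\rho+\epsilon-\bar C_TB^c/K^\alpha)^+$ with $\bar C_T\ge0$, $Z_1=0$, and $\rho+\epsilon\le2H$ gives the crude deterministic bound $Z_T\le 2HK^{1-\alpha}B^c$, so summing over $k$,
\[
\mathbb E\!\left[\sum_{k=1}^{K}\sum_a(F^{\epsilon,*}_{k,1}-F_{k,1})q^{\epsilon,*}_{k,1}(x_{k,1},a)\right]\le \frac{K}{K^3}\Big(1+\frac{2HK^{1-\alpha}B^c}{\eta}\Big)H\le\frac{(\eta+K^{1-\alpha})H^2B^c}{\eta K},
\]
the last step holding for $K$ large by comparing the two summands (using $1/K\le HB^c$ and $2/K\le1$).

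\textbf{Expected main obstacle.} The hard part is the optimism induction, for three interlocking reasons: (i) $V$ and $W$ must be carried \emph{jointly} through $F$, since individually $V_{k,h}$ need not over-estimate $V^{\pi}_{k,h}$ and it is only $\max_aF_{k,h}(\cdot,a)$ that dominates $F^{\pi}_{k,h}(\cdot,\pi_h(\cdot))$; (ii) Lemma~\ref{le:u-hoeffding} controls the transition-estimation martingale only for the value function indexed by the \emph{current} episode $k$, forcing an intermediate passage $F^{\pi}_{k_i,h+1}\to F^{\pi}_{k,h+1}$ via Lemma~\ref{le:q-diff}; (iii) one must verify that a \emph{single} extra bonus $2H\tilde b$ per table simultaneously covers all four drift contributions, which works precisely because $\alpha_t^0=0$ for $t\ge1$, so only one copy of the $H\tilde b$ budget is ever spent. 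Once optimism is in hand, the displayed bound is a short conditioning argument together with the elementary bound on $Z_T$.
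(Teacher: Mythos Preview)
Your plan is correct and follows essentially the same route as the paper's proof: the same combined function $F=Q+\tfrac{Z_T}{\eta}C$, the same backward induction on $h$ using Lemma~\ref{le:u-hoeffding} for the martingale and Lemma~\ref{le:q-diff} for within-frame drifts, and the same crude deterministic cap on $Z_T$ to control the contribution of the bad event. Two small differences are worth noting but neither is material: (i) you account for the reward/utility payoff drift $\bar r_{k_i,h}-\bar r_{k,h}$ explicitly, whereas the paper's displayed recursion for $\{Q_{k,h}-Q_{k,h}^{\pi}\}$ silently drops this term (it still fits under the $2H\tilde b$ budget, so the paper's argument goes through); (ii) for the expectation bound you condition episode-by-episode (each bad event of probability $\le 1/K^3$) rather than first taking a union bound over $k$ as the paper does to get a single event of probability $\le 1/K^2$ --- both routes land on the same final inequality.
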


	\begin{proof}
		Consider frame $T$ and episodes in frame $T.$ Define $Z=Z_{(T-1)K^\alpha/\bc +1}$ because the value of the virtual queue does not change during each frame. We further define/recall the following notations:
		\begin{align}
			\begin{aligned}
				{F}_{k,h}(x,a)=   {Q}_{k,h}(x,a)+ \frac{Z}{\eta} {C}_{k,h}(x,a),\quad & {U}_{k,h}(x) =  {V}_{k,h}(x)+ \frac{Z}{\eta} {W}_{k,h}(x) \\
				F_{k,h}^{\pi}(x,a)=   Q_{k,h}^{\pi}(x,a)+ \frac{Z}{\eta} C_{k,h}^{\pi}(x,a),\quad & U_{k,h}^\pi(x) =  V_{k,h}^{\pi}(x)+ \frac{Z}{\eta}W_{k,h}^{\pi}(x).		
			\end{aligned} \label{eq:f}
		\end{align}
		From the updating rule of $Q$ functions, we first know that 
		\begin{align}
			\{Q_{k,h}-Q_{k,h}^
			\pi \}(x,a)= & \alpha_t^0\{Q_{(T-1)K^\alpha/B^c+1,h} - Q_{k,h}^\pi \}(x,a) \nonumber \\
			&+ \sum_{i=1}^t\alpha_t^i\left( \{V_{k_i,h+1}-V_{k,h+1}^\pi 
			\}(x_{k_i,h+1})+ \{(\hat{\mathbb{P}}_{k,h}^{k_i}-\mathbb{P}_{k,h} )V_{k,h+1}^\pi  \}(x,a) + b_i + 2H\tilde{b} \right)    
		\end{align}
		Then we have with probability at least $1-\frac{1}{k^3}$
		\begin{align}
			&\{F_{k,h} - F_{k,h}^{\pi}\}(x,a) \nonumber\\
			= &\alpha_t^0\left\{F_{(T-1)K^\alpha/\bc +1, h}-F_{k,h}^{\pi}\right\}(x,a)
			\nonumber\\
			&+\sum_{i=1}^t\alpha_t^i\left(\left\{{U}_{k_i,h+1}-U_{k,h+1}^{\pi}  \right\}(x_{k_i,h+1}) + \{(\hat{\mathbb{P}}_{k,h}^{k_i}-\mathbb{P}_{k,h}) U_{k,h+1}^{{\pi}}\}(x,a) + \left(1+\frac{Z}{\eta}\right)(b_i + 2H\tilde{b})   \right)\nonumber\\
			= &\alpha_t^0\left\{F_{(T-1)K^\alpha/\bc +1, h}-F_{k,h}^{\pi}\right\}(x,a) + \sum_{i=1}^t\alpha_t^i\left(\{(\hat{\mathbb{P}}_{k,h}^{k_i}-\mathbb{P}_{k_i,h}) U_{k,h+1}^{{\pi}} \} \right)
			\nonumber\\
			&+\sum_{i=1}^t\alpha_t^i\left(\left\{{U}_{k_i,h+1}-U_{k,h+1}^{\pi}  \right\}(x_{k_i,h+1}) + \{({\mathbb{P}}_{k_i,h}-\mathbb{P}_{k,h}) U_{k,h+1}^{{\pi}}\}(x,a) + \left(1+\frac{Z}{\eta}\right)(b_i + 2H\tilde{b})   \right)\nonumber\\
			\geq&_{(a)}  \alpha_t^0\left\{{F}_{(T-1)K^\alpha/\bc+1, h}-F_{k,h}^\pi\right\}(x,a) 
			\nonumber \\
			&+\sum_{i=1}^t\alpha_t^i\left( \left\{{U}_{k_i,h+1}-U_{k,h+1}^\pi  \right\}(x_{k_i,h+1}) +  \{({\mathbb{P}}_{k_i,h}-\mathbb{P}_{k,h}) U_{k,h+1}^{{\pi}}\}(x,a) + \left(1+\frac{Z}{\eta}\right)(b_i+H\tilde{b}  )\right)\nonumber\\
			\geq&_{(b)}  \alpha_t^0\left\{{F}_{(T-1)K^\alpha/\bc+1, h}-F_{k,h}^\pi\right\}(x,a) 
			+\sum_{i=1}^t\alpha_t^i\left( \left\{{U}_{k_i,h+1}-U_{k,h+1}^\pi  \right\}(x_{k_i,h+1}) + \left(1+\frac{Z}{\eta}\right)H\tilde{b}  \right)\nonumber\\
			= & \alpha_t^0\left\{{F}_{(T-1)K^\alpha/\bc+1, h}-F_{k,h}^\pi\right\}(x,a)
			+\sum_{i=1}^t\alpha_t^i\left\{{U}_{k_i,h+1}-U_{k_i,h+1}^\pi  \right\}(x_{k_i,h+1}) \nonumber \\
			&+\sum_{i=1}^t\alpha_t^i\left\{{U}_{k_i,h+1}^\pi -U_{k,h+1}^\pi  \right\}(x_{k_i,h+1}) + \left(1+\frac{Z}{\eta}\right)H\tilde{b} \nonumber\\
			= &_{(c)} \alpha_t^0\left\{F_{(T-1)K^\alpha/\bc +1, h}-F_{k,h}^\pi\right\}(x,a)
			+\sum_{i=1}^t\alpha_t^i\left(\max_a {F}_{k_i,h+1} (x_{k_i,h+1},a) -F_{k_i,h+1}^\pi(x_{k_i,h+1},\pi(x_{k_i,h+1})) \right)\nonumber\\
			&+\sum_{i=1}^t\alpha_t^i \left\{{U}_{k_i,h+1}^\pi -U_{k,h+1}^\pi  \right\}(x_{k_i,h+1}) + \left(1+\frac{Z}{\eta}\right)H\tilde{b} \nonumber\\
			\geq &_{(d)} \alpha_t^0\left\{F_{(T-1)K^\alpha/\bc +1, h}-F_{k,h}^\pi\right\}(x,a)
			+\sum_{i=1}^t\alpha_t^i\left(\max_a {F}_{k_i,h+1} (x_{k_i,h+1},a) -F_{k_i,h+1}^\pi(x_{k_i,h+1},\pi(x_{k_i,h+1})) \right)\nonumber\\
			& -\sum_{i=1}^t\alpha_t^i \vert (1+\frac{Z}{\eta})H\tilde{b}\vert  + (1+\frac{Z}{\eta})H\tilde{b} \nonumber \\
			\geq &\alpha_t^0\left\{F_{(T-1)K^\alpha/\bc +1, h}-F_{k_i,h}^\pi\right\}(x,a)+ \sum_{i=1}^t\alpha_t^i\left\{{F}_{k_i,h+1} -F_{k_i,h+1}^\pi\right\}(x_{k_i,h+1},\pi(x_{k_i,h+1})), \label{eq:induction-F}
		\end{align}
		where inequality $(a)$ holds because that 
		\begin{align*}
			&\left\vert \sum_{i=1}^t\alpha_t^i\left\{({\mathbb{P}}_{k_i,h}-{\mathbb{P}}_{k,h}) V_{k,h+1}^{\pi}\right\}(x,a)\right\vert  
			=  \left\vert \sum_{i=1}^t\sum_{j=k_i}^{k-1} \alpha_t^i\left\{({\mathbb{P}}_{j,h}-{\mathbb{P}}_{j+1,h}) V_{k,h+1}^{\pi}\right\}(x,a)\right\vert  
			\leq \tilde{b}H,
		\end{align*}
		and the same analysis can be applied to $\left\vert \sum_{i=1}^t\alpha_t^i\left\{({\mathbb{P}}_{k_i,h}-\mathbb{P}_{k,h}) W_{k,h+1}^{\pi}\right\}(x,a)\right\vert.$ The inequality $(b)$ is true due to the concentration result in Lemma \ref{le:u-hoeffding} and $$\sum_{i=1}^t\alpha_t^i(1 +\frac{Z}{\eta})b_i  = \frac{1}{4}\sum_{i=1}^t\alpha_t^i(1 +\frac{Z}{\eta})\sqrt{\frac{H^2\iota(\chi+1)}{\chi + t}} \geq\frac{\eta +Z}{4\eta}\sqrt{\frac{H^2\iota(\chi+1)}{\chi + t}}.$$ Equality $(c)$ holds because our algorithm selects the action that maximizes ${F}_{k_i,h+1} (x_{k_i,h+1},a)$ so ${U}_{k_i,h+1}(x_{k_i,h+1})=\max_a {F}_{k_i,h+1} (x_{k_i,h+1},a),$ and inequality $(c)$ is obtained by using Lemma \ref{le:q-diff} and the property \ref{le:lr-d} of the learning rate.
		
		The inequality above suggests that we can prove $\{{F}_{k,h} - F_{k,h}^\pi\}(x,a)$ for any $(x,a)$ if (i) $$\left\{F_{(T-1)K^\alpha/\bc+1, h}-F_{k,h}^\pi\right\}(x,a)\geq 0,$$ i.e. the result holds at the beginning of the frame and (ii) $$\left\{{F}_{k',h+1} -F_{k',h+1}^\pi\right\}(x,a)\geq 0\quad\hbox{ for any }\quad k'\leq k$$ and $(x,a),$ i.e. the result holds for step $h+1$ in all the episodes in the {\em same} frame.  
		
		It is straightforward to see that (i) holds because all reward and cost Q-functions are set to $H$ at the beginning of each frame.
		
		We now prove condition (ii) using induction, and consider the first frame, i.e. $T=1$. The proof is identical for other frames.
		
		Consider $h=H$ i.e. the last step. In this case, inequality \eqref{eq:induction-F} becomes
		\begin{align}
			\{F_{k,H} - F_{k,H}^\pi\}(x,a) 
			\geq \alpha_t^0\left\{H+\frac{Z_1}{\eta}H-F_{k,H}^\pi\right\}(x,a)\geq 0,
		\end{align} i.e. condition (ii) holds for any $k$ in the first frame and $h=H.$ By applying induction on $h$, we conclude that \begin{align}
			\{F_{k,h} - F_{k,h}^\pi\}(x,a) \geq  0. 
		\end{align} holds for any  $k,$ $h,$ and $(x,a),$ which completes the proof of \eqref{eq:qk-qpi-relation-c}. 
		Since  Eq. \eqref{eq:qk-qpi-relation-c} can only be applied to a single policy, in order to have a bound on $ \sum_{k=1}^{K} \sum_a \left\{\left({F}^{\epsilon,*}_{k,1} -F_{k,1}\right) {q}^{\epsilon,*}_{k,1}\right\}(x_{k,1},a),$ we first need to substitute $F_{k,1}^\pi$ with $F_{k,1}^{\epsilon,*}$ in Eq. \eqref{eq:qk-qpi-relation-c}, and use a union bound over all the episodes, which means with probability at least $1-\frac{1}{K^2}$ that $F_{k,1} - F_{k,1}^{\epsilon,*}\geq 0.$ Let $\cal E$ denote such event that $F_{k,h} - F_{k,h}^{\epsilon,*} \geq 0$ holds for all $k,$ $h$ and $(x,a).$ Then we conclude that
		\begin{align}
			&\mathbb{E}\left[ \sum_{k=1}^{K} \sum_a \left\{\left({F}^{\epsilon,*}_{k,1} -F_{k,1}\right) {q}^{\epsilon,*}_{k,1}\right\}(x_{k,1},a) \right]\nonumber\\
			=&   \mathbb{E}\left[\left. \sum_{k=1}^{K} \sum_a \left\{\left({F}^{\epsilon,*}_{k,1} -F_{k,1}\right) {q}^{\epsilon,*}_{k,1}\right\}(x_{k,1},a) \right| {\cal E}\right]\Pr({\cal E})+ \mathbb{E}\left[\left. \sum_{k=1}^{K} \sum_a \left\{\left({F}^{\epsilon,*}_{k,1} -F_{k,1}\right) {q}^{\epsilon,*}_{k,1}\right\}(x_{k,1},a) \right| {\cal E}^c\right]\Pr({\cal E}^c)\nonumber\\
			\leq& K H\left(1+\frac{K^{1-\alpha}\bc H}{\eta}\right)\frac{1}{K^2}\leq 
			\frac{(\eta+K^{1-\alpha})H^2B^c }{\eta K}.\label{eq:F-bound}
		\end{align}
	\end{proof}

	\begin{lemma} \label{le:qk-qpi-bound}
		Under our algorithm, we have for any $T\in[K^{1-\alpha}\cdot {B}^c],$ 
		\begin{align*}
			&\mathbb{E}\left[ \sum_{k=(T-1)K^\alpha /{B}^c +1}^{TK^\alpha/ {B}^c }  \left\{ {Q}_{k,1} - Q_{k,1}^{\pi_k}\right\}(x_{k,1},a_{k,1}) \right] \\
			\leq & H^2SA  +\frac{2(H^3\sqrt{\iota}+2H^3\tilde{b})  K^\alpha}{ {B}^c\chi}  +  \sqrt{\frac{H^4SA\iota K^{\alpha}(\chi+1)}{{B}^c}} +  \frac{2K^\alpha H^2\tilde{b}}{{B}^c}\\
			&\mathbb{E}\left[  \sum_{k=(T-1)K^\alpha/{B}^c +1}^{TK^\alpha /{B}^c }  \left\{ {C}_{k,1} - C_{k,1}^{\pi_k}\right\}(x_{k,1},a_{k,1}) \right] \\
			\leq & H^2SA  +\frac{2(H^3\sqrt{\iota}+2H^3\tilde{b})  K^\alpha}{ {B}^c\chi} +   \sqrt{\frac{H^4SA\iota K^{\alpha}(\chi+1)}{{B}^c}} +  \frac{2K^\alpha H^2\tilde{b}}{{B}^c} .
		\end{align*}
	\end{lemma}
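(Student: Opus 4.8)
The plan is to telescope the one-step estimation error of the $Q$-iterates inside a single frame, exactly as in the standard optimistic $Q$-learning analysis of \cite{JinAllBub_18}, but carrying the extra non-stationarity bonus $\tilde b$ through the recursion. Fix a frame $T$ and work entirely with episodes $k$ in $\cN_T$; to lighten notation I would re-index so the frame starts at episode $1$, so there are $K^\alpha/B^c$ episodes in play. Writing $t=N_{k,h}(x_{k,h},a_{k,h})$ and $k_1,\dots,k_t$ for the previous visits to $(x_{k,h},a_{k,h})$ at step $h$ in this frame, the update rule gives, for the on-trajectory state-action pair,
\begin{align*}
\{Q_{k,h}-Q_{k,h}^{\pi_k}\}(x_{k,h},a_{k,h})
= \alpha_t^0\,\{Q_{1,h}-Q_{k,h}^{\pi_k}\}(x_{k,h},a_{k,h})
+\sum_{i=1}^t\alpha_t^i\Big(\{V_{k_i,h+1}-V_{k,h+1}^{\pi_k}\}(x_{k_i,h+1})
+\{(\hat{\mathbb P}_{k_i,h}-\mathbb P_{k_i,h})V_{k,h+1}^{\pi_k}\}(x_{k_i,h+1})
+\{(\mathbb P_{k_i,h}-\mathbb P_{k,h})V_{k,h+1}^{\pi_k}\}(x_{k,h},a_{k,h})
+b_i+2H\tilde b\Big).
\end{align*}
Here $\alpha_t^0$ kills the first term for $t\ge1$ and contributes at most $H$ when $t=0$ (a state-action pair visited for the first time in the frame), which accumulates to the $H^2SA$ term after summing over $h$ and over all $(x,a)$. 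The martingale difference term is controlled by Lemma \ref{le:u-hoeffding} (applied to $V$ rather than $U$, which is the $Z=0$ specialization), the transition-drift term $\{(\mathbb P_{k_i,h}-\mathbb P_{k,h})V^{\pi_k}_{k,h+1}\}$ is bounded in absolute value by $\tilde b H$ using the same within-frame telescoping argument as in inequality $(a)$ of Lemma \ref{le:qk-qpi-relation}, and the explicit bonus contributes $b_i+2H\tilde b$.

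Next I would bound $V_{k_i,h+1}-V_{k,h+1}^{\pi_k}$ at the top of the recursion by relating it back to a $Q$-difference at step $h+1$: since $V_{k_i,h+1}(x_{k_i,h+1})=\max_a Q_{k_i,h+1}(x_{k_i,h+1},a)\ge Q_{k_i,h+1}(x_{k_i,h+1},a_{k_i,h+1})$ and (in the other direction, for the lower-order slack) $V^{\pi_k}_{k,h+1}-V^{\pi_k}_{k_i,h+1}$ is bounded by $H\tilde b$ via Lemma \ref{le:q-diff}, the per-step error at level $h$ is dominated by the per-step error at level $h+1$ along the realized trajectory plus $O(H\tilde b)$ and the bonus/concentration terms. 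Summing this recursion over $h=1,\dots,H$ and over $k\in\cN_T$, the key accounting steps are: (i) the martingale/bonus terms of the form $\sum_{k}\sum_i\alpha_t^i b_i \le \sum_k b_{N_{k,h}}$ which, by property \ref{le:lr-d} of the learning rate ($\sum_{t\ge i}\alpha_t^i=1+1/\chi$) and Cauchy–Schwarz over the visit counts, yields the $\sqrt{H^4SA\iota K^\alpha(\chi+1)/B^c}$ term; (ii) the $1/\chi$ overcounting from property \ref{le:lr-d}, which multiplied by the maximal per-episode value $O(H^3\sqrt\iota+H^3\tilde b)$ gives the $\tfrac{2(H^3\sqrt\iota+2H^3\tilde b)K^\alpha}{B^c\chi}$ term; and (iii) the explicit $2H\tilde b$ bonus summed over $H$ steps and $K^\alpha/B^c$ episodes, giving $\tfrac{2K^\alpha H^2\tilde b}{B^c}$.

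The main obstacle I expect is bookkeeping rather than any new idea: one must be careful that the recursion unrolled over $h$ does not accumulate an extra factor of $H$ on the dominant $\sqrt{SA\iota\cdot(\text{episodes})}$ term, which in the stationary analysis of \cite{JinAllBub_18} is handled by the precise telescoping of $\sum_i\alpha_t^i$ weights across levels; the non-stationary version must verify that the extra $\tilde b$ terms only enter additively at each level and do not interact with this telescoping. The $C_{k,1}$ bound is proved by an identical argument with $g$ in place of $r$ and $W,C$ in place of $V,Q$, since the update rules and the bonus are symmetric. Finally, one takes expectations throughout, noting the concentration events of Lemma \ref{le:u-hoeffding} hold with probability $1-1/K^3$ so the failure contribution is lower order, to obtain the stated bounds.
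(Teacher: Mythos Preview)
Your proposal captures all the right pieces---the $H^2SA$ from unvisited pairs, the $1/\chi$ overcount via property~\ref{le:lr-d}, the bonus telescoping, and the $\tilde b$ drift---and lands on the stated bound. But your route differs from the paper's in one substantive way, and that difference creates a gap you would need to patch.

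The paper does \emph{not} decompose $Q_{k,h}-Q_{k,h}^{\pi_k}$ term by term. Instead it first upper-bounds $\sum_k Q_{k,h}(x_{k,h},a_{k,h})$ alone, using the update recursion and the $(1+1/\chi)$ reindexing of \eqref{eq:Vbound}, and only \emph{then} subtracts $\sum_k Q_{k,h}^{\pi_k}=\sum_k\bigl(r_{k,h}+\mathbb P_{k,h}V^{\pi_k}_{k,h+1}\bigr)$. After adding and subtracting $V^{\pi_k}_{k,h+1}(x_{k,h+1})$, the only stochastic residual is $\sum_k(\hat{\mathbb P}_{k,h}-\mathbb P_{k,h})V^{\pi_k}_{k,h+1}(x_{k,h},a_{k,h})$, in which the transition and the policy both belong to the \emph{same} episode $k$. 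Since $\pi_k$ is fixed at the start of episode $k$, this term has exactly zero expectation, and the paper never invokes any concentration lemma here.

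Your decomposition, by contrast, produces $\sum_i\alpha_t^i(\hat{\mathbb P}_{k_i,h}-\mathbb P_{k_i,h})V^{\pi_k}_{k,h+1}$, where the transition lives in episode $k_i<k$ but the value function carries the \emph{future} random policy $\pi_k$. This is not a martingale increment, and Lemma~\ref{le:u-hoeffding} as stated holds only for a fixed deterministic $\pi$; applying it to the data-dependent $\pi_k$ would require a union bound over exponentially many policies. You can repair this---replace $V^{\pi_k}_{k,h+1}$ by $V^{\pi_{k_i}}_{k_i,h+1}$ via Lemma~\ref{le:q-diff} at cost $O(H\tilde b)$, then prove a variant of the Azuma bound with $\cF_{k_i}$-adapted value functions---but that is extra work the paper's route sidesteps entirely.
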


	\begin{proof}
		We prove this lemma for the first frame such that $1\leq k \leq k^\alpha /{B}^c.$ By using the update rule recursively, we have
		\begin{align}
			{Q}_{k,h}(x, a)  \leq & \alpha_t^0H + \sum_{i=1}^t\alpha_t^i \left(r_{k_i,h}(x,a)+ {V}_{k_i,h+1}(x_{k_i,h+1})+ b_i + 2H\tilde{b}\right),\end{align} where 
		$\alpha_t^0=\prod_{j=1}^t(1-\alpha_j)$ and  $\alpha_t^i=\alpha_i\prod_{j=i+1}^t(1-\alpha_j).$ From the inequality above, we further obtain 	
		\begin{align}
			\sum_{k=1}^{K^\alpha/{B}^c } {Q}_{k,h} (x,a) \leq  \sum_{k=1}^{K^\alpha/{B}^c }  \alpha_t^0 H+  \sum_{k=1}^{K^\alpha/{B}^c }   \sum_{i=1}^{N_{k,h}(x,a)}\alpha_{N_{k,h}}^i \left(r_{k_i,h}(x,a)+{V}_{k_i,h+1}(x_{k_i,h+1})+ b_i  + 2H\tilde{b} \right).\label{eq:Q-V}
		\end{align} 	
		We simplify our notation in this proof and use the following notations: 
		\begin{align*}
			N_{k,h}=N_{k,h}(x_{k,h},a_{k,h}),\quad k^{(k,h)}_i=k_i(x_{k,h},a_{k,h}),
		\end{align*} where $k^{(k,h)}_i$ is the index of the episode in which the agent visits state-action pair $(x_{k,h}, a_{k,h})$ for the $i$th time. Since in a given sample path, $(k,h)$ can uniquely determine  $(x_{k,h},a_{k,h}),$  this notation introduces no ambiguity. We note that
		\begin{equation}
			\sum_{k=1}^{K^\alpha / {B}^c} \sum_{i=1}^{N_{k,h}} \alpha_{N_{k,h}}^i  {V}_{k_i^{(k,h)}, h+1}\left(x_{k_i^{(k,h)},h+1}\right)\leq \sum_{k=1}^{K^\alpha /{B}^c } {V}_{k, h+1} (x_{k,h+1})\sum_{t=N_{k,h}}^\infty \alpha_t^{N_{k,h}} \leq \left(1+\frac{1}{\chi}\right)\sum_k {V}_{k, h+1} (x_{k,h+1}),\label{eq:Vbound}
		\end{equation}
		Then we obtain
		\begin{align*}
			&\sum_{k=1}^{K^\alpha/{B}^c} Q_{k,h} (x_{k,h},a_{k,h}) \\
			\leq &\sum_{k=1}^{K^\alpha/{B}^c} \alpha_t^0H + (1+ \frac{1}{\chi})	\sum_{k=1}^{K^\alpha/{B}^c} \left(r_{k,h}(x_{k,h},a_{k,h}) + {V}_{k,h+1}(x_{k,h+1})\right) +\sum_{k=1}^{K^\alpha/{B}^c} \sum_{i=1}^{N_{k,h}}\alpha_{N_{k,h}}^i  b_i + K^\alpha\tilde{b}/{B}^c\\
			\leq & \sum_{k=1}^{K^\alpha/{B}^c}\left(r_{k,h}(x_{k,h},a_{k,h})+ {V}_{k,h+1}(x_{k,h+1})\right) +HSA
			+\frac{2(H^2\sqrt{\iota}+2H^2\tilde{b})K^\alpha }{\bc \chi}  \\
			&+  \frac{1}{2}\sqrt{H^2SA\iota K^{\alpha}(\chi+1)/ {B}^c} + 2K^\alpha H\tilde{b}/{B}^c,
		\end{align*}
		where the last inequality holds because (i) we have
		\begin{align*}
			\sum_{k=1}^{K^\alpha/{B}^c}  \alpha_{N_{k,h}}^0 H =\sum_k H\mathbb{I}_{\{N_{k,h}=0\}}\leq HSA,
		\end{align*} (ii) $ {V}_{k,h+1}(x_{k,h+1}) \leq (H^2\sqrt{\iota}+\tilde{b}), r_{k,h}(x_{k,h},a_{k,h})\leq 1,$ and (iii) we know that
		\begin{align*}
			& \sum_{k=1}^{K^\alpha/{B}^c} \sum_{i=1}^{N_{k,h}} \alpha_{N_{k,h}}^i  b_i = \frac{1}{4} \sum_{k=1}^{K^\alpha/{B}^c} \sum_{i=1}^{N_{k,h}} \alpha_{N_{k,h}}^i  \sqrt{\frac{H^2\iota(\chi+1)}{\chi+i}} \leq \frac{1}{2} \sum_{k=1}^{K^\alpha/{B}^c} \sqrt{\frac{H^2\iota(\chi+1)}{\chi+N_{k,h}}}\\
			= &\frac{1}{2}\sum_{x,a}\sum_{n=1}^{N_{K^\alpha/\bc,h}(x,a)}\sqrt{\frac{H^2\iota(\chi+1)}{\chi+n}} \leq \frac{1}{2}\sum_{x,a}\sum_{n=1}^{N_{K^\alpha/\bc,h}(x,a)}\sqrt{\frac{H^2\iota(\chi+1)}{n}} \overset{(1)}{\leq} \sqrt{H^2SA\iota K^{\alpha}(\chi+1)/{B}^c }, 
		\end{align*}
		where the last inequality  above holds because the left hand side of $(1)$ is the summation of $K^\alpha/\bc$ terms and $\sqrt{\frac{H^2\iota(\chi+1)}{\chi+n}}$ is a decreasing function of $n.$ 
		
		Therefore, it is maximized when $N_{K^\alpha/\bc,h} = K^\alpha / {B}^cSA $ for all $x,a.$ Thus we can obtain
		\begin{align*}
			&\sum_{k=1}^{K^\alpha/{B}^c}  Q_{k,h} (x_{k,h},a_{k,h}) -\sum_k Q_{k,h}^{\pi_k}(x_{k,h},a_{k,h})\\
			\leq &\sum_{k=1}^{K^\alpha/{B}^c} \left({V}_{k,h+1}(x_{k,h+1})-\mathbb P_{k,h} V_{k,h+1}^{\pi_k}(x_{k,h},a_{k,h})\right) +HSA  +\frac{2(H^2\sqrt{\iota}+2H^2\tilde{b})  K^\alpha}{ {B}^c\chi } \\
			&+  \sqrt{H^2SA\iota K^{\alpha}(\chi+1)/ {B}^c} +\frac{ 2K^\alpha H\tilde{b}}{{B}^c}\\
			\leq &\sum_{k=1}^{K^\alpha/{B}^c} \left({V}_{k,h+1}(x_{k,h+1})-\mathbb P_{k,h} V_{h+1}^{\pi_k}(x_{k,h},a_{k,h})+V_{k,h+1}^{\pi_k}(x_{k,h+1})-V_{k,h+1}^{\pi_k}(x_{k,h+1})\right)\\
			&+HSA  +\frac{2(H^2\sqrt{\iota}+2H^2\tilde{b})  K^\alpha}{{B}^c \chi} +  \sqrt{H^2SA\iota K^{\alpha}(\chi+1) / {B}^c} + 2K^\alpha H\tilde{b}/{B}^c\\
			=&\sum_{k=1}^{K^\alpha/{B}^c}  \left({V}_{k,h+1}(x_{k,h+1}))-V_{k,h+1}^{\pi_k}(x_{k,h+1})-\mathbb P_{k,h} V_{k,h+1}^{\pi_k}(x_{k,h},a_{k,h})+\hat{\mathbb P}_{k,h} V^{\pi_k}_{k,h+1}(x_{k,h}, a_{k,h})\right)\\
			&+HSA  +\frac{2(H^2\sqrt{\iota}+2H^2\tilde{b}) K^\alpha }{ {B}^c \chi} +  \sqrt{H^2SA\iota K^{\alpha}(\chi+1)  / {B}^c} + 2K^\alpha H\tilde{b}/{B}^c\\
			= & \sum_{k=1}^{K^\alpha/{B}^c} \left({Q}_{k,h+1}(x_{k,h+1},a_{k,h+1})-Q_{k,h+1}^{\pi_k}(x_{k,h+1},a_{k,h+1})-\mathbb P_h V_{k,h+1}^{\pi_k}(x_{k,h},a_{k,h})+\hat{\mathbb P}_{k,h} V^{\pi_k}_{k,h+1}(x_{k,h}, a_{k,h}\right)\\
			&+HSA  +\frac{2(H^2\sqrt{\iota}+2H^2\tilde{b}) K^\alpha }{{B}^c \chi}  \\
			&+  \sqrt{H^2SA\iota K^{\alpha}(\chi+1) / {B}^c} + 2K^\alpha H\tilde{b}/{B}^c.
		\end{align*} Taking the expectation on both sides yields
		\begin{align*}
			&\mathbb E\left[\sum_{k=1}^{K^\alpha/{B}^c}  Q_{k,h} (x_{k,h},a_{k,h}) -\sum_k Q_{k,h}^{\pi_k}(x_{k,h},a_{k,h})\right]\\
			\leq & \mathbb E\left[\sum_{k=1}^{K^\alpha/{B}^c}\left(Q_{k,h+1}(x_{k,h+1}, a_{k,h+1})-Q_{k,h+1}^{\pi_k}(x_{k,h+1}, a_{k,h+1})\right)\right] \\
			&+HSA  +\frac{2(H^2\sqrt{\iota}+2H^2\tilde{b}) K^\alpha}{ {B}^c\chi} + \sqrt{H^2SA\iota K^{\alpha}(\chi+1)/{B}^c} + 2K^\alpha H\tilde{b}/{B}^c .
		\end{align*} 
		Then by using the inequality repeatably, we obtain for any $h\in[H],$
		\begin{align*}
			&	\mathbb E\left[\sum_{k=1}^{K^\alpha/{B}^c} \left( Q_{k,h} (x_{k,h},a_{k,h}) - Q_{k,h}^{\pi_k}(x_{k,h},a_{k,h})\right)\right]	\\
			\leq & H^2SA  +\frac{2(H^3\sqrt{\iota}+2H^3\tilde{b})  K^\alpha}{ {B}^c\chi} +  \sqrt{H^4SA\iota K^{\alpha}(\chi+1)/{B}^c} +  2K^\alpha H^2\tilde{b}/ {B}^c.
		\end{align*} We finish the proof.

	\end{proof}

	\begin{lemma}\label{le:epsilon-dif}
		Given $\epsilon\leq \delta$,  we have
		$$\mathbb{E}\left[ \sum_a \left\{Q_{k,1}^{*}q_{k,1}^*-  {Q}_{k,1}^{\epsilon,*}{q}_{k,1}^{\epsilon,*}\right\}(x_{k,1},a)\right] \leq \frac{H \epsilon }{\delta}.$$ \hfill{$\square$}
	\end{lemma}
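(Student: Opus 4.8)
The plan is to argue purely at the level of the two linear programs \eqref{eq:lp_app} and \eqref{eq:lp-epsilon_app}, using that they share the same convex ``flow polytope'' \eqref{lp:gb_app}--\eqref{lp:p_app} and differ only in the right-hand side of the utility constraint. First I would invoke the standard occupancy-measure correspondence (together with the tower property over the trajectory) to identify $\mathbb{E}\big[\sum_a\{Q_{k,1}^{*}q_{k,1}^{*}\}(x_{k,1},a)\big]$ with the optimal value $\mathrm{OPT}_k:=\sum_{h,x,a}q_{k,h}^{*}(x,a)r_{k,h}(x,a)$ of \eqref{eq:lp_app}, and $\mathbb{E}\big[\sum_a\{Q_{k,1}^{\epsilon,*}q_{k,1}^{\epsilon,*}\}(x_{k,1},a)\big]$ with the optimal value $\mathrm{OPT}_k^{\epsilon}$ of the $\epsilon$-tightened LP \eqref{eq:lp-epsilon_app}. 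Thus the left-hand side of the lemma is the deterministic number $\mathrm{OPT}_k-\mathrm{OPT}_k^{\epsilon}$, and it suffices to bound this by $H\epsilon/\delta$.

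Next I would produce an explicit feasible point of the tightened LP by interpolating the optimal occupancy $q^{*}$ with a Slater one. Let $q^{\mathrm{sl}}$ be the occupancy measure of a (Markov) policy $\pi$ guaranteed by Assumption~\ref{as:1}, so that $\sum_{h,x,a}q^{\mathrm{sl}}_{k,h}(x,a)g_{k,h}(x,a)=\mathbb{E}[W_{k,1}^{\pi}(x_{k,1})]\ge\rho+\delta$, and set $\tilde q:=(1-\tfrac{\epsilon}{\delta})q^{*}+\tfrac{\epsilon}{\delta}q^{\mathrm{sl}}$, a genuine convex combination because $\epsilon\le\delta$. Since constraints \eqref{lp:gb_app}--\eqref{lp:p_app} are linear (flow conservation, per-step normalization, initial condition, nonnegativity), they define a convex set, so $\tilde q$ inherits all of them; for the utility constraint,
\begin{align*}
\sum_{h,x,a}\tilde q_{k,h}(x,a)g_{k,h}(x,a)
&=\Big(1-\tfrac{\epsilon}{\delta}\Big)\sum_{h,x,a}q^{*}_{k,h}g_{k,h}+\tfrac{\epsilon}{\delta}\sum_{h,x,a}q^{\mathrm{sl}}_{k,h}g_{k,h}\\
&\ge\Big(1-\tfrac{\epsilon}{\delta}\Big)\rho+\tfrac{\epsilon}{\delta}(\rho+\delta)=\rho+\epsilon,
\end{align*}
so $\tilde q$ is feasible for \eqref{eq:lp-epsilon_app}.

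Then I would chain optimality: since $\tilde q$ is feasible for the tightened LP and $r_{k,h}\ge0$,
\begin{align*}
\mathrm{OPT}_k^{\epsilon}\ \ge\ \sum_{h,x,a}\tilde q_{k,h}r_{k,h}
\ =\ \Big(1-\tfrac{\epsilon}{\delta}\Big)\mathrm{OPT}_k+\tfrac{\epsilon}{\delta}\sum_{h,x,a}q^{\mathrm{sl}}_{k,h}r_{k,h}
\ \ge\ \Big(1-\tfrac{\epsilon}{\delta}\Big)\mathrm{OPT}_k,
\end{align*}
hence $\mathrm{OPT}_k-\mathrm{OPT}_k^{\epsilon}\le\tfrac{\epsilon}{\delta}\mathrm{OPT}_k\le\tfrac{\epsilon}{\delta}H$, the last step using $r_{k,h}\in[0,1]$ and $\sum_{x,a}q_{k,h}(x,a)=1$ for each $h$ (constraint \eqref{lp:normalization_app}), which force $\mathrm{OPT}_k\le H$. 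Taking expectations (the estimate being deterministic) gives the claim.

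I expect the only subtle point to be the bookkeeping in the first step: writing $\mathbb{E}\big[\sum_a\{Q_{k,1}^{*}q_{k,1}^{*}\}(x_{k,1},a)\big]$ as the LP value $\mathrm{OPT}_k$ (and similarly for the tightened version), and checking that a Slater policy really yields a point of the occupancy polytope whose utility objective is $\ge\rho+\delta$; everything after that is a one-line convex-combination estimate. A minor technical care is to take the Slater policy stationary Markov so that $q^{\mathrm{sl}}$ is a legitimate occupancy measure, which is standard for finite-horizon CMDPs.
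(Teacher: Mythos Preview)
Your proposal is correct and follows essentially the same argument as the paper: both construct the convex combination $(1-\epsilon/\delta)q^{*}+(\epsilon/\delta)q^{\mathrm{sl}}$ of the optimal occupancy with a Slater occupancy, verify it is feasible for the $\epsilon$-tightened LP, and then use optimality of $q^{\epsilon,*}$ together with $r_{k,h}\in[0,1]$ and the per-step normalization to obtain the $H\epsilon/\delta$ bound. The paper's proof also makes explicit the identification $\sum_a Q_{k,1}^{*}(x_{k,1},a)q_{k,1}^{*}(x_{k,1},a)=\sum_{h,x,a}q_{k,h}^{*}(x,a)r_{k,h}(x,a)$ (and similarly for the tightened version), which is exactly the bookkeeping step you flagged.
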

	\begin{proof}
		Given ${q}_{k,h}^*(x,a)$ is the optimal solution for episode $k$, we have $$\sum_{h,x,a} {q}_{k,h}^*(x,a)g_{k,h}(x,a) \geq \rho.$$ Under Assumption \ref{as:1}, we know that there exists a feasible solution $\{q^{\xi_1}_{k,h}(x,a)\}_{h=1}^H$ such that
		$$\sum_{h,x,a} q_{k,h}^{\xi_1}(x,a)g_{k,h}(x,a) \geq \rho+\delta.$$ We construct $q_{k,h}^{\xi_2}(x,a) = (1-\frac{\epsilon }{\delta})q^*_{k,h}(x,a) + \frac{\epsilon}{\delta} q^{\xi_1}_{k,h}(x,a),$ which satisfies that 
		\begin{align*}
			\sum_{h,x,a} q_{k,h}^{\xi_2}(x,a)g_{k,h}(x,a) &  = \sum_{h,x,a} \left( (1-\frac{\epsilon }{\delta}) q^*_{k,h}(x,a) +\frac{\epsilon }{\delta} q^{\xi_1}_{k,h}(x,a)  \right)g_{k,h}(x,a)\geq \rho + \epsilon ,\\
			\sum_{h,x,a} q_{k,h}^{\xi_2}(x,a)  &= \sum_{x^\prime,a^\prime} \mathbb{P}_{k,h-1} (x\vert x^\prime,a^\prime) q_{k,h-1}^{\xi_2}(x^\prime,a^\prime),\\
			\sum_{h,x,a} q_{k,h}^{\xi_2}(x,a)  &= 1.
		\end{align*}
		Also we have $q_{k,h}^{\xi_2}(x,a)\geq 0 $ for all $(h,x,a).$ Thus $\{q^{\xi_2}_{k,h}(x,a)\}_{h=1}^H$ is a feasible solution to the $\epsilon$-tightened optimization problem \eqref{eq:lp-epsilon_app}. Then given $\{{q}^{\epsilon,*}_{k,h}(x,a)\}_{h=1}^H$ is the optimal solution to the $\epsilon$-tightened optimization problem, we have
		\begin{align*}
			& \sum_{h,x,a} \left( {q}_{k,h}^{*}(x,a) -  {q}_{k,h}^{\epsilon,*}(x,a)\right)r_{k,h}(x,a)  \\
			\leq & \sum_{h,x,a} \left( q_{k,h}^{*}(x,a) -  q_{k,h}^{\xi_2}(x,a)\right)r_{k,h}(x,a) \\
			\leq &\sum_{h,x,a} \left( q_{k,h}^{*}(x,a) -  \left(1-\frac{\epsilon }{\delta} \right)q_{k,h}^*(x,a) - \frac{\epsilon }{\delta}  q_{k,h}^{\xi_1}(x,a)\right)r_{k,h}(x,a) \\
			\leq & \sum_{h,x,a} \left( q_{k,h}^{*}(x,a) -  \left(1-\frac{\epsilon }{\delta} \right)q_{k,h}^*(x,a) \right)r_{k,h}(x,a) \\
			\leq &\frac{\epsilon }{\delta}  \sum_{h,x,a}  q_{k,h}^{*}(x,a) r_{k,h}(x,a) \leq \frac{H\epsilon }{\delta},
		\end{align*}
		where the last inequality holds because $0\leq r_{k,h}(x,a)\leq 1$ under our assumption. Therefore the result follows because 
		\begin{align*}
			\sum_a Q_{k,1}^{*}(x_{k,1},a)q_{k,1}^*(x_{k,1},a)  =&\sum_{h,x,a} {q}_{k,h}^{*}(x,a)r_{k,h}(x,a) \\
			\sum_a {Q}_{k,1}^{\epsilon,*}(x_{k,1},a) {q}_{k,1}^{\epsilon,*}(x_{k,1},a)  =&\sum_{h,x,a} {q}_{k,h}^{\epsilon,*}(x,a)r_{k,h}(x,a).
		\end{align*}
	\end{proof}
	
	
	\begin{lemma}\label{le:drift}
		Assume $\epsilon \leq \delta.$ The expected Lyapunov drift satisfies
		\begin{align}
			& \mathbb{E}\left[L_{T+1}-L_T\vert Z_T=z \right] \nonumber\\
			\leq & \frac{{B}^c}{K^\alpha}\sum_{k=(T-1)K^\alpha+1}^{TK^\alpha}\left(-\eta \mathbb{E} \left[ \left.\sum_a  \left\{\hat{Q}_{k,1}{q}^{\epsilon,*}_1\right\}(x_{k,1},a)-\hat{Q}_{k,1} (x_{k,1},a_{k,1}) \right\vert Z_T= z \right]\right.\nonumber\\
			&\left.+ z \mathbb{E}\left[\left. \sum_a  \left\{\left({C}^{\epsilon,*}_{k,1}-C_{k,1}\right){q}^{\epsilon,*}_{k,1}\right\}(x_{k,1},a)\right\vert Z_T=z  \right]\right) + 2H^4\iota + 4H^4\tilde{b} +\epsilon^2. 
		\end{align}
	\end{lemma}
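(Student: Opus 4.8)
The plan is a standard Lyapunov-drift computation for the virtual queue $Z_T$ with $L_T=\frac12 Z_T^2$; the only non-mechanical ingredient is rewriting the per-episode drift in terms of the optimality gap of the estimated reward $Q$-function and the slackness of the utility constraint, using the greedy action rule together with the $\epsilon$-tightened LP. First I would unroll the drift: from the queue update $Z_{T+1}=\big(Z_T+\rho+\epsilon-\bar C_T\bc/K^\alpha\big)^+$ and $\big((a)^+\big)^2\le a^2$, with $z=Z_T$,
\begin{align*}
L_{T+1}-L_T\;\le\; z\Big(\rho+\epsilon-\tfrac{\bar C_T\bc}{K^\alpha}\Big)+\tfrac12\Big(\rho+\epsilon-\tfrac{\bar C_T\bc}{K^\alpha}\Big)^2 .
\end{align*}
The quadratic term needs no concentration: $\rho+\epsilon\le H+\epsilon$, and since $\bar C_T=\sum_{k\in\mathcal N_T}C_{k,1}(x_{k,1},a_{k,1})$ with $|\mathcal N_T|=K^\alpha/\bc$, Lemma~\ref{le:q1-bound} gives $\bar C_T\bc/K^\alpha\le\max_k C_{k,1}(x_{k,1},a_{k,1})\le H^2(\sqrt\iota+2\tilde b)$, so after an elementary expansion (using $\tilde b\le 1$, valid for $K$ large) it is at most $2H^4\iota+4H^4\tilde b+\epsilon^2$ --- the last three additive terms of the claim.

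Next I would handle the linear term, rewriting it as $\frac{\bc}{K^\alpha}\sum_{k\in\mathcal N_T} z\big(\rho+\epsilon-C_{k,1}(x_{k,1},a_{k,1})\big)$ and bounding each summand. Within frame $T$ the virtual queue is frozen at $z$, so the algorithm plays $a_{k,1}=\arg\max_a F_{k,1}(x_{k,1},a)$ with $F_{k,1}=Q_{k,1}+\tfrac z\eta C_{k,1}\ge 0$; taking $q^{\epsilon,*}_{k,1}(x_{k,1},\cdot)$ to be the step-$1$ conditional of the optimal occupancy of the $\epsilon$-tightened LP~\eqref{eq:lp-epsilon_app} (a probability vector over actions), optimality of the greedy choice gives $\sum_a\{F_{k,1}q^{\epsilon,*}_{k,1}\}(x_{k,1},a)\le F_{k,1}(x_{k,1},a_{k,1})$. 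Expanding $F_{k,1}$, isolating $z\,C_{k,1}(x_{k,1},a_{k,1})$, multiplying through by $\eta>0$ and rearranging yields, for each $k\in\mathcal N_T$,
\begin{align*}
&z\big(\rho+\epsilon-C_{k,1}(x_{k,1},a_{k,1})\big)\\
&\quad\le z(\rho+\epsilon)-\eta\Big(\textstyle\sum_a\{Q_{k,1}q^{\epsilon,*}_{k,1}\}(x_{k,1},a)-Q_{k,1}(x_{k,1},a_{k,1})\Big)-z\,\textstyle\sum_a\{C_{k,1}q^{\epsilon,*}_{k,1}\}(x_{k,1},a).
\end{align*}

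Then I would take $\mathbb E[\,\cdot\mid Z_T=z\,]$. Since $z\ge 0$ and $q^{\epsilon,*}_{k,1},C^{\epsilon,*}_{k,1}$ are deterministic CMDP quantities, the utility constraint of the $\epsilon$-tightened LP --- via the occupancy-measure identity $\mathbb E\big[\sum_a\{C^{\epsilon,*}_{k,1}q^{\epsilon,*}_{k,1}\}(x_{k,1},a)\big]=\sum_{h,x,a}q^{\epsilon,*}_{k,h}(x,a)g_{k,h}(x,a)\ge\rho+\epsilon$ --- allows me to replace $z(\rho+\epsilon)$ by the larger quantity $z\,\mathbb E\big[\sum_a\{C^{\epsilon,*}_{k,1}q^{\epsilon,*}_{k,1}\}(x_{k,1},a)\mid Z_T=z\big]$; recombining the $C_{k,1}$ terms gives
\begin{align*}
&\mathbb E\big[z(\rho+\epsilon-C_{k,1}(x_{k,1},a_{k,1}))\mid Z_T=z\big]\\
&\quad\le-\eta\,\mathbb E\Big[\textstyle\sum_a\{Q_{k,1}q^{\epsilon,*}_{k,1}\}(x_{k,1},a)-Q_{k,1}(x_{k,1},a_{k,1})\,\Big|\,Z_T=z\Big]+z\,\mathbb E\Big[\textstyle\sum_a\{(C^{\epsilon,*}_{k,1}-C_{k,1})q^{\epsilon,*}_{k,1}\}(x_{k,1},a)\,\Big|\,Z_T=z\Big].
\end{align*}
Summing over $k\in\mathcal N_T$, scaling by $\bc/K^\alpha$, and adding the quadratic bound from the first step then reproduces the stated inequality (with $\hat Q_{k,1}$ denoting the estimated reward $Q$-function $Q_{k,1}$).

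I expect the main obstacle to be bookkeeping rather than a new idea: one must fix the normalization so that $q^{\epsilon,*}_{k,1}(x_{k,1},\cdot)$ is a genuine (sub-)probability over actions --- so the greedy inequality applies --- while simultaneously $\mathbb E\big[\sum_a\{C^{\epsilon,*}_{k,1}q^{\epsilon,*}_{k,1}\}(x_{k,1},a)\big]\ge\rho+\epsilon$ follows from the LP flow constraints, and one must verify that $Z_T$ is measurable with respect to the history at the start of frame $T$ so that conditioning on $Z_T=z$ (and freezing the queue at $z$ throughout the frame) is legitimate. Note that the Hoeffding bonus $b_i$, the extra non-stationarity bonus $2H\tilde b$, and the variation of the CMDP play no role in this lemma, since they enter $C_{k,1}$ identically on both sides of the optimality comparison.
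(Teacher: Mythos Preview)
Your proposal is correct and follows essentially the same route as the paper: unroll the drift via $((a)^+)^2\le a^2$, bound the quadratic term using the deterministic bound $C_{k,1}\le H^2(\sqrt{\iota}+2\tilde b)$ from Lemma~\ref{le:q1-bound}, then use the greedy rule $a_{k,1}=\arg\max_a F_{k,1}(x_{k,1},a)$ together with feasibility of $q^{\epsilon,*}$ in the $\epsilon$-tightened LP to absorb the $z(\rho+\epsilon)$ term. The only cosmetic difference is that the paper adds and subtracts $\eta\hat Q_{k,1}(x_{k,1},a_{k,1})$ before invoking the greedy inequality, whereas you apply the greedy inequality to $F_{k,1}$ first and then rearrange --- these are equivalent algebraic orderings, and the bookkeeping concern you flag about the normalization of $q^{\epsilon,*}_{k,1}(x_{k,1},\cdot)$ is indeed the only subtlety (present in the paper's argument as well).
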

	\begin{proof}
		Assume $\epsilon \leq \delta.$ The expected Lyapunov drift satisfies
		\begin{align}
			& \mathbb{E}\left[L_{T+1}-L_T\vert Z_T=z \right] \nonumber\\
			\leq & \frac{{B}^c}{K^\alpha}\sum_{k=(T-1)K^\alpha+1}^{TK^\alpha}\left(-\eta \mathbb{E} \left[ \left.\sum_a  \left\{\hat{Q}_{k,1}{q}^{\epsilon,*}_1\right\}(x_{k,1},a)-\hat{Q}_{k,1} (x_{k,1},a_{k,1}) \right\vert Z_T= z \right]\right.\nonumber\\
			&\left.+ z \mathbb{E}\left[\left. \sum_a  \left\{\left({C}^{\epsilon,*}_{k,1}-C_{k,1}\right){q}^{\epsilon,*}_{k,1}\right\}(x_{k,1},a)\right\vert Z_T=z  \right]\right) + 2H^4\iota + 4H^4\tilde{b} +\epsilon^2. \label{eq:drift-inq}
		\end{align}
		Based on the definition of $L_T=\frac{1}{2}Z_T^2,$ the Lyapunov drift is
		\begin{align*}
			L_{T+1}-L_T   \leq & Z_T\left(\rho+ \epsilon  - \frac{\bar{C}_T\bc}{K^\alpha} \right) + \frac{\left( \frac{\bar{C}_T\bc }{K^\alpha} +\epsilon  -\rho\right)^2}{2} \\
			\leq & Z_T\left(\rho +\epsilon  -\frac{\bar{C}_T\bc }{K^\alpha} \right)+ 2H^4\iota + 4H^4\tilde{b} +\epsilon^2\\
			\leq & \frac{Z_T\bc }{K^\alpha}\sum_{k=TK^\alpha/\bc+1}^{(T+1)K^\alpha/\bc} \left(\rho +\epsilon  - \hat{C}_{k,1}(x_{k,1},a_{k,1}) \right)+2H^4\iota + 4H^4\tilde{b}+\epsilon^2
		\end{align*}
		where the first inequality is because the upper bound on $\vert \hat{C}_{k,1}  (x_{k,1},a_{k,1})\vert$ is $H^2(\sqrt{\iota}+2\tilde{b})$ from Lemma \ref{le:q1-bound}. Let $\{q^{\epsilon}_{k,h}\}_{h=1}^H$ be a feasible solution to the tightened LP \eqref{eq:lp-epsilon_app} at episode $k.$  Then the expected Lyapunov drift conditioned on $Z_T=z$ is
		\begin{align}
			& \mathbb{E}\left[L_{T+1}-L_T\vert Z_T=z \right] \nonumber\\
			\leq &  \frac{\bc}{K^\alpha}\sum_{k=(T-1)K^\alpha +1}^{TK^\alpha/\bc}  \left(\mathbb{E}\left[\left.z\left( \rho+ \epsilon  - \hat{C}_{k,1} (x_{k,1},a_{k,1})  \right) -\eta\hat{Q}_{k,1}(x_{k,1},a_{k,1}) \right\vert Z_T=z \right] + \eta\mathbb{E}\left[\left. \hat{Q}_{k,1} (x_{k,1},a_{k,1}) \right\vert Z_T=z  \right]\right)\nonumber\\
			&+ 2H^4\iota + 4H^4\tilde{b} +\epsilon^2. \label{eq:drift-1}
		\end{align}
		Now we focus on the term inside the summation and obtain that 
		\begin{align*}
			&\left(\mathbb{E}\left[\left.z\left( \rho+ \epsilon  - \hat{C}_{k,1} (x_{k,1},a_{k,1})  \right) -\eta \hat{Q}_{k,1}(x_{k,1},a_{k,1}) \right\vert Z_T=z \right] + \eta\mathbb{E}\left[\left. \hat{Q}_{k,1} (x_{k,1},a_{k,1}) \right\vert Z_T=z  \right]\right) \\
			\leq  &_{(a)} z(\rho+\epsilon)- \mathbb{E} \left[\left.  \eta\left(\sum_a\left\{\frac{z}{\eta} \hat{C}_{k,1}q^{\epsilon}_{k,1} + \hat{Q}_{k,1}q^{\epsilon}_{k,1}\right\}(x_{k,1},a)\right) \right\vert  Z_T=z  \right] + \eta\mathbb{E}\left[\left. \hat{Q}_{k,1} (x_{k,1},a_{k,1}) \right\vert Z_T=z  \right]\nonumber\\
			= &  \mathbb{E} \left[ \left. z\left(  \rho  +\epsilon  -\sum_a\hat{C}_{k,1} (x_{k,1},a)q^{\epsilon}_{k,1}(x_{k,1},a) \right) \right\vert Z_T=z \right]\\
			&-\mathbb{E}\left[\left. \eta \sum_a  \hat{Q}_{k,1} (x_{k,1},a)q^{\epsilon}_{k,1}(x_{k,1},a)	- \eta \hat{Q}_{k,1} (x_{k,1},a_{k,1})\right\vert Z_T=z\right] \nonumber\\
			=& \mathbb{E} \left[ z\left(\left.\rho +\epsilon  - \sum_a {C}^{\epsilon}_{k,1} (x_{k,1},a)q^{\epsilon}_{k,1}(x_{k,1},a) \right) \right\vert Z_T=z \right]\\ &-\mathbb{E}\left[\left. \eta \sum_a  \hat{Q}_{k,1} (x_{k,1},a)q^{\epsilon}_{k,1}(x_{k,1},a)- \eta \hat{Q}_{k,1} (x_{k,1},a_{k,1})\right\vert Z_T=z\right] + \mathbb{E} \left[\left. z \sum_a \left\{(C^{\epsilon}_{k,1}-\hat{C}_{k,1})q^{\epsilon}_{k,1}\right\}(x_{k,1},a)   \right\vert Z_T=z  \right]  \nonumber\\
			\leq & -\eta \mathbb{E} \left[\left. \sum_a  \hat{Q}_{k,1} (x_{k,1},a)q^{\epsilon}_{k,1}(x_{k,1},a)	- \hat{Q}_{k,1} (x_{k,1},a_{k,1}) \right\vert  Z_T=z \right]  + \mathbb{E}\left[\left. z \sum_a \left\{ (C^\epsilon_{k,1}-\hat{C}_{k,1})q^{\epsilon}_{k,1}\right\}(x_{k,1},a)\right\vert  Z_T=z   \right],
		\end{align*}
		where inequality $(a)$ holds because $a_{k,h}$ is chosen to maximize $ \hat{Q}_{k,h} (x_{k,h},a) + \frac{Z_T}{\eta} \hat{C}_{k,h} (x_{k,h},a).$  and the last equality holds due to that $\{q^{\epsilon}_{k,h}(x,a)\}_{h=1}^H$ is a feasible solution to the optimization problem \eqref{eq:lp-epsilon_app}, so 
		\begin{align*}
			\left(\rho +\epsilon  -\sum_a {C}_{k,1}^{\epsilon}(x_{k,1},a)q^{\epsilon}_{k,1}(x_{k,1},a) \right)&=\left(\rho +\epsilon  -\sum_{h,x,a}g_{k,h}(x,a){q}^{\epsilon}_{k,h}(x,a)  \right)\leq 0.
		\end{align*}
		Therefore, we can conclude the lemma by substituting ${q}^{\epsilon}_{k,h}(x,a)$ with the optimal solution ${q}^{\epsilon,*}_{k,h}(x,a)$.
	\end{proof}
	\begin{lemma}\label{le:zk-bound}
		Assuming $\epsilon \leq \frac{\delta}{2},$
		we have for any $1\leq T\leq K^{1-\alpha}\cdot\bc$
		\begin{align}
			\mathbb{E}[ Z_{T}]  \leq &\frac{100(H^4\iota+\tilde{b}^2H^2)}{\delta}\log \left( \frac{16(H^2\sqrt{\iota}+\tilde{b}H^2)}{\delta} \right) +\frac{4H^2B^c}{K\delta} + \frac{4H^2B^c}{\eta\delta K^\alpha}+ \frac{4\eta(\sqrt{H^2\iota}+2H^2\tilde{b}) }{\delta}. \label{eq:zk-bound}
		\end{align} 
	\end{lemma}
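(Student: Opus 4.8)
The plan is to control $\mathbb{E}[Z_T]$ through a Lyapunov drift analysis of the moment generating function $\mathbb{E}[e^{rZ_T}]$ for a suitably small $r>0$, in the spirit of Neely-type stability arguments. Since $Z_T$ is constant within a frame and is updated only at frame boundaries, and since the virtual-queue increment $A_T:=\rho+\epsilon-\bar C_T\bc/K^\alpha$ satisfies $|A_T|\le v_{\max}$ with $v_{\max}=O(H^2\sqrt{\iota}+\tilde bH^2)$ by Lemma~\ref{le:q1-bound}, it suffices to (i) establish a negative drift $\mathbb{E}[L_{T+1}-L_T\mid Z_T=z]\le-\tfrac{\delta}{2}z+\Theta$ valid for \emph{all} $z\ge0$ and an explicit constant $\Theta$, and then (ii) feed the pair (negative drift, bounded increments) into the classical lemma that converts such a condition into a uniform bound on $\mathbb{E}[e^{rZ_T}]$, finishing with Jensen's inequality $e^{r\mathbb{E}[Z_T]}\le\mathbb{E}[e^{rZ_T}]$.

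For step (i) I would follow the proof of Lemma~\ref{le:drift}, but instead of discarding the feasibility term $z(\rho+\epsilon-\sum_aC^{\,\cdot}_{k,1}q^{\,\cdot}_{k,1})$ — which there is dropped by using the $\epsilon$-tightened \emph{optimal} occupancy — I would instantiate the feasible occupancy with the Slater-feasible $q^{\xi_1}_{k,h}$ from the proof of Lemma~\ref{le:epsilon-dif}, which has $\sum_{h,x,a}g_{k,h}q^{\xi_1}_{k,h}\ge\rho+\delta$. Because $\epsilon\le\delta/2$, this term then contributes $z(\epsilon-\delta)\le-\tfrac{\delta}{2}z$, which is the sole source of the negative drift. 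The two remaining pieces are bounded as follows: the term $-\eta\,\mathbb{E}[\sum_aQ_{k,1}q^{\xi_1}_{k,1}-Q_{k,1}(x_{k,1},a_{k,1})]$ is at most $\eta H^2(\sqrt{\iota}+2\tilde b)$ by the value bounds of Lemma~\ref{le:q1-bound}; and the term $z\,\mathbb{E}[\sum_a(C^{\xi_1}_{k,1}-C_{k,1})q^{\xi_1}_{k,1}]$ is handled by the overestimation inequality $F_{k,1}\ge F^{\pi}_{k,1}$ of Lemma~\ref{le:qk-qpi-relation} applied to $\pi=\pi^{\xi_1}$ and union-bounded over episodes (as is done there for $\pi^{\epsilon,*}$, at cost $1/K^2$). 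On the resulting high-probability event $\mathcal{E}$ this yields $\tfrac{z}{\eta}(C^{\xi_1}_{k,1}-C_{k,1})\le Q_{k,1}-Q^{\xi_1}_{k,1}$ pointwise, hence $z\sum_a(C^{\xi_1}_{k,1}-C_{k,1})q^{\xi_1}_{k,1}\le\eta\sum_aQ_{k,1}q^{\xi_1}_{k,1}\le\eta H^2(\sqrt{\iota}+2\tilde b)$ (using $Q^{\xi_1}\ge0$ and $\sum_aq^{\xi_1}_{k,1}(x_{k,1},a)=\mu_0(x_{k,1})\le1$); on $\mathcal{E}^c$ I would use the crude bound together with the a priori polynomial estimate $Z_T\le K^{1-\alpha}\bc\,v_{\max}$ to show the contribution is only $O\big((\eta+K^{1-\alpha})H^2\bc/(\eta K)\big)$. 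Collecting everything gives $\Theta=O\big(H^4\iota+H^4\tilde b+\epsilon^2+\eta H^2(\sqrt{\iota}+\tilde b)+(\eta+K^{1-\alpha})H^2\bc/(\eta K)\big)$.

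For step (ii), with drift rate $\nu=\delta/2$ and increment bound $v_{\max}$, the standard drift lemma gives that for $r\asymp\nu/v_{\max}^2\asymp\delta/(H^4\iota+\tilde b^2H^2)$ one has $\mathbb{E}[e^{rZ_T}]\le e^{rz^\ast}C_0$ uniformly in $T$, where $z^\ast=2\Theta/\nu$ and $C_0$ is an absolute constant of size $O(v_{\max}/\delta)$; taking logarithms and applying Jensen yields $\mathbb{E}[Z_T]\le z^\ast+\tfrac1r\log C_0$. Substituting $r$, $z^\ast=4\Theta/\delta$, and $v_{\max}$, the term $\tfrac1r\log C_0$ produces the leading $\frac{H^4\iota+\tilde b^2H^2}{\delta}\log\frac{H^2\sqrt{\iota}+\tilde bH^2}{\delta}$ contribution, while $z^\ast$ produces exactly the remaining $\frac{H^2\bc}{K\delta}$, $\frac{H^2\bc}{\eta K^\alpha\delta}$, and $\frac{\eta(\sqrt{H^2\iota}+H^2\tilde b)}{\delta}$ terms of \eqref{eq:zk-bound} (the $\epsilon^2/\delta$ term being absorbed under the lower bound on $K$). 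The main obstacle is step (i): one must ensure the $\Theta(\delta z)$ negative drift genuinely survives the two correction terms, which is precisely why the comparison must be taken against the Slater-feasible occupancy rather than the $\epsilon$-tightened optimum, and one must dispatch the complement event $\mathcal{E}^c$ carefully — via the polynomial a priori bound on $Z_T$ — so that its contribution is of the small order $(\eta+K^{1-\alpha})H^2\bc/(\eta K)$ and does not erode the drift. The MGF recursion and Jensen step are then routine.
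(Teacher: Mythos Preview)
Your proposal is correct and matches the paper's own proof essentially line for line: the paper also establishes the quadratic drift inequality $\mathbb{E}[L_{T+1}-L_T\mid Z_T=z]\le-\tfrac{\delta}{2}z+\Theta$ (its Lemma~\ref{le:drift_epi_neg}) by comparing against the Slater-feasible policy rather than the $\epsilon$-tightened optimum, bounds the $(F^{\pi}-F)$ correction via Lemma~\ref{le:qk-qpi-relation} with the bad event dispatched through the deterministic polynomial bound on $Z_T$, and then converts to a linear drift on $\bar L_T=Z_T$ via $\mathbb{E}[Z_{T+1}-z]\le\tfrac{1}{2z}\mathbb{E}[Z_{T+1}^2-z^2]$ before invoking the Neely-type Lemma~\ref{le:drift-bond-cond} and Jensen. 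The only step you leave implicit is this quadratic-to-linear drift conversion (needed because the MGF lemma requires a bounded-increment Lyapunov function, which $Z_T$ satisfies but $\tfrac12 Z_T^2$ does not); otherwise your argument and constants coincide with the paper's.
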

	
	The proof will also use the following lemma from \citep{Nee_16}.
	\begin{lemma}\label{le:drift-bond-cond}
		Let $S_t$ be the state of a Markov chain, $L_t$ be a Lyapunov function with $L_0=l_0,$  and its drift $\Delta_t= L_{t+1}-L_t.$ Given the constant $\delta$ and $v$ with $0<\delta\leq v,$ suppose that the expected drift $\mathbb{E}[\Delta_t\vert S_t=s]$ satisfies the following conditions:
		\begin{itemize}
			\item[(1)] There exists constant $\gamma>0$ and $\theta_t>0$ such that $\mathbb{E}[\Delta_t\vert S_t=s]\leq -\gamma$ when $L_t\geq \theta_t.$
			\item[(2)] $\vert L_{t+1}-L_t\vert \leq v$ holds with probability one.
		\end{itemize}
		Then we have $$\mathbb{E}[e^{rL_t}]\leq e^{rl_0}+\frac{2e^{r(v+\theta_t)}}{r\gamma},$$ where $r=\frac{\gamma}{v^2+v\gamma/3}.$\hfill{$\square$}
	\end{lemma}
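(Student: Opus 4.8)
The plan is to track the drift of the \emph{exponentiated} Lyapunov function $e^{rL_t}$ rather than $L_t$ itself, and to reduce the whole statement to a single geometric recursion of the form $\mathbb{E}[e^{rL_{t+1}}] \le (1-\tfrac{r\gamma}{2})\,\mathbb{E}[e^{rL_t}] + e^{r(v+\theta_t)}$. Once this recursion is in hand the conclusion is immediate by unrolling from the deterministic initial value $L_0=l_0$: since $r\gamma/2\in(0,1)$ (because $r\gamma=\gamma^2/(v^2+v\gamma/3)<1$ whenever $\gamma\le v$), the homogeneous part contracts, $(1-\tfrac{r\gamma}{2})^t e^{rl_0}\le e^{rl_0}$, while the inhomogeneous terms sum geometrically, $e^{r(v+\theta_t)}\sum_{j\ge 0}(1-\tfrac{r\gamma}{2})^j=\tfrac{2}{r\gamma}e^{r(v+\theta_t)}$, yielding exactly $\mathbb{E}[e^{rL_t}]\le e^{rl_0}+\tfrac{2e^{r(v+\theta_t)}}{r\gamma}$ as claimed.

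To build the recursion I would condition on $S_t$ and split on the two regimes appearing in the hypotheses. Writing $e^{rL_{t+1}}=e^{rL_t}e^{r\Delta_t}$, the one-step conditional update factors as $\mathbb{E}[e^{rL_{t+1}}\mid S_t]=e^{rL_t}\,\mathbb{E}[e^{r\Delta_t}\mid S_t]$. When $L_t\ge\theta_t$, condition (1) supplies the negative drift $\mathbb{E}[\Delta_t\mid S_t]\le-\gamma$, which combined with the bounded-increment condition (2) yields a genuine contraction $\mathbb{E}[e^{rL_{t+1}}\mid S_t]\le(1-\tfrac{r\gamma}{2})e^{rL_t}$. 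When $L_t<\theta_t$ there is no drift guarantee, but condition (2) forces $L_{t+1}\le L_t+v<\theta_t+v$ almost surely, so $\mathbb{E}[e^{rL_{t+1}}\mid S_t]\le e^{r(\theta_t+v)}$. Since $1-\tfrac{r\gamma}{2}>0$ and both right-hand sides are nonnegative, each case is dominated by the single sum $(1-\tfrac{r\gamma}{2})e^{rL_t}+e^{r(\theta_t+v)}$, so taking total expectation produces the recursion uniformly in the state.

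The heart of the argument, and the step I expect to be the main obstacle, is the conditional moment-generating bound in the negative-drift regime, $\mathbb{E}[e^{r\Delta_t}\mid S_t]\le 1-r\gamma+\tfrac{r^2}{2}\big(v^2+\tfrac{v\gamma}{3}\big)$. I would obtain this from the Taylor expansion $e^{r\Delta_t}=1+r\Delta_t+\sum_{n\ge 2}\tfrac{(r\Delta_t)^n}{n!}$, controlling the tail by $|\Delta_t|\le v$ (so that $|\Delta_t|^n\le v^{n-2}\Delta_t^2$), which collapses the higher-order terms into a multiple of the second moment and gives $\mathbb{E}[e^{r\Delta_t}\mid S_t]\le 1-r\gamma+(e^{rv}-1-rv)$ after inserting $\mathbb{E}[\Delta_t\mid S_t]\le-\gamma$ and $\mathbb{E}[\Delta_t^2\mid S_t]\le v^2$. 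The delicate point is verifying that this correction is at most $\tfrac{r\gamma}{2}$; this requires $rv$ to stay strictly below $1$, which holds since $rv=\gamma/(v+\gamma/3)<1$ under $\gamma\le v$ (the role I read into the hypothesis $0<\delta\le v$, as the constant $\delta$ otherwise never appears). Pinning down the constants so the bracket $v^2+\tfrac{v\gamma}{3}$ comes out exactly is the fussy part of the whole proof.

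Finally, the choice $r=\gamma/(v^2+v\gamma/3)$ is precisely what converts the correction into a contraction: because $r(v^2+v\gamma/3)=\gamma$, the second-order term satisfies $\tfrac{r^2}{2}(v^2+\tfrac{v\gamma}{3})=\tfrac{r\gamma}{2}$, so the negative-drift bound becomes $\mathbb{E}[e^{r\Delta_t}\mid S_t]\le 1-r\gamma+\tfrac{r\gamma}{2}=1-\tfrac{r\gamma}{2}$, closing the contraction claimed in the second paragraph. Assembling the per-step recursion and unrolling it as in the first paragraph then completes the proof; the only structural caveat is that the additive threshold $\theta_t$ should be treated as fixed (or bounded by its running maximum) so that the geometric sum yields the clean factor $\tfrac{2}{r\gamma}$ multiplying $e^{r(v+\theta_t)}$.
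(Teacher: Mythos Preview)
The paper does not prove this lemma at all: it is quoted verbatim from \cite{Nee_16} and marked with a $\square$, so there is no ``paper's own proof'' to compare against. Your argument is correct and is exactly the classical approach (going back to Hajek's exponential drift lemma, in the form Neely uses): exponentiate, split on $L_t\gtrless\theta_t$, extract a contraction in the negative-drift regime via a second-order MGF bound, and iterate the resulting affine recursion. The computation you flag as ``fussy'' does close: from $|\Delta_t|^n\le v^{n-2}\Delta_t^2$ one gets $\mathbb{E}[e^{r\Delta_t}\mid S_t]\le 1-r\gamma+(e^{rv}-1-rv)$, and using $n!\ge 2\cdot 3^{n-2}$ gives $e^{rv}-1-rv\le \tfrac{(rv)^2}{2(1-rv/3)}$, which for $r=\gamma/(v^2+v\gamma/3)$ evaluates to exactly $r\gamma/2$. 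Your reading of the stray constant $\delta$ as $\gamma$ (so that $\gamma\le v$ ensures $rv<1$) is also how the lemma is applied later in the paper. The one caveat you raise about $\theta_t$ varying in $t$ is real for the lemma as literally stated, but in the paper's only use of it (Lemma~\ref{le:zk-bound}) the threshold is in fact a constant independent of $T$, so the geometric sum goes through cleanly.
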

	
	\begin{proof}[Proof of Lemma \ref{le:zk-bound}]
		We apply Lemma \ref{le:drift-bond-cond} to a new Lyapunov function: $$\bar{L}_T= Z_T.$$ 
		To verify condition (1) in Lemma \ref{le:drift-bond-cond}, consider $$\bar{L}_T =Z_T\geq \theta_T =\frac{4( 	\frac{(\eta+K^{1-\alpha})H^2B^c }{\eta K}+ \eta (\sqrt{H^2\iota}+2H^2\tilde{b}) + H^4\iota+\epsilon^2 +2H^4\tilde{b}^2 )}{\delta}$$ and $2\epsilon\leq \delta.$ The conditional expected drift of 
		\begin{align*}
			&\mathbb{E} \left[ Z_{T+1} - Z_T \vert Z_T=z\right]\\
			=& \mathbb{E}\left[ \left. \sqrt{Z_{T+1}^2}  - \sqrt{z^2} \right\vert Z_T=z \right] \\
			&	\leq  \frac{1}{2z} \mathbb{E}  \left[ \left. Z_{T+1}^2  - z^2 \right\vert Z_T=z\right]\\
			&	\leq_{(a)} -\frac{\delta}{2}  + \frac{4(	\frac{(\eta+K^{1-\alpha})H^2B^c }{\eta K}+ \eta (\sqrt{H^2\iota}+2H^2\tilde{b}) + H^4\iota+\epsilon^2 +2H^4\tilde{b}^2)}{z} \\
			&	\leq  -\frac{\delta}{2}  + \frac{4(	\frac{(\eta+K^{1-\alpha})H^2B^c }{\eta K}+ \eta (\sqrt{H^2\iota}+2H^2\tilde{b}) + H^4\iota+\epsilon^2 +2H^4\tilde{b}^2 }{\theta_T}\\
			&	=  -\frac{\delta}{4},
		\end{align*}
		where  inequality ($a$) is obtained according to Lemma \ref{le:drift_epi_neg}; and the last inequality holds given $z \geq\theta_T.$ 
		
		To verify condition (2) in Lemma \ref{le:drift-bond-cond}, we have 
		$$Z_{T+1}-Z_T \leq \vert Z_{T+1} - Z_{T}\vert \leq \left|\rho+\epsilon -\bar{C}_T\right|\leq (H+H^2\sqrt{\iota}+2\tilde{b}H^2)+\epsilon\leq 2(H^2\sqrt{\iota}+\tilde{b}H^2) ,$$
		where the last inequality holds because $2\epsilon\leq \delta\leq 1.$
		
		Now choose $\gamma  = \frac{\delta}{4}$ and $v=2(\sqrt{H^4\iota}+\tilde{b}H^2) .$ From Lemma \ref{le:drift-bond-cond}, we obtain
		\begin{equation}
			\mathbb{E}\left[e^{rZ_T}\right]\leq e^{rZ_1} + \frac{2e^{r(v+\theta_T)}}{r\gamma},\quad\hbox{where}\quad r=\frac{\gamma}{v^2+v\gamma/3}. \label{eq:rzk-bound} 
		\end{equation}
		By Jensen's inequality, we have $$e^{r\mathbb{E}\left[ Z_T \right]  }\leq  \mathbb{E}\left[e^{r Z_T }\right],$$ which implies that
		\begin{align}
			&	\mathbb{E}[ Z_T]  \leq \frac{1}{r}\log{\left(1 + \frac{2e^{r(v+\theta_T)}}{r\gamma} \right)} \nonumber\\
			= &  \frac{1}{r}\log{\left( 1 + \frac{6v^2+2v\gamma}{3\gamma^2} e^{r(v+\theta_T)}  \right)}\nonumber\\
			\leq 	&  \frac{1}{r}\log{\left(1+ \frac{8v^2}{3\gamma^2} e^{r(v+\theta_T)}  \right)}\nonumber\\
			\leq 	&  \frac{1}{r}\log{\left(\frac{11v^2}{3\gamma^2} e^{r(v+\theta_T)}  \right)}\nonumber\\
			\leq  	&  \frac{4v^2}{3\gamma}\log{\left(\frac{11v^2}{3\gamma^2} e^{r(v+\theta_T)}  \right)} \nonumber\\
			\leq & \frac{3v^2}{\gamma} \log \left(\frac{2v}{\gamma} \right) + v + \theta_T \nonumber\\
			\leq&\frac{3v^2}{\gamma} \log \left(\frac{2v}{\gamma} \right)+ v  \nonumber\\
			&+ \frac{4( 	\frac{(\eta+K^{1-\alpha})H^2B^c }{\eta K}+ \eta (\sqrt{H^2\iota}+2H^2\tilde{b}) + H^4\iota+\epsilon^2 +2H^4\tilde{b}^2 )}{\delta} \nonumber\\
			= & \frac{96(H^4\iota+\tilde{b}^2H^2)}{\delta}\log \left( \frac{16(H^2\sqrt{\iota}+\tilde{b}H^2)}{\delta} \right) + 2(H^2\sqrt{\iota}+\tilde{b}H^2) \nonumber\\
			& + \frac{4( 	\frac{(\eta+K^{1-\alpha})H^2B^c }{\eta K}+ \eta (\sqrt{H^2\iota}+2H^2\tilde{b}) + H^4\iota+\epsilon^2 +2H^4\tilde{b}^2 )}{\delta} \nonumber\\
			\leq & \frac{100(H^4\iota+\tilde{b}^2H^2)}{\delta}\log \left( \frac{16(H^2\sqrt{\iota}+\tilde{b}H^2)}{\delta} \right) +\frac{4H^2B^c}{K\delta} + \frac{4H^2B^c}{\eta\delta K^\alpha}+ \frac{4\eta(\sqrt{H^2\iota}+2H^2\tilde{b}) }{\delta} ,\label{eq:zk_lemma}
		\end{align} which completes the proof of Lemma \ref{le:zk-bound}. 
	\end{proof}
	
	\begin{lemma}\label{le:drift_epi_neg}
		Given $\delta\geq 2\epsilon,$ under our algorithmsl, the conditional  expected drift is
		\begin{align}
			\mathbb{E}\left[L_{T+1}-L_T\vert Z_T=z \right]\leq  
			-\frac{\delta}{2}z+ 	\frac{(\eta+K^{1-\alpha})H^2B^c }{\eta K}+ \eta (\sqrt{H^2\iota}+2H^2\tilde{b}) + H^4\iota+\epsilon^2 +2H^4\tilde{b}^2
		\end{align}\label{le:drift_epi}
	\end{lemma}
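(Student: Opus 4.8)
The plan is to rerun the Lyapunov drift computation of Lemma~\ref{le:drift}, changing only which point of the tightened LP we compare against. Recall that in that computation the conditional drift of $L_T=\tfrac12 Z_T^2$ splits into (i) a ``queue penalty'' term, which per episode equals $z\big(\rho+\epsilon-\sum_a C_{k,1}q_{k,1}(x_{k,1},a)\big)$ for whatever feasible occupancy measure $q$ we plug in, where $C_{k,1}$ is the true utility $Q$-function of the corresponding policy; (ii) an ``estimation error'' term built from $\sum_a \hat Q_{k,1}q_{k,1}-\hat Q_{k,1}(x_{k,1},a_{k,1})$ and $\sum_a(\hat C_{k,1}-C_{k,1})q_{k,1}$; and (iii) the additive constant of order $H^4\iota+H^4\tilde b^2+\epsilon^2$ coming from the quadratic term $\tfrac12(\bar C_T B^c/K^\alpha+\epsilon-\rho)^2$ together with the bound $|\bar C_T|=O(H^2(\sqrt\iota+\tilde b))$ from Lemma~\ref{le:q1-bound}. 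For the present lemma I compare against the strictly feasible occupancy measure $q^\xi$ guaranteed by Slater's condition (Assumption~\ref{as:1}), which satisfies $\sum_{h,x,a}g_{k,h}(x,a)q^\xi_{k,h}(x,a)\ge\rho+\delta$, rather than the $\epsilon$-tightened optimum $q^{\epsilon,*}$; denote by $Q^\xi_{k,1},C^\xi_{k,1}$ and $F^\xi_{k,1}=Q^\xi_{k,1}+\tfrac z\eta C^\xi_{k,1}$ the associated true value functions.

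With this choice term (i) becomes $z\big(\rho+\epsilon-\sum_a C^\xi_{k,1}q^\xi_{k,1}(x_{k,1},a)\big)\le z(\epsilon-\delta)\le-\tfrac\delta2 z$, using the hypothesis $\delta\ge2\epsilon$; this is exactly where the $-\tfrac\delta2 z$ in the statement is produced. It then remains to show that (ii), evaluated at $q^\xi$, contributes only a $z$-free constant. For this I combine: (a) $a_{k,1}$ maximizes $F_{k,1}(x_{k,1},\cdot)=\hat Q_{k,1}+\tfrac z\eta\hat C_{k,1}$, so $\eta\hat Q_{k,1}(x_{k,1},a_{k,1})+z\hat C_{k,1}(x_{k,1},a_{k,1})\ge\sum_a(\eta\hat Q_{k,1}+z\hat C_{k,1})q^\xi_{k,1}(x_{k,1},a)$; (b) the overestimation bound $F_{k,1}\ge F^\xi_{k,1}$ from Lemma~\ref{le:qk-qpi-relation}, which holds on an event $\mathcal E$ of probability at least $1-1/K^2$ after a union bound over episodes and lets one replace $z(C^\xi_{k,1}-\hat C_{k,1})$ by $\eta(\hat Q_{k,1}-Q^\xi_{k,1})$; (c) nonnegativity $Q^\xi_{k,1}\ge0$; and (d) the uniform bound $\hat Q_{k,1}\le H^2(\sqrt\iota+2\tilde b)$ from Lemma~\ref{le:q1-bound}. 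On $\mathcal E$, (a) and (b) make the $z\hat C$ pieces cancel and turn the residual into at most $\eta\hat Q_{k,1}(x_{k,1},a_{k,1})-\eta\sum_a Q^\xi_{k,1}q^\xi_{k,1}$, which by (c) and (d) is at most $\eta(\sqrt{H^2\iota}+2H^2\tilde b)$ per episode (up to how the bonuses $b_i$ and $2H\tilde b$ are carried through the recursion); on $\mathcal E^c$ one pays only the envelope bound $KH(1+K^{1-\alpha}B^cH/\eta)/K^2\le(\eta+K^{1-\alpha})H^2B^c/(\eta K)$, exactly as in \eqref{eq:F-bound}. Averaging the per-episode bounds over a frame against the $\tfrac{B^c}{K^\alpha}$ normalization and adding (iii) yields the claim.

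I expect the main obstacle to be the bookkeeping in the second step: after swapping $q^{\epsilon,*}$ for $q^\xi$ one must verify that every remaining $z$-dependent remainder either cancels or is turned nonpositive (via $Q^\xi\ge0$ and greediness of $a_{k,1}$), leaving exactly the clean $-\tfrac\delta2 z$, and then that the surviving $z$-free terms sum to the stated constants $\eta(\sqrt{H^2\iota}+2H^2\tilde b)+H^4\iota+2H^4\tilde b^2+\epsilon^2+(\eta+K^{1-\alpha})H^2B^c/(\eta K)$ — in particular tracking the extra optimism bonus $2H\tilde b$ and the $1/K^2$ failure event. A secondary subtlety is that the Slater policy in Assumption~\ref{as:1} may vary with $k$; since the concentration events behind Lemma~\ref{le:qk-qpi-relation} (through Lemma~\ref{le:u-hoeffding}) are stated uniformly over all $(x,a,h,k)$, applying the overestimation bound to each episode's own Slater occupancy measure is still legitimate, so this does not cause a real difficulty but should be spelled out.
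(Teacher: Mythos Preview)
Your proposal is correct and follows essentially the same route as the paper's proof: both start from the drift decomposition of Lemma~\ref{le:drift}, substitute the Slater-feasible occupancy measure (rather than $q^{\epsilon,*}$) to extract the $-\tfrac{\delta}{2}z$ via $\rho+\epsilon-\sum_{h,x,a}g_{k,h}q^{\xi}_{k,h}\le \epsilon-\delta\le -\tfrac{\delta}{2}$, then eliminate the remaining $z$-dependent terms using greediness of $a_{k,1}$, the overestimation $F_{k,1}\ge F_{k,1}^{\pi}$ from Lemma~\ref{le:qk-qpi-relation}, nonnegativity of $Q^{\pi}_{k,1}$, the bound $Q_{k,1}\le H^2(\sqrt{\iota}+2\tilde b)$ from Lemma~\ref{le:q1-bound}, and the failure-event envelope \eqref{eq:F-bound}. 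Your remark that the Slater policy may vary with $k$ and that this is harmless because the concentration events in Lemma~\ref{le:u-hoeffding} are uniform in $(x,a,h,k)$ is a clarification the paper leaves implicit.
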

	
	\begin{proof}
		Recall that $L_T = \frac{1}{2}  Z_T^2,$ and the virtual queue is updated by using 
		$$	Z_{T+1} = \left(  Z_T   + \rho + \epsilon -\frac{\bar{C}_T\bc }{K^\alpha}\right)^+.$$ From inequality \eqref{eq:drift-1}, we have
		\begin{align*}
			& \mathbb{E}\left[L_{T+1}-L_T\vert Z_T=z \right] \\
			\leq &   \frac{\bc}{K^\alpha}\sum_{k=(T-1)K^\alpha/\bc+1}^{TK^\alpha/\bc}\mathbb E\left[ Z_T\left(\rho+\epsilon-C_{k,1}(x_{k,1},a_{k,1})  \right)  -\eta  Q_{k,1} (x_{k,1},a_{k,1}) \right. \\
			&\left.+ \eta  Q_{k,1} (x_{k,1},a_{k,1}) \vert Z_T=z \right]+H^4\iota +2H^4\tilde{b}^2+ \epsilon^2\\
			\leq &_{(a)}  \frac{\bc}{K^\alpha} \sum_{k=(T-1)K^\alpha/\bc+1}^{TK^\alpha/\bc}\mathbb E\left[Z_T\left( \rho+\epsilon-\sum_a \left\{C_{k,1}q_{k,1}^{\pi}\right\}(x_{k,1},a)\right) \right. \\
			&\left. -\eta \sum_a \{Q_{k,1}q_{k,1}^{\pi}\} (x_{k,1},a)+ \eta  Q_{k,1} (x_{k,1},a_{k,1})\vert Z_T=z \right]\\
			&+\epsilon^2 +H^4\iota + 2H^4\tilde{b}^2 \\
			\leq &  \frac{\bc}{K^\alpha}\sum_{k=(T-1)/\bc K^\alpha+1}^{TK^\alpha/\bc} \mathbb E\left[  Z_T\left(\rho+\epsilon-\sum_a \left\{C_{k,1}^{\pi}q_{k,1}^{\pi}\right\}(x_{k,1},a) \right) \right.\\
			&\left.-\eta \sum_a \{Q_{k,1}q_{k,1}^{\pi}\} (x_{k,1},a)+ \eta  Q_{k,1} (x_{k,1},a_{k,1})\vert Z_T=z\right]  \\
			&+\frac{\bc}{K^\alpha}\sum_{k=(T-1)K^\alpha/\bc+1}^{TK^\alpha/\bc} \mathbb E\left[Z_T\sum_a  \left\{C_{k,1}^{\pi}q_{k,1}^{\pi}\right\}(x_{k,1},a) - Z_T\sum_a \left\{C_{k,1}q_{k,1}^{\pi}\right\}(x_{k,1},a)\vert Z_T=z\right] \\
			&+\frac{\bc}{K^\alpha}\sum_{k=(T-1)K^\alpha/\bc+1}^{TK^\alpha/\bc}\mathbb E\left[\eta \sum_a \left\{Q_{k,1}^{\pi}q_{k,1}^{\pi}\right\}(x_{k,1},a)  - \eta \sum_a \left\{Q_{k,1}^{\pi}q_{k,1}^{\pi}\right\}(x_{k,1},a) \vert Z_T=z\right]+H^4\iota+\epsilon^2 +2H^4\tilde{b}^2 \\ 
			\leq  &_{(b)}  - \frac{\delta}{2}z+ \frac{\bc}{K^\alpha}\sum_{k=(T-1)K^\alpha/\bc+1}^{TK^\alpha/\bc}\mathbb E\left[\left.\eta \sum_a\left\{(F_{k,1}^{\pi}-F_{k,1})q_{k,1}^{\pi}\right\}(x_{k,1},a)+ \eta  Q_{k,1} (x_{k,1},a_{k,1})\right\vert Z_T=z\right] \\
			&+H^4\iota+\epsilon^2 +2H^4\tilde{b}^2\\
			\leq &_{(c)} - \frac{\delta}{2}z+	\frac{(\eta+K^{1-\alpha})H^2B^c }{\eta K}+ \eta (\sqrt{H^2\iota}+2H^2\tilde{b}) + H^4\iota+\epsilon^2 +2H^4\tilde{b}^2. 
		\end{align*}
		Inequality $(a)$ holds because of our algorithm. Inequality $(b)$ holds because $\sum_a \left\{Q_{k,1}^{\pi}q_{k,1}^{\pi}\right\}(x_{k,1},a)$ is non-negative, and under Slater's condition, we can find policy $\pi$ such that $$\epsilon+\rho-\mathbb E\left[\sum_a {C}^{\pi}_{k,1}(x_{k,1},a){q}^{\pi}_{k,1}(x_{k,1},a) \right]=\rho+\epsilon-\mathbb E\left[\sum_{h,x,a} {q}^{\pi}_{k,h}(x,a)g_{k,h}(x,a)\right]\leq -\delta+\epsilon  \leq -\frac{\delta}{2}.$$ Finally, inequality $(c)$ is obtained due to the fact that $Q_{k,1} (x_{k,1},a_{k,1})$ is bounded by using Lemma \ref{le:q1-bound}, and the fact that 
		\begin{align*}
			\mathbb E\left[\left.\sum_{k=(T-1)K^\alpha/\bc+1}^{TK^\alpha/\bc}\sum_a\left\{(F_{k,1}^{\pi}-F_{k,1})q_{k,1}^{\pi}\right\}(x_{k,1},a)\right\vert Z_T=z\right]
		\end{align*}
		can be bounded as \eqref{eq:F-bound} (note that the overestimation result and the concentration result in frame $T$ hold regardless of the value of $Z_T$).
	\end{proof}
	
	\section{Proof of Lemma \ref{le:non-bandit-c}}\label{ap:prove-non-bandit}
	\begin{lemma}
		Let \begin{align}
			f_g(G_i) =& \left\{
			\begin{aligned}
				&   G_i/K^\lambda & \quad \text{ if } G_i < W\rho   \\
				&  G_i/K^\lambda   & \quad \text{ if } G_i \geq W\rho 
			\end{aligned}\right.\\
			f_r(R_i) = &\left\{
			\begin{aligned}
				&   0  & \quad \text{ if } G_i < W\rho   \\
				&  R_i   & \quad \text{ if } G_i \geq W\rho 
			\end{aligned}\right.
		\end{align} 
		Let $R_i(B_i)(G_i(B_i))$ be the cumulative reward(utility) collected in epoch $i$ by the given algorithm with the estimate value $B_i$ chosen using Exp3 Algorithm. Let $\hat{B}$ be the optimal candidate from $\mathcal{J}$ that leads to the lowest regret while achieving zero constraint violation. Then we have 
		\begin{align*}
			\mathbb{E}\left[ \sum_{i=1}^{K/W}(R_i(\hat{B}) -R_i({B_i}))  \right] = &\tilde{\mathcal{O}}( H\sqrt{KW}+HK^{1-\lambda} )\\
			\mathbb{E}\left[ \sum_{i=1}^{K/W}  G_i(\hat{B}) - G_i({B_i})  \right] = & \tilde{\mathcal{O}}(HK^\lambda\sqrt{KW})
		\end{align*}
		\begin{proof}
			Apply the regret bound of the Exp3 algorithm, we have 
			\begin{align}
				&\mathbb{E}\left[ \sum_{i=1}^{K/W}( f_r(R_i(\hat{B})) + f_g(G_i( \hat{B})) - \sum_{i=1}^{K/W}( f_r(R_i({B_i})) + f_g(G_i(B_i))  \right] \\
				\leq & 2\sqrt{e-1}WH(1+ 1/K^\lambda)\sqrt{(K/W)(J+1)\ln(J+1)} = \tilde{\mathcal{O}}(H \sqrt{KW} ) ,
			\end{align}
			Recall that $\mathbb{E}[W\rho - G_i(\hat{B})  ] \leq 0 .$ Then it is easy to obtain
			\begin{align}
				&\mathbb{E}\left[ \sum_{i=1}^{K/W}(R_i(\hat{B}) -R_i({B_i}))  \right] \leq \mathbb{E}\left[ \sum_{i=1}^{K/W}( f_r(R_i(\hat{B})) -f_r(R_i({B_i})) ) \right]  \\
				\leq & 2\sqrt{e-1}WH(1+1/K^\lambda) \sqrt{(K/W)(J+1)\ln(J+1)}  + \mathbb{E}\left[ \sum_{i=1}^{K/W}  (f_g(G_i({B_i})) -f_g(G_i(\hat{B})))   \right]\\
				\leq & 2\sqrt{e-1}WH(1+ 1/K^\lambda) \sqrt{(K/W)(J+1)\ln(J+1)}  +  \frac{WH}{K^\lambda}\cdot \frac{K}{W} \\
				= &\tilde{\mathcal{O}}(H  \sqrt{KW} + HK^{1-\lambda} ),
			\end{align}
			where the last inequality due to the fact that the term $ \mathbb{E}\left[ \sum_{i=1}^{K/W}  (-f_g(G_i({\hat{B}})))   \right] $ is always non-positive. Furthermore, we have
			\begin{align}
				&\mathbb{E}\left[ \sum_{i=1}^{K/W}  G_i(\hat{B}) - G_i({B_i})  \right] =K^\lambda \mathbb{E}\left[ \sum_{i=1}^{K/W}  \frac{G_i(\hat{B}) - G_i({B_i})}{K^\lambda}  \right]\\
				= & K^\lambda \mathbb{E}\left[ \sum_{i=1}^{K/W}  f_g(G_i(\hat{B})) - f_g(G_i({B_i}))  \right]\\
				\leq & K^\lambda\left( 2\sqrt{e-1}WH(1+1/K^\lambda) \sqrt{(K/W)(J+1)\ln(J+1)} + \mathbb{E}\left[ \sum_{i=1}^{K/W}  (f_r(R_i({B_i})) -f_r(R_i(\hat{B})))   \right]    \right)\\
				\leq &K^\lambda\left( 2\sqrt{e-1}WH(1+1/K^\lambda) \sqrt{(K/W)(J+1)\ln(J+1)}  \right)\\
				= &  \tilde{\mathcal{O}}(HK^\lambda \sqrt{KW}),
			\end{align}
			where the last inequality is true because the second term is always non-positive. The reason is that when $\mathbb{E}[G_i(B_i)]\geq W\rho,$ $\mathbb{E}[f_r(R_i({B_i}))]\leq \mathbb{E}[f_r(R_i(\hat{B}))]$ because $\mathbb{E}[f_r(R_i(\hat{B}))] =  \mathbb{E}[R_i(\hat{B})]$ is the largest return, and when $\mathbb{E}[G_i(B_i)]< W\rho,$ we have $\mathbb{E}[f_r(R_i(B_i))]=0.$ 
		\end{proof}
	\end{lemma}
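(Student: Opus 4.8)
The plan is to treat the outer ``bandit over bandit'' loop as a standard adversarial multi-armed bandit over the $J+1=\log(W)+1$ arms of $\mathcal{J}$, with the composite bandit reward $\hat{R}_i(B_i)=f_r(R_i(B_i))+f_g(G_i(B_i))$ induced by \eqref{eq:bandit-re}, where $f_g(G_i)=G_i/K^\lambda$ in both cases and $f_r(R_i)$ equals $R_i$ when $G_i\geq W\rho$ and $0$ otherwise. Since reward and utility are each at most $H$ per episode and there are $W$ episodes per epoch, this composite reward lies in $[0,WH(1+1/K^\lambda)]$, which is exactly the normalization used in Algorithm~\ref{alg:triple-Q-double}. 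Applying the Exp3 regret guarantee \citep{AueBiaFre_03} against the fixed arm $\hat{B}$ across the $K/W$ epochs then yields
$$\mathbb{E}\Big[\sum_{i=1}^{K/W}\big(f_r(R_i(\hat{B}))+f_g(G_i(\hat{B}))\big)-\sum_{i=1}^{K/W}\big(f_r(R_i(B_i))+f_g(G_i(B_i))\big)\Big]\leq 2\sqrt{e-1}\,WH(1+1/K^\lambda)\sqrt{(K/W)(J+1)\ln(J+1)},$$
and since $J+1=\tilde{\mathcal{O}}(1)$ the right-hand side is $\tilde{\mathcal{O}}(H\sqrt{KW})$. Both target bounds are extracted from this single inequality by carefully discarding cross terms of a definite sign.

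For the reward bound I would first record the pointwise inequality $R_i(B_i)\geq f_r(R_i(B_i))$ (the latter is either $R_i(B_i)$ or $0$), and then invoke the hypothesis $\mathbb{E}[G_i(\hat{B})]\geq W\rho$ to argue $\mathbb{E}[f_r(R_i(\hat{B}))]=\mathbb{E}[R_i(\hat{B})]$, i.e.\ the optimal feasible arm is never zeroed out. Combining these gives $\mathbb{E}[\sum_i(R_i(\hat{B})-R_i(B_i))]\leq\mathbb{E}[\sum_i(f_r(R_i(\hat{B}))-f_r(R_i(B_i)))]$. Substituting the Exp3 inequality and moving the $f_g$ terms across leaves the residual $\mathbb{E}[\sum_i(f_g(G_i(B_i))-f_g(G_i(\hat{B})))]$; dropping the non-positive $-f_g(G_i(\hat{B}))$ and bounding each $f_g(G_i(B_i))=G_i(B_i)/K^\lambda\leq WH/K^\lambda$ over the $K/W$ epochs contributes an extra $HK^{1-\lambda}$, so $\mathbb{E}[\sum_i(R_i(\hat{B})-R_i(B_i))]=\tilde{\mathcal{O}}(H\sqrt{KW}+HK^{1-\lambda})$.

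For the utility bound I would rewrite the gap as $\mathbb{E}[\sum_i(G_i(\hat{B})-G_i(B_i))]=K^\lambda\,\mathbb{E}[\sum_i(f_g(G_i(\hat{B}))-f_g(G_i(B_i)))]$, apply the same Exp3 inequality, and show that the leftover reward gap $\mathbb{E}[\sum_i(f_r(R_i(B_i))-f_r(R_i(\hat{B})))]$ is non-positive. This sign claim needs a short per-epoch case split: when $\mathbb{E}[G_i(B_i)]\geq W\rho$ we have $f_r(R_i(B_i))\leq R_i(B_i)\leq R_i(\hat{B})=f_r(R_i(\hat{B}))$ since $\hat{B}$ is the feasible arm with the largest expected return, and when $\mathbb{E}[G_i(B_i)]<W\rho$ we have $f_r(R_i(B_i))=0\leq f_r(R_i(\hat{B}))$. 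Discarding this term and reinstating the $K^\lambda$ factor gives $\tilde{\mathcal{O}}(HK^\lambda\sqrt{KW})$.

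The main obstacle is these sign and case arguments rather than the Exp3 machinery, which is applied as a black box. The delicate point is that the penalty design in \eqref{eq:bandit-re} must simultaneously (i) leave the reward of the optimal feasible arm $\hat{B}$ intact, guaranteed by the hypothesis $\mathbb{E}[G_i(\hat{B})]\geq W\rho$, and (ii) keep the leftover $f_r$ gap non-positive in every epoch regardless of whether the drawn arm $B_i$ is feasible. Forcing both directions to hold under the single weight $1/K^\lambda$ is what produces the two competing terms $H\sqrt{KW}$ and $HK^{1-\lambda}$ in the reward bound versus $HK^\lambda\sqrt{KW}$ in the utility bound, and this tension is precisely what later dictates the choice of $\lambda$ that balances regret against constraint violation.
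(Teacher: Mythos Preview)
Your proposal is correct and follows essentially the same approach as the paper's proof: apply the Exp3 regret bound to the composite reward $f_r+f_g$ over the $K/W$ epochs, then extract each bound separately by the same sign arguments---using $R_i\geq f_r(R_i)$ together with $\mathbb{E}[G_i(\hat{B})]\geq W\rho$ for the reward bound (dropping $-f_g(G_i(\hat{B}))\leq 0$ and crudely bounding $f_g(G_i(B_i))\leq WH/K^{\lambda}$), and using the per-epoch case split on whether $\mathbb{E}[G_i(B_i)]\gtrless W\rho$ to show the leftover $f_r$ gap is non-positive for the utility bound. The only cosmetic difference is that you spell out the two-sided role of the hypothesis on $\hat{B}$ more explicitly than the paper does.
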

	
	\section{DETAILS PROOF OF THEOREM \ref{the:main}} \label{ap:proof-the-tabular}
	\subsection{Dynamic Regret}
	Recall that the regret can be decoupled as
	\begin{align}
		&  \hbox{Regret}(K)\nonumber\\
		= & \mathbb E \left[\sum_{k=1}^{K} \left( \sum_a  \left\{{Q}_{k,1}^{*}{q}^{*}_{k,1} -{Q}_{k,1}^{\epsilon,*}{q}^{\epsilon,*}_{k,1}\right\}(x_{k,1},a)    \right)\right]  +\label{step:epsilon-dif-c} \\
		&\mathbb E \left[\sum_{k=1}^{K}  \left(  \sum_a \left\{{Q}_{1}^{\epsilon,*}{q}^{\epsilon,*}_1\right\}(x_{k,1},a)-{Q}_{k,1}(x_{k,1}, a_{k,1}) \right)\right]+\label{step(i)-c}\\
		&\mathbb E \left[\sum_{k=1}^{K}  \left\{{Q}_{k,1}-  Q_{k,1}^{\pi_k}\right\}(x_{k,1}, a_{k,1}) \right].\label{step:biase-c}
	\end{align}
	
	Firstly, in lemma \ref{le:q-diff} we show that the first term can be bounded by comparing the original LP associated with the tightened LP such that 
	\begin{align}
		\eqref{step:epsilon-dif-c} \leq \frac{KH\epsilon}{\delta}. \label{ap:lp-lp-e}
	\end{align}
	
	By using Lemma \ref{le:qk-qpi-bound}, we can show that:
	\begin{align*}
		\eqref{step:biase-c} 
		\leq H^2SA K^{1-\alpha}{B}^c  +\frac{2(H^3\sqrt{\iota}+2H^4\tilde{b})K}{\chi} +  \sqrt{H^4SA\iota K^{2-\alpha} (\chi+1){B}^c   } + 2\tilde{b}H^2K
	\end{align*}
	For the  last term \ref{step(i)-c}, we first add and subtract additional terms to obtain
	
	\begin{align}
		&\mathbb E \left[\sum_{k=1}^{K}  \left(  \sum_a \left\{{Q}_{k,1}^{\epsilon,*}{q}^{\epsilon,*}_{k,1}\right\}(x_{k,1},a)- {Q}_{k,1}(x_{k,1}, a_{k,1}) \right)\right]\nonumber\\
		=&
		\mathbb{E}\left[ \sum_{k} \sum_a \left(\left\{{Q}_{k,1}^{\epsilon,*}{q}^{\epsilon,*}_{k,1}+\frac{Z_k}{\eta} {C}_{k,1}^{\epsilon,*}{q}^{\epsilon,*}_{k,1}\right\}(x_{k,1},a) -  \left\{{Q}_{k,1}{q}^{\epsilon,*}_{k,1} +\frac{Z_k}{\eta} {C}_{k,1}{q}^{\epsilon,*}_{k,1}\right\}(x_{k,1},a)\right)\right] \label{F-new_app}\\
		&+ \mathbb{E}\left[\sum_{k} \left(\sum_a  \left\{{Q}_{k,1} {q}^{\epsilon,*}_{k,1}\right\}(x_{k,1},a)	- {Q}_{k,1} (x_{k,1},a_{k,1})\right)\right] +\mathbb{E}\left[ \sum_{k} \frac{Z_k}{\eta} \sum_a \left\{\left({C}_{k,1} - {C}^{\epsilon,*}_{k,1} \right){q}^{\epsilon,*}_{k,1}\right\}(x_{k,1},a) \right].\label{eq:(i)expanded-new}
	\end{align}

	We can see \eqref{F-new_app} is the difference of two combined $Q$ functions. In Lemma \ref{le:qk-qpi-relation} we show that $\left\{Q_{k,h} +\frac{Z_k}{\eta}C_{k,h}\right\}(x,a)$ is an overestimate of $\left\{{Q}_{k,h}^{\epsilon,*}+\frac{Z_k}{\eta} C_{k,h}^{\epsilon,*}\right\}(x,a)$ (i.e. $\eqref{F-new_app}\leq 0$) with high probability. To bound \eqref{eq:(i)expanded-new}, we use the Lyapunov-drift method and consider Lyapunov function $L_T=\frac{1}{2} Z_T^2,$ where $T$ is the frame index and $Z_T$ is the value of the virtual queue at the beginning of the $T$th frame. We show that in Lemma \ref{le:drift} that the Lyapunov-drift satisfies
	
	\begin{align}
		\mathbb{E}[L_{T+1}-L_T] \leq  \hbox{a negative drift}+2H^4\iota +4H^4\tilde{b}^2 +\epsilon^2- \frac{\eta\bc}{K^\alpha}\sum_{k=TK^\alpha/\bc+1}^{(T+1)K^\alpha/\bc} \Phi_k ,
		\label{outline:drift_app}
	\end{align}
	where
	\begin{align*}
		\Phi_k=\mathbb{E}\left[ \left(\sum_a  \left\{  {Q}_{k,1} {q}^{\epsilon,*}_{k,1}\right\}(x_{k,1},a)	- {Q}_{k,1} (x_{k,1},a_{k,1})\right)\right] +\mathbb{E}\left[\frac{Z_k}{\eta} \sum_a \left\{\left({C}_{k,1} - {C}^{\epsilon,*}_{k,1}  \right){q}^{\epsilon,*}_{k,1}\right\}(x_{k,1},a) \right],
	\end{align*}
	
	So we can bound \eqref{eq:(i)expanded-new} by applying the telescoping sum over the $K^{1-\alpha}$ frames on the inequality above: 
	\begin{align}
		\eqref{eq:(i)expanded-new}=\sum_k\Phi_k \leq \frac{K^\alpha\bc \mathbb{E}\left[L_1-L_{K^{1-\alpha}+1}\right]}{\eta}+\frac{K(2H^4\iota +4H^4\tilde{b}^2 +\epsilon^2)}{\eta}\leq \frac{K(2H^4\iota +4H^4\tilde{b}^2 +\epsilon^2)}{\eta}, \label{eq:ldrift}
	\end{align}
	where the last inequality holds because $L_1=0$ and $L_T\geq 0$ for all $T.$ 
	Now combining Lemma \ref{le:qk-qpi-relation} and inequality \eqref{eq:ldrift}, we conclude that 
	\begin{align*}
		\eqref{step(i)-c}\leq \frac{K(2H^4\iota +4H^4\tilde{b}^2 +\epsilon^2)}{\eta}+	\frac{(\eta+K^{1-\alpha})H^2B^c }{\eta K}.
	\end{align*}
	Further combining inequality above we can obtain for $K \geq  \left(\frac{16\sqrt{SAH^6\iota^3}{B}^{1/3}}{\delta} \right)^5,$
	\begin{align*}
		\text{Regret} (K)&  \leq \frac{KH\epsilon}{\delta} +  H^2SA K^{1-\alpha}{B}^c  +\frac{2(H^3\sqrt{\iota}+2H^4\tilde{b})K}{\chi} +  \sqrt{H^4SA\iota K^{2-\alpha} (\chi+1){B}^c   } + 2\tilde{b}H^2K \\
		&+\frac{K(2H^4\iota +4H^4\tilde{b}^2 +\epsilon^2)}{\eta}+	\frac{(\eta+K^{1-\alpha})H^2B^c }{\eta K}.
	\end{align*}
	We conclude that under our choices of $\iota=128\log(\sqrt{2SAH}K), \epsilon= \frac{8\sqrt{SAH^6\iota^3}B^{1/3}}{K^{0.2}} $ and $\alpha=0.6, \eta=K^{\frac{1}{5}}{B}^{\frac{1}{3}},\chi=K^{\frac{1}{5}}, c=\frac{2}{3},$ and $K^{1-\alpha}B^c\tilde{b}\leq B,$ 
	$$\text{Regret} (K)=\tilde{\cal O}(H^4S^\frac{1}{2}A^{\frac{1}{2}}{B}^{\frac{1}{3}} K^{\frac{4}{5}}).$$
	\subsection{Constraint Violation}
	Again, we use $Z_T$ to denote the value of virtual-Queue in frame $T.$ According to the virtual-Queue update, we have 
	\begin{align*}
		Z_{T+1} = \left(  Z_T   + \rho + \epsilon -\frac{\bar{C}_T\bc}{K^\alpha}\right)^+ 
		\geq   Z_T   + \rho + \epsilon -\frac{\bar{C}_T\bc}{K^\alpha},
	\end{align*}
	which implies that 
	\begin{align*}
		\sum_{k=(T-1)K^\alpha/\bc +1}^{TK^\alpha/\bc}\left(-C_{k,1}^{\pi_k} (x_{k,1},a_{k,1}) +\rho \right) 
		\leq \frac{K^\alpha}{\bc}\left(Z_{T+1} - Z_T \right)+ \sum_{k=(T-1)K^\alpha/\bc+1}^{TK^\alpha/\bc} \left(\left\{C_{k,1}- C_{k,1}^{\pi_k} \right\} (x_{k,1},a_{k,1}) - \epsilon \right).  
	\end{align*}
	Summing the inequality above over all frames and taking expectation on both sides, we obtain the following upper bound on the constraint violation: 
	\begin{align}
		\mathbb{E} \left[\sum^{K}_{k=1}\rho-C_{k,1}^{\pi_k} (x_{k,1},a_{k,1})     \right]
		\leq  -K\epsilon + \frac{K^\alpha}{\bc} \mathbb{E}\left[ Z_{K^{1-\alpha}\bc+1} \right]+\mathbb{E}\left[\sum_{k=1}^K\left\{C_{k,1}- C_{k,1}^{\pi_k}\right\} (x_{k,1},a_{k,1})\right],\label{ap:violation}
	\end{align} 
	where we used the fact $Z_1=0.$ 
	
	In Lemma \ref{le:qk-qpi-bound}, we established an upper bound on the estimation error of $C_{k,1}:$
	\begin{align}
		&\mathbb{E}\left[\sum_{k=1}^K\left\{C_{k,1}-C_{1}^{\pi_k}\right\} (x_{k,1},a_{k,1})\right] \nonumber\\
		\leq& H^2SA K^{1-\alpha}{B}^c  +\frac{2(H^3\sqrt{\iota}+2H^4\tilde{b})K}{\chi} +  \sqrt{H^4SA\iota K^{2-\alpha} (\chi+1){B}^c   } + 2\tilde{b}H^2K . \label{eq:C-error_app}
	\end{align} 
	In Lemma \ref{le:zk-bound}, based on a Lyapunov drift analysis of this moment generating function and Jensen's inequality, we establish the following upper bound on $Z_T$ that holds for any $1\leq T\leq K^{1-\alpha}\bc+1$  
	\begin{align}
		\mathbb{E}[ Z_{T}]  \leq & \frac{100(H^4\iota+\tilde{b}^2H^2)}{\delta}\log \left( \frac{16(H^2\sqrt{\iota}+\tilde{b}H^2)}{\delta} \right) +\frac{4H^2B^c}{K\delta} + \frac{4H^2B^c}{\eta\delta K^\alpha}+ \frac{4\eta(\sqrt{H^2\iota}+2H^2\tilde{b}) }{\delta}. \label{ap:Z_T}
	\end{align} 
	
	Substituting the results from Lemmas \ref{le:qk-qpi-bound}  and \eqref{ap:Z_T} into \eqref{ap:violation}, under assumption $K \geq  \left(\frac{16\sqrt{SAH^6\iota^3}{B}^{1/3}}{\delta} \right)^5,$ which guarantees $\epsilon\leq \frac{\delta}{2}.$ Then by using the choice that $\epsilon = \frac{8\sqrt{SAH^6\iota^3}{B}^{1/3}}{ K^{0.2}},$ we can easily verify that
	\begin{align*}
		&\hbox{Violation}(K)  \leq \frac{100(H^4\iota+\tilde{b}^2H^2)K^{0.6} }{\delta{B}^{2/3}}\log{\frac{16(H^2\sqrt{\iota}+\tilde{b}H^2)}{\delta}} +\frac{4(H^2\sqrt{\iota}+2H^2\tilde{b})}{\delta{B}^{1/3} }K^{0.8}- 5\sqrt{SAH^6\iota^3}K^{0.8}{B}^{\frac{1}{3}}. 
	\end{align*}
	If further we have $K\geq e^{\frac{1}{\delta}},$ we can obtain 
	$$\hbox{Violation}(K) \leq \frac{100(H^4\iota+\tilde{b}^2H^2)K^{0.6} }{\delta{B}^{2/3}}\log{\frac{16(H^2\sqrt{\iota}+H^2\tilde{b})}{\delta}} - \sqrt{SAH^6\iota^3}K^{0.8}{B}^{\frac{1}{3}} = 0.$$
	
	
	
	\section{PROOF OF THEOREM \ref{the:double-triple-q}}\label{ap:tirple-q-double}
	Let $\hat{B}$ be the optimal candidate value in $\mathcal{J}$ that leads to the lowest regret while achieving zero constraint violation. Let $R_i(B_i)$ be the expected cumulative reward received in epoch $i$ with the estimated budget $B_i.$ Then the regret can be decomposed into:
	\begin{align*}
		\text{Regret} (K) = & \mathbb E\left[\sum_{k=1}^K\left(V_{k,1}^{\pi_k^*}(x_{k,1})-V_{k,1}^{\pi_k}(x_{k,1})\right)\right] \\
		= & \mathbb E\left[\sum_{k=1}^K V_{k,1}^{\pi_k^*}(x_{k,1})- \sum_{i=1}^{K/W} R_i(\hat{B}) \right]  +   \mathbb E\left[\sum_{i=1}^{K/W} R_i(\hat{B})-\sum_{i=1}^{K/W} R_i({B_i}) \right].
	\end{align*}
	The first term is the regret of using the optimal candidate $\hat{B}$ from $\mathcal{J};$ the second term is the difference between using $\hat{B}$ and $B_i$ which is selected by Exp3 algorithm. Applying the analysis of the Exp3 algorithm, we know that by using Lemma \ref{le:non-bandit-c} for any choice of $\hat{B},$ the second term is upper bounded:
	\begin{align*}
		\mathbb E\left[\left(\sum_{i=1}^{K/W} R_i(\hat{B})-\sum_{i=1}^{K/W} R_i({B_i}) \right)\right] \leq \tilde{\mathcal{O}}( H\sqrt{KW}+HK^{1-\lambda} ).
	\end{align*}
	For the first term, according to the regret bound analysis of Algorithm \ref{alg:triple-Q}, we have that 
	\begin{align}
		E\left[\sum_{k=1}^K\left(V_{k,1}^{\pi_k^*}(x_{k,1})- \sum_{i=1}^{K/W} R_i(\hat{B}) \right)\right]  \leq \tilde{\mathcal{O}}\left(H^4 S^{\frac{1}{2}}A^{\frac{1}{2}} K^{1-0.2\zeta} \left( \hat{B} \right)^{\frac{1}{3}}  \right)\label{eq:unknown-regret-1}.
	\end{align}
	We need to consider whether $B$ is covered in the range of $\mathcal{J}$ to further obtain the bound of \eqref{eq:unknown-regret-1}. First we assume that  $K = \Omega  \left( \left( \frac{40\sqrt{SAH^6\iota^3}{B}^{1/3}}{\delta} \right)^9\right) ,$ which implies $B \leq \frac{K^{1/3}W}{\Delta^{3/2}W}.$ Then we need to consider the following two cases:
	\begin{itemize}
		\item The first case is that $B$ is covered in the range of $\mathcal{J}.$ Note that two consecutive values in $\mathcal{J}$ only differ from each other by a factor of $W^{\frac{1}{J}},$ then there exists a value $\hat{B}\in\mathcal{J} $ such that $B\leq \hat{B}\leq W^{1/J}B.$ Therefore we can bound the RHS of \eqref{eq:unknown-regret-1} by 
		\begin{align*}
			\tilde{\mathcal{O}}\left(H^4 S^{\frac{1}{2}}A^{\frac{1}{2}} K^{1-0.2\zeta} \left( \hat{B} \right)^{\frac{1}{3}} \right) 
			\leq & \tilde{\mathcal{O}}\left(H^4 S^{\frac{1}{2}}A^{\frac{1}{2}} K^{1-0.2\zeta} \left({B}W^{{1}/{J}} \right)^{\frac{1}{3}}   \right) \\
			\leq & \tilde{\cal O}\left(H^4S^\frac{1}{2}A^{\frac{1}{2}}{B}^{\frac{1}{3}} K^{1-0.2\zeta}\right),
		\end{align*}
		where the last step comes from the fact $W^{1/J}=W^{1/(\ln W+1)}\leq e.$
		\item The second case is that $B$ is not covered in the range of $\mathcal{J},$ i.e., ${B} < \frac{K^{1/3}}{\Delta^{3/2}W}.$ The optimal candidate in $\mathcal{J}$ is the smallest such that one $\hat{B} = \frac{K^{1/3}}{\Delta^{3/2}W},$ then we can bound the RHS of \eqref{eq:unknown-regret-1} by 
		\begin{align*}
			\tilde{\mathcal{O}}\left(H^4 S^{\frac{1}{2}}A^{\frac{1}{2}} K^{1-0.2\zeta} \left( \hat{B} \right)^{\frac{1}{3}} \right) \leq &  \tilde{\mathcal{O}}\left(H^4 S^{\frac{1}{2}}A^{\frac{1}{2}} K^{1-0.2\zeta} \left(\frac{K^{1/3}}{\Delta^{3/2}W}  \right)^{\frac{1}{3}}\right) \\
			\leq &  \tilde{\mathcal{O}} \left(HK^{10/9-0.2\zeta}\frac{1}{K^{\zeta/3}} \right).
		\end{align*}
	\end{itemize} 
	
	For the constraint violation, according to Lemma \ref{le:non-bandit-c} we have 
	\begin{align*}
		& \mathbb{E} \left[\sum^{K}_{k=1}\rho-C_{k,1}^{\pi_k} (x_{k,1},a_{k,1}) \right] =  \mathbb{E} \left[\sum^{K/W}_{i=1} \left(W\rho - G_i(B_i)  \right) \right] \\
		=& \mathbb{E} \left[\sum^{K/W}_{i=1} \left(W\rho  - G_i(\hat{B}) \right) \right] + \mathbb{E}\left[\sum^{K/W}_{i=1} \left( G_i(\hat{B}) - G_i(B_i)  \right) \right] 
	\end{align*}
	For the first term, according to Theorem \ref{the:main}, by selecting $\epsilon$ as $\epsilon = \frac{20\sqrt{SAH^6\iota^3}{\hat{B}}^{1/3}}{ K^{0.2\zeta}}.$  we have 
	\begin{align}
		\mathbb{E} \left[\sum^{K/W}_{i=1} \left(W\rho  - G_i(\hat{B}) \right) \right] \leq \frac{100(H^4\iota+\tilde{b}^2H^2)K^{0.6\zeta} }{\delta{\hat{B}}^{2/3}}\log{\frac{16(H^2\sqrt{\iota}+H^2\tilde{b})}{\delta}} -13 \sqrt{SAH^6\iota^3}K^{1-0.2\zeta}{\hat{B}}^{\frac{1}{3}}.
	\end{align}
	For the second term, we are able to obtain an upper bound by using Lemma \ref{le:non-bandit-c} 
	\begin{align}
		&  \mathbb{E} \left[\sum_{i=1}^{K/W} (G_i(\hat{B}) - G_i(B_i))   \right]  \leq 12 K^\lambda H \sqrt{K^{1+\zeta}(J+1)\ln(J+1)}  
	\end{align}
	By balancing the terms $\tilde{O}(K^{1-0.2\zeta}), \tilde{O}(K^{\lambda + (1+\zeta)/2} ) $ and $K^{1-\lambda},$ the best selection are $\zeta=5/9$ and $\lambda=1/9.$ Therefore we further obtain when $K\geq e^{\frac{1}{\delta}},$
	\begin{align}
		\hbox{Violation}(K) \leq \frac{100(H^4\iota+\tilde{b}^2H^2)K^{1/3} }{\delta{\hat{B}}^{2/3}}\log{\frac{16(H^2\sqrt{\iota}+H^2\tilde{b})}{\delta}} - \sqrt{SAH^6\iota^3}K^{8/9}{\hat{B}}^{\frac{1}{3}} \leq 0.
	\end{align}
	We finish the proof of Theorem \ref{the:double-triple-q}.

	\begin{algorithm}[ht]
			\SetAlgoLined
			\textbf{Initialization: } $Y_1=0$, $w_{j,h}=0$, $\alpha=\dfrac{\log(|\cA|)K}{2(1+\xi+H)}$, $\eta=\xi/\sqrt{KH^2}$, $\beta=dH\sqrt{\log(2\log|\cA|dT/p)}$, $D=B^{-1/2}H^{-1/2}d^{1/2}K^{1/2}$.
			
			\For {frames $\cE=1,\ldots,K/D$}{
				\For {episodes $k=1,\ldots, D$}{
					Receive the initial state $x_1^k$.
					
					\For {step $h=H,H-1,\ldots, 1$}{
						$\Lambda^k_{h}\leftarrow \sum_{\tau=1}^{k-1}\phi(x_h^{\tau},a_h^{\tau})\phi(x_h^{\tau},a_h^{\tau})^T+\lambda \bI$\;
						$w^k_{r,h}\leftarrow (\Lambda^k_h)^{-1}[\sum_{\tau=1}^{k-1}\phi(x_h^{\tau},a_h^{\tau})[r_h(x_h^{\tau},a_h^{\tau})+V^k_{r,h+1}(x_{h+1}^{\tau})]]$ \;
						$w^k_{g,h}\leftarrow (\Lambda^k_{h})^{-1}[\sum_{\tau=1}^{k-1}\phi(x_h^{\tau},a_h^{\tau})[g_h(x_h^{\tau},a_h^{\tau})+V^k_{g,h+1}(x^{\tau}_{h+1})]]$ \;
						$Q^k_{r,h}(\cdot,\cdot)\leftarrow \min\{\langle w_{r,h}^k,\phi(\cdot,\cdot)\rangle+\beta(\phi(\cdot,\cdot)^T(\Lambda^k_{h})^{-1}\phi(\cdot,\cdot))^{1/2},H\}$ \;
						$Q^k_{g,h}(\cdot,\cdot)\leftarrow \min\{\langle w_{g,h}^k,\phi(\cdot,\cdot)\rangle+\beta(\phi(\cdot,\cdot)^T(\Lambda^k_{h})^{-1}\phi(\cdot,\cdot))^{1/2},H\}$ \;
						$\pi_{h,k}(a|\cdot)=\dfrac{\exp(\alpha(Q^k_{r,h}(\cdot,a)+Y_kQ^k_{g,h}(\cdot,a)))}{\sum_{a}\exp(\alpha(Q^k_{r,h}(\cdot,a)+Y_kQ^k_{g,h}(\cdot,a)))}$ \;
						$V^k_{r,h}(\cdot)=\sum_{a}\pi_{h,k}(a|\cdot)Q^k_{r,h}(\cdot,a)$  \;
						$V^k_{g,h}(\cdot)=\sum_{a}\pi_{h,k}(a|\cdot)Q^k_{g,h}(\cdot,a)$  \;
						
					}
					\For {step $h=1,\ldots,H$}{
						Compute $Q_{r,h}^k(x_h^k,a)$, $Q_{g,h}^k(x_h^k,a)$, $\pi(a|x_{h}^k)$ for all $a$ \;
						Take action $a_h^k\sim \pi_{h,k}(\cdot|x_h^k)$ and observe $x_{h+1}^k$ \;
					}
					$Y_{k+1}=\max\{\min\{Y_k+\eta(\rho-V^k_{g,1}(x_1)),\xi\},0\}$
				}
			}
		\caption{Model Free Primal-Dual Algorithm for Linear Function Approximation for Non-stationary Setting} 	\label{algo:model_free}
	\end{algorithm}

	
	\section{DETAILS PROOF OF THEOREM \ref{thm:linear_cmdp}}\label{proof:linear_cmdp}
	\textbf{Notations}: We describe the specific notations we have used in this section. With slight abuse of notations, in this section, we denote $V^{\pi}_{k,r,h}$ as the value function at step $h$ for policy $\pi$ at episode $k$. We denote $V^{\pi}_{k,g,h}$ as the utility value function at step $h$ of episode $k$. We denote $Q^{\pi}_{k,j,h}$, $j=r,g$ as the state-action value function at step $j$ for policy $\pi$.
	
	Throughout this section, we denote $Q_{r,h}^k, Q_{g,h}^k,w_{r,h}^k, w_{g,h}^k, \Lambda_h^k$ as the $Q$-value and the parameter values estimated at the episode $k$. $V_{j,h}^k(\cdot)=\langle \pi_{h,k}(\cdot|\cdot),Q_{j,h}^{k}(\cdot,\cdot)\rangle_{\cA}$. $\pi_{h,k}(\cdot|x)$ is the soft-max policy based on the composite $Q$-function at the $k$-th episode as $Q_{r,h}^k+Y_kQ_{g,h}^k$. To simplify the presentation, we denote $\phi_h^k=\phi(x_h^k,a_h^k)$.

	\subsection{Outline of Proof of Theorem~\ref{thm:linear_cmdp}}
	\textbf{Step 1:} The key to prove both the dynamic regret and violation is to show the following
	\begin{lemma}\label{lem:dual_variable}
		For any $Y\in [0,\xi]$, 
		\begin{align}
			&\sum_{k=1}^K (V_{k,r,1}^{\pi^*_k}(x_1) - V_{k,r,1}^{\pi_k}(x_1)) + Y \sum_{k=1}^K (\rho-V_{k,g,1}^{\pi_k}(x_1)) 
			\le \frac{1}{2\eta} Y^2 + \frac{\eta}{2}H^2K + \nonumber\\& \underbrace{\sum_{k=1}^K \left(V_{k,r,1}^{\pi^*_k}(x_1) + Y_kV_{k,g,1}^{\pi^*_k}(x_1) \right) - \left(V_{r,1}^k(x_1) + Y_k V_{g,1}^k(x_1)\right)}_{\mathcal{T}_1}
			+\nonumber\\ &  \underbrace{\sum_{k=1}^K \left(V_{r,1}^k(x_1) - V_{k,r,1}^{\pi_k}(x_1)\right) + Y \sum_{k=1}^K \left(V_{g,1}^k(x_1) - V_{k,g,1}^{\pi_k}(x_1)\right)}_{\mathcal{T}_2}
		\end{align}
	\end{lemma}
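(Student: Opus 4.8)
The plan is to combine a projected–gradient (dual ascent) drift inequality for the running multiplier $Y_k$ with a purely algebraic regret decomposition, and then close the argument using only the Slater feasibility of $\pi_k^*$ together with $Y_k\ge 0$. No concentration or optimism arguments are needed for this particular lemma; those enter only later, when $\mathcal{T}_1$ and $\mathcal{T}_2$ are bounded separately.

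\emph{Step 1 (dual drift).} The update $Y_{k+1}=\max\{\min\{Y_k+\eta(\rho-V_{g,1}^k(x_1)),\xi\},0\}$ is the Euclidean projection of $Y_k+\eta(\rho-V_{g,1}^k(x_1))$ onto $[0,\xi]$. Since the comparator $Y$ lies in $[0,\xi]$, non-expansiveness of this projection gives $(Y_{k+1}-Y)^2\le (Y_k-Y)^2+2\eta(Y_k-Y)(\rho-V_{g,1}^k(x_1))+\eta^2(\rho-V_{g,1}^k(x_1))^2$. I would rearrange, sum over $k=1,\dots,K$ telescoping the squared terms, use $Y_1=0$ so that $(Y_1-Y)^2=Y^2$, drop the nonnegative term $(Y_{K+1}-Y)^2$, and bound $(\rho-V_{g,1}^k(x_1))^2\le H^2$ (from $\rho\in[0,H]$ and the clipping of $Q_{g,h}^k$ in Algorithm~\ref{algo:model_free}), obtaining $\sum_{k=1}^K (Y_k-Y)\big(V_{g,1}^k(x_1)-\rho\big)\le \tfrac{1}{2\eta}Y^2+\tfrac{\eta}{2}H^2K$.

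\emph{Step 2 (decomposition and closing).} Inside the sum defining the left-hand side of Lemma~\ref{lem:dual_variable}, I would add and subtract the composite optimistic value $V_{r,1}^k(x_1)+Y_kV_{g,1}^k(x_1)$ and the quantity $Y_kV_{k,g,1}^{\pi_k^*}(x_1)$. After grouping the reward terms and the $Y$-weighted utility terms to form $\mathcal{T}_1$ (which carries the \emph{running} multiplier $Y_k$) and $\mathcal{T}_2$ (which carries the \emph{fixed} comparator $Y$), the leftover is exactly $\mathcal{R}:=\sum_{k=1}^K\big[Y_k\big(V_{g,1}^k(x_1)-V_{k,g,1}^{\pi_k^*}(x_1)\big)+Y\big(\rho-V_{g,1}^k(x_1)\big)\big]$. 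Since $\pi_k^*$ is feasible for the episode-$k$ problem, $V_{k,g,1}^{\pi_k^*}(x_1)\ge\rho$, and $Y_k\ge0$, so $Y_k\big(V_{g,1}^k(x_1)-V_{k,g,1}^{\pi_k^*}(x_1)\big)\le Y_k\big(V_{g,1}^k(x_1)-\rho\big)$; hence $\mathcal{R}\le \sum_{k=1}^K (Y_k-Y)\big(V_{g,1}^k(x_1)-\rho\big)$, which by Step 1 is at most $\tfrac{1}{2\eta}Y^2+\tfrac{\eta}{2}H^2K$. Combining with the decomposition gives the stated bound.

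The main obstacle is organizational rather than conceptual: one must add and subtract precisely the right terms in Step 2 so that what remains matches $\mathcal{T}_1$ and $\mathcal{T}_2$ verbatim — in particular keeping straight which sums are weighted by the running $Y_k$ versus the fixed $Y$, and checking that $\rho$ reappears with the sign needed for Step 1 to apply. A minor point to verify carefully is the boundedness $|\rho-V_{g,1}^k(x_1)|\le H$ used to produce exactly the $\tfrac{\eta}{2}H^2K$ term.
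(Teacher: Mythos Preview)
Your proposal is correct and follows essentially the same approach as the paper: the paper first isolates the dual drift inequality $\sum_k(Y-Y_k)(\rho-V_{g,1}^k(x_1))\le \tfrac{Y^2}{2\eta}+\tfrac{\eta H^2K}{2}$ as a separate lemma (via the same projection non-expansiveness and telescoping you describe), and then obtains Lemma~\ref{lem:dual_variable} by the identical decomposition and the feasibility bound $Y_k\rho\le Y_kV_{k,g,1}^{\pi_k^*}(x_1)$. The only difference is presentational---the paper manipulates $Y\sum_k(\rho-V_{k,g,1}^{\pi_k})$ directly rather than first writing the full identity $\text{LHS}=\mathcal{T}_1+\mathcal{T}_2+\mathcal{R}$---but the steps and ingredients are the same.
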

	Note that when $Y=0$, we recover the dynamic regret. The proof is in Appendix~\ref{proof:lem_dualvariable}.
	
	\textbf{Step-2}: In order to bound $\cT_1$, and $\cT_2$, we use the following result
	\begin{lemma}\label{lem:t_1andt_2}
		With probability $1-2p$,
		\begin{align}
			\cT_1\leq H^3(1+2/\delta)BD^{3/2}\sqrt{d}+\dfrac{KH\log(|\cA|)}{\alpha}\nonumber\\
			\cT_2\leq(1+Y)(\mathcal{O}(\sqrt{H^4d^3K^2\iota^2/D})+\sqrt{d}D^{3/2}BH^2) 
		\end{align}
	\end{lemma}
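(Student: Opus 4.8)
\textbf{Proof plan for Lemma~\ref{lem:t_1andt_2}.} The plan is to exploit the restart structure of Algorithm~\ref{algo:model_free}: on each frame $\cE\in\{1,\dots,K/D\}$ we run the stationary linear-CMDP subroutine of \cite{GhoZhoShr_22} from scratch (the non-stationary restart being in the spirit of \cite{ZhoCheVar_20}), so its per-frame analysis applies verbatim up to an extra error caused by the model drift \emph{inside} the frame. Write $B^{(\cE)}$ for the local variation budget (accumulated in $\theta_{k,r,h},\theta_{k,g,h},\mu_{k,h}$) within frame $\cE$, so that $\sum_{\cE}B^{(\cE)}\le B$, and recall that the dual iterate is truncated at $\xi=2H/\delta$, which is the source of the $(1+2/\delta)$ and $(1+Y)$ factors.

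\textbf{Bounding $\cT_1$.} I would first establish an optimism statement analogous to Lemma~\ref{le:qk-qpi-relation}: with probability $1-p$, for every $k$ the composite value of acting \emph{greedily} w.r.t.\ $Q_{r,h}^k+Y_kQ_{g,h}^k$ dominates $V_{k,r,1}^{\pi_k^*}(x_1)+Y_kV_{k,g,1}^{\pi_k^*}(x_1)$ up to the within-frame drift. This is the usual backward induction: the bonus $\beta(\phi^\top(\Lambda_h^k)^{-1}\phi)^{1/2}$ must absorb (a) the self-normalized fluctuation of the ridge estimator, controlled by the bound of \cite{AbbPalSze_11} as in \cite{JinYanZha_20}, and (b) the bias from the samples $\tau\le k-1$ in $\Lambda_h^k$ coming from a model $\ne$ the current one. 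For (b), $\big\|\sum_{\tau<k}\phi_h^\tau\langle\phi_h^\tau,\theta_k-\theta_\tau\rangle\big\|_{(\Lambda_h^k)^{-1}}\le\sum_{\tau<k}\|\phi_h^\tau\|_{(\Lambda_h^k)^{-1}}\|\theta_k-\theta_\tau\|\le\sqrt{k\,d\log(\cdot)}\,B^{(\cE)}$ by Cauchy--Schwarz and the elliptical-potential lemma; summing $\sqrt k$ over the $D$ episodes of a frame gives $\sum_{k=1}^D\sqrt k=\Theta(D^{3/2})$, summing over frames gives $\sqrt d\,D^{3/2}B$, and one factor of $H$ for the Bellman recursion plus one for the value magnitude (times the $(1+2/\delta)$ from $Y_k$) yield the $H^3(1+2/\delta)\sqrt d\,BD^{3/2}$ term. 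The residual gap between the greedy policy and the soft-max policy actually played is handled by the one-step mirror-descent/NPG argument of \cite{GhoZhoShr_22}: with fixed temperature $\alpha$ and $|Q|\le H$, $\sum_{k,h}\bigl(\max_a(Q_{r,h}^k+Y_kQ_{g,h}^k)(x,a)-\langle\pi_{h,k}(\cdot|x),(Q_{r,h}^k+Y_kQ_{g,h}^k)(x,\cdot)\rangle_{\cA}\bigr)\le\tfrac{KH\log|\cA|}{\alpha}$, the second term of the bound.

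\textbf{Bounding $\cT_2$.} This is the estimation-error term. I would expand $V_{r,1}^k(x_1)-V_{k,r,1}^{\pi_k}(x_1)$ (and its utility analogue) via the value-difference recursion into a telescoping sum of per-step Bellman errors $Q_{r,h}^k-r_{k,h}-\mathbb P_{k,h}V_{r,h+1}^k$ plus a martingale-difference sequence along the executed trajectory. The martingale is bounded by Azuma--Hoeffding; each Bellman error splits into the bonus, the self-normalized fluctuation (again \cite{AbbPalSze_11}), and the within-frame drift bias treated exactly as in (b) above. Summing the bonus over one frame gives, by the elliptical-potential lemma, $\mathcal O(\sqrt{H^4d^3D\,\iota^2})$, hence $\mathcal O(\sqrt{H^4d^3K^2\iota^2/D})$ over the $K/D$ frames, while the drift bias aggregates to $\sqrt d\,D^{3/2}BH^2$; the prefactor $(1+Y)$ tracks the range of the composite value function.

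\textbf{Main obstacle.} The routine part is the concentration/elliptical-potential bookkeeping, essentially inherited from \cite{GhoZhoShr_22} and \cite{JinYanZha_20}. The delicate new step is tracking how the within-frame non-stationarity propagates: one must keep the drift bias \emph{out} of the optimism bonus in $\cT_1$ so that optimism still holds exactly (not merely up to the bonus), carry the $\sqrt k$ factor through the $H$-step recursion without it inflating to $D^2$, and verify that with $D=B^{-1/2}d^{1/2}K^{1/2}H^{-1/2}$ the drift contribution $H^3\sqrt d\,BD^{3/2}$ matches the stationary term in order, so that neither dominates the final $\tilde{\mathcal O}(d^{5/4}K^{3/4}H^{9/4}B^{1/4})$ bound.
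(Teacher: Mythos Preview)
Your proposal is correct and mirrors the paper's proof closely: the paper also (i) establishes approximate optimism of the composite $Q$-function via backward induction (Lemmas~\ref{lem:q_diff}, \ref{lem:q_combined_diff}, \ref{lem:t1}) with the within-frame drift carried as a separate additive $B^{(\cE)}\sqrt{dD}$ term, (ii) bounds the greedy-vs.-softmax gap by $\log|\cA|/\alpha$ per step (Lemma~\ref{lem:close_optimal}, via a direct log-sum-exp inequality rather than an NPG regret bound, but to the same effect), and (iii) expands $\cT_2$ into a martingale handled by Azuma--Hoeffding, a bonus sum handled by the elliptical-potential lemma, and the drift term (Lemma~\ref{lm:recursion}). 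The only cosmetic differences are that the paper bounds the per-episode drift by $\sqrt{dD}$ uniformly (since $k\le D$) rather than summing $\sqrt{k}$, and uses Lemma~C.4 of \cite{JinYanZha_20} directly (no extra $\log$ factor) for the bias term---both yielding the same $D^{3/2}$ scaling.
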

	The proof is in Appendix~\ref{proof:lem_t1t2}.
	
	\textbf{Step-3}: The final result is obtained by combining all the pieces. 
	
	\textit{Proof of Theorem~\ref{thm:linear_cmdp}}: 
	
	Note from Lemma~\ref{lem:dual_variable} we have
	\begin{align}
		\sum_{k=1}^K(V_{k,r,1}^{\pi_k^*}(x_1)-V_{k,r,1}^{\pi_k}(x_1))+Y(\rho-V_{k,g,1}^{\pi_k}(x_1))\leq \dfrac{Y^2}{2\eta}+\dfrac{\eta KH^2}{2}+\cT_1+\cT_2\nonumber
	\end{align}
	From Lemma~\ref{lem:t_1andt_2}, we obtain
	\begin{align}
		& \sum_{k=1}^K(V_{k,r,1}^{\pi_k^*}(x_1)-V_{k,r,1}^{\pi_k}(x_1))+Y(\rho-V_{k,g,1}^{\pi_k}(x_1))\leq  \dfrac{Y^2}{2\eta}+\dfrac{\eta KH^2}{2}+\nonumber\\ & \dfrac{HK\log(|\cA|)}{\alpha}+H^3(1+2/\delta)BD^{3/2}\sqrt{d}+(1+Y)(\mathcal{O}(\sqrt{H^4d^3K^2\iota^2/D})+\sqrt{d}D^{3/2}BH^2) 
	\end{align}
	Since $\eta=\dfrac{\xi}{\sqrt{KH^2}}$,$\alpha=\dfrac{\log(|\cA|)K}{2(1+\xi+H)}$, $D=B^{-1/2}H^{-1/2}d^{1/2}K^{1/2}$, we obtain
	\begin{align}\label{eq:composite_expression}
		& \sum_{k=1}^K(V_{k,r,1}^{\pi_k^*}(x_1)-V_{k,r,1}^{\pi_k}(x_1))+Y(b-V_{k,g,1}^{\pi_k}(x_1))\leq  \xi\sqrt{KH^2}\nonumber\\
		& +H2(1+\xi+H)+H^{9/4}(1+2/\delta)B^{1/4}K^{3/4}d^{5/4}+(Y+1)(\mathcal{O}(H^{9/4}d^{5/4}K^{3/4}B^{1/4}\iota^2)+H^{5/4}d^{5/4}K^{3/4}) 
	\end{align}
	Since the above expression is true for any $Y\in [0,\xi]$, thus, plugging $Y=0$, we obtain
	\begin{align}
		\mathrm{Regret}(K)\leq \mathcal{O}(H^{9/4}d^{5/4}K^{3/4}B^{1/4}\iota^2)+\mathcal{O}((1+1/\delta)H^{9/4}d^{5/4}K^{3/4}B^{1/4})\nonumber
	\end{align}
	For the constraint violation bound, we use Lemma~\ref{lem:copy}.
	Note that $\xi\geq 2\max_k\mu^{k,*}$. Thus, we replace $Y=\xi$ in (\ref{eq:composite_expression}).  Thus, from (\ref{eq:composite_expression}) and Lemma~\ref{lem:copy}, we obtain
	\begin{align}
		\sum_{k=1}^K(\rho-V_{k,g,1}^{\pi}(x_1))\leq \dfrac{2(1+\xi)}{\xi}(\mathcal{O}(H^{9/4}d^{5/4}K^{3/4}B^{1/4}\iota^2)+\mathcal{O}(H^{5/4}d^{5/4}K^{3/4}B^{1/4}))
	\end{align}
	Hence, the result follows.\qed
	
	\subsection{Proof of Lemma~\ref{lem:dual_variable}}\label{proof:lem_dualvariable}
	We first state and prove the following result which is similar to the one proved in \cite{GhoZhoShr_22}. 
	\begin{lemma}\label{lem:violation}
		For $Y\in [0,\xi]$,
		\begin{align}
			\sum_{k=1}^{K}(Y-Y_k)(\rho-V_{g,1}^k(x_1))\leq \dfrac{Y^2}{2\eta}+\dfrac{\eta H^2K}{2}
		\end{align}
	\end{lemma}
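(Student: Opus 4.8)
The plan is a textbook projected-gradient / dual-drift argument. First I would observe that the dual recursion in Algorithm~\ref{algo:model_free}, $Y_{k+1}=\max\{\min\{Y_k+\eta(\rho-V_{g,1}^k(x_1)),\xi\},0\}$, is precisely the Euclidean projection $Y_{k+1}=\Pi_{[0,\xi]}\!\big(Y_k+\eta(\rho-V_{g,1}^k(x_1))\big)$ onto the convex set $[0,\xi]$. Fixing an arbitrary $Y\in[0,\xi]$ and using the non-expansiveness of projection onto a convex set, I would write $(Y_{k+1}-Y)^2\le\big(Y_k+\eta(\rho-V_{g,1}^k(x_1))-Y\big)^2$.

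Next I would expand the right-hand side to get
\[
(Y_{k+1}-Y)^2-(Y_k-Y)^2\le 2\eta(\rho-V_{g,1}^k(x_1))(Y_k-Y)+\eta^2(\rho-V_{g,1}^k(x_1))^2,
\]
and rearrange this to isolate the quantity of interest:
\[
(Y-Y_k)(\rho-V_{g,1}^k(x_1))\le \frac{(Y_k-Y)^2-(Y_{k+1}-Y)^2}{2\eta}+\frac{\eta}{2}(\rho-V_{g,1}^k(x_1))^2.
\]
Then I would sum this inequality over $k=1,\dots,K$. The first term on the right telescopes to $\tfrac{1}{2\eta}\big((Y_1-Y)^2-(Y_{K+1}-Y)^2\big)$; since $Y_1=0$ by the initialization this is at most $\tfrac{Y^2}{2\eta}$ (dropping the nonpositive $-(Y_{K+1}-Y)^2$). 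For the second term I would use $\rho\in[0,H]$ and $0\le V_{g,1}^k(x_1)\le H$ — the latter because each $Q_{g,h}^k$ is clipped at $H$ in Algorithm~\ref{algo:model_free} and $V_{g,1}^k$ is a convex combination (soft-max average) of these values — so that $(\rho-V_{g,1}^k(x_1))^2\le H^2$ and $\tfrac{\eta}{2}\sum_{k=1}^K(\rho-V_{g,1}^k(x_1))^2\le \tfrac{\eta H^2K}{2}$. Combining the two bounds yields exactly $\sum_{k=1}^K(Y-Y_k)(\rho-V_{g,1}^k(x_1))\le \tfrac{Y^2}{2\eta}+\tfrac{\eta H^2K}{2}$.

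There is no genuine obstacle in this lemma; it is the standard regret bound for online projected gradient ascent on a one-dimensional dual variable. The only points that require care are the sign bookkeeping when rearranging the expanded square, and confirming the two structural facts used at the end, namely $Y_1=0$ from the initialization and the uniform bound $|\rho-V_{g,1}^k(x_1)|\le H$ coming from the $H$-clipping of the utility $Q$-estimates.
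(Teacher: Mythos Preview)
Your proposal is correct and follows essentially the same argument as the paper: both use the non-expansiveness of the projection onto $[0,\xi]$, expand the square, telescope the sum using $Y_1=0$, and bound $(\rho-V_{g,1}^k(x_1))^2\le H^2$. If anything, your write-up is slightly more careful in justifying the $H$ bound on $V_{g,1}^k$ via the clipping of $Q_{g,h}^k$.
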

	\begin{proof}
		\begin{align}
			& |Y_{k+1}-Y|^2=|Proj_{[0,\xi]}(Y_k+\eta(\rho-V_{g,1}^k(x_1)))-Proj_{[0,\xi]}(Y)|^2\nonumber\\
			& \leq (Y_k+\eta(\rho-V_{g,1}^k(x_1)))-Y)^2\nonumber\\
			& \leq (Y_k-Y)^2+\eta^2H^2+2\eta Y_k(\rho-V_{g,1}^k(x_1))
		\end{align}
		Summing over $k$, we obtain
		\begin{align}
			& 0\leq |Y_{K+1}-Y|^2\leq
			|Y_1-Y|^2+2\eta\sum_{k=1}^{K}(\rho-V_{g,1}^{k}(x_1))(Y_k-Y)+\eta^2H^2K\nonumber\\
			& \sum_{k=1}^{K}(Y-Y_k)(\rho-V_{g,1}^{k}(x_1))\leq
			\dfrac{|Y_1-Y|^2}{2\eta}+\dfrac{\eta H^2K}{2}
		\end{align}
		Since $Y_1=0$, we have the result. 
	\end{proof}
	
	Now, we prove Lemma~\ref{lem:dual_variable}.
	\begin{proof}
		Note that
		\begin{align}
			Y \sum_{k=1}^K (\rho-V_{k,g,1}^{\pi_k}(x_1)) & = \sum_{k}(Y-Y_k)(\rho-V_{g,1}^{k}(x_1))+Y_k(\rho-V_{g,1}^k)+Y(V_{g,1}^k(x_1)-V_{k,g,1}^{\pi_k}(x_1))\nonumber\\
			& \leq \dfrac{1}{2\eta}Y^2+\dfrac{\eta}{2}H^2K+\sum_{k=1}^K(Y_k\rho-Y_kV_{g,1}^k(x_1))+Y(V_{g,1}^k(x_1)-V_{g,1}^{\pi_k}(x_1))\nonumber\\
			& \leq \dfrac{1}{2\eta}Y^2+\dfrac{\eta}{2}H^2K+\sum_{k=1}^K(Y_kV_{k,g,1}^{\pi^*_k}(x_1)-Y_kV_{g,1}^k(x_1))+\sum_{k=1}^KY(V_{g,1}^k(x_1)-V_{k,g,1}^{\pi_k}(x_1))\nonumber
		\end{align}
		where the first inequality follows from Lemma~\ref{lem:violation}, and the second inequality follows from the fact that $V_{k,g,1}^{\pi^*_k}(x_1)\geq \rho$. Hence, the result simply follows from the above inequality.
	\end{proof}
	
	\subsection{Proof of Lemma~\ref{lem:t_1andt_2}}\label{proof:lem_t1t2}
	We now move on to bound $\cT_1$ and $\cT_2$. First, we state and prove Lemmas~\ref{lem:phi}, \ref{lem:q_diff}, \ref{lem:q_combined_diff}, \ref{lem:close_optimal},\ref{lem:t1}, and \ref{lm:recursion}. 
	
	\begin{lemma}\label{lem:phi}
		There exists a constant $C_2$ such that for any fixed $p\in (0,1)$, if we let $E$ be the event that
		\begin{align}
			\|\sum_{\tau=1}^{k-1}\phi_{j,h}^{\tau}[V_{j,h+1}^{k}(x_{h+1}^{\tau})-\bP_{k,h}V_{j,h+1}^{k}(x_h^{\tau},a_h^{\tau})]\|_{(\Lambda_h^k)^{-1}}\leq C_2dH\sqrt{\chi}
		\end{align}
		for all $j\in \{r,g\}$, $\chi=\log[2(C_1+1)\log(|\cA|) dT/p]$, for some constant $C_2$, then $\Pr(E)=1-2p$.
	\end{lemma}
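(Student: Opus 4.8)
\textbf{Proof proposal for Lemma~\ref{lem:phi}.} The plan is to recognize this as a standard self-normalized concentration bound for vector-valued martingales, adapted to the non-stationary linear CMDP setting, and to invoke a uniform covering argument over the class of value functions that can appear as $V_{j,h+1}^k$. First I would fix $j\in\{r,g\}$, $h\in[H]$, and the current frame; condition on the filtration $\cF_{\tau}$ generated by the trajectory up to step $h$ of episode $\tau$; and observe that $\epsilon_\tau := V_{j,h+1}^k(x_{h+1}^\tau) - \bP_{k,h}V_{j,h+1}^k(x_h^\tau,a_h^\tau)$ is a martingale difference sequence bounded by $2H$ (since $0\le V_{j,h+1}^k\le H$), with $\phi_{j,h}^\tau$ being $\cF_\tau$-measurable. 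Then the self-normalized bound of Abbasi-Yadkori, P\'al, and Szepesv\'ari (\cite{AbbPalSze_11}, Theorem~1) gives, with probability at least $1-p'$,
\begin{align*}
\Bigl\|\sum_{\tau=1}^{k-1}\phi_{j,h}^\tau\epsilon_\tau\Bigr\|_{(\Lambda_h^k)^{-1}}^2 \le 4H^2\log\!\Bigl(\frac{\det(\Lambda_h^k)^{1/2}\det(\lambda\bI)^{-1/2}}{p'}\Bigr),
\end{align*}
and since $\|\phi\|_2\le 1$ and $\Lambda_h^k = \lambda\bI + \sum_\tau \phi\phi^\top$ with at most $D$ (or $k-1$) summands, we have $\det(\Lambda_h^k)\le (\lambda + (k-1)/d)^d$, so the right-hand side is $O(H^2 d\log(dT/p'))$.

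The subtlety — and the main obstacle — is that $V_{j,h+1}^k$ is \emph{not} a fixed function: it is data-dependent, built from $w_{j,h+1}^k$, $\Lambda_{h+1}^k$, the dual variable $Y_k$, and the soft-max policy, so the martingale argument above cannot be applied directly to it. The standard fix, which I would follow here (mirroring \cite{JinYanZha_20} and the stationary linear CMDP analysis of \cite{GhoZhoShr_22}), is to show that every such $V_{j,h+1}^k$ lies in a parametric function class
\begin{align*}
\cV = \Bigl\{\,x\mapsto \min\bigl\{\max_a \langle w, \phi(x,a)\rangle + \beta\sqrt{\phi(x,a)^\top \Sigma^{-1}\phi(x,a)}, H\bigr\}\ :\ \|w\|\le L,\ \Sigma\succeq\lambda\bI,\ \text{(softmax weights)}\Bigr\},
\end{align*}
and to build an $\varepsilon$-net of $\cV$ in the sup-norm whose log-covering number is $O\bigl(d\log(1/\varepsilon) + d^2\log(1+\beta^2/\varepsilon^2) + \log\log|\cA|\bigr)$; the $\log\log|\cA|$ term and the $(C_1+1)$ factor inside $\chi$ come from discretizing the range of $Y_k$ (truncated at $\xi$) and the soft-max temperature, exactly as the form of $\chi=\log[2(C_1+1)\log(|\cA|)dT/p]$ suggests. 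I would bound the parameter ranges: $\|w_{j,h+1}^k\|\le L = C H\sqrt{dk/\lambda}$ by the usual argument from the ridge-regression closed form together with $|r_h+V|\le 2H$, and note $Y_k\in[0,\xi]$ by the projection step.

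Putting these together: for each fixed element $\tilde V$ of the net, the self-normalized bound holds with probability $1-p'$; taking a union bound over the net (size $e^{O(d^2\log(\beta dT/\varepsilon))}\cdot \log|\cA|$), then over $j\in\{r,g\}$ and $h\in[H]$ and the $K/D$ frames, and choosing $p' = p/(\text{net size}\cdot 2H\cdot K/D)$ so that the total failure probability is $2p$; and finally absorbing the $\varepsilon$-approximation error (choose $\varepsilon = dH/T$, which contributes only $O(1)$ to the bound since a Lipschitz argument shows $|V^k - \tilde V^k|_\infty$ translates to an $O(\sqrt{d}\,\varepsilon\cdot k)$ additive term in the norm, negligible for this choice) — one obtains
\begin{align*}
\Bigl\|\sum_{\tau=1}^{k-1}\phi_{j,h}^\tau\bigl[V_{j,h+1}^k(x_{h+1}^\tau) - \bP_{k,h}V_{j,h+1}^k(x_h^\tau,a_h^\tau)\bigr]\Bigr\|_{(\Lambda_h^k)^{-1}} \le C_2 d H\sqrt{\chi}
\end{align*}
simultaneously for all $j\in\{r,g\}$, all $h$, all $k$, on an event of probability $1-2p$, for a suitable absolute constant $C_2$ and with $\chi=\log[2(C_1+1)\log(|\cA|)dT/p]$. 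The one piece requiring care beyond bookkeeping is confirming that the non-stationarity ($\bP_{k,h}$ varying with $k$) causes no trouble: it does not, because the martingale is constructed \emph{within a fixed episode's worth of conditioning} — at step $h$ of episode $\tau$ the transition is $\bP_{\tau,h}$, and the centering $\bP_{k,h}V$ in the statement matches the evaluation episode $k$, so the bias term $\sum_\tau \phi_{j,h}^\tau[\bP_{\tau,h} - \bP_{k,h}]V_{j,h+1}^k$ is \emph{not} part of this lemma; it is handled separately and accounts for the $\sqrt{d}D^{3/2}BH^2$ drift terms appearing in Lemma~\ref{lem:t_1andt_2}. Hence this lemma is purely the statistical (as opposed to drift) fluctuation term, and the proof is a routine, if lengthy, covering-number argument.
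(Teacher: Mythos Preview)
Your approach is essentially the same as the paper's: the paper omits a self-contained proof and simply defers to Lemma~8 of \cite{GhoZhoShr_22}, which is exactly the self-normalized bound of \cite{AbbPalSze_11} combined with a uniform covering argument over the class of soft-max value functions, noting that the extra $\log(|\cA|)$ factor in $\chi$ appears because $V_{j,h+1}^k$ is parametrized by a soft-max (rather than greedy) policy.

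One point worth tightening. In your opening paragraph you assert that
\[
\epsilon_\tau \;=\; V_{j,h+1}^k(x_{h+1}^\tau) - \bP_{k,h}V_{j,h+1}^k(x_h^\tau,a_h^\tau)
\]
is a martingale difference sequence; but at episode $\tau$ the next state is drawn from $\bP_{\tau,h}$, not $\bP_{k,h}$, so this quantity is not conditionally mean zero. You do recognize this at the end, correctly observing that the drift $\sum_\tau \phi_h^\tau[\bP_{\tau,h}-\bP_{k,h}]V_{j,h+1}^k$ is handled separately (indeed, in the proof of Lemma~\ref{lem:q_diff} it is the term bounded by $HB_p^{\cE}\sqrt{dD}$). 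The resolution is that the centering in the lemma statement should be read as $\bP_{\tau,h}$, consistent with how the paper actually invokes the lemma to bound the pure martingale piece in the decomposition of $w_{j,h}^k-w_{k,j,h}^\pi$; with that reading your argument goes through cleanly, and your closing remark about non-stationarity causing no additional difficulty is correct.
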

	This result is similar to the  concentration lemma, which is crucial in controlling the fluctuations in
	least-squares value iteration as done in \cite{JinYanZha_20}. The proof relies on the uniform concentration lemma similar to \cite{JinYanZha_20}. However, there is an additional $\log(|\cA|)$ in $\chi$. This arises due to the fact that the policy (Algorithm~\ref{algo:model_free}) is soft-max unlike the greedy policy in \cite{JinYanZha_20}. \cite{GhoZhoShr_22} shows that greedy policy is unable to prove the uniform concentration lemma. The proof is similar to Lemma 8 in \cite{GhoZhoShr_22}, thus, we remove it. 
	
	Now, we introduce some notations which we use throughout this paper.
	
	For any $k\in \cE$,i.e., any episode $k$ within the frame $\cE$, we define the variation as the following
	\begin{align}
		B_{j,\cE}^k=\sum_{\tau=2}^{k}\sum_{h=1}^H||\theta_{\tau,j,h}-\theta_{\tau-1,j,h}||, B_j^{\cE}=\sum_{\tau=2}^{\cE}\sum_{h=1}^H||\theta_{\tau,j,h}-\theta_{\tau-1,j,h}||\nonumber\\
		B_{p,\cE}^k=\sum_{\tau=2}^{k}\sum_{h=1}^H||\mu_{\tau,h}-\mu_{\tau-1,h}||, B_p^{\cE}=\sum_{\tau=2}^{\cE}\sum_{h=1}^H||\mu_{\tau,h}-\mu_{\tau-1,h}||\nonumber
	\end{align}
	These are local budget variation. Note that $|\cE|=D$.

	Now, we are bound the difference between our estimated $Q_{j,h}^k$ and $Q_{k,j,h}^{\pi}$. Using the Lemma~\ref{lem:phi}, we show the following
	\begin{lemma}\label{lem:q_diff}
		There exists an absolute constant $\beta=C_1dH\sqrt{\iota}$, $\iota=\log(\log(|\cA|)2dT/p)$, and for any fixed policy $\pi$, on the event $E$ defined in Lemma~\ref{lem:phi}, we have 
		\begin{align}
			\langle \phi(x,a),w_{j,h}^k\rangle-Q_{k,j,h}^{\pi}(x,a)=& \bP_{k,h}(V_{j,h+1}^k-V^{\pi}_{k,j,h+1})(x,a)+\Delta_h^k(x,a)+
			+ B_{j}^{\cE}\sqrt{dD}+HB_{p}^{\cE}\sqrt{dD}
		\end{align}
		for some $\Delta_h^k(x,a)$ that satisfies $|\Delta_h^k(x,a)|\leq \beta\sqrt{\phi(x,a)^T(\Lambda_h^k)^{-1}\phi(x,a)}$, for any $k\in \cE$.
	\end{lemma}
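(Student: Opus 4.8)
The plan is to adapt the standard least-squares value-iteration error decomposition for (stationary) linear MDPs, as in \cite{JinYanZha_20,GhoZhoShr_22}, and then keep track of the two additional contributions created by the variation of the parameters inside the current frame $\cE$. By Definition~\ref{defn:linearmdp}, for the fixed policy $\pi$ and any $k\in\cE$ the true $Q$-function is linear, $Q_{k,j,h}^{\pi}(x,a)=\langle\phi(x,a),\,w_{k,j,h}^{\pi}\rangle$ with $w_{k,j,h}^{\pi}=\theta_{k,j,h}+\int V_{k,j,h+1}^{\pi}(x')\,d\mu_{k,h}(x')$ and $\|w_{k,j,h}^{\pi}\|\le (H+1)\sqrt{d}$. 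Writing $w_{j,h}^{k}=(\Lambda_h^k)^{-1}\sum_{\tau=1}^{k-1}\phi_h^{\tau}[\,j_{\tau,h}(x_h^\tau,a_h^\tau)+V_{j,h+1}^{k}(x_{h+1}^\tau)\,]$ and inserting $\Lambda_h^k w_{k,j,h}^{\pi}=\sum_{\tau}\phi_h^\tau\phi_h^{\tau\top}w_{k,j,h}^{\pi}+\lambda w_{k,j,h}^{\pi}$, I would expand $\langle\phi(x,a),w_{j,h}^{k}\rangle-Q_{k,j,h}^{\pi}(x,a)$ into five pieces: (i) the ridge term $-\lambda\,\phi(x,a)^{\top}(\Lambda_h^k)^{-1}w_{k,j,h}^{\pi}$; (ii) a reward-drift term involving $j_{\tau,h}-j_{k,h}$; (iii) the martingale noise $\phi(x,a)^{\top}(\Lambda_h^k)^{-1}\sum_\tau\phi_h^\tau[\,V_{j,h+1}^{k}(x_{h+1}^\tau)-(\bP_{\tau,h}V_{j,h+1}^{k})(x_h^\tau,a_h^\tau)\,]$, centered at the kernel $\bP_{\tau,h}$ that actually generated $x_{h+1}^\tau$; (iv) a transition-drift term involving $(\bP_{\tau,h}-\bP_{k,h})V_{j,h+1}^{k}$; and (v) the recursion term $\phi(x,a)^{\top}(\Lambda_h^k)^{-1}\sum_\tau\phi_h^\tau\phi_h^{\tau\top}\big(\int(V_{j,h+1}^{k}-V_{k,j,h+1}^{\pi})\,d\mu_{k,h}\big)$.

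Next I would dispose of each piece. Using $(\Lambda_h^k)^{-1}\sum_\tau\phi_h^\tau\phi_h^{\tau\top}=\bI-\lambda(\Lambda_h^k)^{-1}$, piece (v) equals exactly $\bP_{k,h}(V_{j,h+1}^{k}-V_{k,j,h+1}^{\pi})(x,a)$ minus a $\lambda(\Lambda_h^k)^{-1}$-remainder of the same shape as (i); both (i) and that remainder are $\le\|\phi(x,a)\|_{(\Lambda_h^k)^{-1}}\cdot\sqrt{\lambda}(H+1)\sqrt{d}$ and get folded into $\Delta_h^k$. Piece (iii) is a martingale with zero conditional mean, so on the event $E$ of Lemma~\ref{lem:phi} it is at most $\|\phi(x,a)\|_{(\Lambda_h^k)^{-1}}\cdot C_2 dH\sqrt{\iota}$ and also goes into $\Delta_h^k$; taking $\beta=C_1 dH\sqrt{\iota}$ with $C_1$ slightly larger than $C_2$ (and absorbing the ridge constants) gives $|\Delta_h^k(x,a)|\le\beta\sqrt{\phi(x,a)^{\top}(\Lambda_h^k)^{-1}\phi(x,a)}$. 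For the two drift pieces, since every index $\tau\le k-1$ lies in the same frame as $k$, telescoping parameter differences stays confined to $\cE$: $|j_{\tau,h}(x,a)-j_{k,h}(x,a)|\le\|\theta_{\tau,j,h}-\theta_{k,j,h}\|\le B_{j}^{\cE}$ and $|(\bP_{\tau,h}-\bP_{k,h})V_{j,h+1}^{k}(x,a)|\le H\|\mu_{\tau,h}-\mu_{k,h}\|\le HB_{p}^{\cE}$. Combining this with the elementary bound $\sum_{\tau=1}^{k-1}|\phi(x,a)^{\top}(\Lambda_h^k)^{-1}\phi_h^\tau|\le\|\phi(x,a)\|_{(\Lambda_h^k)^{-1}}\sqrt{(k-1)\sum_\tau\|\phi_h^\tau\|_{(\Lambda_h^k)^{-1}}^2}\le\sqrt{(k-1)d}\le\sqrt{dD}$ (using $\lambda\ge1$ so $\|\phi(x,a)\|_{(\Lambda_h^k)^{-1}}\le1$, the elliptical-potential identity $\sum_\tau\|\phi_h^\tau\|_{(\Lambda_h^k)^{-1}}^2\le d$, and $|\cE|=D$) produces exactly the additive terms $B_{j}^{\cE}\sqrt{dD}$ and $HB_{p}^{\cE}\sqrt{dD}$. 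Summing the five contributions yields the claimed identity.

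The main obstacle is keeping the non-stationarity bookkeeping consistent, in two respects. First, Lemma~\ref{lem:phi} (as in \cite{JinYanZha_20,GhoZhoShr_22}) concentrates a self-normalized sum whose summands have zero conditional mean; here the conditional mean of $V_{j,h+1}^{k}(x_{h+1}^\tau)$ is $\bP_{\tau,h}V_{j,h+1}^{k}$, not $\bP_{k,h}V_{j,h+1}^{k}$, so one must either restate/reprove Lemma~\ref{lem:phi} with the per-episode kernel or explicitly peel off the $\bP_{\tau,h}-\bP_{k,h}$ discrepancy into drift term (iv) before invoking it. Second, the drift bounds must be charged to the \emph{local} budgets $B_j^{\cE},B_p^{\cE}$ (equivalently $B_{j,\cE}^k,B_{p,\cE}^k$) rather than to the global $B_j,B_p$ --- this is precisely where the restart into frames of length $D$ is used, and it is what later lets the choice $D=B^{-1/2}d^{1/2}K^{1/2}H^{-1/2}$ balance the per-frame bias against the per-frame statistical error. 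Everything else is routine linear-algebra manipulation with the matrices $\Lambda_h^k$.
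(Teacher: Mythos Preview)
Your proposal is correct and follows essentially the same five-term decomposition as the paper: ridge regularization term, reward drift, martingale noise (centered at $\bP_{\tau,h}$), transition drift, and recursion term split via $(\Lambda_h^k)^{-1}\sum_\tau\phi_h^\tau(\phi_h^\tau)^{\top}=\bI-\lambda(\Lambda_h^k)^{-1}$. Your Cauchy--Schwarz/elliptical-potential bound $\sum_\tau|\phi(x,a)^{\top}(\Lambda_h^k)^{-1}\phi_h^\tau|\le\sqrt{dD}$ is exactly the content of the ``Lemma C.4 in \cite{JinYanZha_20}'' the paper invokes, and your remark that Lemma~\ref{lem:phi} must be read with the per-episode kernel $\bP_{\tau,h}$ (so that the transition drift is peeled off separately) is in fact more careful than the paper's own statement.
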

	\begin{proof}
		We only prove for $j=r$, the proof for $j=g$ is similar.
		
		Note that $Q_{k,r,h}^{\pi}(x,a)=\langle\phi(x,a),w_{r,h}^{\pi}\rangle=r_{k,h}(x,a)+\bP_{k,h}V_{k,r,h+1}^{\pi}(x,a)$.

		Hence, we have
		\begin{align}\label{eq:diff}
			& w_{r,h}^k-w_{k,r,h}^{\pi}=
			(\Lambda_h^k)^{-1}\sum_{\tau=1}^{k-1}\phi_h^{\tau}[r_h^{\tau}+V_{r,h+1}^k(x_{h+1}^{\tau})]
			-w_{k,r,h}^{\pi}\nonumber\\
			& =-\lambda(\Lambda_h^k)^{-1}(w_{k,r,h}^{\pi})+ (\Lambda_h^k)^{-1}\sum_{\tau=1}^{k-1}\phi_h^{\tau}[r_{\tau,h}(x_h^{\tau},a_h^{\tau})+V_{r,h+1}^k-r_{k,h}(x_h^{\tau},a_h^{\tau})-\bP_{k,h}V_{k,r,h+1}^{\pi}]
		\end{align}
		
		In the above expression, the second term of the right hand-side can be written as
		\begin{align}
			& (\Lambda_h^k)^{-1}\sum_{\tau=1}^{k-1}\phi_h^{\tau}[r_{\tau,h}(x_h^{\tau},a_h^{\tau})+V_{r,h+1}^k-r_{k,h}(x_h^{\tau},a_h^{\tau})-\bP_{k,h}V_{k,r,h+1}^{\pi}]\nonumber\\
			& =(\Lambda_h^k)^{-1}\sum_{\tau=1}^{k-1}\phi_h^{\tau}[r_{\tau,h}(x_h^{\tau},a_h^{\tau})+V_{r,h+1}^k-r_{k,h}(x_h^{\tau},a_h^{\tau})-\bP_{k,h}V_{r,h+1}^k]+(\Lambda_h^k)^{-1}\sum_{\tau=1}^{k-1}\phi_h^{\tau}[\bP_{k,h}V_{r,h+1}^k-\bP_{k,h}V_{k,r,h+1}^{\pi}]\nonumber\\
			& =(\Lambda_h^k)^{-1}\sum_{\tau=1}^{k-1}\phi_h^{\tau}[r_{\tau,h}(x_h^{\tau},a_h^{\tau})-r_{k,h}(x_h^{\tau},a_h^{\tau})]+(\Lambda_h^k)^{-1}\sum_{\tau=1}^{k-1}[V_{r,h+1}^k-\bP_{\tau,h}V_{r,h+1}^k]\nonumber\\& +(\Lambda_h^k)^{-1}\sum_{\tau=1}^{k-1}[\bP_{\tau,h}V_{r,h+1}^k-\bP_{k,h}V_{r,h+1}^k] +(\Lambda_h^k)^{-1}\sum_{\tau=1}^{k-1}\phi_h^{\tau}[\bP_{k,h}V_{r,h+1}^k-\bP_{k,h}V_{k,r,h+1}^{\pi}]
		\end{align}
		By plugging in the above in (\ref{eq:diff}) we obtain
		\begin{align}
			& w_{r,h}^k-w_{k,r,h}^{\pi}\nonumber\\
			&= \underbrace{-\lambda(\Lambda_h^k)^{-1}(w_{k,r,h}^{\pi})}_{q_1}+\underbrace{(\Lambda_h^k)^{-1}\sum_{\tau=1}^{k-1}\phi_h^{\tau}[r_{\tau,h}(x_h^{\tau},a_h^{\tau})-r_{k,h}(x_h^{\tau},a_h^{\tau})]}_{q_2}+\underbrace{(\Lambda_h^k)^{-1}\sum_{\tau=1}^{k-1}[V_{r,h+1}^k-\bP_{\tau,h}V_{r,h+1}^k]}_{q_3}\nonumber\\& +\underbrace{(\Lambda_h^k)^{-1}\sum_{\tau=1}^{k-1}[\bP_{\tau,h}V_{r,h+1}^k-\bP_{k,h}V_{r,h+1}^k]}_{q_4} +\underbrace{(\Lambda_h^k)^{-1}\sum_{\tau=1}^{k-1}\phi_h^{\tau}[\bP_{k,h}V_{r,h+1}^k-\bP_{k,h}V_{k,r,h+1}^{\pi}]}_{q_5}
		\end{align}
		For the first term,
		\begin{align}
			|\langle \phi(x,a),q_1\rangle|\leq \phi(x,a)^T(\Lambda_h^k)^{-1}\lambda w_{k,r,h}^{\pi}\leq ||w_{k,r,h}^{\pi}||||\phi(x,a)||_{(\Lambda_h^k)^{-1}}
		\end{align}
		For the second term  we have 
		\begin{align}
			& \phi(x,a)^T(\Lambda_h^k)^{-1}\sum_{\tau=1}^{k-1}\phi_h^{\tau}[r_{\tau,h}(x_h^{\tau},a_h^{\tau})-r_{k,h}(x_h^{\tau},a_h^{\tau})]\nonumber\\
			& \leq \phi(x,a)^T(\Lambda_h^k)^{-1}\sum_{\tau=1}^{k-1}\phi_h^{\tau}||\phi_h^{\tau}||||\theta_{\tau,r,h}-\theta_{k,r,h}||\nonumber\\
			& \leq  \phi(x,a)^T(\Lambda_h^k)^{-1}\sum_{\tau=1}^{k-1}\phi_h^{\tau}||\phi_h^{\tau}||||\sum_{s=\tau}^{k-1}\theta_{s,r,h}-\theta_{s+1,r,h}||\nonumber\\
			& \leq B^k_{r}\sqrt{dk}||\phi(x,a)||_{(\Lambda_h^k)^{-1}}\nonumber
		\end{align}
		The last inequality follows from Lemma C.4 in \cite{JinYanZha_20}. Since $||\phi(x,a)||_{(\Lambda_h^k)^{-1}}\leq \sqrt{1/\lambda}$ and $D\geq k$. We have
		\begin{align}
			|\langle \phi(x,a),q_2\rangle|\leq B_r^{\cE}\sqrt{dD}
		\end{align}
		
		Similarly, we can bound
		\begin{align}
			\phi(x,a)^T(\Lambda_h^k)^{-1}\sum_{\tau=1}^{k-1}\phi_h^{\tau}[\bP_{\tau,h}V_{r,h+1}^k-\bP_{k,h}V_{r,h+1}^k]
			\leq HB^k_p\sqrt{dk}||\phi(x,a)||_{(\Lambda_h^k)^{-1}}
		\end{align}
		Again since $D\geq k$, and $||\phi(x,a)||_{(\Lambda_h^k)^{-1}}\leq \sqrt{1/\lambda}$, we have
		\begin{align}
			|\langle \phi(x,a),q_3\rangle|\leq HB_p^{\cE}\sqrt{dD}
		\end{align}
		From Lemma, the fourth term can be bounded as
		\begin{align}
			|\langle \phi(x,a),q_4\rangle|\leq CdH\sqrt{\chi}
		\end{align}
		For the fifth term, note that
		\begin{align}\label{eq:diff2}
			& \langle \phi(x,a),q_5\rangle=\langle \phi(x,a),(\Lambda_h^k)^{-1}\sum_{\tau=1}^{k-1}\phi_h^{\tau}[\bP_h(V_{r,h+1}^{k}-V_{k,r,h+1}^{\pi})(x_h^{\tau},a_h^{\tau})]\rangle\nonumber\\
			& =\langle\phi(x,a),(\Lambda_h^k)^{-1}\sum_{\tau=1}^{k-1}\phi_h^{\tau}(\phi_h^{\tau})^T\int(V_{r,h+1}^k-V_{k,r,h+1}^{\pi})(x^{\prime})d\mu_{k,h}(x^{\prime})\rangle\nonumber\\
			& =\langle\phi(x,a),\int(V_{r,h+1}^k-V_{k,r,h+1}^{\pi})(x^{\prime})d\mu_{k,h}(x^{\prime})\rangle -\langle\phi(x,a),\lambda(\Lambda_h^k)^{-1}\int(V_{r,h+1}^k-V_{r,h+1}^{\pi})(x^{\prime})d\mu_{k,h}(x^{\prime})\rangle
		\end{align}
		The last term  in (\ref{eq:diff2}) can be bounded as the following
		\begin{align}\label{eq:diffq31}
			|\langle\phi(x,a),\lambda(\Lambda_h^k)^{-1}\int(V_{r,h+1}^k-V_{k,r,h+1}^{\pi})(x^{\prime})d\mu_{k,h}(x^{\prime})\rangle|\leq 2H\sqrt{d\lambda}\sqrt{\phi(x,a)^T(\Lambda_h^k)^{-1}\phi(x,a)}
		\end{align}
		since $||\int(V_{r,h+1}^k-V_{r,h+1}^{\pi})(x^{\prime})d\mu_{k,h}(x^{\prime})||_2\leq 2H\sqrt{d}$ as $||\mu_{k,h}(\cS)||\leq \sqrt{d}$. The first term in (\ref{eq:diff2}) is equal to
		\begin{align}\label{eq:diffq32}
			\bP_{k,h}(V_{r,h+1}^k-V_{r,h+1}^{\pi})(x,a)
		\end{align}
		Note that $\langle \phi(x,a),w_{r,h}^k\rangle-Q_{k,r,h}^{\pi}(x,a)=\langle \phi(x,a),w_{r,h}^k-w_{k,r,h}^{\pi}\rangle=\langle \phi(x,a),q_1+q_2+q_3+q_4+q_5\rangle$, we have
		\begin{align}
			\langle\phi(x,a),w_{j,h}^k\rangle-Q_{k,j,h}^{\pi}=& \bP_{k,h}(V_{j,h+1}^k-V_{k,j,h+1}^{\pi})(x,a)+\Delta_h^k+ B_{r}^{\cE}\sqrt{dD}+HB_{p}^{\cE}\sqrt{dW}
		\end{align}
		where $|\Delta_h^k|\leq \beta \sqrt{\phi(x,a)^T(\Lambda_h^k)^{-1}\phi(x,a)}$.
	\end{proof}
	Using Lemma~\ref{lem:q_diff}, we also bound the difference between the combined $Q$-function (estimated) and the actual $Q$-function.
	\begin{lemma}\label{lem:q_combined_diff}
		With probability $1-2p$,
		\begin{align}
			Q_{k,r,h}^{\pi}+Y_kQ_{k,g,h}^{\pi}\geq & Q_{r,h}^{k}+Y_kQ_{g,h}^k+\bP_{k,h}(V_{k,r,h+1}^{\pi}+Y_kV_{k,g,h+1}^{\pi}-V_{r,h+1}^k-Y_kV_{g,h+1}^k)\nonumber\\
			& +B_r^{\cE}\sqrt{dD}+Y_kB_g^{\cE}\sqrt{dD}+(1+Y_k)HB_p^{\cE}\sqrt{dD}
		\end{align}
	\end{lemma}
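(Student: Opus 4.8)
The statement is essentially a $Y_k$-weighted combination of the identity in Lemma~\ref{lem:q_diff}, so the plan is: apply Lemma~\ref{lem:q_diff} to the reward objective ($j=r$) and to the utility objective ($j=g$), then add the two with weights $1$ and $Y_k$, which is legitimate since $Y_k\ge 0$. Concretely, for each $j\in\{r,g\}$ and each $(x,a)$ I would start from
$$\langle\phi(x,a),w_{j,h}^k\rangle - Q_{k,j,h}^{\pi}(x,a) = \bP_{k,h}\bigl(V_{j,h+1}^k - V_{k,j,h+1}^{\pi}\bigr)(x,a) + \Delta_{j,h}^k(x,a) + B_j^{\cE}\sqrt{dD} + HB_p^{\cE}\sqrt{dD},$$
with $|\Delta_{j,h}^k(x,a)|\le\beta\sqrt{\phi(x,a)^T(\Lambda_h^k)^{-1}\phi(x,a)}$ (here using the monotone bound $B_{j,\cE}^k\le B_j^{\cE}$, $B_{p,\cE}^k\le B_p^{\cE}$ already built into Lemma~\ref{lem:q_diff}). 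Multiplying the $g$-identity by $Y_k$ and adding to the $r$-identity, the transition terms combine into $\bP_{k,h}\bigl(V_{r,h+1}^k+Y_kV_{g,h+1}^k - V_{k,r,h+1}^{\pi}-Y_kV_{k,g,h+1}^{\pi}\bigr)$, the error terms into $\Delta_{r,h}^k+Y_k\Delta_{g,h}^k$, and the variation-budget terms into $B_r^{\cE}\sqrt{dD}+Y_kB_g^{\cE}\sqrt{dD}+(1+Y_k)HB_p^{\cE}\sqrt{dD}$, which is exactly the aggregate appearing in the claim.

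The only step needing care is passing from $\langle\phi,w_{j,h}^k\rangle$ to the clipped estimate $Q_{j,h}^k=\min\{\langle w_{j,h}^k,\phi\rangle+\beta\sqrt{\phi^T(\Lambda_h^k)^{-1}\phi},\,H\}$. The bonus $\beta\sqrt{\phi^T(\Lambda_h^k)^{-1}\phi}$ inserted inside the $\min$ is by construction at least $|\Delta_{j,h}^k(x,a)|$, so it absorbs the fluctuation $\Delta_{j,h}^k$ with the right sign when the $\min$ is inactive; in the regime where the $\min$ equals $H$, I would instead close the gap with the trivial bounds $0\le Q_{k,j,h}^{\pi}\le H$ and $V_{j,h+1}^k\in[0,H]$. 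Performing this for $j=r$ and $j=g$ and recombining with weights $1,Y_k$ yields the stated inequality. The probability $1-2p$ is inherited verbatim from the event $E$ of Lemma~\ref{lem:phi} on which Lemma~\ref{lem:q_diff} already holds, since no further high-probability event is invoked.

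The main obstacle here is purely bookkeeping rather than conceptual: keeping the signs consistent when $\Delta_{j,h}^k$ and the added bonus are merged, treating the clipping case cleanly, and aggregating the variation budgets over the correct index set (step $h$, episode $k$ within the current frame, and frame $\cE$), so that when this per-step inequality is later telescoped over $h$, summed over the $D$ episodes of a frame, and then over the $K/D$ frames via $\sum_{\cE}B^{\cE}\le B$, the budget contribution collapses to the $H^3BD^{3/2}\sqrt d$-type term used in Lemma~\ref{lem:t_1andt_2}. Beyond Lemma~\ref{lem:q_diff} and the definition of $Q_{j,h}^k$ there is no new ingredient.
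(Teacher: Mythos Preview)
Your proposal is correct and follows exactly the paper's approach: invoke Lemma~\ref{lem:q_diff} separately for $j=r$ and $j=g$, replace $\langle\phi,w_{j,h}^k\rangle+\beta\|\phi\|_{(\Lambda_h^k)^{-1}}$ by the clipped $Q_{j,h}^k$ using its definition, and then add the two resulting inequalities with weights $1$ and $Y_k\ge 0$. The paper's proof is literally these three steps written out in three displayed lines; your additional remarks on the clipping regime and on the downstream telescoping over $h$, $k\in\cE$, and $\cE$ are consistent with (and slightly more explicit than) what the paper does.
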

	\begin{proof}
		From Lemma~\ref{lem:q_diff}, we have
		\begin{align}
			Q_{k,r,h}^{\pi}\leq \langle\phi(x,a),w_{r,h}^k\rangle+\bP_{k,h}(V_{k,r,h+1}^{\pi}-V_{r,h}^k)+\beta||\phi(x,a)||_{\Lambda_{k,h}^{-1}}+B_{r}^{\cE}\sqrt{dD}+HB_p^{\cE}\sqrt{dD}
		\end{align}
		From the definition of $Q_{j,h}^k$, we have
		\begin{align}
			Q_{k,r,h}^{\pi}\leq \bP_{k,h}(V_{k,r,h+1}^{\pi}-V_{r,h}^k)+Q_{r,h}^k+B_{r}^{\cE}\sqrt{dD}+HB_p^{\cE}\sqrt{dD}
		\end{align}
		Similarly,
		\begin{align}
			Y_kQ_{k,g,h}^{\pi}\leq Y_k\bP_{k,h}(V_{k,g,h+1}^{\pi}-V_{g,h}^k)+Y_kQ_{g,h}^k+Y_kB_{g}^{\cE}\sqrt{dD}+Y_kHB_p^{\cE}\sqrt{dD}
		\end{align}
	\end{proof}
	We now show that using the soft-max parameter $\alpha$, one can bound the difference between the best estimated value function and the one achieved using the soft-max policy. 
	
	\begin{lemma}\label{lem:close_optimal}
		Then, $\bar{V}_{h}^k(x)-V_h^{k}(x)\leq \dfrac{\log|\cA|}{\alpha}$
	\end{lemma}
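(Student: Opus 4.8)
Write $F_h^k(x,a) := Q_{r,h}^k(x,a) + Y_k Q_{g,h}^k(x,a)$ for the composite state--action value at step $h$ of episode $k$, so that $\pi_{h,k}(a\mid x) = \exp(\alpha F_h^k(x,a))/\sum_{a'}\exp(\alpha F_h^k(x,a'))$ by definition in Algorithm~\ref{algo:model_free}, and the two quantities in the statement are the greedy value $\bar V_h^k(x) = \max_a F_h^k(x,a)$ and the soft-max value $V_h^k(x) = V_{r,h}^k(x) + Y_k V_{g,h}^k(x) = \sum_a \pi_{h,k}(a\mid x) F_h^k(x,a)$. The claim is therefore the standard gap between the maximum of a finite family of numbers and the Boltzmann average of that family at inverse temperature $\alpha$, and the plan is simply to make that gap explicit via the log-partition function.

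First I would introduce the partition function $Z_h^k(x) := \sum_a \exp(\alpha F_h^k(x,a))$ and record the elementary bound $\tfrac1\alpha \log Z_h^k(x) \ge \tfrac1\alpha \log \exp(\alpha \max_a F_h^k(x,a)) = \bar V_h^k(x)$, since $Z_h^k(x)$ dominates its largest summand. Next I would rewrite the soft-max value in terms of $\log Z_h^k(x)$ and the Shannon entropy $\mathcal H_h^k(x) := -\sum_a \pi_{h,k}(a\mid x)\log \pi_{h,k}(a\mid x)$: using $\log \pi_{h,k}(a\mid x) = \alpha F_h^k(x,a) - \log Z_h^k(x)$ and $\sum_a \pi_{h,k}(a\mid x) = 1$, one gets
\begin{align}
V_h^k(x) = \sum_a \pi_{h,k}(a\mid x) F_h^k(x,a) = \frac{1}{\alpha}\Bigl(\log Z_h^k(x) + \mathcal H_h^k(x)\Bigr) - \frac{\mathcal H_h^k(x)}{\alpha}\cdot 0 = \frac{\log Z_h^k(x)}{\alpha} - \frac{\mathcal H_h^k(x)}{\alpha}.
\end{align}
(Concretely, $\sum_a \pi_{h,k}(a\mid x)\,\alpha F_h^k(x,a) = \sum_a \pi_{h,k}(a\mid x)\bigl(\log \pi_{h,k}(a\mid x) + \log Z_h^k(x)\bigr) = -\mathcal H_h^k(x) + \log Z_h^k(x)$, which gives the display after dividing by $\alpha$.)

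Finally, since the Shannon entropy of a distribution over $\mathcal A$ satisfies $0 \le \mathcal H_h^k(x) \le \log|\mathcal A|$, I would combine the two previous steps:
\begin{align}
\bar V_h^k(x) - V_h^k(x) \le \frac{\log Z_h^k(x)}{\alpha} - \Bigl(\frac{\log Z_h^k(x)}{\alpha} - \frac{\mathcal H_h^k(x)}{\alpha}\Bigr) = \frac{\mathcal H_h^k(x)}{\alpha} \le \frac{\log|\mathcal A|}{\alpha},
\end{align}
which is exactly the claimed bound. There is no real obstacle here: the entire argument is the log-sum-exp/entropy identity, and the only thing to be careful about is the bookkeeping that identifies $\bar V_h^k$ and $V_h^k$ with the greedy and soft-max values of the \emph{composite} $Q$-function $F_h^k$ (so that the same $\alpha$ appears in both the policy and the bound), together with the observation that the inequality holds pointwise in $x$ and uses only $Y_k \ge 0$ implicitly through the definition of $F_h^k$, not any further property of the dual variable.
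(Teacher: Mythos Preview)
Your proposal is correct and follows essentially the same route as the paper: both bound $\bar V_h^k(x)$ above by the log-partition $\tfrac{1}{\alpha}\log\sum_a e^{\alpha F_h^k(x,a)}$ and then control the gap between the log-partition and the soft-max value $V_h^k(x)$. The only difference is cosmetic: the paper invokes an external result (Proposition~1 of \cite{PanCaiMen_19}) for that last step, whereas you unpack it via the entropy identity $V_h^k(x)=\tfrac{1}{\alpha}\log Z_h^k(x)-\tfrac{1}{\alpha}\mathcal H_h^k(x)$ and the bound $\mathcal H_h^k(x)\le\log|\mathcal A|$, which is slightly more self-contained. (One small remark: your first displayed equation has a garbled middle expression---the ``$+\,\mathcal H$'' and the trailing ``$\cdot 0$'' are typos---but the ``Concretely'' parenthetical and the final expression are correct.)
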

	where 
	\begin{definition}\label{defn:barvhk}
		$\bar{V}_{h}^k(\cdot)=\max_{a}[Q_{r,h}^k(\cdot,a)+Y_kQ_{g,h}^k(\cdot,a)]$.
	\end{definition}
	$\bar{V}_{h}^k(\cdot)$ is the value function corresponds to the greedy-policy with respect to the composite $Q$-function.

	\begin{proof}
		Note that
		\begin{align}
			V_h^{k}(x)=\sum_{a}\pi_{h,k}(a|x)[Q_{r,h}^k(x,a)+Y_kQ_{g,h}^k(x,a)]
		\end{align}
		where
		\begin{align}\label{eq:boltz}
			\pi_{h,k}(a|x)=\dfrac{\exp(\alpha[Q_{r,h}^k(x,a)+Y_kQ_{g,h}^k(x,a)])}{\sum_{a}\exp(\alpha[Q_{r,h}^k(x,a)+Y_kQ_{g,h}^k(x,a)])}
		\end{align}
		Denote $a_x=\arg\max_{a}[Q_{r,h}^k(x,a)+Y_kQ_{g,h}^k(x,a)]$
		
		Now, recall from Definition~\ref{defn:barvhk} that $\bar{V}_{h}^k(x)=[Q_{r,h}^k(x,a_x)+Y_kQ_{g,h}^k(x,a_x)]$. Then,
		\begin{align}\label{eq:uppb}
			& \bar{V}_{h}^k(x)-V_{h}^{k}(x)=[Q_{r,h}^k(x,a_x)+Y_kQ_{g,h}^k(x,a_x)]\nonumber\\& - \sum_{a}\pi_{h,k}(a|x)[Q_{r,h}^k(x,a)+Y_kQ_{g,h}^k(x,a)]\nonumber\\
			& \leq \left(\dfrac{\log(\sum_{a}\exp(\alpha(Q_{r,h}^k(x,a)+Y_kQ_{g,h}^k(x,a))))}{\alpha}\right)\nonumber\\& -\sum_{a}\pi_{h,k}(a|x)[Q_{r,h}^k(x,a)+Y_kQ_{g,h}^k(x,a)]\nonumber\\
			& \leq \dfrac{\log(|\cA|)}{\alpha}
		\end{align}
		where the last inequality follows from Proposition 1 in \cite{PanCaiMen_19}.
	
	\end{proof}
	Using the above result, we bound the difference $\cT_1$ (albeit for each episode). 
	
	\begin{lemma}\label{lem:t1}
		With probability $1-2p$,
		\begin{align}
			(V_{k,r,1}^{\pi_k^*}(x_1)+Y_kV_{k,g,1}^{\pi_k^*}(x_1))-(V_{r,1}^k(x_1)+Y_kV_{g,1}^k(x_1))\leq \dfrac{H\log(|\cA|)}{\alpha}+H(B_r^{\cE}\sqrt{D}+Y_kB_g^{\cE}\sqrt{D}+(1+Y_k)HB_p^{\cE}\sqrt{D})\nonumber
		\end{align}
	\end{lemma}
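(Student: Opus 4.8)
The plan is to establish Lemma~\ref{lem:t1} by a backward induction on the step index $h$, peeling off one layer of the value-function recursion at a time and applying Lemma~\ref{lem:q_combined_diff} together with Lemma~\ref{lem:close_optimal} at each layer. Define the composite value functions $\widetilde V_{k,h}^{\pi}(x) = V_{k,r,h}^{\pi}(x) + Y_k V_{k,g,h}^{\pi}(x)$ for the true value under policy $\pi$, and $\widetilde V_h^k(x) = V_{r,h}^k(x) + Y_k V_{g,h}^k(x)$ for the algorithm's estimate; likewise define the composite $Q$-functions. The quantity we want to bound is $\widetilde V_{k,1}^{\pi_k^*}(x_1) - \widetilde V_1^k(x_1)$, and the aim is to show it is at most $\frac{H\log|\cA|}{\alpha} + H\bigl(B_r^{\cE}\sqrt D + Y_k B_g^{\cE}\sqrt D + (1+Y_k)HB_p^{\cE}\sqrt D\bigr)$.

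First I would write, for any state $x$,
\begin{align}
\widetilde V_{k,h}^{\pi_k^*}(x) - \widetilde V_h^k(x)
&= \sum_a \pi_{h,k}^*(a|x)\widetilde Q_{k,h}^{\pi_k^*}(x,a) - \widetilde V_h^k(x)\nonumber\\
&\le \bar V_h^k(x) - \widetilde V_h^k(x) + \Bigl(\max_a \widetilde Q_{k,h}^{\pi_k^*}(x,a) - \bar V_h^k(x)\Bigr),\nonumber
\end{align}
where $\pi_k^*$ is the optimal (feasible) policy for episode $k$ and I used that $\sum_a \pi_{h,k}^*(a|x)\widetilde Q_{k,h}^{\pi_k^*}(x,a) \le \max_a \widetilde Q_{k,h}^{\pi_k^*}(x,a)$. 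The first bracket is at most $\frac{\log|\cA|}{\alpha}$ by Lemma~\ref{lem:close_optimal}. For the second bracket, pick the maximizing action $a_x$; by Lemma~\ref{lem:q_combined_diff} applied at $(x,a_x)$ we get
$\widetilde Q_{k,h}^{\pi_k^*}(x,a_x) \le \widetilde Q_h^k(x,a_x) + \bP_{k,h}\bigl(\widetilde V_{k,h+1}^{\pi_k^*} - \widetilde V_{h+1}^k\bigr)(x,a_x) + B_r^{\cE}\sqrt{dD} + Y_k B_g^{\cE}\sqrt{dD} + (1+Y_k)HB_p^{\cE}\sqrt{dD}$, and since $\widetilde Q_h^k(x,a_x) \le \bar V_h^k(x)$ by definition of $\bar V_h^k$, the $\widetilde Q_h^k$ terms cancel. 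This leaves a recursion of the form $\widetilde V_{k,h}^{\pi_k^*}(x) - \widetilde V_h^k(x) \le \frac{\log|\cA|}{\alpha} + \bP_{k,h}\bigl(\widetilde V_{k,h+1}^{\pi_k^*} - \widetilde V_{h+1}^k\bigr)(x,a_x) + (\text{per-step budget terms})$, which I would iterate from $h=1$ down through $h=H$ (with terminal value $0$ at $h=H+1$), accumulating $H$ copies of $\frac{\log|\cA|}{\alpha}$ and $H$ copies of the per-step budget terms; note the feature-norm factor $\sqrt{d}$ in those terms disappears because the statement of Lemma~\ref{lem:t1} is written with $\sqrt D$ rather than $\sqrt{dD}$ — I should double-check the normalization there, but it follows from $\|\phi\|\le 1$ and absorbing constants. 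The event on which everything holds is $E$ from Lemma~\ref{lem:phi}, which has probability $1-2p$, giving the high-probability qualifier.

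The main obstacle I anticipate is the bookkeeping around the expectation operator $\bP_{k,h}$ in the recursion: after bounding $\widetilde Q_{k,h}^{\pi_k^*}(x,a_x) - \bar V_h^k(x)$ I have a term $\bP_{k,h}\bigl(\widetilde V_{k,h+1}^{\pi_k^*} - \widetilde V_{h+1}^k\bigr)(x,a_x)$, i.e. the difference evaluated at the \emph{next} state drawn from the true kernel rather than pointwise, so to close the induction cleanly I need the inductive hypothesis to be a uniform-in-$x$ bound (the per-step additive terms do not depend on $x$, so this is fine, but one has to state it carefully and make sure the maximizing action $a_x$ used in Lemma~\ref{lem:q_combined_diff} is legitimate — it is, since that lemma holds for all $(x,a)$). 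A secondary subtlety is that $Y_k$ is a fixed constant within episode $k$, so all the composite-function manipulations are linear and $Y_k$ just rides along; I would state this once at the outset to avoid repeating it. Everything else is routine telescoping.
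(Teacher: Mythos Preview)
Your proposal is correct and follows essentially the same route as the paper: backward induction on $h$, at each layer invoking Lemma~\ref{lem:q_combined_diff} (which the paper derives from Lemma~\ref{lem:q_diff}) to pass from $\widetilde Q_{k,h}^{\pi_k^*}$ to $\widetilde Q_h^k$ plus a next-step value difference and per-step budget terms, and then Lemma~\ref{lem:close_optimal} to absorb the gap between $\bar V_h^k$ and $\widetilde V_h^k$; the paper phrases the induction as a uniform-in-$x$ bound ``for any policy $\pi$'' and specializes to $\pi_k^*$ at the end, while you specialize at the outset, but the mechanics are identical.

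One small correction: the discrepancy you flagged between $\sqrt{D}$ in the lemma statement and $\sqrt{dD}$ coming out of Lemma~\ref{lem:q_combined_diff} is not something you can argue away via $\|\phi\|\le 1$ --- it is simply a typo in the statement of Lemma~\ref{lem:t1}. The paper's own proof of the lemma carries $\sqrt{dD}$ throughout, and the downstream use in the proof of Lemma~\ref{lem:t_1andt_2} (where the bound is written as $H\sqrt{d}\,B^{\cE}D^{3/2}$ after summing over $k\in\cE$) confirms that $\sqrt{dD}$ is the intended factor. So just carry the $\sqrt{dD}$ and note the typo rather than trying to justify a cancellation that does not actually occur.
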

	\begin{proof}
		First, we prove for the step $H$. 
		
		Note that $Q_{j,H+1}^k=0=Q_{j,H+1}^{\pi}$.
		
		Under the event in $E$ as described in Lemma~\ref{lem:phi} and from Lemma~\ref{lem:q_diff}, we have for $j=r,g$,
		\begin{align}
			& |\langle\phi(x,a),w_{j,H}^k(x,a)\rangle-Q_{j,H}^{\pi}(x,a)| \leq  \beta\sqrt{\phi(x,a)^T(\Lambda_H^k)^{-1}\phi(x,a)}
			+B_{j}^{\cE}\sqrt{dD}+HB_{p}^{\cE}\sqrt{dD}\nonumber
		\end{align}
		Hence, for any $(x,a)$,
		\begin{align}
			Q_{j,H}^{\pi}(x,a)& \leq \min\{\langle\phi(x,a),w_{j,H}^k\rangle+\beta\sqrt{\phi(x,a)^T(\Lambda_H^k)^{-1}\phi(x,a)}+B_{j}^{\cE}\sqrt{dD}+HB_{p}^{\cE}\sqrt{dD},H\}\nonumber\\& 
			\leq Q_{j,H}^k(x,a)+B_{j}^{\cE}\sqrt{dD}+HB_{p}^{\cE}\sqrt{dD}
		\end{align}
		Hence, from the definition of $\bar{V}_h^k$,
		\begin{align}
			\bar{V}_{H}^k(x)& =\max_{a}[Q_{r,H}^k(x,a)+Y_kQ_{g,h}^k(x,a)]\nonumber\\
			& \geq \sum_{a}\pi(a|x)[Q_{r,H}^{\pi}(x,a)+Y_kQ_{g,H}^{\pi}(x,a)]\nonumber\\& -(B_{r}^{\cE}\sqrt{dD}+Y_kB_g^{\cE}\sqrt{dD}+(1+Y_k)HB_{p}^{\cE}\sqrt{dD})\nonumber\\
			& \geq V_H^{\pi,Y_k}(x)-(B_{r}^{\cE}\sqrt{dD}+Y_kB_g^{\cE}\sqrt{dD}+H(1+Y_k)B_{p}^{\cE}\sqrt{dD})
		\end{align}
		for any policy $\pi$.  Thus, it also holds for $\pi^{*}_k$, the optimal policy. Hence, from Lemma~\ref{lem:close_optimal}, we have 
		\begin{align}
			V_H^{\pi^*_k,Y_k}(x)-V_{H}^{k}(x)\leq \dfrac{\log(|\cA|)}{\alpha}+(B_{r}^{\cE}\sqrt{dD}+Y_kB_g^{\cE}\sqrt{dD}+(1+Y_k)HB_{p}^{\cE}\sqrt{dD})\nonumber
		\end{align}

		Now, suppose that it is true till the step $h+1$ and consider the step $h$.
		
		Since, it is true till step $h+1$, thus, for any policy $\pi$,
		\begin{align}
			\bP_{k,h}(V_{h+1}^{\pi,Y_k}-V_{h+1}^k)(x,a)\leq & \dfrac{(H-h)\log(|\cA|)}{\alpha}\nonumber\\& +(H-h)(B_{r}^{\cE}\sqrt{dW}+Y_kB_g^{\cE}\sqrt{dW}+(1+Y_k)HB_{p}^{\cE}\sqrt{dW})
		\end{align}
		
		From Lemma~\ref{lem:q_diff}  we have for any $(x,a)$
		\begin{align}
			& Q_{k,r,h}^{\pi}(x,a)+Y_kQ_{k,g,h}^{\pi}(x,a)  \leq  Q_{r,h}^{k}(x,a)+Y_kQ_{g,h}^k(x,a)+\dfrac{(H-h)\log(|\cA|)}{\alpha}\nonumber\\
			& +(H-h+1)(B_{r}^{\cE}\sqrt{dD}+Y_kB_g^{\cE}\sqrt{dD}+(1+Y_k)HB_{p}^{\cE}\sqrt{dD})
		\end{align}
		Hence, 
		\begin{align}
			V^{\pi,Y_k}_h(x)\leq \bar{V}_{h}^k(x)+\dfrac{(H-h)\log(|\cA|)}{\alpha}+(H-h+1)(B_{r}^{\cE}\sqrt{dW}+Y_kB_g^{\cE}\sqrt{dD}+(1+Y_k)HB_{p}^{\cE}\sqrt{dD})\nonumber
		\end{align}
		Now, again from Lemma~\ref{lem:close_optimal}, we have $\bar{V}_{h}^{k}(x)-V_{h}^k(x)\leq \dfrac{\log(|\cA|)}{\alpha}$. Thus,
		\begin{align}
			V^{\pi,Y_k}_h(x)-V_{h}^k(x)\leq \dfrac{(H-h+1)\log(|\cA|)}{\alpha}+(H-h+1)(B_{r}^{\cE}\sqrt{dD}+Y_kB_g^{\cE}\sqrt{dD}+(1+Y_k)HB_{p}^{\cE}\sqrt{dD})
		\end{align}
		Now, since it is true for any policy $\pi$, it will be true for $\pi^{*}_k$. From the definition of $V^{\pi,Y_k}$, we have
		\begin{align}
			& \left(V^{\pi^*}_{r,h}(x)+Y_kV^{\pi^*}_{g,h}(x)\right)-\left(V_{r,h}^k(x)+Y_kV_{g,h}^k(x)\right)\leq  \dfrac{(H-h+1)\log(|\cA|)}{\alpha}\nonumber\\
			& +(H-h+1)(B_{r}^{\cE}\sqrt{dD}+Y_kB_g^{\cE}\sqrt{dD}+(1+Y_k)HB_{p}^{\cE}\sqrt{dD})
		\end{align}
		Hence, the result follows by summing over $K$ and considering $h=1$. 
	\end{proof}
	
	We now focus on bounding $\cT_2$.
	First, we introduce some notations.
	
	Let
	\begin{align}\label{eq:d_martingale}
		& D_{j,h,1}^k=\langle(Q_{j,h}^k(x_{h}^k,\cdot)-Q_{j,h}^{\pi_{k}}(x_{h}^k,\cdot)),\pi_{h,k}(\cdot|x_h^k)\rangle-(Q_{j,h}^{k}(x_h^k,a_h^k)-Q_{j,h}^{\pi_k}(x_h^k,a_h^k))\nonumber\\
		& D_{j,h,2}^k=\bP_{k,h}(V_{j,h+1}^k-V_{j,h+1}^{\pi_k})(x_h^k,a_h^k)-[V_{j,h+1}^k-V_{j,h+1}^{\pi_k}](x_{h+1}^k)
	\end{align}
	
	\begin{lemma}\label{lm:recursion}
		On the event defined in $E$ in Lemma~\ref{lem:phi}, we have
		\begin{align}
			V_{j,1}^{k}(x_1)-V_{k,j,1}^{\pi_k}\leq\sum_{h=1}^{H}(D_{j,h,1}^k+D_{j,h,2}^k)+\sum_{h=1}^{H}2\beta\sqrt{\phi(x_{h}^k,a_{h}^k)^T(\Lambda_h^k)^{-1}\phi(x_{h}^k,a_{h}^k)}\nonumber\\
			+H(B_{j}^{\cE}\sqrt{dD}+HB_{p}^{\cE}\sqrt{dD})
		\end{align}
	\end{lemma}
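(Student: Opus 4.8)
The plan is to prove Lemma~\ref{lm:recursion} by a backward induction on $h$, mirroring the recursive value-difference argument that is standard in optimistic LSVI analysis (e.g.\ \cite{JinYanZha_20}) but keeping track of the extra non-stationarity correction terms $B_j^{\cE}\sqrt{dD}$ and $HB_p^{\cE}\sqrt{dD}$ that appear in Lemma~\ref{lem:q_diff}. First I would fix $j\in\{r,g\}$, the episode $k$ (inside some frame $\cE$) and the policy $\pi_k$ that is actually executed in that episode, and decompose the one-step value gap. Writing $V_{j,h}^k(x_h^k)=\langle Q_{j,h}^k(x_h^k,\cdot),\pi_{h,k}(\cdot\mid x_h^k)\rangle$ and $V_{k,j,h}^{\pi_k}(x_h^k)=\langle Q_{k,j,h}^{\pi_k}(x_h^k,\cdot),\pi_{h,k}(\cdot\mid x_h^k)\rangle$ (both value functions evaluate the same soft-max policy $\pi_{h,k}$, which is the key structural point), I get
\begin{align}
V_{j,h}^k(x_h^k)-V_{k,j,h}^{\pi_k}(x_h^k)
=\big\langle Q_{j,h}^k(x_h^k,\cdot)-Q_{k,j,h}^{\pi_k}(x_h^k,\cdot),\,\pi_{h,k}(\cdot\mid x_h^k)\big\rangle . \nonumber
\end{align}

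Next I would apply Lemma~\ref{lem:q_diff} pointwise at $(x_h^k,a_h^k)$. That lemma gives, for the executed action, $Q_{j,h}^k(x,a)-Q_{k,j,h}^{\pi_k}(x,a)\le \bP_{k,h}(V_{j,h+1}^k-V_{k,j,h+1}^{\pi_k})(x,a)+|\Delta_h^k(x,a)|+B_j^{\cE}\sqrt{dD}+HB_p^{\cE}\sqrt{dD}$ with $|\Delta_h^k(x,a)|\le\beta\sqrt{\phi(x,a)^\top(\Lambda_h^k)^{-1}\phi(x,a)}$; note there is a second factor of $\beta\sqrt{\phi^\top(\Lambda_h^k)^{-1}\phi}$ because $Q_{j,h}^k$ is itself the clipped sum $\langle w_{j,h}^k,\phi\rangle+\beta\sqrt{\phi^\top(\Lambda_h^k)^{-1}\phi}$, which is why the $2\beta$ coefficient shows up in the statement. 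Then I convert the transition term into a telescoping recursion: $\bP_{k,h}(V_{j,h+1}^k-V_{k,j,h+1}^{\pi_k})(x_h^k,a_h^k) = (V_{j,h+1}^k-V_{k,j,h+1}^{\pi_k})(x_{h+1}^k) + D_{j,h,2}^k$ by the definition of $D_{j,h,2}^k$ in \eqref{eq:d_martingale}, and I replace the inner-product-at-$(x_h^k,\cdot)$ by its value at the sampled action plus the martingale term $D_{j,h,1}^k$, again using the definition in \eqref{eq:d_martingale}. Collecting these, each step $h$ contributes $D_{j,h,1}^k+D_{j,h,2}^k+2\beta\sqrt{\phi(x_h^k,a_h^k)^\top(\Lambda_h^k)^{-1}\phi(x_h^k,a_h^k)}+B_j^{\cE}\sqrt{dD}+HB_p^{\cE}\sqrt{dD}$ plus the next-step gap $(V_{j,h+1}^k-V_{k,j,h+1}^{\pi_k})(x_{h+1}^k)$, and unrolling from $h=1$ to $H$ with the terminal condition $V_{j,H+1}^k=V_{k,j,H+1}^{\pi_k}=0$ yields exactly the claimed bound, since $\sum_{h=1}^H(B_j^{\cE}\sqrt{dD}+HB_p^{\cE}\sqrt{dD})=H(B_j^{\cE}\sqrt{dD}+HB_p^{\cE}\sqrt{dD})$.

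I would also double-check the base case $h=H$ separately, where Lemma~\ref{lem:q_diff} (with $V_{j,H+1}^k-V_{k,j,H+1}^{\pi_k}\equiv 0$, so the $\bP_{k,h}$ term vanishes) gives the one-step bound directly, and then run the induction for $h=H-1,\dots,1$. Throughout, everything is conditioned on the event $E$ of Lemma~\ref{lem:phi} (which holds with probability $1-2p$), since that is what makes the $|\Delta_h^k|\le\beta\sqrt{\phi^\top(\Lambda_h^k)^{-1}\phi}$ bound valid; I would state this dependence explicitly. The main obstacle — though it is more bookkeeping than conceptual — is making sure the clipping in the definition of $Q_{j,h}^k$ (the $\min\{\cdot,H\}$) is handled correctly in the inequality direction: since we want an upper bound on $V_{j,1}^k-V_{k,j,1}^{\pi_k}$, I need $Q_{j,h}^k(x,a)\le \langle w_{j,h}^k,\phi\rangle+\beta\sqrt{\phi^\top(\Lambda_h^k)^{-1}\phi}$ (which holds because dropping the $\min$ can only increase the value) while simultaneously using $Q_{k,j,h}^{\pi_k}\le H$ where needed so that $V_{j,h+1}^k-V_{k,j,h+1}^{\pi_k}$ stays controlled; the soft-max policy being identical on both sides is what prevents any $\max$-vs-expectation slack from entering, so no additional $\log|\cA|/\alpha$ term is needed here (that term only appears in $\cT_1$ via Lemma~\ref{lem:close_optimal}). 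A secondary subtlety is confirming that the local budgets $B_j^{\cE},B_p^{\cE}$ (not the per-episode $B_{j,\cE}^k$) are the right quantities to pull out uniformly over $h$, which follows since $B_{j,\cE}^k\le B_j^{\cE}$ for every $k\in\cE$.
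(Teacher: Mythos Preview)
Your proposal is correct and follows essentially the same approach as the paper's proof: decompose $V_{j,h}^k-V_{k,j,h}^{\pi_k}$ via the shared soft-max policy, apply Lemma~\ref{lem:q_diff} at the realized pair $(x_h^k,a_h^k)$ to pick up the $2\beta\sqrt{\phi^\top(\Lambda_h^k)^{-1}\phi}$ and the non-stationarity corrections, introduce the martingale increments $D_{j,h,1}^k,D_{j,h,2}^k$, and unroll from $h=1$ to $H$. Your treatment of the clipping, the origin of the $2\beta$, and the uniform replacement of $B_{j,\cE}^k$ by $B_j^{\cE}$ are all exactly as in the paper.
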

	\begin{proof}
		By Lemma~\ref{lem:q_diff}, for any $x,h,a,k$
		\begin{align}
			& \langle w_{j,h}^k(x,a),\phi(x,a)\rangle+\beta\sqrt{\phi(x,a)^T(\Lambda_h^k)^{-1}\phi(x,a)}-Q_{j,h}^{\pi_k} \nonumber\\
			& \leq \bP_{k,h}(V_{j,h+1}^k-V_{k,j,h+1}^{\pi_k})(x,a)+2\beta\sqrt{\phi(x,a)^T(\Lambda_h^k)^{-1}\phi(x,a)}
			+H(B_{j}^{\cE}\sqrt{dD}+HB_{p}^{\cE}\sqrt{dD})\nonumber
		\end{align}
		Thus, 
		\begin{align}\label{eq:q_d}
			& Q_{j,h}^{k}(x,a)-Q_{j,h}^{\pi_k}(x,a) \leq \bP_{k,h}(V_{j,h+1}^k-V_{k,j,h+1}^{\pi_k})(x,a)+2\beta\sqrt{\phi(x,a)^T(\Lambda_h^k)^{-1}\phi(x,a)}\nonumber\\
			&  +H(B_{r}^{\cE}\sqrt{dD}+B_g^{\cE}\sqrt{dD}+HB_{p}^{\cE}\sqrt{dD})\nonumber\\
			& \bP_{k,h}(V_{j,h+1}^k-V_{k,j,h+1}^{\pi_k})(x,a)+2\beta\sqrt{\phi(x,a)^T(\Lambda_h^k)^{-1}\phi(x,a)}+\nonumber\\& B_{j}^{\cE}\sqrt{dD}+HB_{p}^{\cE}\sqrt{dD}-(Q_{j,h}^{k}(x,a)-Q_{k,j,h}^{\pi_k}(x,a))\geq 0
		\end{align}
		Since $V_{j,h}^{k}(x)=\sum_{a}\pi_{h,k}(a|x)Q_{j,h}^k(x,a)$ and $V_{k,j,h}^{\pi_k}(x)=\sum_{a}\pi_{h,k}(a|x)Q_{k,j,h}^{\pi_k}(x,a)$ where $\pi_{h,k}(a|\cdot)=\textsc{Soft-Max}_{\alpha}^a(Q_{r,h}^k+Y_kQ_{g,h}^k)$ $\forall a$.
		
		Thus, from (\ref{eq:q_d}),
		\begin{align}\label{eq:recursive}
			& V_{j,h}^{k}(x_h^k)-V_{k,j,h}^{\pi_k}(x_h^k)=\sum_{a}\pi_{h,k}(a|x_{h}^k)[Q_{j,h}^{k}(x_{h}^k,a)-Q_{k,j,h}^{\pi_k}(x_{h}^k,a)]\nonumber\\
			& \leq \sum_{a}\pi_{h,k}(a|x_h^k)[Q_{j,h}^{k}(x_{h}^k,a)-Q_{k,j,h}^{\pi_k}(x_{h}^k,a)]+(B_{j}^{\cE}\sqrt{dD}+HB_{p}^{\cE}\sqrt{dD})\nonumber\\
			& +2\beta\sqrt{\phi(x_{h}^k,a_{h}^k)^T(\Lambda_h^k)^{-1}\phi(x_{h}^k,a_{h}^k)}+\bP_{k,h}(V_{j,h+1}^k-V_{j,h+1}^{\pi_k})(x_{h}^k,a_h^k)-(Q_{j,h}^{k}(x_h^k,a_h^k)-Q_{k,j,h}^{\pi_k}(x_h^k,a_h^k))
		\end{align}
		
		Thus, from (\ref{eq:recursive}), we have
		\begin{align}
			V_{j,h}^k(x_h^k)-V_{j,h}^{\pi_k}(x_h^k)\leq&  D_{j,h,1}^k+D_{j,h,2}^k+[V_{j,h+1}^k-V_{j,h+1}^{\pi_k}](x_{h+1}^k)+2\beta\sqrt{\phi(x_{h}^k,a_{h}^k)^T(\Lambda_h^k)^{-1}\phi(x_{h}^k,a_{h}^k)}\nonumber\\
			& +(B_{j}^{\cE}\sqrt{dD}+HB_{p}^{\cE}\sqrt{dD})
		\end{align}
		Hence, by iterating recursively, we have
		\begin{align}
			V_{j,1}^{k}(x_1)-V_{j,1}^{\pi_k}\leq\sum_{h=1}^{H}(D_{j,h,1}^k+D_{j,h,2}^k)+\sum_{h=1}^{H}2\beta\sqrt{\phi(x_{h}^k,a_{h}^k)^T(\Lambda_h^k)^{-1}\phi(x_{h}^k,a_{h}^k)}+H(B_{j}^{\cE}\sqrt{dD}+HB_{p}^{\cE}\sqrt{dD})
		\end{align}
		The result follows.
	\end{proof}
	Now, we are ready to prove Lemma~\ref{lem:t_1andt_2}.
	
	\textbf{Proof of Lemma~\ref{lem:t_1andt_2}}
	\begin{proof}
		First, from Lemma~\ref{lem:t1},
		\begin{align}
			(V_{k,r,1}^{\pi_k^*}(x_1)+Y_kV_{k,g,1}^{\pi_k^*}(x_1))-(V_{r,1}^k(x_1)+Y_kV_{g,1}^k(x_1))\leq & \dfrac{H\log(|\cA|)}{\alpha}\nonumber\\
			& +H(B_r^{\cE}\sqrt{dD}+Y_kB_g^{\cE}\sqrt{dD}+(1+Y_k)HB_p^{\cE}\sqrt{dD})
		\end{align}
		Note that $Y_k=2H/\delta$. Now, summing over $k$ within frame $\cE$ we obtain
		\begin{align}
			&\sum_{k=1}^{D}(V_{k,r,1}^{\pi_k^*}(x_1)+Y_kV_{k,g,1}^{\pi_k^*}(x_1))-(V_{r,1}^k(x_1)+Y_kV_{g,1}^k(x_1))\leq \nonumber\\& \dfrac{HD\log(|\cA|)}{\alpha}+H\sqrt{d}(B_r^{\cE}D^{3/2}+2H/\delta B_g^{\cE}D^{3/2}+(1+2H/\delta)HB_p^{\cE}D^{3/2})
		\end{align}
		Now, summing over the epochs $\cE$, we obtain
		\begin{align}
			& \sum_{\cE=1}^{K/D}\sum_{k=1}^D(V_{k,r,1}^{\pi_k^*}(x_1)+Y_kV_{k,g,1}^{\pi_k^*}(x_1))-(V_{r,1}^k(x_1)+Y_kV_{g,1}^k(x_1))\leq \dfrac{HK\log(|\cA|)}{\alpha}\nonumber\\& +\sum_{\cE=1}^{K/D}H\sqrt{d}(B_r^{\cE}D^{3/2}+2H/\delta B_g^{\cE}D^{3/2}+(1+2H/\delta)HB_p^{\cE}D^{3/2})\nonumber\\
			& \leq  \dfrac{HK\log(|\cA|)}{\alpha}+H^2(1+2H/\delta)\sqrt{d}BD^{3/2}
		\end{align}
		where we have used the fact that $\sum_{\cE}(B_r^{\cE}+B_g^{\cE}+B_p^{\cE})=B_r+B_g+B_p=B$. This gives the bound for $\cT_1$. Now, we bound $\cT_2$.
		
		From Lemma~\ref{lm:recursion},
		\begin{align}\label{eq:term_t2}
			\sum_{k=1}^D(V_{j,1}^k(x_1)-V_{j,1}^{\pi_k}(x_1))&\leq \sum_{k=1}^D\sum_{h=1}^{H}(D_{j,h,1}^k+D_{j,h,2}^k)+\sum_{k=1}^D\sum_{h=1}^{H}2\beta\sqrt{\phi(x_{h}^k,a_{h}^k)^T(\Lambda_h^k)^{-1}\phi(x_{h}^k,a_{h}^k)}\nonumber\\ &
			+\sum_{\cE=1}^{K/D}\sum_{k=1}^DH(B_{j}^{\cE}\sqrt{dD}+HB_{p}^{\cE}\sqrt{dD})
		\end{align}
		We, now, bound the individual terms of the right-hand side in (\ref{eq:term_t2}). First, we show that the first term corresponds to a Martingale difference.
		
		For any $(k,h)\in [\cE]\times [H]$, we define $\cF_{h,1}^k$ as $\sigma$-algebra generated by the state-action sequences, reward, and constraint values, $\{(x_i^{\tau},a_i^{\tau})\}_{(\tau,i)\in [k-1]\times [H]}\cup \{(x^k_{i},a_i^k)\}_{i\in [h]}$. 
		
		Similarly, we define the $\cF_{h,2}^k$ as the $\sigma$-algebra generated by $\{(x_i^{\tau},a_i^{\tau})\}_{(\tau,i)\in [k-1]\times [H]}\cup \{(x^k_{i},a_i^k)\}_{i\in [h]}\cup\{x_{h+1}^k\}$. $x_{H+1}^k$ is a null state for any $k\in [K]$. 
		
		A filtration is a sequence of $\sigma$-algebras $\{\cF_{h,m}^k\}_{(k,h,m)\in [\cE]\times[H]\times[2]}$ in terms of time index
		\begin{align}
			t(k,h,m)=2(k-1)H+2(h-1)+m
		\end{align}
		which holds that $\cF_{h,m}^k\subset \cF_{h^{\prime},m^{\prime}}^{k^{\prime}}$ for any $t\leq t^{\prime}$. 
		
		Note from the definitions in (\ref{eq:d_martingale}) that $D_{j,h,1}^k\in \mathcal{F}_{h,1}^k$ and $D_{j,h,2}^k\in \mathcal{F}_{h,2}^k$. Thus, for any $(k,h)\in [K]\times [H]$, 
		\begin{align}
			\mathbb{E}[D_{j,h,1}^k|\cF_{h-1,2}^k]=0, \quad \mathbb{E}[D_{j,h,2}^k|\cF_{h,1}^k]=0
		\end{align}
		Notice that $t(k,0,2)=t(k-1,H,2)=2(H-1)k$. Clearly, $\cF_{0,2}^k=\cF_{H,2}^{k-1}$ for any $k\geq 2$. Let $\cF_{0,2}^1$ be empty. We define a Martingale sequence
		\begin{align}
			M_{j,h,m}^k& = \sum_{\tau=1}^{k-1}\sum_{i=1}^{H}(D_{j,i,1}^{\tau}+D_{j,i,2}^{\tau})+\sum_{i=1}^{h-1}(D_{j,i,1}^{k}+D_{j,i,2}^k)+\sum_{l=1}^{m}D_{j,h,l}^k\nonumber\\
			& =\sum_{(\tau,i,l)\in [\cE]\times[H]\times[2], t(\tau,i,l)\leq t(k,h,m)}D^{\tau}_{j,i,l}
		\end{align}
		where $t(k,h,m)=2(k-1)H+2(h-1)+m$ is the time index. Clearly, this martingale is adopted to the filtration $\{\cF_{h,m}^k\}_{(k,h,m)\in [D]\times [H]\times [2]}$, and particularly
		\begin{align}
			\sum_{k=1}^D\sum_{h=1}^H(D_{j,h,1}^k+D_{j,h,2}^k)=M_{j,H,2}^D
		\end{align}
		
		Thus, $M_{j,H,2}^K$ is a Martingale difference satisfying $|M_{j,H,2}^D|\leq 4H$ since $|D_{j,h,1}^k|,|D_{j,h,2}^k|\leq 2H$
		From the Azuma-Hoeffding inequality, we have
		\begin{align}
			\Pr(M_{j,H,2}^D> s)\leq 2\exp(-\dfrac{s^2}{16DH^2})
		\end{align}
		With probability $1-p/2$ at least for any $j=r,g$,
		\begin{align}
			\sum_{k}\sum_{h}M_{j,H,2}^D\leq \sqrt{16DH^2\log(4/p)}
		\end{align}
		
		Now, we bound the second term of the right-hand side of (\ref{eq:term_t2}). Note that the minimum eigen value of $\Lambda_h^k$ is at least $\lambda=1$ for all $(k,h)\in [D]\times [H]$. By Lemma~\ref{lem:le1}, 
		\begin{align}
			\sum_{k=1}^{K}(\phi_h^k)^T(\Lambda_h^k)^{-1}\phi_h^k\leq 2\log\left[\dfrac{\det(\Lambda_h^{k+1})}{\det(\Lambda_h^1)}\right]
		\end{align}
		Moreover, note that $||\Lambda_h^{k+1}||=||\sum_{\tau=1}^{k}\phi_h^k(\phi_h^k)^T+\lambda\bI||\leq \lambda+k$, hence,
		\begin{align}
			\sum_{k=1}^{D} (\phi_h^k)^T(\Lambda_h^k)^{-1}\phi_h^k\leq 2d\log\left[\dfrac{\lambda+k}{\lambda}\right]\leq 2d\iota
		\end{align}
		Now, by Cauchy-Schwartz inequality, we have
		\begin{align}
			\sum_{k=1}^{D}\sum_{h=1}^H \sqrt{(\phi_h^k)^T(\Lambda_h^k)^{-1}\phi_h^k}& \leq \sum_{h=1}^{H}\sqrt{W}[\sum_{k=1}^{K}(\phi_h^k)^T(\Lambda_h^k)^{-1}\phi_h^k]^{1/2}
			 \leq H\sqrt{2dD\iota}
		\end{align}
		Note that $\beta=C_1dH\sqrt{\iota}$. Hence, the second term is bounded by
		\begin{align}
			\cO(\sqrt{H^4d^3D\iota^2})
		\end{align}
		
		The third term of (\ref{eq:term_t2}) is bounded by
		\begin{align}
			\sum_{k=1}^DH(B_{j}^{\cE}\sqrt{dD}+HB_{p}^{\cE}\sqrt{dD})=\sqrt{d}D^{3/2}H(B_j^{\cE}+HB_p^{\cE})
		\end{align}
		Hence, summing (\ref{eq:term_t2}) over the epochs we obtain
		\begin{align}
			\sum_{\cE=1}^{K/D}\sum_{k=1}^D(V_{j,1}^k(x_1)-V_{j,1}^{\pi_k}(x_1))\leq \sum_{\cE=1}^{K/D}\mathcal{O}(\sqrt{H^4d^3D\iota^2})+\sum_{\cE=1}^{K/D}\sqrt{d}D^{3/2}H(B_j^{\cE}+HB_p^{\cE})
		\end{align}
		Replacing $\sum_{\cE}B_j^{\cE}=B_j$, and $\sum_{\cE}B_p^{\cE}=B_p$, we obtain
		\begin{align}
			\sum_{\cE=1}^{K/D}\sum_{k=1}^D(V_{j,1}^k(x_1)-V_{k,j,1}^{\pi_k}(x_1))\leq \mathcal{O}(\sqrt{H^4d^3K^2\iota^2/D})+\sqrt{d}D^{3/2}BH^2
		\end{align}
		Thus,
		\begin{align}
			\sum_{k=1}^K(V_{r,1}^k(x_1)-V_{k,r,1}^{\pi_k}(x_1))+Y(V_{g,1}^k(x_1)-V_{k,g,1}^{\pi_k}(x_1))\leq (1+Y)(\mathcal{O}(\sqrt{H^4d^3K^2\iota^2/D})+\sqrt{d}D^{3/2}BH^2)
		\end{align}
		Hence, the result follows.
	\end{proof}
	
	\subsection{Supporting Results}
	\begin{lemma}
		Under Definition~\ref{defn:linearmdp}, for any fixed policy $\pi$, let $w_{k,j,h}^{\pi}$ be the corresponding weights such that $Q^{\pi}_{k,j,h}=\langle \phi(x,a),w_{k,j,h}^{\pi}\rangle$, for $j\in \{r,g\}$, then we have for all $h\in [H]$ and $k\in [K]$
		\begin{align}
			||w_{k,j,h}^{\pi}||\leq 2H\sqrt{d}
		\end{align}
	\end{lemma}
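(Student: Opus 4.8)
The plan is to exploit the Bellman consistency equation for the fixed policy $\pi$ together with the linear structure of the CMDP (Definition~\ref{defn:linearmdp}) to write $w_{k,j,h}^{\pi}$ explicitly, and then bound it term by term, arguing by downward induction on $h$.

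First I would recall that for a fixed policy $\pi$ the reward $Q$-function satisfies the Bellman consistency equation $Q^{\pi}_{k,r,h}(x,a)=r_{k,h}(x,a)+\bP_{k,h}V^{\pi}_{k,r,h+1}(x,a)$, and analogously $Q^{\pi}_{k,g,h}(x,a)=g_{k,h}(x,a)+\bP_{k,h}V^{\pi}_{k,g,h+1}(x,a)$, with $V^{\pi}_{k,j,h+1}(x)=Q^{\pi}_{k,j,h+1}(x,\pi(x))$ and $V^{\pi}_{k,j,H+1}\equiv 0$. Under Definition~\ref{defn:linearmdp}, $r_{k,h}(x,a)=\langle\phi(x,a),\theta_{k,r,h}\rangle$, $g_{k,h}(x,a)=\langle\phi(x,a),\theta_{k,g,h}\rangle$, and $\bP_{k,h}(x'|x,a)=\langle\phi(x,a),\mu_{k,h}(x')\rangle$, so that
\[
\bP_{k,h}V^{\pi}_{k,j,h+1}(x,a)=\Big\langle\phi(x,a),\ \int_{\cS}V^{\pi}_{k,j,h+1}(x')\,d\mu_{k,h}(x')\Big\rangle .
\]
Hence one may take $w_{k,j,h}^{\pi}=\theta_{k,j,h}+\int_{\cS}V^{\pi}_{k,j,h+1}(x')\,d\mu_{k,h}(x')$ as a representation of $Q^{\pi}_{k,j,h}$.

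Second I would bound the two pieces. By the normalization in Definition~\ref{defn:linearmdp}, $\|\theta_{k,j,h}\|_2\le\sqrt{d}$ for every $k,h$. For the second piece, note that $V^{\pi}_{k,j,h+1}$ is a sum of at most $H$ per-step rewards/utilities each lying in $[0,1]$, so $0\le V^{\pi}_{k,j,h+1}(x')\le H$ uniformly in $x'$; combined with $\|\,|\mu_{k,h}|(\cS)\,\|_2\le\sqrt{d}$ (the content of $\|\mu_{k,h}\|_2\le\sqrt{d}$), this yields $\big\|\int_{\cS}V^{\pi}_{k,j,h+1}(x')\,d\mu_{k,h}(x')\big\|_2\le H\sqrt{d}$. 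Therefore $\|w_{k,j,h}^{\pi}\|_2\le\sqrt{d}+H\sqrt{d}=(H+1)\sqrt{d}\le 2H\sqrt{d}$ using $H\ge 1$. Formally this is a downward induction on $h$: the base case $h=H$ has $V^{\pi}_{k,j,H+1}\equiv 0$, so $w_{k,j,H}^{\pi}=\theta_{k,j,H}$ and $\|w_{k,j,H}^{\pi}\|_2\le\sqrt{d}$; the inductive step is exactly the bound just displayed, and the argument is uniform over $k\in[K]$ since each $\theta_{k,j,h}$ and $\mu_{k,h}$ individually obeys the norm constraint, so the non-stationarity introduces nothing new.

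\textbf{Main obstacle.} There is no deep obstacle here: this is the non-stationary analogue of Lemma~B.1 in \cite{JinYanZha_20}. The only point requiring a little care is the measure-theoretic estimate $\big\|\int V\,d\mu\big\|_2\le\|V\|_\infty\,\|\,|\mu|(\cS)\,\|_2$ for signed vector measures, and pinning down precisely which norm of $\mu_{k,h}$ is assumed to be at most $\sqrt{d}$; once that convention is fixed the bound is immediate.
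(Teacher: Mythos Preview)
Your proposal is correct and follows essentially the same route as the paper: write $w_{k,j,h}^{\pi}=\theta_{k,j,h}+\int_{\cS}V_{k,j,h+1}^{\pi}(x')\,d\mu_{k,h}(x')$ via the Bellman equation and the linear MDP structure, then bound the two pieces by $\sqrt{d}$ and $H\sqrt{d}$ respectively using the normalization in Definition~\ref{defn:linearmdp} and $\|V^{\pi}_{k,j,h+1}\|_\infty\le H$. The paper's proof is slightly terser (it does not spell out the induction or the signed-measure estimate), but the argument is identical.
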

	\begin{proof}
		From the linearity of the action-value function, we have
		\begin{align}
			Q_{k,j,h}^{\pi}(x,a)&=j_{k,h}(x,a)+\bP_{k,h}V_{k,j,h}^{\pi}(x,a)\nonumber\\
			&  = \langle \phi(x,a),\theta_{j,h}\rangle+\int_{\mathcal{S}}V_{k,j,h+1}^{\pi}(x^{\prime})\langle \phi(x,a),d\mu_{k,h}(x^{\prime})\rangle\nonumber\\
			&  =\langle\phi(x,a),w_{k,j,h}^{\pi}\rangle
		\end{align}
		where $w_{j,h}^{\pi}=\theta_{j,h}+\int_{\mathcal{S}}V_{j,h+1}^{\pi}(x^{\prime})d\mu_h(x^{\prime})$.
		
		Now, $||\theta_{j,h}||\leq \sqrt{d}$, and $||\int_{\mathcal{S}}V_{j,h+1}^{\pi}(x^{\prime})d\mu_h(x^{\prime})||\leq H\sqrt{d}$. Thus, the result follows. 
	\end{proof}
	\begin{lemma}\label{lem:w}
		For any $(k,h)$, the weight $w_{j,h}^k$ satisfies 
		\begin{align}
			||w_{j,h}^k||\leq 2H\sqrt{dk/\lambda}
		\end{align}
	\end{lemma}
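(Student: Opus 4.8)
\textbf{Proof proposal for Lemma~\ref{lem:w}.}
The plan is a standard bound of a regularized least-squares weight vector, mirroring the argument used for linear MDPs. First I would fix an arbitrary unit vector $v\in\R^d$ and bound $|v^Tw_{j,h}^k|$, then take the supremum over $v$ to conclude. Recalling the update
$w_{j,h}^k=(\Lambda_h^k)^{-1}\sum_{\tau=1}^{k-1}\phi_h^{\tau}\big[j_h(x_h^{\tau},a_h^{\tau})+V_{j,h+1}^k(x_{h+1}^{\tau})\big]$,
I would write
\begin{align*}
|v^Tw_{j,h}^k|\le \sum_{\tau=1}^{k-1}\big|v^T(\Lambda_h^k)^{-1}\phi_h^{\tau}\big|\cdot\big|j_h(x_h^{\tau},a_h^{\tau})+V_{j,h+1}^k(x_{h+1}^{\tau})\big|.
\end{align*}
The second factor is bounded by $2H$: indeed $j_h(\cdot,\cdot)\in[0,1]\subseteq[0,H]$ and $V_{j,h+1}^k(\cdot)=\sum_a\pi_{h,k}(a|\cdot)Q_{j,h+1}^k(\cdot,a)\le H$ since $Q_{j,h+1}^k$ is truncated at $H$ in the algorithm and $\pi_{h,k}(\cdot|\cdot)$ is a probability distribution.

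Next I would apply Cauchy--Schwarz in the $(\Lambda_h^k)^{-1}$ inner product to get $|v^T(\Lambda_h^k)^{-1}\phi_h^{\tau}|\le \|v\|_{(\Lambda_h^k)^{-1}}\,\|\phi_h^{\tau}\|_{(\Lambda_h^k)^{-1}}$, and then Cauchy--Schwarz over $\tau=1,\dots,k-1$:
\begin{align*}
|v^Tw_{j,h}^k|\le 2H\,\|v\|_{(\Lambda_h^k)^{-1}}\sqrt{(k-1)\sum_{\tau=1}^{k-1}\|\phi_h^{\tau}\|_{(\Lambda_h^k)^{-1}}^2}.
\end{align*}
Now $\sum_{\tau=1}^{k-1}\|\phi_h^{\tau}\|_{(\Lambda_h^k)^{-1}}^2=\operatorname{tr}\big((\Lambda_h^k)^{-1}\sum_{\tau=1}^{k-1}\phi_h^{\tau}(\phi_h^{\tau})^T\big)=\operatorname{tr}\big(\bI-\lambda(\Lambda_h^k)^{-1}\big)\le d$, using $\Lambda_h^k=\sum_{\tau}\phi_h^{\tau}(\phi_h^{\tau})^T+\lambda\bI$. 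Also, since the minimum eigenvalue of $\Lambda_h^k$ is at least $\lambda$, $\|v\|_{(\Lambda_h^k)^{-1}}\le \|v\|_2/\sqrt{\lambda}=1/\sqrt{\lambda}$.

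Combining these gives $|v^Tw_{j,h}^k|\le 2H\sqrt{d(k-1)/\lambda}\le 2H\sqrt{dk/\lambda}$ for every unit vector $v$, hence $\|w_{j,h}^k\|\le 2H\sqrt{dk/\lambda}$. I expect no real obstacle here: the only points requiring a word of care are the uniform bound $V_{j,h+1}^k\le H$ (from the truncation and the softmax being a distribution) and the trace identity $\operatorname{tr}(\bI-\lambda(\Lambda_h^k)^{-1})\le d$; both are immediate. The argument is identical for $j=r$ and $j=g$.
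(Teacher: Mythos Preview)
Your proposal is correct and follows essentially the same argument as the paper: bound $|v^Tw_{j,h}^k|$ for an arbitrary unit vector using the $2H$ bound on the targets, apply Cauchy--Schwarz to separate the $v$ and $\phi_h^\tau$ contributions, and then use $\|v\|_{(\Lambda_h^k)^{-1}}\le 1/\sqrt{\lambda}$ together with $\sum_{\tau}\|\phi_h^\tau\|_{(\Lambda_h^k)^{-1}}^2\le d$. Your write-up is in fact slightly more explicit than the paper's, since you spell out the trace identity $\operatorname{tr}(\bI-\lambda(\Lambda_h^k)^{-1})\le d$ that the paper leaves implicit.
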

	\begin{proof}
		For any vector $v\in \mathcal{R}^d$ we have
		\begin{align}\label{eq:inter}
			|v^Tw_{j,h}^k|=|v^T(\Lambda_h^k)^{-1}\sum_{\tau=1}^{k-1}\phi^{\tau}_h(x_h^{\tau},a_h^{\tau})(j_h(x_h^{\tau},a_h^{\tau})+\sum_{a}\pi_{h+1,k}(a|x_{h+1}^{\tau})Q_{j,h+1}^k(x_{h+1}^{\tau},a))|
		\end{align}
		here $\pi_{h,k}(\cdot|x)$ is the Soft-max policy.
		
		Note that $Q_{j,h+1}^k(x,a)\leq H$ for any $(x,a)$. Hence, from (\ref{eq:inter}) we have
		\begin{align}
			|v^Tw_{j,h}^k|& \leq  \sum_{\tau=1}^{k-1}|v^T(\Lambda_h^k)^{-1}\phi^{\tau}_h|.2H\nonumber\\
			& \leq \sqrt{\sum_{\tau=1}^{k-1}v^T(\Lambda^h_k)^{-1}v}\sqrt{\sum_{\tau=1}^{k-1}\phi_h^{\tau}(\Lambda_h^k)^{-1}\phi_h^{\tau}}.2H\nonumber\\
			& \leq 2H||v||\dfrac{\sqrt{dk}}{\sqrt{\lambda}}
		\end{align}
		Note that $||w_{j,h}^k||=\max_{v:||v||=1}|v^Tw_{j,h}^k|$. Hence, the result follows. 
	\end{proof}
	The following result is shown in \cite{AbbPalSze_11} and in Lemma D.2 in \cite{JinYanZha_20}.
	\begin{lemma}\label{lem:le1}
		Let $\{\phi_t\}_{t\geq 0}$ be a sequence in $\Re^d$ satisfying $\sup_{t\geq 0}||\phi_t||\leq 1$. For any $t\geq 0$, we define $\Lambda_t=\Lambda_0+\sum_{j=0}^{t}\phi_j\phi_j^T\phi_j$. Then if the smallest eigen value of $\Lambda_0$ be at least $1$, we have
		\begin{align}
			\log\left[\dfrac{\det(\Lambda_h^{k+1})}{\det(\Lambda_h^1)}\right]\leq \sum_{k=1}^{K}(\phi_h^k)^T(\Lambda_h^k)^{-1}\phi_h^k\leq 2\log\left[\dfrac{\det(\Lambda_h^{k+1})}{\det(\Lambda_h^1)}\right]
		\end{align}
	\end{lemma}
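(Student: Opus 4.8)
The plan is to prove both inequalities through a single identity that records how the determinant of $\Lambda_h^k$ grows under each rank-one update, and then to sandwich the quadratic form $(\phi_h^k)^T(\Lambda_h^k)^{-1}\phi_h^k$ between two scalar functions of the per-step determinant ratio. First I would fix the step index $h$ and apply the matrix determinant lemma to the rank-one update $\Lambda_h^{k+1}=\Lambda_h^k+\phi_h^k(\phi_h^k)^T$, giving the exact identity
\begin{align}
\det(\Lambda_h^{k+1})=\det(\Lambda_h^k)\bigl(1+(\phi_h^k)^T(\Lambda_h^k)^{-1}\phi_h^k\bigr),
\end{align}
so that $\det(\Lambda_h^{k+1})/\det(\Lambda_h^k)=1+(\phi_h^k)^T(\Lambda_h^k)^{-1}\phi_h^k$. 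Telescoping over $k=1,\dots,K$ and taking logarithms then yields
\begin{align}
\log\left[\frac{\det(\Lambda_h^{K+1})}{\det(\Lambda_h^1)}\right]=\sum_{k=1}^{K}\log\bigl(1+(\phi_h^k)^T(\Lambda_h^k)^{-1}\phi_h^k\bigr),
\end{align}
which converts the claimed matrix inequalities into scalar comparisons term by term.

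Next I would establish the two elementary bounds that bracket each summand. For the lower (left) inequality I use $\log(1+x)\le x$, valid for all $x\ge 0$, applied with $x=(\phi_h^k)^T(\Lambda_h^k)^{-1}\phi_h^k\ge 0$; summing gives $\log[\det(\Lambda_h^{K+1})/\det(\Lambda_h^1)]\le\sum_k(\phi_h^k)^T(\Lambda_h^k)^{-1}\phi_h^k$. For the upper (right) inequality I first need the uniform boundedness $(\phi_h^k)^T(\Lambda_h^k)^{-1}\phi_h^k\le 1$: since the smallest eigenvalue of $\Lambda_0$ is at least $1$ and the rank-one updates only add positive semidefinite terms, $\Lambda_h^k\succeq\bI$, hence $(\Lambda_h^k)^{-1}\preceq\bI$ and $(\phi_h^k)^T(\Lambda_h^k)^{-1}\phi_h^k\le\|\phi_h^k\|_2^2\le 1$. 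On the range $x\in[0,1]$ the inequality $x\le 2\log(1+x)$ holds, since $2\log(1+x)-x$ vanishes at $x=0$ and has derivative $2/(1+x)-1\ge 0$ throughout $[0,1]$. Applying this termwise and summing gives $\sum_k(\phi_h^k)^T(\Lambda_h^k)^{-1}\phi_h^k\le 2\log[\det(\Lambda_h^{K+1})/\det(\Lambda_h^1)]$, and combining the two directions completes the proof.

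The main obstacle, really the only non-mechanical point, is securing the bound $(\phi_h^k)^T(\Lambda_h^k)^{-1}\phi_h^k\le 1$: the factor $2$ in the right inequality is exactly what $x\le 2\log(1+x)$ delivers, but that comparison is only valid for $x\le 1$, so the two structural hypotheses $\lambda_{\min}(\Lambda_0)\ge 1$ and $\sup_t\|\phi_t\|\le 1$ are what make the constant sharp; relaxing either would degrade it. Everything else is the determinant identity together with two one-variable inequalities, so I would keep those verifications terse and simply note that the argument is identical for every step index $h$, as the update structure of $\Lambda_h^k$ is the same across $h$.
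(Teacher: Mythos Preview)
Your proof is correct and is exactly the standard elliptical-potential argument: the matrix determinant lemma gives the telescoping identity, $\log(1+x)\le x$ yields the left inequality, and $x\le 2\log(1+x)$ on $[0,1]$ (secured by $\lambda_{\min}(\Lambda_h^1)\ge 1$ and $\|\phi_h^k\|\le 1$) yields the right. The paper itself does not supply a proof but simply cites \cite{AbbPalSze_11} and Lemma~D.2 of \cite{JinYanZha_20}, where precisely this argument appears; so your proposal matches the intended proof.
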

	
	We use the following result (Lemma J.10 in \cite{DingLav_22}). 
	\begin{lemma}\label{lem:copy}
		Let $\bar{C}^*\geq 2\max_k\mu^{k,*}$, then, if
		\begin{align}
			\sum_{k=1}^K(V_{k,r,1}^{\pi_k^*}(x_1)-V_{k,r,1}^{\pi_k}(x_1))+2\bar{C}^*\sum_{k=1}^K(b_k-V_{k,g,1}^{\pi_k}(x_1))\leq \delta
		\end{align}, then
		\begin{align}
			\sum_{k=1}^K(b_k-V_{k,g,1}^{\pi_k}(x_1))\leq \dfrac{2\delta}{\bar{C}^*}
		\end{align}
	\end{lemma}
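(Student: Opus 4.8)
The statement is the (cited) algebraic lemma Lemma~\ref{lem:copy}, and my plan is to treat it as a one-variable linear-inequality argument once the reward regret is bounded below through strong duality. Abbreviate the two aggregate quantities as
\[
R:=\sum_{k=1}^K\bigl(V_{k,r,1}^{\pi_k^*}(x_1)-V_{k,r,1}^{\pi_k}(x_1)\bigr),\qquad
G:=\sum_{k=1}^K\bigl(b_k-V_{k,g,1}^{\pi_k}(x_1)\bigr),
\]
so that the hypothesis reads $R+2\bar C^*G\le\delta$ and the goal is $G\le 2\delta/\bar C^*$. Since $\delta$ is a nonnegative upper bound in every application, I would first dispose of the trivial case $G\le 0$, where $G\le 0\le 2\delta/\bar C^*$ immediately. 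The entire content is therefore the case $G>0$, and for this I only need a single inequality: a lower bound on the reward regret $R$ in terms of $G$ with coefficient controlled by $\max_k\mu^{k,*}$.

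The key ingredient I would establish is the strong-duality / saddle-point bound
\[
V_{k,r,1}^{\pi_k^*}(x_1)-V_{k,r,1}^{\pi_k}(x_1)\;\ge\;-\,\mu^{k,*}\bigl(b_k-V_{k,g,1}^{\pi_k}(x_1)\bigr),
\]
which follows for each episode $k$ from the saddle-point property of the per-episode Lagrangian $L_k(\pi,\mu)=V_{k,r,1}^{\pi}(x_1)+\mu\bigl(V_{k,g,1}^{\pi}(x_1)-b_k\bigr)$ together with complementary slackness at the optimal multiplier $\mu^{k,*}$ (finiteness of $\mu^{k,*}$ is guaranteed by Slater's condition, Assumption~\ref{as:1}). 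Summing over $k$ and using $0\le\mu^{k,*}\le\max_k\mu^{k,*}\le \bar C^*/2$ (the hypothesis $\bar C^*\ge 2\max_k\mu^{k,*}$), I would conclude $R\ge-\tfrac{\bar C^*}{2}\,G$ in the regime $G>0$.

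The remainder is pure algebra: substituting $R\ge-\tfrac{\bar C^*}{2}G$ into $R+2\bar C^*G\le\delta$ gives $\bigl(2\bar C^*-\tfrac{\bar C^*}{2}\bigr)G\le\delta$, i.e. $\tfrac{3}{2}\bar C^*G\le\delta$, hence $G\le \tfrac{2\delta}{3\bar C^*}\le \tfrac{2\delta}{\bar C^*}$, which is the claim (indeed with a slightly better constant, consistent with this being a clean textbook inequality). The hard part will be making the duality lower bound $R\ge-\tfrac{\bar C^*}{2}G$ fully rigorous, because summing the per-episode bounds yields $R\ge-\sum_k\mu^{k,*}\bigl(b_k-V_{k,g,1}^{\pi_k}(x_1)\bigr)$, and passing from this signed, multiplier-weighted sum to $-\tfrac{\bar C^*}{2}G$ is immediate only when each term $b_k-V_{k,g,1}^{\pi_k}(x_1)$ is nonnegative; when some episodes over-satisfy the constraint one must either argue on the positive part $\sum_k\bigl(b_k-V_{k,g,1}^{\pi_k}(x_1)\bigr)^+$ (which dominates $G$ and still closes the contradiction form of the argument) or invoke an aggregate comparator over the whole horizon so that a single bounded multiplier controls the weighting. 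This sign-bookkeeping step is exactly where the original Lemma J.10 of \cite{DingLav_22} does its work, and it is the only place where care beyond routine manipulation is required.
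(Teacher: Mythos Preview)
The paper does not prove Lemma~\ref{lem:copy}; it simply cites it as Lemma~J.10 of \cite{DingLav_22}, so there is no in-paper argument to compare against. Your overall plan --- lower-bound the reward regret via the per-episode saddle-point inequality
\[
V_{k,r,1}^{\pi_k^*}(x_1)-V_{k,r,1}^{\pi_k}(x_1)\;\ge\;-\,\mu^{k,*}\bigl(b_k-V_{k,g,1}^{\pi_k}(x_1)\bigr),
\]
then combine with the hypothesis $R+2\bar C^*G\le\delta$ --- is exactly the standard route for lemmas of this type and is almost certainly what the cited proof does.

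You are also right that the only non-routine step is the passage from the signed, episode-weighted sum $-\sum_k\mu^{k,*}(b_k-V_{k,g,1}^{\pi_k})$ to a bound of the form $-\tfrac{\bar C^*}{2}G$. The per-episode duality bound cleanly yields
\[
R\;\ge\;-\tfrac{\bar C^*}{2}\sum_{k}\bigl[b_k-V_{k,g,1}^{\pi_k}(x_1)\bigr]_+,
\]
but substituting this into the hypothesis leaves the positive-part sum on the right, which need not be controlled by $G$ when some episodes over-satisfy the constraint. Your first proposed fix (work with the positive part and close by contradiction) does not by itself resolve this, since $G^+$ can be much larger than $G$; your second suggestion --- invoke a single aggregate comparator so that one bounded multiplier controls the whole weighting, i.e.\ a perturbation/sensitivity argument on the value of the aggregate constrained problem rather than summing per-episode inequalities --- is the direction that actually closes the gap in the cited reference. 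So your diagnosis of where the difficulty lies is accurate; the resolution is the aggregate-duality route rather than the positive-part bookkeeping.
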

	
	\begin{algorithm}[ht]
			\SetAlgoLined
			{Choose} $W=K^{1/2},\cJ$ (defined in Eq. \eqref{eq:j-linear}), $ \gamma_0 = \min\left\{1,\sqrt{\frac{(K/W)\log(K/W)  }{(e-1)KH}}  
			\right\}  , \lambda=1/8$ \;
			Initialize weights of the bandit arms $s_1(j)=1,\forall j=0,1,\dots,J $ \;
			\For{epoch  $i = 1,\dots, \frac{K}{W} $ }{
				Update $p_i(j)\leftarrow (1-\delta)\frac{s_i(j)}{\sum_{j'=0}^Js_i(j')}+\frac{\gamma_0}{J+1} ,\forall j=0,1,\dots,J $ \;
				Draw  an arm $A_i\in [J]$ randomly according to the probabilities $p_i(0),\dots,p_i(J)$ \;
				Set  the estimated budget $B_i \leftarrow  \frac{\sqrt{K}W^{\frac{A_i}{J}} }{\Delta W  }  $ \;
				Run a new instance of Algorithm \ref{algo:model_free} for $W$ episodes with parameter value $B\leftarrow B_i$\;
				Observe the cumulative reward $R_i$ and utility $G_i.$\; 
				\For{arm j=0,1,\dots,J}{
					$\hat{R}_i(j) =  \left\{
					\begin{aligned}
						(&  G_i/K^\lambda )  I_{\{j=A_i \}} / (WH(1+1/K^\lambda)p_i(j))  & \text{ if } G_i < W\rho   \\
						(& R_i + G_i/K^\lambda) I_{\{j=A_i \}} / (WH(1 +1/K^\lambda) p_i(j))  & \text{ if } G_i \geq W\rho 
					\end{aligned}
					\right. $ \tcp*{\small normalization } 
					$s_{i+1}\leftarrow s_i(j) \exp(\gamma_0 \hat{R}_i(j) /(J+1) ) $\;
				}
			}
		\caption{Model Free Primal-Dual Algorithm for Linear Function Approximation for Non-stationary Setting without knowing the variation budget} 	\label{algo:model_free_unknown}
	\end{algorithm}
	\section{DETAILS PROOF OF THEOREM THEOREM \ref{the:linear-non}} \label{ap:proof-linear-non}
	Let $W=K^{\zeta}$ and 
	\begin{align}
		\mathcal{J} =  \left\{ \frac{\sqrt{K}}{\Delta W}, \frac{\sqrt{K} W^{\frac{1}{J}}}{\Delta W },\frac{\sqrt{K}W^{\frac{2}{J}}}{\Delta W},\dots, \frac{\sqrt{K}W}{\Delta W}  \right\},\Delta = \left( \frac{6(1+\xi)}{\xi\delta} \tilde{\cO}((1+\delta)d^{5/4}H^{9/4} )   \right)^{4}   \label{eq:j-linear}
	\end{align}
	where $J=\log W$ as the candidate sets for $B$ in the linear CMDPs. Under assumption $K^{1/8} \geq \dfrac{6(1+\xi)}{\xi \delta}\tilde{O}((1+1/\delta)d^{5/4}B^{1/4}H^{9/4}$ we know the optimal budget $B\in\cJ.$ Let $\hat{B}$ be any candidate value in $\mathcal{J}$ that leads to the lowest regret while achieving zero constraint violation. Let $R_i(B_i)$ be the expected cumulative reward received in epoch $i$ with the estimated epoch length $B$. Then the regret can be decomposed into:
	\begin{align*}
		\text{Regret} (K) = & \mathbb E\left[\sum_{k=1}^K\left(V_{k,1}^{\pi_k^*}(x_{k,1})-V_{k,1}^{\pi_k}(x_{k,1})\right)\right] \\
		= & \mathbb E\left[\sum_{k=1}^K V_{k,1}^{\pi_k^*}(x_{k,1})- \sum_{i=1}^{K/W} R_i(\hat{B}) \right]  +   \mathbb E\left[\sum_{i=1}^{K/W} R_i(\hat{B})-\sum_{i=1}^{K/W} R_i({B_i}) \right].
	\end{align*}
	The first term is the regret of using the candidate $\hat{B}$ from $\mathcal{J};$ the second term is the difference between using $\hat{B}$ and $B_i$ which is selected by Exp3 algorithm.  Applying the analysis of the Exp3 algorithm, we know that by using Lemma \ref{le:non-bandit-c} for any choice of $\hat{B},$ the second term is upper bounded:
	\begin{align*}
		\mathbb E\left[\left(\sum_{i=1}^{K/W} R_i(\hat{B})-\sum_{i=1}^{K/W} R_i({B_i}) \right)\right] \leq \tilde{\mathcal{O}}( H\sqrt{KW}+HK^{1-\lambda} ).
	\end{align*}
	For the first term, according to the regret bound analysis of Algorithm \ref{algo:model_free}, we have for the  $W$ episodes
	\begin{align}
		E\left[\sum_{k=1}^W\left(V_{k,1}^{\pi_k^*}(x_{k,1})-  R_i(\hat{D}) \right)\right]  \leq \tilde{\mathcal{O}}\left(\frac{1+\delta}{\delta} K^{1-\frac{1\zeta}{4}} H^{9/4}d^{5/4}\hat{B}^{1/4}  \right) \label{eq:unknown-regret-linear-1}.
	\end{align}

	We need to consider whether $\hat{B}$ is covered in the range of $\mathcal{J}$ to further obtain the bound of \eqref{eq:unknown-regret-linear-1}. We consider the following two cases
	\begin{itemize}
		\item The first case is that optimal $B$  is covered in the range of $\mathcal{J}.$ Note that two consecutive values in $\mathcal{J}$ only differ from each other by a factor of $W^{\frac{1}{J}},$ then there exists a value $\hat{B}\in\mathcal{J} $ such that $B\leq \hat{B}\leq W^{1/J}B.$ Therefore we can bound the RHS of \eqref{eq:unknown-regret-linear-1} by 
		\begin{align*}
			\tilde{\mathcal{O}}\left(\frac{1+\delta}{\delta} K^{1-\frac{1\zeta}{4}} H^{9/4}d^{5/4}\hat{B}^{1/4}  \right) &\leq \tilde{\mathcal{O}}\left(\frac{1+\delta}{\delta} K^{1-\frac{1\zeta}{4}} H^{9/4}d^{5/4}{W^{1/J}B}^{1/4}  \right) \\
			\leq  &\tilde{\mathcal{O}}\left(\frac{1+\delta}{\delta} K^{1-\frac{1\zeta}{4}} H^{9/4}d^{5/4}{eB}^{1/4}  \right)\\
			= & \tilde{\mathcal{O}}\left(\frac{1+\delta}{\delta} K^{1-\frac{1\zeta}{4}} H^{9/4}d^{5/4}{B}^{1/4}  \right)
		\end{align*}
	
		\item The second case is that $B$ is not covered in the range of $\mathcal{J} ,$i.e., $B\leq \frac{\sqrt{K}}{\Delta W},$ then the optimal candidate value in $\cJ$ is $\frac{\sqrt{K}}{\Delta W}$,we can bound the RHS of \eqref{eq:unknown-regret-linear-1} by 
		\begin{align*}
			\tilde{\mathcal{O}}\left(\frac{1+\delta}{\delta} K^{1-\frac{1\zeta}{4}} H^{9/4}d^{5/4}{\hat{B}}^{1/4}  \right)
			\leq  \tilde{\mathcal{O}}\left(\frac{1+\delta}{\delta} K^{1-\frac{1\zeta}{4}} H^{9/4}d^{5/4}{ (\frac{\sqrt{K}}{\Delta W}})^{1/4}  \right)
		\end{align*}
	\end{itemize}

	For the constraint violation, according to Lemma \ref{le:non-bandit-c} we have 
	\begin{align*}
		& \mathbb{E} \left[\sum^{K}_{k=1}\rho-C_{k,1}^{\pi_k} (x_{k,1},a_{k,1}) \right] =  \mathbb{E} \left[\sum^{K/W}_{i=1} \left(W\rho - G_i(B_i)  \right) \right] \\
		=& \mathbb{E} \left[\sum^{K/W}_{i=1} \left(W\rho  - G_i(\hat{B}) \right) \right] + \mathbb{E}\left[\sum^{K/W}_{i=1} \left( G_i(\hat{B}) - G_i(B_i)  \right) \right] 
	\end{align*}
	For the first term, according to Theorem \ref{thm:linear_cmdp}, by selecting $\epsilon=\dfrac{3(1+\xi)}{\xi}\tilde{O}((1+1/\delta)d^{5/4}\hat{B}^{1/4}H^{9/4}K^{1-\zeta/4})/K$, we have 
	\begin{align}
		\mathbb{E} \left[\sum^{K/W}_{i=1} \left(W\rho  - G_i(\hat{B}) \right) \right] \leq -\dfrac{(1+\xi)}{\xi}\tilde{\mathcal{O}}((1+1/\delta)K^{1-\zeta/4}H^{9/4}d^{5/4}\hat{B}^{1/4}).
	\end{align}
	For the second term, we are able to obtain an upper bound by using Lemma \ref{le:non-bandit-c} 
	\begin{align}
		&  \mathbb{E} \left[\sum_{i=1}^{K/W} (G_i(\hat{B}) - G_i(B_i))   \right]  \leq 12 K^\lambda H \sqrt{K^{1+\zeta}(J+1)\ln(J+1)}  
	\end{align}
	By balancing the terms $\tilde{O}(K^{1-\zeta/4}), \tilde{O}(K^{\lambda + (1+\zeta)/2} ) $ and $K^{1-\lambda},$ the best selection are $\zeta=1/2$ and $\lambda=1/8.$ Therefore we further obtain
	\begin{align}
		\hbox{Violation}(K) = 0.
	\end{align}
	We finish the proof of Theorem \ref{the:linear-non}.

	\section{ANOTHER APPROACH FOR UNKNOWN BUDGET}\label{ap:method2-non-linear}
	We consider a primal-dual adaptation in the outer loop as well. In particular, after collecting $R_i(B_i)$ and $G_i(B_i)$ under the selected epoch length $B_i$, the bandit reward is $R_i(B_i)+Y_iG_i(B_i),$ where $Y_i=\min\{\max\{Y_{i-1}+\eta(\rho-G_i(B_i)/W),0\},\xi\}$. Then line $10$ in Algorithm \ref{algo:model_free_unknown} is replaced with 
	$$ \hat{R}_i(j) = (R_i(B_i)+Y_iG_i(B_i)) / (WH + \xi WH)   $$ 

	Let $W=d^{1/2}H^{-1/2}K^{1/2}$ be the epoch length, and 
	$$\mathcal{J} =  \left\{ 1,W^{\frac{1}{J}},\ldots, W  \right\},$$
	where $J=\log W$ as the candidate sets for $D$ in the linear CMDPs. We still use Exp-3 to choose an arm.  From the Exp-3 analysis we know for any $D^{\dagger}$
	\begin{align}\label{eq:exp3}
		& \sum_{m}(R_m(D^{\dagger})+Y_m   G_m(D^{\dagger}))-(R_m(D_m)+Y_m G_m(D_m)) \nonumber \\
		\leq & 2\sqrt{e-1}WH(1+ \xi)\sqrt{(K/W)(J+1)\ln(J+1)} = \tilde{\mathcal{O}}(H\xi \sqrt{KW} ) ,
	\end{align}\label{eq:conven-opt}
	Now, from the dual domain analysis, we obtain a similar to (Lemma~\ref{lem:dual_variable})
	\begin{align}\label{eq:dual_unknownbudget}
		\sum_{m}(Y-Y_m)(W\rho - G_m(D_m))\leq \dfrac{Y^2W}{2\eta}+ \frac{\eta H^2K}{2}
	\end{align}
	We note that $\eta=\sqrt{\xi^2W/(KH^2)}$, then the upper bound is $\xi\sqrt{WKH^2}.$
	From the results analysis of the constraint violation from Theorem \ref{thm:linear_cmdp}, we have for the optimal choice of $D^{\dagger}$ from $\cJ$
	\begin{align}
		\sum_{m}(W\rho -G_m(D^{\dagger}))\leq & \tilde{\cO}(K\sqrt{d^3H^4/D^{\dagger}}+D^{\dagger}\sqrt{dD^{\dagger}}H^2B). \label{eq:constr_unknownbudget}\\
		\sum_k^KV_{k,1}^{\pi_k^*}(x_{k,1}) - \sum_{m}R_i(D^\dagger) \leq& \tilde{\cO}(K\sqrt{d^3H^4/D^{\dagger}}+D^{\dagger}\sqrt{dD^{\dagger}}H^2B)\label{eq:reg_unknownbudget}.
	\end{align}
	Hence, we have
	\begin{align}
		&\sum_{m}-Y_m(G_m(D^{\dagger})-G_m(D_m))= \sum_m-Y_m(G_m(D^\dagger)  -W\rho)+\sum_m-Y_m(W\rho-G_m(D_m))\nonumber\\  \leq& \tilde{\cO}\left(K\sqrt{d^3H^4/D^{\dagger}}\xi+D^{\dagger}\sqrt{dD^{\dagger}}H^2B\xi+\xi\sqrt{WKH^2}\right)
	\end{align}
	where we use (\ref{eq:dual_unknownbudget}) (with $Y=0$) for the first inequality, and (\ref{eq:constr_unknownbudget}) (where we use $|Y_m|\leq \xi$) for the second term. 
	
	Hence, from (\ref{eq:exp3})
	\begin{align}\label{eq:reg_arms}
		&\sum_{m}(R_m(D^{\dagger}) - R_m(D_m)) \nonumber\\
		\leq & \tilde{\mathcal{O}}(H\xi 2\sqrt{e-1}WH(1+ \xi)\sqrt{(K/W)(J+1)\ln(J+1)} + \sum_{m}-Y_m(G_m(D^{\dagger})-G_m(D_m)) \nonumber \\
		\leq &\tilde{\cO}\left(K\sqrt{d^3H^4/D^{\dagger}}\xi+D^{\dagger}\sqrt{dD^{\dagger}}H^2B\xi+\xi\sqrt{WKH^2}  + H\xi\sqrt{KW} \right)
	\end{align}
	
	Now, suppose that optimal $D$ exists in the range, thus, $D^{\dagger}\leq D\leq D^{\dagger}W^{1/J}=eD^{\dagger}$. Hence, from $D=B^{-1/2}W$, and (\ref{eq:reg_unknownbudget}) we have the regret bound of $\tilde{\mathcal{O}}((1+1/\delta)H^{9/4}d^{5/4}B^{1/4}K^{3/4})$.
	
	If $D$ is not covered -- if $D<1$, then $B^{-1/2}d^{1/2}H^{-1/2}K^{1/2}\leq 1$, thus, $B\geq \cO(K)$ which will make the regret and violation bound vacuous. Thus, we consider $D> W$. Hence, $B^{-1/2}d^{1/2}H^{-1/2}K^{1/2}> d^{1/2}H^{-1/2}K^{1/2}$, thus, we have $B<1$. Hence, the optimal $D^{\dagger}=d^{1/2}H^{-1/2}K^{1/2}$ by balancing the terms in \eqref{eq:reg_arms}. Thus, the regret bound again follows, i.e., the regret bound is $\tilde{\mathcal{O}}((1+1/\delta)H^{9/4}d^{5/4}B^{1/4}K^{3/4})$.

	Now, we bound the constraint violation.  Note that
	\begin{align}
		&  \sum_k^KV_{k,1}^{\pi_k^*}(x_{k,1}) - \sum_m R_m(D_m) + Y\sum_m(W\rho - G_m(D_m))                            \nonumber \\
		= & \sum_k^KV_{k,1}^{\pi_k^*}(x_{k,1})-\sum_m R_m(D^\dagger) + \sum_m Y_m(W\rho - G_m(D^\dagger)) \nonumber\\
		&+\sum_m Y_m (G_m(D^\dagger) - G_m(D_m)) + \sum_m (R_m(D^\dagger) -R_m(D_m)) + \sum_m (Y-Y_m)(W\rho - G_m(D_m)) \nonumber\\
		\leq &\tilde{\cO}\left(K\sqrt{d^3H^4/D^{\dagger}}\xi+D^{\dagger}\sqrt{dD^{\dagger}}H^2B\xi+\xi\sqrt{WKH^2}  + H\xi\sqrt{KW} \right)
	\end{align}
	where we use (\ref{eq:reg_unknownbudget}), (\ref{eq:constr_unknownbudget}), (\ref{eq:exp3}), and (\ref{eq:dual_unknownbudget}) to bound each term in the right-hand side respectively.
	
	By using lemma \ref{lem:copy}, we can have
	\begin{align}
		\sum_m W\rho - G_m(D_m) \leq & \tilde{\cO}\left(K\sqrt{d^3H^4/D^{\dagger}}+D^{\dagger}\sqrt{dD^{\dagger}}H^2B+\sqrt{WKH^2}  + H\sqrt{KW} \right.  \nonumber \\
		&\left. + \frac{1}{\xi}(K\sqrt{d^3H^4/D^{\dagger}})+D^{\dagger}\sqrt{dD^{\dagger}}H^2B  ) \right)
	\end{align}
	From a similar argument (for regret) where optimal $D$ is covered within the range or not, we bound $D^{\dagger}$ and obtain the result for constraint violation. We prove the results by substituting $\xi=\frac{2H}{\gamma}.$ 
		\begin{figure}[!h]
		\centering
		\begin{subfigure}[b]{0.30\textwidth}
			\centering
			\includegraphics[width=5cm,height=4cm]{./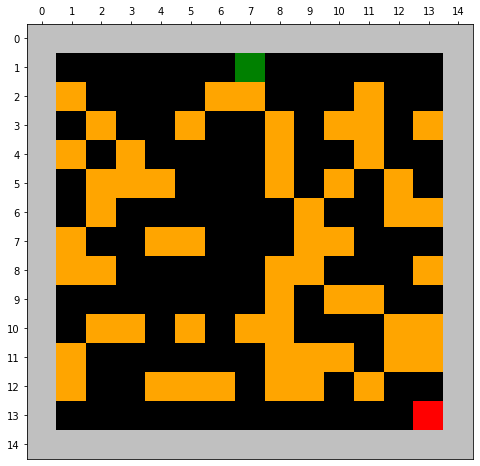}
			\caption{Grid World}
			\label{fig:non-env}
		\end{subfigure}
		\hfill
		\begin{subfigure}[b]{0.55\textwidth}
			\centering
			\includegraphics[width=9cm,height=5cm]{./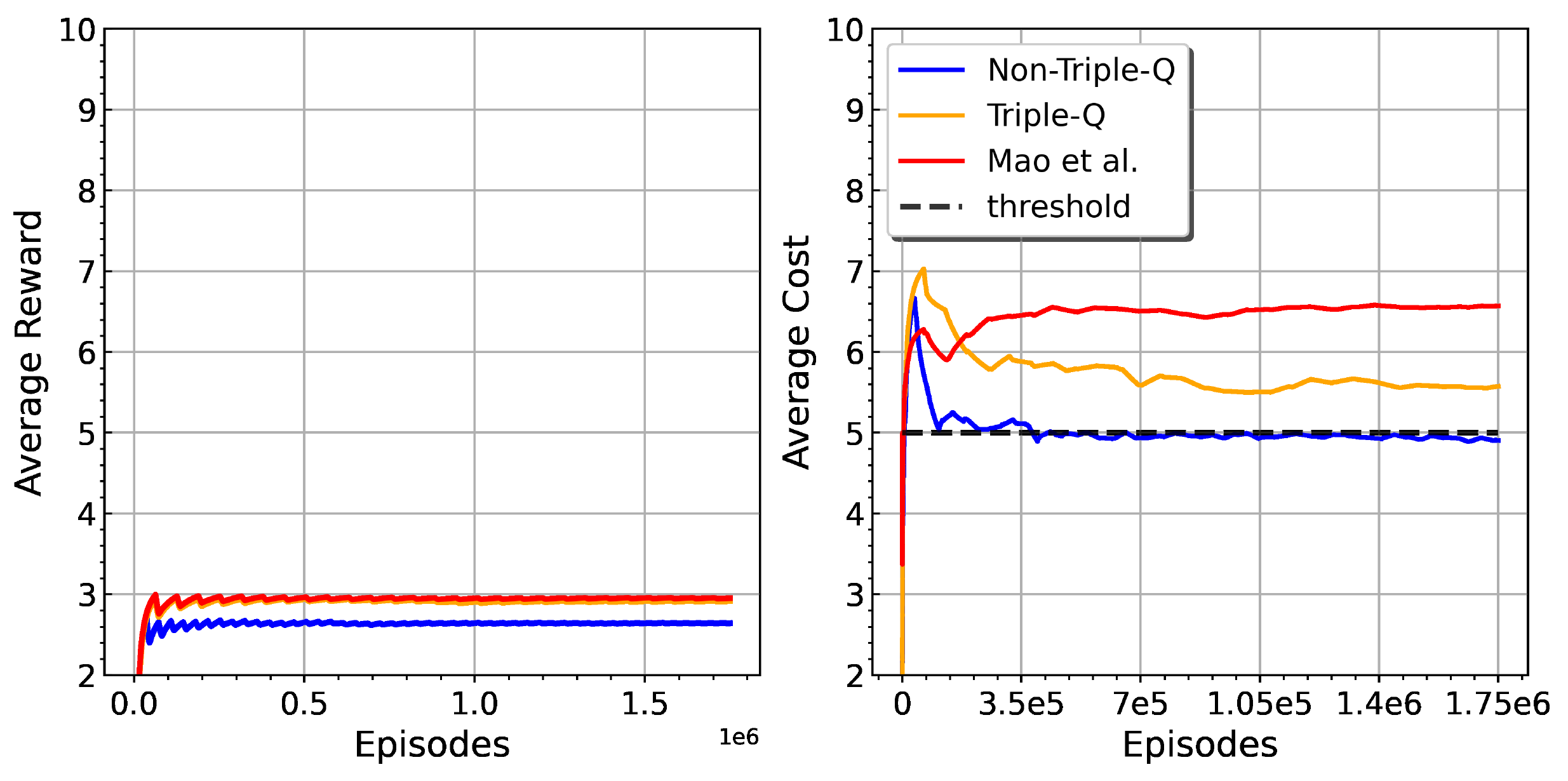}
			\caption{Average Reward and Cost during training}
			\label{fig:per}
		\end{subfigure}
		\caption{ Performance of the three algorithms under a non-stationary environment }
	\end{figure}
	\section{Simulation}\label{sec:sim}
	We compare Algorithm \ref{alg:triple-Q} with two baseline algorithms: an algorithm \citep{MaoZhaZhu_20} for non-stationary MDPs, and an algorithm \citep{WeiLiuYin_22} for stationary constrained MDPs using a grid-world environment, which is shown in Figure. \ref{fig:non-env}. The objective of the agent is to travel to the destination as quickly as possible while avoiding obstacles for safety. Hitting an obstacle incurs a cost of 1. The reward for the destination is $1.$ Denote the Euclidean distance from the current location $x$ to the destination as $d_0(x),$ the longest Euclidean distance is denoted by $d_{\max},$ then the reward function for a locations $x$ is defined as $\frac{0.1*(d_{\max}-d_0)}{d_{\max}}.$ The cost constraint is set to be $5$ (we used cost instead of utility in this simulation), which means the agent is only allowed to hit the obstacles at most five times. To account for the statistical significance, all results were averaged over $10$ trials. To test the algorithms in a non-stationary environment, we gradually vary the transition probability, reward, and cost functions. In particular, the reward is added an additional variation of $\pm\frac{0.1}{K},$ where the sign is uniformly sampled, the cost varies $\frac{0.1}{K}$ at all the locations. We vary the transitions in a way that the intended transition "succeeds" with probability $0.95$ at the beginning; that is, even if the agent takes the correct action at a certain step, there is still a $0.05$ probability that it will take an action randomly. The probability is increased with $\frac{0.1}{K}$ at each iteration.
	
	As shown in Figure. \ref{fig:per}, we can observe that our Algorithm \ref{alg:triple-Q} can quickly learn a well-performed policy while satisfying the safety constraint (below the threshold), while other methods all fail to satisfy the constraint.

\end{document}